\def\isarxiv{1} 

\ifdefined\isarxiv
\documentclass[11pt]{article}

\usepackage[numbers]{natbib}

\else

\documentclass{article} 
\usepackage{iclr2025_conference,times}

\fi

\ifdefined\isarxiv
\usepackage{amsmath}
\usepackage{amsthm}
\usepackage{amssymb}
\usepackage{algorithm}
\usepackage{subfig}
\usepackage{algpseudocode}
\usepackage{graphicx}
\usepackage{epsfig} 
\usepackage{url}
\usepackage{xcolor}
\usepackage{epstopdf}
\usepackage{dsfont}

\else

\usepackage{amsmath}
\usepackage{amsthm}
\usepackage{amssymb}
\usepackage{algorithm}
\usepackage{subfig}
\usepackage{algpseudocode}
\usepackage{graphicx}
\usepackage{epsfig} 
\usepackage{url}
\usepackage{xcolor}
\usepackage{epstopdf}
\usepackage{dsfont}

\fi

\allowdisplaybreaks

\ifdefined\isarxiv
\renewcommand*{\citet}{\cite}
\renewcommand*{\citep}{\cite}

\usepackage{tikz}
\usepackage{hyperref}  
\hypersetup{colorlinks=true,citecolor=blue,linkcolor=blue} 
\usetikzlibrary{arrows}
\usepackage[margin=1in]{geometry}

\else


\usepackage{hyperref}

\fi

\theoremstyle{plain}
\newtheorem{theorem}{Theorem}[section]
\newtheorem{lemma}[theorem]{Lemma}
\newtheorem{definition}[theorem]{Definition}

\newtheorem{fact}[theorem]{Fact}
\newtheorem{remark}[theorem]{Remark}

\newcommand{\wh}{\widehat}
\newcommand{\wt}{\widetilde}

\newcommand{\R}{\mathbb{R}}

\renewcommand{\d}{\mathrm{d}}

\DeclareMathOperator{\poly}{poly}
\DeclareMathOperator{\A}{\mathsf{A}}

\DeclareMathOperator{\diag}{diag}

\makeatletter
\newcommand*{\RN}[1]{\expandafter\@slowromancap\romannumeral #1@}
\makeatother

\usepackage{lineno}

\ifdefined\isarxiv

\else
\title{Multi-Layer Transformers Gradient Can be Approximated in Almost Linear Time} 
\fi

\begin{document}

\ifdefined\isarxiv

\date{}

\title{Multi-Layer Transformers Gradient Can be Approximated in Almost Linear Time}
\author{
Yingyu Liang\thanks{\texttt{
yingyul@hku.hk}. The University of Hong Kong. \texttt{
yliang@cs.wisc.edu}. University of Wisconsin-Madison.} 
\and
Zhizhou Sha\thanks{\texttt{ shazz20@mails.tsinghua.edu.cn}. Tsinghua University.}
\and
Zhenmei Shi\thanks{\texttt{
zhmeishi@cs.wisc.edu}. University of Wisconsin-Madison.}
\and 
Zhao Song\thanks{\texttt{ magic.linuxkde@gmail.com}. The Simons Institute for the Theory of Computing at the University of California, Berkeley.}
\and 
Yufa Zhou\thanks{\texttt{ yufazhou@seas.upenn.edu}. University of Pennsylvania.}
}

\else


\fi

\ifdefined\isarxiv
\begin{titlepage}
  \maketitle
  \begin{abstract}
    The computational complexity of the self-attention mechanism in popular transformer architectures poses significant challenges for training and inference, and becomes the bottleneck for long inputs. Is it possible to significantly reduce the quadratic time complexity of computing the gradients in multi-layer transformer models? This paper proves that a novel fast approximation method can calculate the gradients in almost linear time $n^{1+o(1)}$ where $n$ is the input sequence length, while it maintains a polynomially small approximation error $1 / \poly(n)$ across the entire model. 
Our theory holds for general loss functions and when the multi-layer transformer model contains many practical sub-modules, such as residual connection, casual mask, and multi-head attention. 
By improving the efficiency of gradient computation, we hope that this work will facilitate more effective training and deployment of long-context language models based on our theoretical results.

  \end{abstract}
  \thispagestyle{empty}
\end{titlepage}

{\hypersetup{linkcolor=black}
\tableofcontents
}
\newpage

\else

\maketitle

\begin{abstract}

\end{abstract}

\fi

\section{Introduction}

Large Language Models (LLMs), such as ChatGPT~\citep{szk+22}, GPT-4~\citep{aaa+23}, Claude 3.5~\citep{a24}, Llama 3.1~\citep{m24}, and others, have demonstrated immense potential to enhance various aspects of our daily lives, e.g., conversation AI~\citep{lct+24}, AI agent~\citep{xcg+23, cyl+24}, search AI~\citep{o24}, AI assistant~\citep{mwy+23, zkaw23} and many so on.  
One of the most emergent abilities of LLMs is dealing with long-context information, a format that is crucial for recording material like academic papers, official reports, legal documents, and so on. 
LLMs have proven adept at tackling long-context tasks, including Retrieval Augmented Generation (RAG)~\citep{lpp+20, gxg+23}, zero-shot summarization~\citep{lsh+23, zld+24}, and maintaining very long-term conversations~\citep{xsw21, xgw+22}, and so on. 
This proficiency has necessitated the development of long-context modeling capabilities within LLMs. 

The self-attention mechanism is crucial for the success of LLMs, since LLMs are mainly based on Transformer architecture whose key module is attention. 
In attention computation, we will compute the attention score between each pair of tokens, which is the complexity bottleneck during long context training and inference. 
In detail, we need to spend $O(n^2 d)$ running time for each self-attention block, which is quadratic in $n$, where $n$ is the length of the context input and $d$ is the hidden feature dimension of the model.
For example, LLaMA 3.1 405B~\citep{m24}, one of the cutting-edge LLMs, supports $n=$128k and $d=4096$, while taking $30.84$M GPU training hours, which underscores the need for more efficient training processes for such extensive context models. Given the extensive context lengths of LLMs, this quadratic time complexity results in critical challenges: ($i$) a marked decrease in training efficiency~\citep{hlz+23, lyl+23};  and ($ii$) significant energy usage, which in turn contributes to higher carbon dioxide emissions~\citep{szc+23, scz+24}.

One seminal work~\citep{as23} showed that the self-attention inference can be approximated in almost linear time. However, this result is for the \emph{inference} time (forward pass), but does not address the main challenge, which is the expensive computation in the \emph{training} time (backward pass). In this work, we address this main challenge, by proving that the gradient computation in the back-propagation of self-attention can be approximated in almost linear time. This suggests we may be able to save the substantial resources required for training LLMs.

\subsection{Key background}

We first introduce some basic background, starting with defining the softmax function and the self-attention module.

\begin{definition}[Softmax] 
Let $z \in \R^{n}$. We define 
$\mathsf{Softmax}: \R^{n} \to \R^{n}$ satisfying 
\begin{align*}
    \mathsf{Softmax}(z):= \exp(z) / \langle \exp(z) , {\bf 1}_n \rangle.  
\end{align*}
Here we apply $\exp$ to a vector entry-wise.
\end{definition}

\begin{definition} [Self-attention module] 
\label{def:single_layer_self_attn_qk}
Let $X \in \R^{n \times d}$ denote the input sequence, where $n$ is the number of input tokens and $d$ is the hidden dimension size. 
Let $W_Q, W_K, W_V \in \R^{d \times d} $ be the query, key and value weight matrix.
The self-attention function $\mathsf{Attn}(X)$ with weights is: 
\begin{align*}
    \mathsf{Attn}(X) = \mathsf{Softmax}(  X W_Q W_K^\top X^\top / d ) \cdot X W_V.
\end{align*}
where $\mathsf{Softmax}$ is applied to each row of its input matrix.
The attention can be re-written as: 
\begin{align*}
    \mathsf{Attn}(X) = f(X) \cdot X W_V  ,
\end{align*}
where (1) $A := \exp( X W_Q W_K^\top X^\top / d ) \in \R^{n \times n}$ and $\exp$ is applied element-wise, (2) $D := \diag (A {\bf 1}_n ) \in \R^{n \times n}$, and (3) $f(X):= D^{-1} A \in \R^{n \times n}$ is the attention matrix. 
\end{definition}

In contemporary LLMs, the architecture typically incorporates multiple layers of attention. Consequently, in order to design a fast training algorithm for the entire model, it is imperative to examine self-attention within the multi-layer transformer structure formally defined as follows.

\begin{definition} [Multi-layer transformer] \label{def:multi_layer_self_attn}
Let $m$ denote the number of transformer layers in the model. Let $X$ be the input sequence.
Let $g_i$ denote components other than self-attention in the $i$-th transformer layer, and assume its forward and backward computations can be run in time linear in its input sequence length. 
Let $\mathsf{Attn}_i$ denote the self-attention module in the $i$-th transformer layer with weights $W_{Q_i}, W_{K_i}, W_{V_i}$ (see also Definition~\ref{def:single_layer_self_attn_qk}). 
We define an $m$-layer transformer as 
\begin{align*}
    \mathsf{F}_m (X) := g_m \circ \mathsf{Attn}_m \circ g_{m-1} \circ \mathsf{Attn}_{m-1} \circ \dots \circ g_1 \circ \mathsf{Attn}_1 \circ g_0 (X),
\end{align*}
where $\circ$ denotes function composition. 
\end{definition}

In Definition~\ref{def:multi_layer_self_attn}, the $g_i$ includes the layer norm, MLP, residual connection, dropout, positional encoding, multi-head concatenation, and other operations. All forward and backward computations of these practical modules can be run in linear time with respect to $n$. 
Thus, in this work, we mainly focus on the acceleration of self-attention module. 
Specifically, as shown in Definition~\ref{def:single_layer_self_attn_qk}, the $n \times n$ attention matrix $f(X)$ dominates the computational complexity, introducing a quadratic bottleneck. In the exact computation case, if the attention matrix is full rank, no acceleration is possible. However, by compromising negligible accuracy, designing a fast sub-quadratic algorithm becomes feasible. Fortunately, by employing the polynomial kernel approximation method from \citet{aa22}, we can approximate the attention matrix and achieve an almost linear time $n^{1+o(1)}$ algorithm, effectively breaking the quadratic bottleneck.

\subsection{Our contributions}

We now state our main result as follows:
\begin{theorem} [Main result, informal version of Theorem~\ref{thm:main_result}] \label{thm:main_result:informal}
Let $n$ be the number of tokens, and $d$ the hidden dimension size.
We assume $d = O(\log n)$ and each number in matrices can be written using $O(\log n)$ bits. Assume the number of layers $m=n^{o(1)}$. 
There exists an algorithm (Algorithm~\ref{alg:multi_layer_grad_descent}) that can compute the gradient of multi-layer self-attention (see also Definition~\ref{def:multi_layer_self_attn}) in almost linear time $n^{1+o(1)}$, where the approximation error of the entire model can be bounded by $1 / \poly(n)$. 
\end{theorem}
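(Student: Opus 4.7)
The plan is to reduce the multi-layer gradient computation to a bounded number of per-layer ``attention-like'' products, apply the polynomial kernel approximation of~\citet{aa22} as in~\citet{as23} to each such product in almost linear time, and then control how the approximation errors compose along the $m$-layer chain.

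First, I would write out the closed-form backward pass for a single attention block. For a layer with $A = \exp(X W_Q W_K^\top X^\top / d)$, $D = \diag(A\mathbf{1}_n)$, and $f(X) = D^{-1}A$, the gradients of the loss with respect to $W_V$, $W_Q$, $W_K$, and the input $X$ can be expanded by the chain rule into a constant number of terms, each of the shape $D^{-1} A \cdot M$, $D^{-1} (A \odot N) \cdot M$, or $X^\top D^{-1}(A \odot N) X M$, where $M, N$ are matrices built from the upstream gradient $G$ and the value stream $XW_V$. The Hadamard-product terms look problematic, but since $N$ itself has low ``rank-like'' structure (it arises from $G V^\top$ or $(D^{-1}A G) V^\top$ with $d = O(\log n)$), each such term can be rewritten as a small sum of products $\exp(QK^\top/d) \cdot u$ for vectors $u$ obtained by reweighting columns, placing them exactly in the form that~\citet{as23}'s polynomial-kernel attention approximation handles in $n^{1+o(1)}$ time with entrywise error $1/\poly(n)$.

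Second, I would package the above as a single-layer lemma: given approximate incoming gradient $\widetilde G$ with entrywise error $\epsilon_{\mathrm{in}}$, the algorithm outputs approximate gradients $\widetilde{\nabla}_{W_Q}, \widetilde{\nabla}_{W_K}, \widetilde{\nabla}_{W_V}$ and an approximate pass-through gradient $\widetilde G'$ with entrywise error bounded by $\epsilon_{\mathrm{in}} \cdot L + \epsilon_{\mathrm{alg}}$, where $L$ is a Lipschitz-type constant for the layer (polynomial in $n$ under the stated bit-length assumption) and $\epsilon_{\mathrm{alg}} = 1/\poly(n)$ is the kernel approximation error, which we can drive arbitrarily small by tuning the degree in the~\citet{aa22} construction. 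The total per-layer running time is $n^{1+o(1)}$ because each sub-product fits the framework of~\citet{as23}.

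Third, I would iterate the single-layer lemma over the $m$ layers as in Algorithm~\ref{alg:multi_layer_grad_descent}. The forward pass is cached approximately (again in $n^{1+o(1)}$ per layer via~\citet{as23}), and backward passes for the $g_i$ modules run in linear time by assumption. The error at the input side is at most $\sum_{i=1}^{m} L^{m-i} \epsilon_{\mathrm{alg}}$, and since $m = n^{o(1)}$ and $L = \poly(n)$, choosing $\epsilon_{\mathrm{alg}} = 1/n^{C}$ for a sufficiently large constant $C$ makes the final error $1/\poly(n)$. The total time is $m \cdot n^{1+o(1)} = n^{1+o(1)}$.

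The main obstacle will be the second step: verifying that \emph{every} term appearing in the attention backward pass (in particular the $W_Q, W_K$ gradients, which mix a Hadamard product of the softmax matrix with a second rank-$d$ structured matrix) can be expressed as a constant number of polynomial-kernel attention products compatible with the~\citet{as23} approximation. A secondary obstacle is ensuring the per-layer Lipschitz constant $L$ remains bounded by $\poly(n)$ under the $O(\log n)$-bit entry assumption so that $m = n^{o(1)}$ layers of composed error does not destroy the $1/\poly(n)$ guarantee; this requires carefully tracking the entry magnitudes of $D^{-1}$ and of the normalized attention matrix through the chain rule.
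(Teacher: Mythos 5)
Your proposal follows essentially the same route as the paper: you expand the per-layer backward pass in closed form into a constant number of softmax-matrix products, handle the Hadamard-product terms (including the $W_QW_K^\top$ gradient) by exploiting the low-rank structure of the second factor exactly as the paper does via the row-wise Kronecker/reweighting trick on the \citet{as23} low-rank factors, and obtain an $n^{1+o(1)}$-time, $1/\poly(n)$-error single-layer lemma (the paper's Theorem~\ref{thm:single_layer_gradient}, built from Lemmas~\ref{lem:dL_dt_fast_compute:informal}, \ref{lem:dL_dw_fast_compute:informal}, \ref{lem:dL_dv_fast_compute:informal} and the $p_1,p_2$ decomposition). Your final composition over the $m$ layers, with per-layer $\poly(n)$ error magnification absorbed into the $1/\poly(n)$ budget, is the same induction the paper carries out in Lemma~\ref{lem:entrie_model_error_bound}, so the proposal matches the paper's proof in both structure and key steps.
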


Our assumption is mild when the context length $n$ is large, as the feature dimension $d$ is usually regarded as a constant, which is also used in~\citet{aa22}; similarly, the number of layers is usually much smaller than $n$ and regarded as a constant.
Our results indicate that large language models (LLMs) can be trained in almost linear time $n^{1+o(1)}$ and maintain a robust approximation guarantee, while the traditional way takes $\Omega(n^2)$ time. 
This advancement is realized through the application of polynomial kernel approximation \citep{as23,as24}. 
To be more specific, by leveraging the inherent sparsity within the dense attention matrix, we perform efficient low-rank approximation, thereby significantly accelerating the computation of the dense matrices.
Our framework is applicable to \emph{general} loss functions, making it universally applicable. 
Furthermore, our analysis holds when the multi-layer transformer model contains many practical sub-modules, such as residual connection, casual mask, and multi-head attention (Section~\ref{sec:discussion}).

Numerous studies, including FlashAttention~\citep{dfe+22, d23, sbz+24}, quantization techniques~\citep{hu2024outlier, ltt+24}, and sparsity approaches~\citep{hjk+23, mcw+24}, have empirically focused on accelerating attention mechanisms. However, theoretically, these methods are still constrained by quadratic time complexity. In this study, we introduce an innovative acceleration technique (Algorithm~\ref{alg:multi_layer_grad_descent}) that effectively overcomes this quadratic bottleneck, backed by solid theoretical foundations (Theorem~\ref{thm:main_result}). 
Moreover, this new method is designed to be seamlessly integrated with existing approaches to further enhance their performance (see Section~\ref{sec:discussion}). 

Our contributions are as follows: 
\begin{itemize}
    \item We introduce a fast computation method that allows the gradient of each self-attention layer to be approximated in almost linear time $n^{1 +o(1)}$ with $1/\poly(n)$ error, where $n$ is the input sequence length, breaking the quadratic time complexity bottleneck (Theorem~\ref{thm:single_layer_gradient}). 
    \item We extend our single-layer results to module-wise gradient computation so that our Algorithm~\ref{alg:multi_layer_grad_descent} approximates gradient computation in $m \cdot n^{1 +o(1)}$ time for $m$-layer transformer. Importantly, the approximation of the gradient diverges from the exact gradient by an error of $1 / \poly(n)$ across the entire model (Theorem~\ref{thm:main_result}). 
    \item Additionally, our analysis holds for general loss functions and when the multi-layer transformer model contains residual connection, casual mask, and multi-head attention. 
    Our results can be applied to any gradient-based algorithm, e.g., training, full fine-tuning, prompt-tuning, and so on (Section~\ref{sec:discussion}). 
\end{itemize}

\section{Related Work} \label{sec:related_work}

\paragraph{Long-context modeling in LLMs.}

As LLMs grow in size and capability, in-context learning (ICL)~\citep{mlh+22,swxl24,xsl24,cll+24} has become a preferred method for directing these models to perform a variety of tasks, as opposed to the resource-intensive process of fine-tuning. Nonetheless, research has indicated that longer prompts can impair LLMs performance due to the limitation on maximum sequence length during pre-training~\citep{lzd+24}. Consequently, extending the maximum sequence length during pre-training and fine-tuning stages is imperative. Enhancing training efficiency is crucial given the prevalent use of the Transformer architecture in LLMs, which incurs a quadratic computational cost relative to sequence length.
Addressing this challenge, some studies have explored continued fine-tuning of LLMs with extended context lengths~\citep{tsp+24}, while others have experimented with the interpolation and extrapolation capabilities of positional embedding~\citep{cwct23}. \cite{smn+24} handles long context by compressing the input tokens. However, these approaches have not fundamentally addressed the core issue: the quadratic computational cost associated with sequence length in the attention mechanism~\citep{kwh23, fca23}. In this study, we delve into accelerating the attention mechanism, thereby addressing the long-context modeling issue at its essence.

\paragraph{Attention acceleration.}
Attention mechanism has faced criticism due to its quadratic time complexity with respect to context length, a concern exacerbated by the increasing length in modern large language models (LLMs) such as GPT-4~\citep{aaa+23}, Claude 3.5~\citep{a24}, Llama 3.1~\citep{tli23, m24}, etc. 
Nevertheless, this limitation can be circumvented by employing polynomial kernel approximation techniques \citep{aa22}, which enable the derivation of a low-rank representation of the attention matrix. This innovation significantly accelerates both the training and inference processes of a single attention layer, achieving almost linear time complexity \citep{as23, as24}, while our work supports both training and inference for any multi-layer transformer. 
Furthermore, this approach can be extended to higher-order attention mechanisms, i.e., tensor attention, maintaining almost linear time complexity during both training and inference \citep{as24_iclr, lssz24_tensor}. 
Moreover, there are other theoretical approaches. For instance, \citet{lls+24_conv} introduces the conv-basis method to accelerate attention computation. \citet{hjk+23} proposes a near-linear time algorithm under the assumptions of uniform softmax column norms and sparsity.

{\bf Roadmap.} Our paper is organized as follows. 
Section~\ref{sec:preliminary} provides essential conceptions 
and key definitions 
across the whole paper.
Section~\ref{sec:main_result} presents our primary findings, where we articulate our novel algorithm that is capable of calculating gradients across the entire model in almost linear time. 
In Section~\ref{sec:tech_overview}, we explain the techniques we employ, including low-rank approximation, 
techniques for accelerating the computation of gradients, 
and an analysis of the approximation error. 
Section~\ref{sec:discussion} provides various extensions of our algorithm. 
Lastly, we conclude this paper in Section~\ref{sec:conclusion}.

\section{Preliminary} \label{sec:preliminary}

\paragraph{Notations.}
For any positive integer $n$, we use $[n]$ to denote set $\{1,2,\cdots, n\}$.  
For two vectors $x \in \R^n$ and $y \in \R^n$, we use $\langle x, y \rangle$ to denote the inner product between $x,y$. Namely, $\langle x, y \rangle = \sum_{i=1}^n x_i y_i$.
We use $e_i$ to denote a vector where only $i$-th coordinate is $1$, and other entries are $0$.
For each $a, b \in \R^n$, we use $a \odot b \in \R^n$ to denote the Hardamard product, i.e. the $i$-th entry of $(a \odot b)$ is $a_i b_i$ for all $i \in [n]$.
We use ${\bf 1}_n$ to denote a length-$n$ vector where all the entries are ones.
We use $\|A\|_{\infty}$ to denote the $\ell_{\infty}$ norm of a matrix $A \in \R^{n \times d}$, i.e., $\|A\|_{\infty} := \max_{i \in [n], j \in [d]} |A_{i,j}|$.
We use $\poly(n)$ to denote some polynomial in $n$.

\subsection{Loss function}\label{sec:preliminary:loss}

The loss function is the optimization objective in the training of LLMs, and we define it as follows. 

\begin{definition} [Loss function $L(X)$] \label{def:overall_loss_function}
For some input matrix $X \in \R^{n \times d}$, we define the one-unit loss function $\ell(X)_{j,k}: \R^{n \times d} \rightarrow \R$, for any $j \in [n], k \in [d]$, and assume differentiability. 
Furthermore, we define the overall loss function $L(X)$, such that
\begin{align*}
    L(X) = \sum_{j=1}^n \sum_{k=1}^d \ell(X)_{j, k}
\end{align*}
\end{definition}

\begin{remark} \label{rem:corss_entropy}
Typically, the most widely used loss function in the LLM training procedure is the cross-entropy loss function, which can also be viewed as a summation of one unit loss function as in Definition~\ref{def:overall_loss_function}.
The output matrix of the multi-layer transformer needs to pass an additional linear layer to map the hidden dimension $d$ to the vocabulary size $d_{\mathrm{voc}}$. Assuming $d_{\mathrm{voc}}$ is a constant, the weight matrix dimensions for this additional MLP layer are $d \times d_{\mathrm{voc}}$.
The probability tensor $Y_{\mathrm{pred}} \in \R^{n \times d_{\mathrm{voc}}}$ is the final output. 
We denote the ground truth as $Y_{\mathrm{gt}} \in \R^{n \times d_{\mathrm{voc}}}$ corresponding to $Y_{\mathrm{pred}}$. 
According to the cross-entropy loss definition, the formula is expressed as
\begin{align*}
     L_{\mathrm{cross-entropy}}(X) = -\sum_{j=1}^{n} \sum_{k=1}^{d_{\mathrm{voc}}} (Y_{\mathrm{gt}})_{j,k} \log((Y_{\mathrm{pred}})_{j,k})
\end{align*}
where the summation iterates over all elements, and the ground truth $(Y_{\mathrm{gt}})_{j,k}=1$ for the correct class and $0$ otherwise.
\end{remark}

\subsection{Closed forms of gradient components} \label{sec:preliminary:grad_components_close_form}

In training large language models (LLMs), updating the model necessitates computing the gradient of weights for every layer. 
Consequently, it becomes essential to derive the closed-form expressions for all corresponding gradient components with respect to the weights of the query, key, and value matrices in the transformer model. 
We first define some intermediate variables before detailing these gradient components in each self-attention transformer layer.

\begin{definition} [Intermediate variables $T_i$] \label{def:Ti}
Let $m$ denote the number of transformer layers in the model. 
Let $m$-layer self-attention transformer be defined as Definition~\ref{def:multi_layer_self_attn}.
Let $d$ denote the hidden dimension. 
Let $n$ denote the sequence length. 
Let $X \in \R^{n \times d}$ be the input sentence.
Let $g_i$ denote components other than self-attention in the $i$-th transformer layer. 
Let $\mathsf{Attn}_i$ denote the self-attention module in the $i$-th transformer layer (see also Definition~\ref{def:single_layer_self_attn_qk}). 

For $i \in \{0, 1, 2, \cdots , m\}$, we define $T_i(X) \in \R^{n \times d}$ be the intermediate variable (hidden states) output by $i$-th layer self-attention transformer. Namely, we have
\begin{align*}
    T_i(X) = 
    \begin{cases}
        g_0(X), & ~ i = 0; \\
        (g_i \circ \mathsf{Attn}_i)(T_{i-1}(X)), & ~ i \in [m].
    \end{cases}
\end{align*}
Here, we use $\circ$ to denote function composition.
\end{definition}

Then, we are ready to introduce the closed forms of the three gradient components in a single self-attention transformer layer. 
Notably, according to the chain rule, the gradient of the $k$-th transformer layer in LLMs depends on the gradient components from the $(k+1)$-th transformer layer. 
The gradient can be calculated for every transformer layer by combining the upstream and local gradients. 
The closed forms of the gradients for each layer in multi-layer transformers are formalized in the following lemma (Lemma~\ref{lem:grad_components_close_form:informal}).

\begin{lemma} [Closed form of gradient components, informal version of Lemma~\ref{lem:grad_components_close_form}] \label{lem:grad_components_close_form:informal}
Let $L(X)$ be defined as in Definition~\ref{def:overall_loss_function}, and the $m$-layer transformer defined as in Definition~\ref{def:multi_layer_self_attn}.
Let $W_{Q_i}, W_{K_i}, W_{V_i} \in \R^{d \times d}$ denote the attention weight in the $i$-th attention. 
Let $T_i(X)$ denote the intermediate variable output by $i$-th self-attention transformer layer (see Definition~\ref{def:Ti}). 
Let $G_{i} \in \mathbb{R}^{n \times d}$ denote the gradient matrix resulting from the application of the chain rule up to the function $g_{i}$, i.e., $G_{i} = \frac{\d L(X)}{\d \mathsf{Attn}_i(T_{i-1}(X))}$.
For $j \in [n], k \in [d]$, let $G_i(j, k)$ denote the $(j, k)$-th entry of $G_i$, 
let $\frac{\d \mathsf{Attn}_i(T_{i-1}(X))_{j, k}}{\d  T_{i-1}(X)} \in \R^{n \times d}$ denote the gradient of $(j, k)$-th entry of $\mathsf{Attn}_i(T_{i-1}(X))$. 
Then, we can show that
\begin{itemize}
    \item {\bf Part 1.}
    \begin{align*}
        \frac{\d L(X)}{\d T_{i-1}(X)} = \sum_{j=1}^n \sum_{k=1}^d  G_i(j, k) \cdot \frac{\d \mathsf{Attn}_i(T_{i-1}(X))_{j, k}}{\d  T_{i-1}(X)}. 
    \end{align*}
    \item {\bf Part 2.} Let $W_{*_i}$ be $W_{Q_i}, W_{K_i}$ or $W_{V_i}$, then
    \begin{align*}
        \frac{\d L(X)}{\d W_{*_i}} = \sum_{j=1}^n \sum_{k=1}^d  G_i(j, k) \cdot \frac{\d \mathsf{Attn}_i(T_{i-1}(X))_{j, k}}{\d W_{*_i}}. 
    \end{align*}
\end{itemize}

\end{lemma}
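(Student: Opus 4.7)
The proof plan is to apply the multivariable chain rule carefully, using the structural fact that for each fixed layer index $i$, the loss $L(X)$ depends on the hidden state $T_{i-1}(X)$ (respectively, on the weight matrix $W_{*_i}$) only through the intermediate quantity $\mathsf{Attn}_i(T_{i-1}(X)) \in \R^{n \times d}$. This is the key observation that makes the sum over $(j,k) \in [n] \times [d]$ in the statement natural: it is precisely a sum over the scalar coordinates of the intermediate tensor through which the dependence factors.

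First I would verify the factorization explicitly from Definition~\ref{def:multi_layer_self_attn} and Definition~\ref{def:Ti}. By unrolling, we have
\begin{align*}
    L(X) = L\bigl((g_m \circ \mathsf{Attn}_m \circ \cdots \circ g_i)(\mathsf{Attn}_i(T_{i-1}(X)))\bigr),
\end{align*}
so $L$ is a composition of a function of $\mathsf{Attn}_i(T_{i-1}(X))$ with the map $T_{i-1}(X) \mapsto \mathsf{Attn}_i(T_{i-1}(X))$. Moreover, since $W_{*_i}$ enters only inside $\mathsf{Attn}_i$ (and $T_{i-1}(X)$ itself does not depend on $W_{*_i}$, as $W_{*_i}$ is the $i$-th layer's weight while $T_{i-1}$ only uses weights from layers $1,\ldots,i-1$), the same factorization holds with $W_{*_i}$ in place of $T_{i-1}(X)$.

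Next I would invoke the multivariable chain rule for each entry of the independent variable. For any scalar coordinate $(a,b)$ of $T_{i-1}(X)$,
\begin{align*}
    \frac{\partial L(X)}{\partial [T_{i-1}(X)]_{a,b}} = \sum_{j=1}^n \sum_{k=1}^d \frac{\partial L(X)}{\partial [\mathsf{Attn}_i(T_{i-1}(X))]_{j,k}} \cdot \frac{\partial [\mathsf{Attn}_i(T_{i-1}(X))]_{j,k}}{\partial [T_{i-1}(X)]_{a,b}}.
\end{align*}
By the definition $G_i = \d L(X) / \d \mathsf{Attn}_i(T_{i-1}(X))$, the first factor is exactly $G_i(j,k)$. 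Assembling these scalar identities over all $(a,b)$ into a single matrix equation yields Part~1. For Part~2, I would replace $[T_{i-1}(X)]_{a,b}$ by the $(a,b)$-entry of $W_{*_i}$ and repeat the argument verbatim; the factorization through $\mathsf{Attn}_i(T_{i-1}(X))$ is exactly what allows the same sum over $(j,k)$ to appear.

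There is no real obstacle here since the lemma is a direct application of the chain rule, but the one step that deserves care is the verification that $L$'s dependence on the variable in question (either $T_{i-1}(X)$ or $W_{*_i}$) does in fact factor through $\mathsf{Attn}_i(T_{i-1}(X))$ and nothing else at that layer; this is what prevents extra terms from appearing in the sum. Differentiability of $L$, which is assumed in Definition~\ref{def:overall_loss_function}, together with differentiability of the intervening $g_j$'s and $\mathsf{Attn}_j$'s, guarantees the chain rule applies; no convergence or regularity issues arise.
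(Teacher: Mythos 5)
Your proposal is correct and takes essentially the same route as the paper: the paper's proof merely records the shapes of the relevant objects ($L(X) \in \R$, $\mathsf{Attn}_i(T_{i-1}(X)), T_{i-1}(X) \in \R^{n \times d}$, $W_{*_i} \in \R^{d \times d}$) and then invokes the chain rule exactly as you do. Your additional verification that the dependence on $T_{i-1}(X)$ (resp.\ $W_{*_i}$) factors through $\mathsf{Attn}_i(T_{i-1}(X))$ alone is the step the paper leaves implicit, and it is stated correctly.
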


Our main results are based on the above closed forms of four gradient components.

\section{Main Results} \label{sec:main_result}

Now, we present our main findings. 
We will work through this section in the following order:
In Section~\ref{sec:main_result:fast_compute}, we delineate the computational efficiency of our gradient calculation methods in each single layer.
In Section~\ref{sec:main_result:main_theorem}, we introduce our main theorem (Theorem~\ref{thm:main_result}) for multi-layer transformer by integrating the preceding results and provide our main algorithm (Algorithm~\ref{alg:multi_layer_grad_descent}). 
Section~\ref{sec:main_result:beyond_previous_work} discusses how we transcend the previous works.

\subsection{Fast computing for single layer} \label{sec:main_result:fast_compute}
In the case of single-layer attention, we provide our theorem that state the three gradient components can be calculated in almost linear time with negligible error.
\begin{theorem}[Single-layer gradient approximation]\label{thm:single_layer_gradient}
We assume $d = O(\log n)$ and each number in matrices can be written using $O(\log n)$ bits. Assume the number of layers $m=n^{o(1)}$.  
Let $L(X)$ be defined as Definition~\ref{def:overall_loss_function}. 
Suppose we have a single-layer self-attention transformer model ($m=1$ in Definition~\ref{def:multi_layer_self_attn}).
We can approximate one-layer self-attention for three gradient components, i.e. $\frac{\d L(X)}{\d X}$, $\frac{\d L(X)}{\d W_Q W_K^\top}$ and $\frac{\d L(X)}{\d W_V}$, in $n^{1+o(1)}$ time with $1/\poly(n)$ error.
\end{theorem}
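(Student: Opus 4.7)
The plan is to reduce each of the three gradient components to a short sequence of matrix products that can be evaluated in almost linear time by exploiting a low-rank surrogate for the attention matrix. The starting point is the polynomial kernel approximation of Alman--Song (as23): under the assumptions $d = O(\log n)$ and $O(\log n)$-bit entries, one obtains in time $n^{1+o(1)}$ matrices $U, V \in \R^{n \times r}$ with $r = n^{o(1)}$ and a diagonal $\tilde{D}$ such that $\tilde{P} := \tilde{D}^{-1} U V^\top$ satisfies $\|\tilde{P} - f(X)\|_\infty \le 1/\poly(n)$. Throughout, I would treat $\tilde{P}$ only through its factors, never forming an $n \times n$ matrix explicitly.

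For $\frac{\d L(X)}{\d W_V}$, the closed form from Lemma~\ref{lem:grad_components_close_form:informal} reduces to $X^\top f(X)^\top G_1$, where $G_1$ is the upstream gradient with $d = O(\log n)$ columns. Using the factorization, $\tilde{P}^\top G_1 = V (U^\top \tilde{D}^{-1} G_1)$, evaluated right-to-left in $n \cdot r \cdot d = n^{1+o(1)}$ time, followed by left-multiplication by $X^\top$ in $n d^2$ time. The harder components are $\frac{\d L(X)}{\d W_Q W_K^\top}$ and $\frac{\d L(X)}{\d X}$, which both require differentiating through the softmax. I would first derive the standard rowwise softmax-Jacobian identity
\begin{align*}
    \frac{\d L}{\d S} = P \odot M - \diag(c)\, P, \qquad c_i = \sum_{k=1}^n P_{i,k} M_{i,k},
\end{align*}
where $S = X W_Q W_K^\top X^\top / d$, $P = f(X)$, and $M = G_1 (X W_V)^\top$ has rank $d = O(\log n)$. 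The second term is already a diagonal rescaling of $P$, hence inherits $\tilde P$'s low-rank factorization. For the first term, I would write $M = \sum_{\ell=1}^d g_\ell w_\ell^\top$ as a sum of $d$ rank-$1$ pieces, so that
\begin{align*}
    P \odot (g_\ell w_\ell^\top) = \diag(g_\ell)\, P\, \diag(w_\ell),
\end{align*}
which preserves the $U V^\top$ factorization after absorbing the diagonals into $U$ and $V$. Contracting the resulting representation of $\d L / \d S$ with $X^\top$ on the left and $X$ on the right (for the $W_Q W_K^\top$ gradient), and similarly for the three occurrences of $X$ in the $\d L / \d X$ gradient (value path, query path, key path), yields $d$ products of the form $(n \times d)(d \times n) \cdot (n \times r)(r \times n) \cdot (n \times d)$ that are all computable right-to-left in $n \cdot r \cdot d = n^{1+o(1)}$ time. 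Summing $d$ such contributions still lies within the $n^{1+o(1)}$ budget.

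The main obstacle is not any single product but the \emph{error bookkeeping}: I must show that the entrywise $1/\poly(n)$ approximation $\tilde P \approx P$ propagates through the bilinear/trilinear contractions with $X$, $W_V$, $G_1$, the rank-$d$ expansion of $M$, and the diagonal correction $c$, and still yields $1/\poly(n)$ error in each gradient component under an operator or entrywise norm. Since entries of $X$, $W_V$, and $G_1$ are polynomially bounded and every sum has at most $\poly(n)$ terms, a standard hybrid argument over the $O(1)$ matrix products involved shows that the error is amplified by at most a $\poly(n)$ factor, so choosing the initial approximation exponent sufficiently large in the invocation of as23 gives the advertised $1/\poly(n)$ bound. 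Combining the three gradient analyses then yields the theorem.
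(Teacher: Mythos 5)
Your proposal is correct and follows essentially the same route as the paper: the same Alman--Song polynomial-method low-rank surrogate for $f(X)$, the same softmax-Jacobian decomposition (your $P \odot M - \diag(c)P$ is exactly the paper's $p_1(X) - p_2(X)$, and your key/query/value-path contractions reproduce its $D_2+D_4$, $D_6+D_7$, $D_8$ terms), the same diagonal-absorption trick for Hadamard products with the rank-$d$ matrix $M$ (the paper phrases it as the row-wise Kronecker product $\oslash$), the same right-to-left multiplication ordering, and the same $\poly(n)$-amplification error bookkeeping. The only difference is presentational: the paper assembles these matrix identities from the entrywise gradient terms $C_2,C_4,C_6,C_7,C_8$ of prior work rather than writing the matrix softmax Jacobian directly.
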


\begin{proof}
We finish the proof by Lemma~\ref{lem:dL_dt_fast_compute:informal}, \ref{lem:dL_dw_fast_compute:informal} and \ref{lem:dL_dv_fast_compute:informal}.
\end{proof}

\begin{algorithm}[!ht]\caption{Almost Linear Time (ALT) Multi-layer Transformer Gradient Approximation}
\label{alg:multi_layer_grad_descent}
\begin{algorithmic}[1]
\State {\bf datastructure} \textsc{ALTGrad}  \Comment{Theorem~\ref{thm:single_layer_gradient} and \ref{thm:main_result}}
\State {\bf members}
\State \hspace{4mm} $n \in \R$: the length of input sequence
\State \hspace{4mm} $d \in \R$: the hidden dimension
\State \hspace{4mm} $m \in \R$: the number of transformer layers
\State \hspace{4mm} $L(X) \in \R$: the loss function \Comment{Definition~\ref{def:overall_loss_function}} 
\State \hspace{4mm} $T_i \in \R^{n \times d}$:  the output of $i$-th transformer layer 
\State \hspace{4mm} $\mathsf{Attn}_i \in \R^{n \times d}$: the output that pass $i$-th attention layer 
\State \hspace{4mm} $W_{Q_i}, W_{K_i}, W_{V_i} \in \R^{d \times d}$
: the weight matrices in $i$-th transformer layer 
\State {\bf end members}
\State 

\Procedure{SingleGrad}{$\frac{\d L(X)}{\d T_i}$} 
\Comment{Theorem~\ref{thm:single_layer_gradient}}
\State Compute $G_i = \frac{\d L(X)}{\d \mathsf{Attn}_i}$ via Lemma~\ref{lem:Gi_compute_time:informal}
\Comment{$n^{1+o(1)}$ time}

\State Compute $\wt{D}_6, \wt{D}_7, \wt{D}_8, \wt{D}_2, \wt{D}_4$ via Lemma~\ref{lem:B7(X)_fast_compute}, \ref{lem:B8(X)_fast_compute}, \ref{lem:B2(X)_fast_compute}, \ref{lem:B4(X)_fast_compute}
\Comment{$n^{1+o(1)}$ time}

\State {\color{blue} /* Approximate $\frac{\d L(X)}{\d T_{i-1}}$, Lemma~\ref{lem:dL_dt_fast_compute:informal} */}

\State $\wt{g}_t \gets \wt{D}_6 + \wt{D}_7 + \wt{D}_8 + \wt{D}_2 + \wt{D}_4$
\Comment{$n^{1+o(1)}$ time}

\State {\color{blue} /* Approximate $\frac{\d L(X)}{\d W_{Q_i} W_{K_i}^\top}$, Lemma~\ref{lem:dL_dw_fast_compute:informal} */}
\State Construct $U_3, V_3$ via Lemma~\ref{lem:dL_dw_fast_compute:informal} \Comment{$n^{1+o(1)}$ time}
\State $\wt{g}_w \gets (T_{i-1}^\top U_3) \cdot (V_3^\top T_{i-1})$ 
\Comment{$n^{1+o(1)}$ time}

\State {\color{blue} /* Approximate $\frac{\d L(X)}{\d W_{V_i}}$, Lemma~\ref{lem:dL_dv_fast_compute:informal} */}
\State Construct $U_1, V_1$ via Lemma~\ref{lem:low_rank_f} \Comment{$n^{1+o(1)}$ time}
\State $\wt{g}_v \gets (T_{i-1}^\top U_1)  \cdot (V_1^\top G_i)$ 
\Comment{$n^{1+o(1)}$ time}

\State \Return $\wt{g}_t, \wt{g}_w, \wt{g}_v$
\Comment{$\wt{g}_t$ is the approximated $\frac{\d L(X)}{\d T_{i-1}}$ for back-propagation}

\EndProcedure

\State

\Procedure{MultiGrad}{$L(X)$} 
\Comment{Theorem~\ref{thm:main_result}}

\State Compute $\frac{\d L(X)}{\d T_m}$ \Comment{$O(n d)$ time}
\State $\wt{g}_t \gets \frac{\d L(X)}{\d T_m}$

\For {$i=m \to 1$}
\State 
$\wt{g}_t, \wt{g}_w, \wt{g}_v \gets$ \textsc{SingleGrad} $(\wt{g}_t)$
\State Optimize $W_{Q_i}, W_{K_i}$ via $\wt{g}_w$ using optimizer
\State Optimize $W_{V_i}$ via $\wt{g}_v$ using optimizer
\EndFor 

\EndProcedure

\State {\bf end datastructure} 
\end{algorithmic}
\end{algorithm}

\subsection{Fast computing for multi-layer transformers} \label{sec:main_result:main_theorem}

Based on the results demonstrated in previous sections, we are ready to introduce our main result: the gradients of the whole transformer model can be approximated in almost linear time.

\begin{theorem} [Main result, formal version of Theorem~\ref{thm:main_result:informal}] \label{thm:main_result}
Let $m$ denote the number of transformer layers.
We assume $d = O(\log n)$ and each number in matrices can be written using $O(\log n)$ bits. Assume the number of layers $m=n^{o(1)}$.  
We can show that, for any $i \in [m]$, all the gradient components (see also Lemma~\ref{lem:grad_components_close_form:informal}) of the $i$-th layer can be computed by Algorithm~\ref{alg:multi_layer_grad_descent} in almost linear time $n^{1 + o(1)}$, and the approximation error of the entire $m$ layer transformer model can be bounded by $1 / \poly(n)$. 
\end{theorem}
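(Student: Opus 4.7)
The plan is to prove Theorem~\ref{thm:main_result} by backward induction on the layer index, mirroring the \textsc{MultiGrad} procedure in Algorithm~\ref{alg:multi_layer_grad_descent}. The base case computes $\frac{\d L(X)}{\d T_m}$ directly entry-wise; since $L$ is a sum of $nd$ one-unit losses (Definition~\ref{def:overall_loss_function}) and $d = O(\log n)$, this costs $O(nd) = n^{1+o(1)}$ time. For the inductive step at layer $i$, assume we already hold an approximation $\widetilde{G}_i$ to $\frac{\d L(X)}{\d T_i}$ with $\ell_\infty$ error at most $1/\poly(n)$. We feed $\widetilde{G}_i$ into \textsc{SingleGrad}, which by Theorem~\ref{thm:single_layer_gradient} returns $\widetilde{g}_t \approx \frac{\d L(X)}{\d T_{i-1}}$, $\widetilde{g}_w \approx \frac{\d L(X)}{\d W_{Q_i} W_{K_i}^\top}$, and $\widetilde{g}_v \approx \frac{\d L(X)}{\d W_{V_i}}$, each in $n^{1+o(1)}$ time with a fresh $1/\poly(n)$ approximation error. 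Iterating over $m = n^{o(1)}$ layers yields total runtime $m \cdot n^{1+o(1)} = n^{1+o(1)}$, so the time bound is immediate once the inductive error bound is in place.

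The main obstacle is showing that the per-layer $1/\poly(n)$ error does not blow up across the $m$ backward passes. At layer $i-1$, \textsc{SingleGrad} consumes the approximate upstream gradient $\widetilde{G}_i$ rather than the exact one, so its output inherits two error contributions: the fresh $1/\poly(n)$ term guaranteed by Theorem~\ref{thm:single_layer_gradient}, and the propagated input error passed through the linear map $G_i \mapsto \frac{\d L(X)}{\d T_{i-1}}$ given by Part~1 of Lemma~\ref{lem:grad_components_close_form:informal}. I would bound the Lipschitz constant of this map by $\poly(n)$ using the standing assumptions: with $d = O(\log n)$ and entries expressed in $O(\log n)$ bits, the attention matrix $f(T_{i-1})$ has entries in $[0,1]$, the weight and intermediate-state matrices have entries of magnitude $\poly(n)$, and each entrywise Jacobian $\frac{\d \mathsf{Attn}_i(T_{i-1}(X))_{j,k}}{\d T_{i-1}(X)}$ therefore has entries of magnitude $\poly(n)$; assembling the double sum in Part~1 of Lemma~\ref{lem:grad_components_close_form:informal} produces a per-layer amplification bounded by some fixed $\poly(n)$.

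To close the induction, I would set the per-layer tolerance inside \textsc{SingleGrad} to $1/\poly(n)$ for a polynomial large enough to swallow $m = n^{o(1)}$ applications of this $\poly(n)$ amplification factor; since the polynomial-kernel machinery behind Theorem~\ref{thm:single_layer_gradient} can be tuned to any inverse-polynomial error at the cost of only an $n^{o(1)}$ overhead in the runtime exponent (the standard Alman--Song trade-off, absorbed into the $n^{1+o(1)}$ bound whenever $m = n^{o(1)}$), a telescoping argument along the $m$ backward passes then keeps the accumulated error bounded by $1/\poly(n)$. The identical argument controls the errors in $\widetilde{g}_w$ and $\widetilde{g}_v$ at each layer, since they are produced from the same approximate $\widetilde{G}_i$ via operations whose sensitivity to input perturbations is also $\poly(n)$. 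Combining the per-layer runtime with the propagated error bound then establishes the theorem for every $i \in [m]$.
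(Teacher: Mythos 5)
Your proposal is correct and follows essentially the same route as the paper: the paper also reduces the theorem to the single-layer guarantees (Theorem~\ref{thm:single_layer_gradient}, via Lemmas~\ref{lem:dL_dt_fast_compute}, \ref{lem:dL_dw_fast_compute}, \ref{lem:dL_dv_fast_compute} and \ref{lem:Gi_compute_time}) and then runs an induction over the layers (Lemmas~\ref{lem:err_for_single_layer_transformer} and \ref{lem:entrie_model_error_bound}), bounding the per-layer error amplification by $\poly(n)$ using the bounded-entry assumptions and multiplying the $n^{1+o(1)}$ per-layer cost by $m=n^{o(1)}$. Your explicit treatment of how the propagated error enters through the map $G_i \mapsto \frac{\d L(X)}{\d T_{i-1}}$ matches the paper's error-propagation step in the inductive argument.
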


\begin{proof}
We prove the theorem by directly combining Theorem~\ref{thm:single_layer_gradient} and Lemma~\ref{lem:entrie_model_error_bound:informal}. 
\end{proof}

Theorem~\ref{thm:main_result} demonstrates that, during the training of a multi-layer transformer model, at each training iteration, the gradient computation for the weight matrices of each layer can be performed in almost linear time $n^{1+o(1)}$. This result supports the feasibility of fast training for any transformer-based large language models (LLMs).
In Algorithm~\ref{alg:multi_layer_grad_descent}, we illustrate the process of back-propagating gradients from the $m$-th transformer layer back to the first layer. 
This algorithm highlights the significance of the gradient with respect to the intermediate variables $T_i(X)$.
Due to the application of the chain rule in gradient computation, the gradient of $T_i(X)$ is indispensable for determining the gradients of the weight matrices $W_{Q_i}, W_{K_i}$ and $W_{V_i}$ at the $i$-th layer. 
Consequently, by iteratively computing the gradient for $T_i(X)$, we systematically propagate the gradient through to the initial transformer layer. 
Additionally, our algorithm is capable of computing the gradient with respect to the input data $X$. Therefore, our algorithm also supports fast prompt tuning. For a more in-depth discussion on this topic, please refer to Section~\ref{sec:discussion}.

\subsection{Beyond the previous work}\label{sec:main_result:beyond_previous_work}

Our algorithm exhibits significant advancements over two seminal prior studies, namely~\citet{as23} and~\citet{as24}. In~\citet{as23}, the authors proposed an almost linear time algorithm for computing the forward process of the attention mechanism. In contrast,~\citet{as24} introduced an almost linear time algorithm for the backward of attention mechanism.
However, \citet{as24} has the following limitations:
(1) only computing gradients for a single layer of the attention mechanism, which cannot extend to multiple layers;
(2) calculating gradients with respect to a specific loss, namely the $\ell_2$ loss;  
(3) computing gradients only for the weight matrix $W_{Q_i}, W_{K_i}$ (as defined in Definition~\ref{def:single_layer_self_attn_qk}), but ignore other crucial components such as the MLP layer following attention computation and the activation function. These limitations are inherent in their technique and prevents the applicability of the method in multiple layer transformers. 

In our work, we have the following improvements beyond previous work: 
(1) we enable almost linear time gradient computation across an entire transformer layer, incorporating both the MLP layer and the activation function;
(2) our algorithm supports gradient calculation for general loss function $L(X)$ (see Definition~\ref{def:overall_loss_function}); 
(3) we extend the gradient calculation to include not only $W_{Q_i}, W_{K_i}$ but also $T_i(X)$ and $W_{V_i}$. 
These advancements collectively demonstrate a substantial leap forward from the methodologies in~\citet{as23} and~\citet{as24}.

\section{Technical Overview} \label{sec:tech_overview}

\subsection{Low-rank approximation for attention matrix} \label{sec:tech_overview:low_rank}

In this section, we delve into the crucial techniques behind our work: the low-rank approximation of the attention matrix, which is achieved through the polynomial method~\citep{acss20, aa22}. 
Drawing inspiration from~\citet{as23}, the intuition of this approximation lies in the fact that the attention matrix $f(X) \in \mathbb{R}^{n \times n}$ (as defined in Definition~\ref{def:single_layer_self_attn_qk}), also referred to as the similarity matrix in attention mechanism, can be effectively approximated by low-rank matrices $U_1, V_1 \in \mathbb{R}^{n \times k_1}$, where $k_1 = n^{o(1)}$. 
The naive method for calculating the attention matrix $f(X)$ has a time complexity of $O(n^2)$, whereas the input data $X \in \mathbb{R}^{n \times d}$ contains only $d \cdot n = n^{1+o(1)}$ entries. This discrepancy suggests the potential of using low-rank representations of $f(X)$ to design a fast algorithm. 

An example of how to use the low-rank representations is the attention forward.
First note that approximating $f(X)$ alone does not lead to a fast algorithm, since $U_1 V_1^\top$ still requires $n \times n$ entries. But by using the structure of the attention $\mathsf{Attn}(X):= f(X) V$ where $V=XW_V$, we can do it faster. 
By expressing $f(X)$ as $U_1 V_1^\top$, the attention forward becomes
$
    \underbrace{U_1}_{n \times k_1}
    \underbrace{V_1^\top}_{k_1 \times n} \underbrace{V}_{n \times d} .
$
It is well known that different multiplication sequences can lead to dramatically different numbers of operations required, so the order of matrix multiplications matters, which is indeed the case here.
We first perform $V_1^\top V \in \R^{k_1 \times d}$ and this cost $O(k_1 n d) = n^{1+o(1)}$ time.
Then we can compute $U_1 V_1^\top V$ within $O(n k_1 d) = n^{1+o(1)}$ time.

This method significantly reduces the computation time of the attention forward from $O(n^2)$ to almost linear time, $n^{1+o(1)}$.
Driven by this technique and analyzing the close forms of the gradients, we extend the acceleration to the gradient of the entire model. 

\subsection{Accelerating gradient computation of \texorpdfstring{$T_i(X)$}{}}
\label{sec:tech_overview:Ti_grad}

Based on the low-rank approximation method mentioned in Section~\ref{sec:tech_overview:low_rank}, we compute the gradient of $L(X)$ with respect to the intermediate variable $T_i(X)$, which denotes the output of the $i$-th transformer layer. 
This computation is critical as it enables us to calculate gradients for other gradient components because of the chain rule.

\paragraph{Extending to general loss functions.}
According to the findings in \citet{dsxy23}, the gradient $\frac{\d L(X)}{\d T_i(X)}$ can be decomposed into five components, namely $C_2(X), C_4(X), C_6(X), C_7(X), C_8(X)$, as detailed in Lemma~\ref{lem:grad_s}.
However, the gradient result presented in \citet{dsxy23} is tailored to a specific loss function, the $\ell_2$ loss, limiting its applicability to a narrow range of scenarios. 
The primary challenge in extending the scope to encompass general loss functions is the absence of a unified analytical framework. Previous analyses \citep{as23, dsxy23} are limited to individual, specific loss functions. In this work, we introduce a comprehensive analysis framework (Definition~\ref{def:overall_loss_function}) and we have demonstrated its applicability to the cross-entropy loss (see Remark~\ref{rem:corss_entropy}). Consequently, by utilizing this generalized analysis framework, we extend the notation $L(X)$ to include a wide range of general loss functions.

\paragraph{Accelerating the gradient computation.}
An important step in accelerating the gradient computation for the entire multi-layer transformer model is to accelerate the computation of the gradient on the intermediate variables $T_i(X)$.
The key challenge is that, to compute the gradient on $T_i(X)$, we need to compute the gradient on other components of one transformer layer, such as residual connection, multi-head attention, and causal attention mask (see more details in Section~\ref{sec:discussion}). We conduct comprehensive analysis on those components in the transformer layer, and prove that, by using low-rank approximation technique, the computation of gradient $\frac{\d L(X)}{\d T_i(X)}$ can be computed in almost linear time $n^{1+o(1)}$ (Lemma~\ref{lem:dL_dt_fast_compute:informal}). 

A crucial aspect of speeding up gradient computation for the entire multi-layer transformer model involves accelerating the calculation of gradients with respect to the intermediate variables $T_i(X)$. The main challenge lies in the fact that computing the gradient of $T_i(X)$ requires calculating the gradients for other components within a transformer layer, including the residual connection, multi-head attention, and causal attention mask (see  Section~\ref{sec:discussion}). We have conducted an extensive analysis of these components within the transformer layer (see Section~\ref{sec:app_causal_attn_mask}, \ref{sec:app_residual_connection}, and \ref{sec:app_multi_head}) and demonstrated that, through the application of low-rank approximation techniques, the gradient $\frac{\d L(X)}{\d T_i(X)}$ can be computed in almost linear time $n^{1+o(1)}$ (Lemma~\ref{lem:dL_dt_fast_compute:informal}).
In particular, we apply the low-rank approximation technique on the five terms $C_2(X), C_4(X), C_6(X), C_7(X), C_8(X)$ respectively, demonstrating 
that each term can be computed in almost linear time, $n^{1+o(1)}$, as shown in Section~\ref{sec:app_Ti_grad_fast_compute}. 
Then we aggregate those terms, as described in Section~\ref{sec:appendix:Ti_grad_put_together}. 
Since all five terms are $n \times d$ matrices, the summation of these terms remains almost linear in complexity.
We then conclude that for any single-layer transformer, the gradient computation with respect to the input can be performed in almost linear time $n^{1+o(1)}$, as stated in Lemma~\ref{lem:dL_dt_fast_compute:informal}.

The statement made for a single transformer layer can be readily generalized to any layer within an $m$-layer transformer model. 
For instance, consider the intermediate variables $T_i(X)$ and $T_{i-1}(X)$ (as defined in Definition~\ref{def:Ti}), where $T_i(X) = (g_i \circ \mathsf{Attn}_i)(T_{i-1}(X))$. 
Given the gradient $\frac{\d L(X)}{\d T_i(X)}$, as established in the previous paragraph, we compute the gradient with respect to $T_{i-1}(X)$, namely $\frac{\d L(X)}{\d T_{i-1}(X)}$, in almost linear time $n^{1+o(1)}$. 
For a multi-layer transformer model, the above process can be conducted recursively. Thus, we can compute the gradient of the loss function $L(X)$ on \emph{any} $T_i(X)$ in almost linear time $n^{1+o(1)}$. 

\begin{lemma}[Fast computation for $\frac{\d L(X)}{\d T_i(X)}$, informal version of Lemma~\ref{lem:dL_dt_fast_compute}] \label{lem:dL_dt_fast_compute:informal}
Let $L(X)$ be defined as Definition~\ref{def:overall_loss_function}. 
Let $m$ denote the number of self-attention transformer layers (see Definition~\ref{def:multi_layer_self_attn}).
Let $T_i(X)$ denote the intermediate variable output by $i$-th self-attention transformer layer (see Definition~\ref{def:Ti}).  
We show that $\frac{\d L(X)}{\d T_i(X)}$ can be approximated in $n^{1 +o(1)}$ time, with $1 / \poly (n)$ approximation error. 
\end{lemma}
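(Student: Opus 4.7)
}
The plan is to start from the closed form in Lemma~\ref{lem:grad_components_close_form:informal} (Part 1), namely
\begin{align*}
    \frac{\d L(X)}{\d T_{i-1}(X)} = \sum_{j=1}^n \sum_{k=1}^d G_i(j,k) \cdot \frac{\d \mathsf{Attn}_i(T_{i-1}(X))_{j,k}}{\d T_{i-1}(X)},
\end{align*}
and reduce the problem to approximating, for a single self-attention block, the five-term decomposition $C_2(X), C_4(X), C_6(X), C_7(X), C_8(X)$ mentioned in the overview (Lemma~\ref{lem:grad_s}). The loss enters only through the $n \times d$ matrix $G_i$, so by treating $G_i$ as a generic weighting matrix rather than the specific $\ell_2$-loss gradient used in prior work, the same decomposition extends verbatim to arbitrary differentiable $L$ as in Definition~\ref{def:overall_loss_function}.

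First I would upper bound the time to form $G_i$: by recursion on the layers (using the lemma we are about to prove for layer $i+1$ plus the linearity of $g_i$), the quantity $G_i = \frac{\d L(X)}{\d \mathsf{Attn}_i(T_{i-1}(X))}$ can be produced in $n^{1+o(1)}$ time once $\frac{\d L(X)}{\d T_i(X)}$ is available; this will be handled by Lemma~\ref{lem:Gi_compute_time:informal}. Then I would invoke the polynomial-kernel low-rank approximation of \citet{aa22, as23}: under $d = O(\log n)$ and $O(\log n)$-bit entries, there exist matrices $U_1, V_1 \in \R^{n \times k_1}$ with $k_1 = n^{o(1)}$ such that $\| U_1 V_1^\top - f(T_{i-1}(X)) \|_\infty \leq 1/\poly(n)$, and $U_1, V_1$ are constructible in $n^{1+o(1)}$ time.

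Next I would go through the five terms $C_2,C_4,C_6,C_7,C_8$ (appealing to Lemma~\ref{lem:B2(X)_fast_compute}, \ref{lem:B4(X)_fast_compute}, \ref{lem:B7(X)_fast_compute}, \ref{lem:B8(X)_fast_compute}) and, for each, rewrite its defining expression so that the only $n \times n$ factor appearing is $f(T_{i-1}(X))$ (possibly multiplied by diagonal reweightings arising from $D = \diag(A \mathbf{1}_n)$ or from Hadamard products with $G_i W_{V_i}^\top T_{i-1}(X)^\top$). After substituting $U_1 V_1^\top$ for $f$, each term becomes a product of $n \times k_1$, $k_1 \times n$, and $n \times d$ matrices, together with $O(\log n)$-dimensional inner contractions. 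Multiplying from the thin end outward (as in the forward-pass example in Section~\ref{sec:tech_overview:low_rank}) keeps every intermediate matrix of size at most $n \times n^{o(1)}$, so each term costs $n^{1+o(1)}$. Summing the five resulting $n \times d$ matrices to obtain $\wt{g}_t$ is trivially $O(nd) = n^{1+o(1)}$.

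For the error, the $1/\poly(n)$ entrywise bound on $U_1 V_1^\top - f$ propagates through a constant number of matrix products, each of which inflates the error by at most $\poly(n)$ in operator norm (since matrix entries are bounded by $\poly(n)$ by the $O(\log n)$-bit assumption); choosing the hidden constant inside the $1/\poly(n)$ large enough absorbs this inflation. The extension from one layer to layer $i$ follows by induction down the stack via $T_i = (g_i \circ \mathsf{Attn}_i)(T_{i-1})$, using that $g_i$ is linear-time and contributes a bounded Jacobian. The main obstacle I anticipate is the term $C_7$ (or whichever of the five involves $f(X) \odot (\cdot)$), because Hadamard products do not immediately commute with low-rank factorizations; handling this will require expressing the Hadamard structure via a diagonal sandwich or applying the tensor-trick arguments of \citet{as24}, and it is here that the careful matrix-multiplication ordering is most delicate.
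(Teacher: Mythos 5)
Your proposal follows essentially the same route as the paper: reduce via the chain-rule closed form to the five terms $C_2,C_4,C_6,C_7,C_8$ with the general loss entering only through $G_i$, obtain $G_i$ recursively in $n^{1+o(1)}$ time (Lemma~\ref{lem:Gi_compute_time}), replace $f$ by its polynomial-method low-rank factors $U_1V_1^\top$ (Lemma~\ref{lem:low_rank_f}), rewrite each term so the only $n\times n$ object is $f$ with diagonal or Hadamard reweightings, multiply from the thin end, and propagate the $1/\poly(n)$ entrywise error through a constant number of $\poly(n)$-bounded products. The Hadamard-product obstacle you flag for $C_4,C_7$ is resolved exactly as you anticipate, via the row-wise Kronecker (tensor-trick) identity of \citet{as24} (Fact~\ref{fac:olash_folklore}), which is how the paper constructs the low-rank factors of $z_4(X)$ and $z_7(X)$.
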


\subsection{Accelerating gradient computation of \texorpdfstring{$W_i$}{} and \texorpdfstring{$W_{V_i}$}{}}
\label{sec:tech_overview:W_grad}

In Section~\ref{sec:tech_overview:Ti_grad}, we detailed the fast computation of gradients for intermediate variables $T_i(X)$. 
Let $W_i := W_{Q_i} W_{K_i}^\top$, with $W_{Q_i}$ and $W_{K_i}$ representing the query and key weight matrices, respectively, the gradients of $W_i$ and $W_{V_i}$ represent \emph{all} trainable weight matrices in a transformer layer. 
Consequently, by determining the gradients for $W_i$ and $W_{V_i}$ across each layer, we achieve almost linear time gradient back-propagation throughout multi-layer transformer models.

\paragraph{Fast gradient computation.} 
The prior study in~\citet{as24} demonstrated that the gradient of $W_i$ can be computed in almost linear time. 
We extend their findings by adapting their approach to accommodate general loss function $L(X)$ (as defined in Definition~\ref{def:overall_loss_function}) and further generalize their results to include the gradient computation for both $W_i$ and $W_{V_i}$ in each transformer layer (Lemma~\ref{lem:dL_dw_fast_compute:informal} and \ref{lem:dL_dv_fast_compute:informal}).

\begin{lemma} [Fast computation for $\frac{\d L(X)}{\d W_i}$, informal version of Lemma~\ref{lem:dL_dw_fast_compute}] \label{lem:dL_dw_fast_compute:informal}
Let $L(X)$ be defined as Definition~\ref{def:overall_loss_function}, and $m$ be the number of self-attention transformer layers (Definition~\ref{def:multi_layer_self_attn}).
For any $i \in [m]$, let $W_i = W_{Q_i} W_{K_i}^\top , W_{V_i} \in \R^{d \times d}$ denote the attention weight in the $i$-th transformer layer.
We show that $\frac{\d L(X)}{\d W_i}$ can be approximated in $n^{1 +o(1)}$ time, with $1 / \poly (n)$ approximation error. 
\end{lemma}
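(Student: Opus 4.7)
The plan is to start from Lemma~\ref{lem:grad_components_close_form:informal}\,Part~2 applied with $W_{*_i} = W_i = W_{Q_i} W_{K_i}^\top$. Differentiating through $\mathsf{Attn}_i(T_{i-1}) = D^{-1} A \cdot T_{i-1} W_{V_i}$ with $A = \exp(T_{i-1} W_i T_{i-1}^\top / d)$ and the upstream signal $G_i = \frac{\d L(X)}{\d \mathsf{Attn}_i(T_{i-1})}$, the chain rule and the softmax Jacobian give a matrix-form expression of the shape
\begin{align*}
\frac{\d L(X)}{\d W_i} \;=\; \frac{1}{d}\, T_{i-1}^\top \bigl[\, P \odot Q \,-\, \diag\!\bigl((P \odot Q)\mathbf{1}_n\bigr)\, P \,\bigr]\, T_{i-1},
\end{align*}
where $P := D^{-1} A \in \R^{n \times n}$ is the softmax attention matrix and $Q := G_i (T_{i-1} W_{V_i})^\top \in \R^{n \times n}$ is the value-path signal. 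The key observation, extending \citet{as24}'s $\ell_2$-specific template, is that this template is linear in $G_i$, so it automatically accommodates any loss $L$ through Definition~\ref{def:overall_loss_function}; moreover $G_i$ itself is already available in almost linear time by the upstream recursion (Lemma~\ref{lem:dL_dt_fast_compute:informal}, invoked as Lemma~\ref{lem:Gi_compute_time:informal} in Algorithm~\ref{alg:multi_layer_grad_descent}).

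\textbf{Low-rank structure of the bracketed matrix.} Next I would invoke the polynomial-kernel approximation of \citet{aa22,as23} (already packaged as Lemma~\ref{lem:low_rank_f} in the paper) to obtain $U_1, V_1 \in \R^{n \times k_1}$ with $k_1 = n^{o(1)}$ and $\|P - U_1 V_1^\top\|_\infty \le 1/\poly(n)$. Separately, $Q$ is exactly rank-$d$ with explicit factors $Q = M_1 M_2^\top$ where $M_1 = G_i W_{V_i}^\top \in \R^{n \times d}$ and $M_2 = T_{i-1} \in \R^{n \times d}$. The Hadamard product of a rank-$k_1$ and a rank-$d$ matrix has rank at most $k_1 d = n^{o(1)}$, with factors given by the column-wise Khatri--Rao product of the individual factors; this yields explicit $\widetilde U, \widetilde V \in \R^{n \times k_1 d}$ with $P \odot Q \approx \widetilde U \widetilde V^\top$. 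For the correction term, the diagonal vector $r := (P \odot Q)\mathbf{1}_n$ can be evaluated coordinate-wise in $n^{1+o(1)}$ total time using the same factorizations (each $r_i$ is a bilinear form in $M_{1,i}$, $U_{1,i}$, and the precomputed $k_1 \times d$ matrix $V_1^\top M_2$), after which $\diag(r)\, P \approx \diag(r)\, U_1 V_1^\top$ is a rank-$k_1$ matrix.

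\textbf{Assembling $U_3,V_3$ and timing.} Concatenating the factors from the two pieces produces $U_3, V_3 \in \R^{n \times k}$ with $k = n^{o(1)}$ whose product approximates the bracketed $n \times n$ matrix, so that
\begin{align*}
\frac{\d L(X)}{\d W_i} \;\approx\; \frac{1}{d}\, (T_{i-1}^\top U_3)\, (V_3^\top T_{i-1}),
\end{align*}
which is precisely the algorithmic form used by Algorithm~\ref{alg:multi_layer_grad_descent}. With $d = O(\log n)$, each of $T_{i-1}^\top U_3$ and $V_3^\top T_{i-1}$ costs $O(n k d) = n^{1+o(1)}$, and the final $d \times k$ by $k \times d$ product costs $O(k d^2) = n^{o(1)}$, giving the claimed almost-linear runtime.

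\textbf{Error propagation and the main obstacle.} To bound the overall error by $1/\poly(n)$, I would track how the $1/\poly(n)$ entrywise error in $P - U_1 V_1^\top$ propagates through (i) the Hadamard product with the well-conditioned matrix $Q$, (ii) the row-sum that forms $r$, and (iii) the outer sandwich by $T_{i-1}$. The $O(\log n)$-bit assumption keeps $\|T_{i-1}\|_\infty$, $\|G_i\|_\infty$, and the relevant operator norms polynomially bounded, so choosing the polynomial-method tolerance sufficiently small (still $1/\poly(n)$) preserves the target precision. The step I expect to be the main obstacle is this combined handling of the Hadamard product and the diagonal-subtraction correction: neither factors as a single matrix multiplication, and one must (a) realize the Hadamard low-rank factorization via Khatri--Rao so that the effective rank stays $n^{o(1)}$, and (b) argue that the row-sum-and-scale step of forming $\diag(r) P$ does not multiplicatively amplify the entrywise error by more than a $\poly(n)$ factor. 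Handling this cleanly, while lifting \citet{as24}'s argument away from the $\ell_2$ loss to an arbitrary upstream $G_i$, is where the bulk of the technical work lies.
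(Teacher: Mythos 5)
Your proposal is correct and follows essentially the same route as the paper: your bracketed matrix $P \odot Q - \diag((P\odot Q)\mathbf{1}_n)P$ is exactly the paper's $p(X)=p_1(X)-p_2(X)$ with $p_1=f(X)\odot q(X)$, $q(X)=G_i h(X)^\top$, and the paper likewise sandwiches it as $X^\top p(X) X$, approximates $f(X)$ by the polynomial method (Lemma~\ref{lem:low_rank_f}), handles the Hadamard product via the row-wise Kronecker factorization (Fact~\ref{fac:olash_folklore}, Lemmas~\ref{lem:low_rank_p1}--\ref{lem:low_rank_p2}), and propagates the $1/\poly(n)$ entrywise error using the $O(\log n)$-bit boundedness. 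The only cosmetic difference is your explicit $1/d$ scaling, which the paper absorbs into its convention for $u(X)$.
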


\begin{lemma} [Fast computation for $\frac{\d L(X)}{\d W_{V_i}}$, informal version of Lemma~\ref{lem:dL_dv_fast_compute}] \label{lem:dL_dv_fast_compute:informal}
Let $L(X)$ be defined as Definition~\ref{def:overall_loss_function}, and $m$ be the number of self-attention transformer layers (Definition~\ref{def:multi_layer_self_attn}).
For any $i \in [m]$, let $W_i = W_{Q_i} W_{K_i}^\top, W_{V_i} \in \R^{d \times d}$ denote the attention weight in the $i$-th transformer layer.
We show that $\frac{\d L(X)}{\d W_{V_i}}$ can be approximated in $n^{1 +o(1)}$ time, with $1 / \poly (n)$ approximation error. 
\end{lemma}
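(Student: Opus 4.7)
The strategy is to combine the closed-form gradient with a low-rank surrogate for the attention matrix. Expanding $\mathsf{Attn}_i(T_{i-1}(X)) = f_i(T_{i-1}(X)) \cdot T_{i-1}(X) W_{V_i}$ (Definition~\ref{def:single_layer_self_attn_qk}) and applying Part~2 of Lemma~\ref{lem:grad_components_close_form:informal} collapses the gradient into
\[
\frac{\d L(X)}{\d W_{V_i}} \;=\; T_{i-1}(X)^\top \, f_i(T_{i-1}(X))^\top \, G_i \;\in\; \R^{d\times d},
\]
whose naive evaluation is $\Omega(n^2)$ because of the $n\times n$ attention factor. The upstream gradient $G_i = \frac{\d L(X)}{\d \mathsf{Attn}_i}$ is itself supplied by Lemma~\ref{lem:Gi_compute_time:informal} in $n^{1+o(1)}$ time up to $1/\poly(n)$ entrywise error, so it can be treated as given.

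Second, I would invoke Lemma~\ref{lem:low_rank_f} (the polynomial-method approximation of \cite{aa22,as23}) to build $U_1, V_1 \in \R^{n\times k_1}$ with $k_1 = n^{o(1)}$ and $\|U_1 V_1^\top - f_i(T_{i-1}(X))^\top\|_\infty \le 1/\poly(n)$ in $n^{1+o(1)}$ time; the appearance of $f_i^\top$ rather than $f_i$ is merely a relabeling of the two low-rank factors. Substituting the surrogate and exploiting associativity of matrix multiplication yields the evaluation order written in Algorithm~\ref{alg:multi_layer_grad_descent}, namely $\wt{g}_v = (T_{i-1}(X)^\top U_1)(V_1^\top G_i)$. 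Forming the two intermediate matrices of sizes $d\times k_1$ and $k_1\times d$ each costs $O(n d k_1) = n^{1+o(1)}$ time, after which the final $d\times d$ product costs only $O(d^2 k_1) = n^{o(1)}$. The construction of $U_1, V_1$ dominates, giving the total $n^{1+o(1)}$ running time claimed by the lemma.

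The main obstacle is the error-propagation step rather than the operation counting. The plan is to bound $\|T_{i-1}(X)\|_\infty$ and $\|G_i\|_\infty$ by $\poly(n)$, using the $O(\log n)$-bit-number assumption together with a layer-wise growth estimate of the form compounded in Lemma~\ref{lem:entrie_model_error_bound:informal}, and then to select the polynomial degree inside Lemma~\ref{lem:low_rank_f} large enough that a triangle-inequality split into a ``surrogate'' term $T_{i-1}(X)^\top (U_1 V_1^\top - f_i^\top) G_i$ and an ``upstream'' term $T_{i-1}(X)^\top f_i^\top (G_i - \widetilde{G}_i)$ both remain $1/\poly(n)$ entrywise after the two contractions over an axis of length $n$. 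Because $k_1$ lives in the $n^{o(1)}$ budget, the polynomial degree can be tuned freely to absorb the $\poly(n)$ operator-norm factors without breaking the almost-linear running time, which closes the argument.
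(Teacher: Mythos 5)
Your proposal matches the paper's own argument: the paper likewise derives the closed form $\frac{\d L(X)}{\d W_{V_i}} = T_{i-1}(X)^\top f(\cdot)\, G_i$ (Lemma~\ref{lem:L(X)_grad_on_V}), substitutes the low-rank factors $U_1 V_1^\top$ from Lemma~\ref{lem:low_rank_f}, evaluates $(T_{i-1}^\top U_1)(V_1^\top G_i)$ in $n^{1+o(1)}$ time, and bounds the $\ell_\infty$ error via the $\poly(n)$ entry bounds on $T_{i-1}$ and $G_i$. The only cosmetic differences are your transpose on $f$ (a harmless relabeling of the two factors) and your folding of the upstream error $G_i - \widetilde{G}_i$ into this lemma, which the paper instead defers to the layer-wise error-propagation argument of Lemma~\ref{lem:entrie_model_error_bound}.
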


\subsection{Accelerating gradient computation for multi-Layer transformers} \label{sec:tech_overview:multi_layer}
In this section, our focus turns to extending the single-layer transformer result from the previous section to a multi-layer transformer.

\paragraph{Running time analysis.}

We derive the closed-form gradient for the non-attention components within a transformer layer, namely the $g_i$ function defined in Definition~\ref{def:multi_layer_self_attn}.
With the closed-form gradient of $g_i$ established in Lemma~\ref{lem:Ti_grad_on_attn}, we then demonstrate in Lemma~\ref{lem:Gi_compute_time:informal} that the gradient computation for $g_i$ can also be achieved in almost linear time. 
Given that the number of layers $m = n^{o(1)}$ is much smaller than $n$
and the computation time for gradients on each layer is $n^{1+o(1)}$, we only need to iteratively repeat this procedure for $m$ times. 
Therefore, the overall running time for computing gradients across the entire model is $m \cdot n^{1+o(1)} = n^{1+o(1)}$.  

\begin{lemma}[Computation time for $G_i$, informal version of Lemma~\ref{lem:Gi_compute_time}] 
\label{lem:Gi_compute_time:informal}
Let $T_i(X)$ be defined as Definition~\ref{def:Ti}, i.e. $T_i(X) = (g_i \circ \mathsf{Attn}_i) (T_{i-1}(X))$.
Let $G_{i} \in \mathbb{R}^{n \times d}$ denote the gradient matrix resulting from the application of the chain rule up to the function $g_{i}$, i.e., $G_{i} = \frac{\d L(X)}{\d \mathsf{Attn}_i(T_{i-1}(X))}$.
Assume we already have $\frac{\d L(X)}{\d T_i(X)}$. 
Assuming for any $Z \in \R^{n \times d}$, we have $g_i(Z) \in \R^{n \times d}$, and $g_i(Z) = \phi(Z \cdot W_{g})$, where $W_{g} \in \R^{d \times d}$ and $\phi: \R \rightarrow \R$ denotes any element-wise activation function. Let $\phi'$ denote the derivative of $\phi$. 
Then, we show that $G_i$ can be computed in $n^{1+o(1)}$ time. 

\end{lemma}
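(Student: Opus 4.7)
The plan is to exploit the special structure $g_i(Z)=\phi(ZW_g)$ with $\phi$ applied entry-wise, derive a closed-form matrix expression for $G_i$ via the chain rule, and then observe that every factor is either $n\times d$ or $d\times d$ with $d=O(\log n)$, so each arithmetic step is trivially $n^{1+o(1)}$.

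First I would set $Z:=\mathsf{Attn}_i(T_{i-1}(X))\in\R^{n\times d}$ and note that $T_i(X)_{j,k}=\phi\bigl((ZW_g)_{j,k}\bigr)$. Differentiating with respect to $Z_{a,b}$ gives a Kronecker delta in the row index and a $W_g$ entry in the column index, and collapsing the row delta yields the matrix identity
\begin{align*}
G_i \;=\; \Bigl(\tfrac{\d L(X)}{\d T_i(X)}\;\odot\;\phi'\!\bigl(Z\,W_g\bigr)\Bigr)\cdot W_g^{\top},
\end{align*}
where $\phi'$ is applied entry-wise. This closed form is what Lemma~\ref{lem:Ti_grad_on_attn} supplies in the formal version, so I would simply invoke it.

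Next I would bound each step of the evaluation. (i) Compute $Z=\mathsf{Attn}_i(T_{i-1}(X))\in\R^{n\times d}$ via the almost linear attention forward of~\citet{as23}, in time $n^{1+o(1)}$, to entry-wise error $1/\poly(n)$. (ii) Form $ZW_g$ directly; since $W_g\in\R^{d\times d}$ and $d=O(\log n)$, this costs $O(nd^2)=n^{1+o(1)}$. (iii) Apply $\phi'$ entry-wise in $O(nd)$ time. (iv) Take the Hadamard product with the $n\times d$ matrix $\frac{\d L(X)}{\d T_i(X)}$ (given by assumption) in $O(nd)$ time. (v) Multiply the resulting $n\times d$ matrix by $W_g^{\top}\in\R^{d\times d}$ in $O(nd^2)=n^{1+o(1)}$ time. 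Summing these yields the claimed $n^{1+o(1)}$ budget for $G_i$.

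The only real obstacle is controlling how the $1/\poly(n)$ entry-wise error from the approximate $Z$ propagates through $\phi'$ and the subsequent linear maps. Since the $O(\log n)$-bit precision assumption guarantees that $Z$, $W_g$, and $\frac{\d L(X)}{\d T_i(X)}$ all have $\poly(n)$-bounded entries, multiplication by $W_g$ inflates the pre-activation error by only a $\poly(n)$ factor, and a standard Lipschitz assumption on $\phi'$ (which holds for the activations used in practice, and is implicit in the paper's setting) guarantees that the post-activation error is still $1/\poly(n)$. The Hadamard product and the final multiplication by $W_g^{\top}$ likewise amplify the error by at most $\poly(n)$. Choosing the initial precision parameter in step~(i) with a sufficiently large polynomial therefore delivers the required $1/\poly(n)$ approximation of $G_i$, completing the argument.
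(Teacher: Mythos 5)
Your proposal is correct and follows essentially the same route as the paper's proof of Lemma~\ref{lem:Gi_compute_time}: the chain rule collapses to the closed form $G_i = \bigl(\frac{\d L(X)}{\d T_i(X)} \odot \phi'(\mathsf{Attn}_i W_g)\bigr) W_g^\top$ exactly as supplied by Lemma~\ref{lem:Ti_grad_on_attn}, followed by the same $O(nd)$ and $O(nd^2)$ per-step accounting. The only deviation is your step (i), where you re-approximate $Z=\mathsf{Attn}_i(T_{i-1}(X))$ via \citet{as23} and consequently need a Lipschitz-type condition on $\phi'$ to control error propagation; the paper avoids this entirely by treating the stored forward-pass output $\mathsf{Attn}_i$ as given (it is a member of the data structure in Algorithm~\ref{alg:multi_layer_grad_descent}) and regarding the computation of $G_i$ in this lemma as exact, so no assumption beyond $\phi$ being element-wise is required.
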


\paragraph{Error propagation analysis.}

Here, we consider the approximation error. In our setting, the approximation error originates from the low-rank approximation of the attention matrix, as detailed in Lemma~\ref{lem:low_rank_f}.
As discussed in previous sections, the approximation error in each layer can be bounded by $1 / \poly (n)$. Then, we only need to focus on how error propagates in different layers.

We first prove that our $1 / \poly(n)$ approximation error statement holds for a single-layer transformer, as evidenced in Lemma~\ref{lem:err_for_single_layer_transformer}. 
Subsequently, through mathematical induction and leveraging the results of error propagation over the gradient of $g_i$, we show that the approximation error can be bounded by $1/\poly(n)$ for any $m$-layer transformer (Lemma~\ref{lem:entrie_model_error_bound:informal}), where the number of layers $m$ is considered small.

\begin{lemma}[Multi-layer transformer gradient approximation, informal version of Theorem~\ref{lem:entrie_model_error_bound}] \label{lem:entrie_model_error_bound:informal}
Let $L(X)$ be defined as Definition~\ref{def:overall_loss_function}.
Let $X$ be defined as Definition~\ref{def:single_layer_self_attn_qk}.
Suppose we have a $m$-layer transformer (see Definition~\ref{def:multi_layer_self_attn}). Then, for any $i \in [m]$, we can show that:
($i$) Running time:
Our algorithm can approximate $\frac{\d L(X)}{\d T_{i-1}(X)}$, $\frac{\d L(X)}{\d W_i}$, and $\frac{\d L(X)}{\d W_{V_i}}$ in $n^{1 + o(1)}$ time;
($ii$) Error bound: 
The approximation of the entire transformer model can be bounded by $1 / \poly(n)$. 
    Namely, our algorithm output $\wt{g}$ satisfies
    $\| \wt{g} - \frac{\d L(X)}{\d X} \|_\infty \leq 1 / \poly(n)$.
\end{lemma}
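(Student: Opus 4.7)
The plan is to prove both claims by induction on the layer index, working backwards from layer $m$ down to layer $0$, using Theorem~\ref{thm:single_layer_gradient} as the per-layer building block and exploiting the assumption $m = n^{o(1)}$. Concretely, I would invoke the procedure \textsc{MultiGrad} in Algorithm~\ref{alg:multi_layer_grad_descent}: initialize $\wt{g}_t^{(m)} := \frac{\d L(X)}{\d T_m(X)}$, which is computable exactly in $O(nd)$ time, and then iterate \textsc{SingleGrad} for $i = m, m-1, \dots, 1$ to produce approximations of $\frac{\d L(X)}{\d T_{i-1}(X)}$, $\frac{\d L(X)}{\d W_i}$, and $\frac{\d L(X)}{\d W_{V_i}}$ at each layer.

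For the running time (part (i)) the analysis is a direct aggregation. By Lemma~\ref{lem:Gi_compute_time:informal}, converting $\wt{g}_t^{(i)}$ into $\wt{G}_i$ takes $n^{1+o(1)}$ time, and by Lemmas~\ref{lem:dL_dt_fast_compute:informal}, \ref{lem:dL_dw_fast_compute:informal}, and \ref{lem:dL_dv_fast_compute:informal}, each of the three in-layer approximations takes $n^{1+o(1)}$ time. Summing over $i = 1, \dots, m$ and using $m = n^{o(1)}$ keeps the total at $m \cdot n^{1+o(1)} = n^{1+o(1)}$.

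For the error bound (part (ii)) I would track $\epsilon_i := \|\wt{g}_t^{(i)} - \frac{\d L(X)}{\d T_i(X)}\|_\infty$, with base case $\epsilon_m = 0$. Each inductive step splits $\epsilon_{i-1}$ into two contributions: (a) error propagated from $\wt{g}_t^{(i)}$ through the Jacobian of $(g_i \circ \mathsf{Attn}_i)$, and (b) the fresh $1/\poly(n)$ in-layer approximation error from Lemma~\ref{lem:dL_dt_fast_compute:informal}. Under the $O(\log n)$-bit entry assumption, this Jacobian has operator norm bounded by some factor $A$, yielding a recurrence $\epsilon_{i-1} \leq A \cdot \epsilon_i + B$ with $B \leq 1/\poly(n)$, where $B$ can be tuned arbitrarily small via the rank parameter implicit in the polynomial-kernel approximation of the attention matrix. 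Unrolling gives $\epsilon_0 \leq m \cdot A^{m-1} \cdot B$; choosing $B$ polynomially smaller than $m \cdot A^m$ maintains $\epsilon_0 \leq 1/\poly(n)$. One final application of the same argument to the outer module $g_0$ converts $\epsilon_0$ into $\|\wt{g} - \frac{\d L(X)}{\d X}\|_\infty \leq 1/\poly(n)$, and the analogous error bounds for $\wt{g}_w^{(i)}$ and $\wt{g}_v^{(i)}$ follow from $\epsilon_i$ via Lemmas~\ref{lem:dL_dw_fast_compute:informal} and~\ref{lem:dL_dv_fast_compute:informal}.

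The hard part is the uniform control of the per-layer amplification factor $A$: I need $A$ to grow no faster than $n^{o(1)}$, so that $A^m$ with $m = n^{o(1)}$ remains sub-polynomial; otherwise the $B$ required to absorb $A^m$ would need to be super-polynomially small, exceeding the $O(\log n)$-bit precision budget and breaking the running-time guarantee. Establishing this bound is the main technical obstacle, and it relies on the stochasticity of softmax rows (so the attention-matrix factor contributes only a mild constant to the infinity-to-infinity operator norm) together with the $O(\log n)$-bit boundedness of all weights and activations, which keeps the contributions of $g_i$ and the value projection $W_{V_i}$ under control. Once $A = n^{o(1)}$ is verified, the induction closes uniformly across all $m$ layers.
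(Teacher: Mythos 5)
Your overall skeleton — backward iteration over layers via \textsc{MultiGrad}, using Lemma~\ref{lem:Gi_compute_time:informal} to pass through $g_i$ and Lemmas~\ref{lem:dL_dt_fast_compute:informal}, \ref{lem:dL_dw_fast_compute:informal}, \ref{lem:dL_dv_fast_compute:informal} inside each layer, with total time $m\cdot n^{1+o(1)} = n^{1+o(1)}$ — is exactly the paper's induction (Lemma~\ref{lem:err_for_single_layer_transformer} as the base case, then adding one layer at the input side), and part (i) of your argument is complete and correct.

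The gap is in part (ii). You reduce everything to the recurrence $\epsilon_{i-1}\leq A\,\epsilon_i+B$ and then declare that the whole proof hinges on showing the per-layer amplification factor satisfies $A=n^{o(1)}$, to be obtained from row-stochasticity of the softmax; you leave this unproven, so the central step of your error analysis is missing. Moreover, the target you set yourself does not even suffice for your own unrolled bound $\epsilon_0\leq m\,A^{m-1}B$: with $m=n^{o(1)}$ and $A=n^{o(1)}$ one has $\log(A^m)=m\log A$, which need not be $O(\log n)$, so $A^m$ can be super-polynomial and $B$ would have to be super-polynomially small — exactly the failure mode you flag. The paper's proof does not need any operator-norm control of this kind. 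It simply bounds the one-layer amplification crudely by a fixed $\poly(n)$ factor: the error entering layer $i$ is multiplied by at most $n\cdot d\cdot\|\frac{\d g(X)}{\d X}\|_\infty$ (or the analogous Jacobian factor from Lemma~\ref{lem:dL_dt_fast_compute}), and all these Jacobian entries are $\poly(n)$-bounded by the $O(\log n)$-bit assumption; the induction hypothesis is then just that the accumulated error is still $1/\poly(n)$, which is preserved because the fresh per-layer error $\epsilon$ in the low-rank approximation (Lemma~\ref{lem:low_rank_f}) can be taken to be an arbitrarily small $1/\poly(n)$ that absorbs the $\poly(n)$ amplification at each step. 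So the fix is not to prove $A=n^{o(1)}$, but to replace your operator-norm program with the paper's entrywise $\poly(n)$ amplification bound plus a sufficiently small choice of the per-layer $\epsilon$, and close the induction on the invariant $\epsilon_i\leq 1/\poly(n)$ rather than on an explicit unrolled product $A^m$.
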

\section{Extensions}\label{sec:discussion}

\paragraph{Multi-head attention and residual connections.}
Multi-head attention and residual connections are important components in attention mechanisms. While these components were not involved in our initial analysis for simplicity, incorporating them into our algorithm is straightforward, as detailed in Sections~\ref{sec:app_discussion:multi_head} and \ref{sec:app_discussion:residual}.
Our algorithm maintains the capability to compute gradients for multi-layer transformers with multi-head attention and residual connection in almost linear time, suggesting that it can be readily adapted to more practical transformer models.
The detailed analysis of incorporating residual connection with our framework can be found in Section~\ref{sec:app_residual_connection} and Lemma~\ref{lem:analysis_over_residual_connection}. 
For the synergy with multi-head attention, we provide comprehensive analysis in Section~\ref{sec:app_multi_head} and Lemma~\ref{lem:analysis_of_multi_head_attention}. 

\paragraph{Causal attention mask.}
The causal attention mask is critical to prevent transformers from ``cheating'' during training by ensuring future information is not used. The full-rank characteristic of the causal attention mask poses challenges for low-rank approximations. Nevertheless, we have identified a method to accelerate the computation of causal masked attention by exploiting its inherent properties, 
showing almost linear time complexity. A comprehensive explanation is provided in Section~\ref{sec:app_discussion:causal_mask}.
More detailed analysis 
can be found in Section~\ref{sec:app_causal_attn_mask} and Lemma~\ref{lem:causal_mask_dot} and \ref{lem:causal_mask_hadamard}.

\paragraph{Prompt tuning.}
Prompt tuning (or prefix learning) is a prevalent approach in parameter-efficient fine-tuning (PEFT), which requires the calculation of gradients on input data $X$. 
Given our algorithm's ability to compute gradients for intermediate variables $T_i$ in approximately linear time, we can similarly accelerate the gradient computation for input data $X$, thus enhancing the efficiency of the prompt tuning process. Additional details are provided in Section~\ref{sec:app_discussion:prompt_tuning}.

\paragraph{Synergy with system-level attention acceleration.}
Many contemporary works focus on system-level acceleration of attention mechanisms, often by leveraging caching and mitigating I/O bottlenecks. 
Our algorithm has the potential to integrate with such advancements. By combining our theoretical improvements in computation time (from $O(n^2)$ to $n^{1+o(1)}$) with system-level optimizations, the overall efficiency of attention mechanism computation may improve further.
We leave the implementation of our method on GPU as future work since there are several coding challenges. More details can be found in Section~\ref{sec:app_discussion:system_acceleration}.

\section{Conclusion} \label{sec:conclusion}

The attention mechanism in transformer models has quadratic time complexity with respect to the input token length. 
In this work, we proposed a novel Algorithm~\ref{alg:multi_layer_grad_descent}, which can approximately train a multi-layer transformer model in almost linear time, introducing only a small error. 
Importantly, our algorithm is designed to be compatible with general loss functions, practical sub-modules (residual connection, casual mask, multi-head attention), and general gradient-based algorithms. It may be seamlessly integrated with other system-level acceleration techniques. 
While we lack enterprise-scale computational resources for training large language models to provide empirical support, our theoretical findings suggest that we can accelerate the training of LLMs in practice.

\ifdefined\isarxiv
\section*{Acknowledgement}
Research is partially supported by the National Science Foundation (NSF) Grants 2023239-DMS, CCF-2046710, and Air Force Grant FA9550-18-1-0166.
\bibliographystyle{alpha}
\bibliography{ref}

\else

\bibliography{ref}

\newcommand{\etalchar}[1]{$^{#1}$}
\begin{thebibliography}{MWY{\etalchar{+}}23}

\bibitem[AA22]{aa22}
Amol Aggarwal and Josh Alman.
\newblock Optimal-degree polynomial approximations for exponentials and gaussian kernel density estimation.
\newblock In {\em Proceedings of the 37th Computational Complexity Conference}, pages 1--23, 2022.

\bibitem[AAA{\etalchar{+}}23]{aaa+23}
Josh Achiam, Steven Adler, Sandhini Agarwal, Lama Ahmad, Ilge Akkaya, Florencia~Leoni Aleman, Diogo Almeida, Janko Altenschmidt, Sam Altman, Shyamal Anadkat, et~al.
\newblock Gpt-4 technical report.
\newblock {\em arXiv preprint arXiv:2303.08774}, 2023.

\bibitem[ACSS20]{acss20}
Josh Alman, Timothy Chu, Aaron Schild, and Zhao Song.
\newblock Algorithms and hardness for linear algebra on geometric graphs.
\newblock In {\em 2020 IEEE 61st Annual Symposium on Foundations of Computer Science (FOCS)}, pages 541--552. IEEE, 2020.

\bibitem[Ant24]{a24}
Anthropic.
\newblock The claude 3 model family: Opus, sonnet, haiku, 2024.

\bibitem[AS23]{as23}
Josh Alman and Zhao Song.
\newblock Fast attention requires bounded entries.
\newblock {\em Advances in Neural Information Processing Systems}, 36, 2023.

\bibitem[AS24a]{as24}
Josh Alman and Zhao Song.
\newblock The fine-grained complexity of gradient computation for training large language models.
\newblock {\em arXiv preprint arXiv:2402.04497}, 2024.

\bibitem[AS24b]{as24_iclr}
Josh Alman and Zhao Song.
\newblock How to capture higher-order correlations? generalizing matrix softmax attention to kronecker computation.
\newblock In {\em The Twelfth International Conference on Learning Representations}, 2024.

\bibitem[BCB14]{bcb14}
Dzmitry Bahdanau, Kyunghyun Cho, and Yoshua Bengio.
\newblock Neural machine translation by jointly learning to align and translate.
\newblock {\em arXiv preprint arXiv:1409.0473}, 2014.

\bibitem[BEPP22]{bepp22}
Rouzbeh Behnia, Mohammadreza~Reza Ebrahimi, Jason Pacheco, and Balaji Padmanabhan.
\newblock Ew-tune: A framework for privately fine-tuning large language models with differential privacy.
\newblock In {\em 2022 IEEE International Conference on Data Mining Workshops (ICDMW)}, pages 560--566. IEEE, 2022.

\bibitem[BSZ23]{bsz23}
Jan van~den Brand, Zhao Song, and Tianyi Zhou.
\newblock Algorithm and hardness for dynamic attention maintenance in large language models.
\newblock {\em arXiv preprint arXiv:2304.02207}, 2023.

\bibitem[CLG{\etalchar{+}}24]{clg+24}
Tianle Cai, Yuhong Li, Zhengyang Geng, Hongwu Peng, Jason~D Lee, Deming Chen, and Tri Dao.
\newblock Medusa: Simple llm inference acceleration framework with multiple decoding heads.
\newblock {\em arXiv preprint arXiv:2401.10774}, 2024.

\bibitem[CLL{\etalchar{+}}24]{cll+24}
Bo~Chen, Xiaoyu Li, Yingyu Liang, Zhenmei Shi, and Zhao Song.
\newblock Bypassing the exponential dependency: Looped transformers efficiently learn in-context by multi-step gradient descent, 2024.

\bibitem[CLP{\etalchar{+}}20]{clp+21}
Beidi Chen, Zichang Liu, Binghui Peng, Zhaozhuo Xu, Jonathan~Lingjie Li, Tri Dao, Zhao Song, Anshumali Shrivastava, and Christopher Re.
\newblock Mongoose: A learnable lsh framework for efficient neural network training.
\newblock In {\em International Conference on Learning Representations}, 2020.

\bibitem[CLS{\etalchar{+}}24]{cls+24}
Bo~Chen, Yingyu Liang, Zhizhou Sha, Zhenmei Shi, and Zhao Song.
\newblock Hsr-enhanced sparse attention acceleration.
\newblock {\em arXiv preprint arXiv:2410.10165}, 2024.

\bibitem[CSY23]{csy23a}
Timothy Chu, Zhao Song, and Chiwun Yang.
\newblock How to protect copyright data in optimization of large language models?
\newblock {\em arXiv preprint arXiv:2308.12247}, 2023.

\bibitem[CWCT23]{cwct23}
Shouyuan Chen, Sherman Wong, Liangjian Chen, and Yuandong Tian.
\newblock Extending context window of large language models via positional interpolation.
\newblock {\em arXiv preprint arXiv:2306.15595}, 2023.

\bibitem[CYL{\etalchar{+}}24]{cyl+24}
Weize Chen, Ziming You, Ran Li, Yitong Guan, Chen Qian, Chenyang Zhao, Cheng Yang, Ruobing Xie, Zhiyuan Liu, and Maosong Sun.
\newblock Internet of agents: Weaving a web of heterogeneous agents for collaborative intelligence.
\newblock {\em arXiv preprint arXiv:2407.07061}, 2024.

\bibitem[CZY23]{czy23}
Shang Chai, Liansheng Zhuang, and Fengying Yan.
\newblock Layoutdm: Transformer-based diffusion model for layout generation.
\newblock In {\em Proceedings of the IEEE/CVF Conference on Computer Vision and Pattern Recognition}, pages 18349--18358, 2023.

\bibitem[Dao23]{d23}
Tri Dao.
\newblock Flashattention-2: Faster attention with better parallelism and work partitioning.
\newblock {\em arXiv preprint arXiv:2307.08691}, 2023.

\bibitem[DBK{\etalchar{+}}20]{dbk+20}
Alexey Dosovitskiy, Lucas Beyer, Alexander Kolesnikov, Dirk Weissenborn, Xiaohua Zhai, Thomas Unterthiner, Mostafa Dehghani, Matthias Minderer, Georg Heigold, Sylvain Gelly, et~al.
\newblock An image is worth 16x16 words: Transformers for image recognition at scale.
\newblock {\em arXiv preprint arXiv:2010.11929}, 2020.

\bibitem[DFE{\etalchar{+}}22]{dfe+22}
Tri Dao, Dan Fu, Stefano Ermon, Atri Rudra, and Christopher R{\'e}.
\newblock Flashattention: Fast and memory-efficient exact attention with io-awareness.
\newblock {\em Advances in Neural Information Processing Systems}, 35:16344--16359, 2022.

\bibitem[DMS23]{dms23}
Yichuan Deng, Sridhar Mahadevan, and Zhao Song.
\newblock Randomized and deterministic attention sparsification algorithms for over-parameterized feature dimension.
\newblock {\em arXiv preprint arXiv:2304.04397}, 2023.

\bibitem[DSXY23]{dsxy23}
Yichuan Deng, Zhao Song, Shenghao Xie, and Chiwun Yang.
\newblock Unmasking transformers: A theoretical approach to data recovery via attention weights.
\newblock {\em arXiv preprint arXiv:2310.12462}, 2023.

\bibitem[EKB{\etalchar{+}}24]{ekb+24}
Patrick Esser, Sumith Kulal, Andreas Blattmann, Rahim Entezari, Jonas M{\"u}ller, Harry Saini, Yam Levi, Dominik Lorenz, Axel Sauer, Frederic Boesel, et~al.
\newblock Scaling rectified flow transformers for high-resolution image synthesis.
\newblock In {\em Forty-first International Conference on Machine Learning}, 2024.

\bibitem[FA22]{fa22}
Elias Frantar and Dan Alistarh.
\newblock Optimal brain compression: A framework for accurate post-training quantization and pruning.
\newblock {\em Advances in Neural Information Processing Systems}, 35:4475--4488, 2022.

\bibitem[FA23]{fa23}
Elias Frantar and Dan Alistarh.
\newblock Sparsegpt: Massive language models can be accurately pruned in one-shot.
\newblock In {\em International Conference on Machine Learning}, pages 10323--10337. PMLR, 2023.

\bibitem[FCA23]{fca23}
Quentin Fournier, Ga{\'e}tan~Marceau Caron, and Daniel Aloise.
\newblock A practical survey on faster and lighter transformers.
\newblock {\em ACM Computing Surveys}, 55(14s):1--40, 2023.

\bibitem[GD23]{gd23}
Albert Gu and Tri Dao.
\newblock Mamba: Linear-time sequence modeling with selective state spaces.
\newblock {\em arXiv preprint arXiv:2312.00752}, 2023.

\bibitem[GLA{\etalchar{+}}21]{gla+21}
Kamal Gupta, Justin Lazarow, Alessandro Achille, Larry~S Davis, Vijay Mahadevan, and Abhinav Shrivastava.
\newblock Layouttransformer: Layout generation and completion with self-attention.
\newblock In {\em Proceedings of the IEEE/CVF International Conference on Computer Vision}, pages 1004--1014, 2021.

\bibitem[GMS23]{gms23}
Yeqi Gao, Sridhar Mahadevan, and Zhao Song.
\newblock An over-parameterized exponential regression.
\newblock {\em arXiv preprint arXiv:2303.16504}, 2023.

\bibitem[GSY23]{gsy23_dp}
Yeqi Gao, Zhao Song, and Xin Yang.
\newblock Differentially private attention computation.
\newblock {\em arXiv preprint arXiv:2305.04701}, 2023.

\bibitem[GSYZ23]{gsyz23}
Yeqi Gao, Zhao Song, Xin Yang, and Ruizhe Zhang.
\newblock Fast quantum algorithm for attention computation.
\newblock {\em arXiv preprint arXiv:2307.08045}, 2023.

\bibitem[GXG{\etalchar{+}}23]{gxg+23}
Yunfan Gao, Yun Xiong, Xinyu Gao, Kangxiang Jia, Jinliu Pan, Yuxi Bi, Yi~Dai, Jiawei Sun, and Haofen Wang.
\newblock Retrieval-augmented generation for large language models: A survey.
\newblock {\em arXiv preprint arXiv:2312.10997}, 2023.

\bibitem[HCI{\etalchar{+}}21]{hci+21}
Itay Hubara, Brian Chmiel, Moshe Island, Ron Banner, Joseph Naor, and Daniel Soudry.
\newblock Accelerated sparse neural training: A provable and efficient method to find n: m transposable masks.
\newblock {\em Advances in neural information processing systems}, 34:21099--21111, 2021.

\bibitem[HCL{\etalchar{+}}24]{hu2024outlier}
Jerry Yao-Chieh Hu, Pei-Hsuan Chang, Haozheng Luo, Hong-Yu Chen, Weijian Li, Wei-Po Wang, and Han Liu.
\newblock Outlier-efficient hopfield layers for large transformer-based models.
\newblock In {\em Forty-first International Conference on Machine Learning (ICML)}, 2024.

\bibitem[HCW{\etalchar{+}}24]{hu2024nonparametric}
Jerry Yao-Chieh Hu, Bo-Yu Chen, Dennis Wu, Feng Ruan, and Han Liu.
\newblock Nonparametric modern hopfield models.
\newblock {\em arXiv preprint arXiv:2404.03900}, 2024.

\bibitem[HJK{\etalchar{+}}24]{hjk+23}
Insu Han, Rajesh Jayaram, Amin Karbasi, Vahab Mirrokni, David Woodruff, and Amir Zandieh.
\newblock Hyperattention: Long-context attention in near-linear time.
\newblock In {\em The Twelfth International Conference on Learning Representations}, 2024.

\bibitem[HLSL24]{hu2024computational}
Jerry Yao-Chieh Hu, Thomas Lin, Zhao Song, and Han Liu.
\newblock On computational limits of modern hopfield models: A fine-grained complexity analysis.
\newblock In {\em Forty-first International Conference on Machine Learning (ICML)}, 2024.

\bibitem[HLZ{\etalchar{+}}23]{hlz+23}
Nan He, Hanyu Lai, Chenyang Zhao, Zirui Cheng, Junting Pan, Ruoyu Qin, Ruofan Lu, Rui Lu, Yunchen Zhang, Gangming Zhao, et~al.
\newblock Teacherlm: Teaching to fish rather than giving the fish, language modeling likewise.
\newblock {\em arXiv preprint arXiv:2310.19019}, 2023.

\bibitem[HSK{\etalchar{+}}24]{hu2024computationalLo}
Jerry Yao-Chieh Hu, Maojiang Su, En-Jui Kuo, Zhao Song, and Han Liu.
\newblock Computational limits of low-rank adaptation (lora) for transformer-based models.
\newblock {\em arXiv preprint arXiv:2406.03136}, 2024.

\bibitem[HWL24]{hwl24}
Jerry Yao-Chieh Hu, Dennis Wu, and Han Liu.
\newblock Provably optimal memory capacity for modern hopfield models: Tight analysis for transformer-compatible dense associative memories.
\newblock In {\em Advances in Neural Information Processing Systems (NeurIPS)}, volume~37, 2024.

\bibitem[HWSL24]{hu2024statistical}
Jerry Yao-Chieh Hu, Weimin Wu, Zhao Song, and Han Liu.
\newblock On statistical rates and provably efficient criteria of latent diffusion transformers (dits).
\newblock {\em arXiv preprint arXiv:2407.01079}, 2024.

\bibitem[HYW{\etalchar{+}}23]{hu2023sparse}
Jerry Yao-Chieh Hu, Donglin Yang, Dennis Wu, Chenwei Xu, Bo-Yu Chen, and Han Liu.
\newblock On sparse modern hopfield model.
\newblock In {\em Thirty-seventh Conference on Neural Information Processing Systems (NeurIPS)}, 2023.

\bibitem[JCR{\etalchar{+}}22]{jcr+22}
Tian Jin, Michael Carbin, Dan Roy, Jonathan Frankle, and Gintare~Karolina Dziugaite.
\newblock Pruning’s effect on generalization through the lens of training and regularization.
\newblock {\em Advances in Neural Information Processing Systems}, 35:37947--37961, 2022.

\bibitem[KKL20]{kkl20}
Nikita Kitaev, {\L}ukasz Kaiser, and Anselm Levskaya.
\newblock Reformer: The efficient transformer.
\newblock {\em arXiv preprint arXiv:2001.04451}, 2020.

\bibitem[KMZ23]{kmz23}
Praneeth Kacham, Vahab Mirrokni, and Peilin Zhong.
\newblock Polysketchformer: Fast transformers via sketches for polynomial kernels.
\newblock {\em arXiv preprint arXiv:2310.01655}, 2023.

\bibitem[KWH23]{kwh23}
Feyza~Duman Keles, Pruthuvi~Mahesakya Wijewardena, and Chinmay Hegde.
\newblock On the computational complexity of self-attention.
\newblock In {\em International Conference on Algorithmic Learning Theory}, pages 597--619. PMLR, 2023.

\bibitem[LARC21]{lac21}
Brian Lester, Rami Al-Rfou, and Noah Constant.
\newblock The power of scale for parameter-efficient prompt tuning.
\newblock In {\em Proceedings of the 2021 Conference on Empirical Methods in Natural Language Processing}, pages 3045--3059, 2021.

\bibitem[LCT{\etalchar{+}}24]{lct+24}
Na~Liu, Liangyu Chen, Xiaoyu Tian, Wei Zou, Kaijiang Chen, and Ming Cui.
\newblock From llm to conversational agent: A memory enhanced architecture with fine-tuning of large language models.
\newblock {\em arXiv preprint arXiv:2401.02777}, 2024.

\bibitem[LJF{\etalchar{+}}22]{ljf+22}
Xiao Liu, Kaixuan Ji, Yicheng Fu, Weng Tam, Zhengxiao Du, Zhilin Yang, and Jie Tang.
\newblock P-tuning: Prompt tuning can be comparable to fine-tuning across scales and tasks.
\newblock In {\em Proceedings of the 60th Annual Meeting of the Association for Computational Linguistics (Volume 2: Short Papers)}. Association for Computational Linguistics, 2022.

\bibitem[LKM23]{lkm23}
Yaniv Leviathan, Matan Kalman, and Yossi Matias.
\newblock Fast inference from transformers via speculative decoding.
\newblock In {\em International Conference on Machine Learning}, pages 19274--19286. PMLR, 2023.

\bibitem[LL21]{ll21}
Xiang~Lisa Li and Percy Liang.
\newblock Prefix-tuning: Optimizing continuous prompts for generation.
\newblock In {\em Proceedings of the 59th Annual Meeting of the Association for Computational Linguistics and the 11th International Joint Conference on Natural Language Processing (Volume 1: Long Papers)}, pages 4582--4597, 2021.

\bibitem[LLR23]{llr23}
Yuchen Li, Yuanzhi Li, and Andrej Risteski.
\newblock How do transformers learn topic structure: Towards a mechanistic understanding.
\newblock In {\em International Conference on Machine Learning}, pages 19689--19729. PMLR, 2023.

\bibitem[LLS{\etalchar{+}}24a]{lls+24_grok}
Chenyang Li, Yingyu Liang, Zhenmei Shi, Zhao Song, and Tianyi Zhou.
\newblock Fourier circuits in neural networks: Unlocking the potential of large language models in mathematical reasoning and modular arithmetic.
\newblock {\em arXiv preprint arXiv:2402.09469}, 2024.

\bibitem[LLS{\etalchar{+}}24b]{lls+24c}
Xiaoyu Li, Yingyu Liang, Zhenmei Shi, Zhao Song, and Junwei Yu.
\newblock Fast john ellipsoid computation with differential privacy optimization.
\newblock {\em arXiv preprint arXiv:2408.06395}, 2024.

\bibitem[LLS{\etalchar{+}}24c]{lls+24_io}
Xiaoyu Li, Yingyu Liang, Zhenmei Shi, Zhao Song, and Yufa Zhou.
\newblock Fine-grained attention i/o complexity: Comprehensive analysis for backward passes.
\newblock {\em arXiv preprint arXiv:2410.09397}, 2024.

\bibitem[LLS{\etalchar{+}}24d]{lls+24_conv}
Yingyu Liang, Heshan Liu, Zhenmei Shi, Zhao Song, and Junze Yin.
\newblock Conv-basis: A new paradigm for efficient attention inference and gradient computation in transformers.
\newblock {\em arXiv preprint arXiv:2405.05219}, 2024.

\bibitem[LLS{\etalchar{+}}24e]{lls+24_prune}
Yingyu Liang, Jiangxuan Long, Zhenmei Shi, Zhao Song, and Yufa Zhou.
\newblock Beyond linear approximations: A novel pruning approach for attention matrix, 2024.

\bibitem[LLSS24]{llss24_sparse}
Xiaoyu Li, Yingyu Liang, Zhenmei Shi, and Zhao Song.
\newblock A tighter complexity analysis of sparsegpt.
\newblock {\em arXiv preprint arXiv:2408.12151}, 2024.

\bibitem[LMGH22]{lmgh22}
Yanghao Li, Hanzi Mao, Ross Girshick, and Kaiming He.
\newblock Exploring plain vision transformer backbones for object detection.
\newblock In {\em European conference on computer vision}, pages 280--296. Springer, 2022.

\bibitem[LPM15]{lpm15}
Minh-Thang Luong, Hieu Pham, and Christopher~D Manning.
\newblock Effective approaches to attention-based neural machine translation.
\newblock {\em arXiv preprint arXiv:1508.04025}, 2015.

\bibitem[LPP{\etalchar{+}}20]{lpp+20}
Patrick Lewis, Ethan Perez, Aleksandra Piktus, Fabio Petroni, Vladimir Karpukhin, Naman Goyal, Heinrich K{\"u}ttler, Mike Lewis, Wen-tau Yih, Tim Rockt{\"a}schel, et~al.
\newblock Retrieval-augmented generation for knowledge-intensive nlp tasks.
\newblock {\em Advances in Neural Information Processing Systems}, 33:9459--9474, 2020.

\bibitem[LSH{\etalchar{+}}23]{lsh+23}
Yixin Liu, Kejian Shi, Katherine~S He, Longtian Ye, Alexander~R Fabbri, Pengfei Liu, Dragomir Radev, and Arman Cohan.
\newblock On learning to summarize with large language models as references.
\newblock {\em arXiv preprint arXiv:2305.14239}, 2023.

\bibitem[LSS{\etalchar{+}}24]{lss+24_relu}
Yingyu Liang, Zhizhou Sha, Zhenmei Shi, Zhao Song, and Yufa Zhou.
\newblock Looped relu mlps may be all you need as practical programmable computers.
\newblock {\em arXiv preprint arXiv:2410.09375}, 2024.

\bibitem[LSSS24]{lsss24_dp_ntk}
Yingyu Liang, Zhizhou Sha, Zhenmei Shi, and Zhao Song.
\newblock Differential privacy mechanisms in neural tangent kernel regression.
\newblock {\em arXiv preprint arXiv:2407.13621}, 2024.

\bibitem[LSSY24]{lssy24}
Yingyu Liang, Zhenmei Shi, Zhao Song, and Chiwun Yang.
\newblock Toward infinite-long prefix in transformer.
\newblock {\em arXiv preprint arXiv:2406.14036}, 2024.

\bibitem[LSSZ24a]{lssz24_dp_tree}
Yingyu Liang, Zhenmei Shi, Zhao Song, and Yufa Zhou.
\newblock Differential privacy of cross-attention with provable guarantee.
\newblock {\em arXiv preprint arXiv:2407.14717}, 2024.

\bibitem[LSSZ24b]{lssz24_tensor}
Yingyu Liang, Zhenmei Shi, Zhao Song, and Yufa Zhou.
\newblock Tensor attention training: Provably efficient learning of higher-order transformers.
\newblock {\em arXiv preprint arXiv:2405.16411}, 2024.

\bibitem[LSSZ24c]{lssz24_diffusion}
Yingyu Liang, Zhenmei Shi, Zhao Song, and Yufa Zhou.
\newblock Unraveling the smoothness properties of diffusion models: A gaussian mixture perspective.
\newblock {\em arXiv preprint arXiv:2405.16418}, 2024.

\bibitem[LSW{\etalchar{+}}24]{lsw+24}
Zhihang Li, Zhao Song, Weixin Wang, Junze Yin, and Zheng Yu.
\newblock How to inverting the leverage score distribution?
\newblock {\em arXiv preprint arXiv:2404.13785}, 2024.

\bibitem[LSY24]{lsy24}
Xiaoyu Li, Zhao Song, and Junwei Yu.
\newblock Quantum speedups for approximating the john ellipsoid.
\newblock {\em arXiv preprint arXiv:2408.14018}, 2024.

\bibitem[LSZ23]{lsz23}
Zhihang Li, Zhao Song, and Tianyi Zhou.
\newblock Solving regularized exp, cosh and sinh regression problems.
\newblock {\em arXiv preprint arXiv:2303.15725}, 2023.

\bibitem[LT24]{m24}
AI~@~Meta Llama~Team.
\newblock The llama 3 herd of models.
\newblock {\em arXiv preprint arXiv:2407.21783}, 2024.

\bibitem[LTT{\etalchar{+}}24]{ltt+24}
Ji~Lin, Jiaming Tang, Haotian Tang, Shang Yang, Wei-Ming Chen, Wei-Chen Wang, Guangxuan Xiao, Xingyu Dang, Chuang Gan, and Song Han.
\newblock Awq: Activation-aware weight quantization for on-device llm compression and acceleration.
\newblock {\em Proceedings of Machine Learning and Systems}, 6:87--100, 2024.

\bibitem[LYL{\etalchar{+}}23]{lyl+23}
Kai Lv, Yuqing Yang, Tengxiao Liu, Qinghui Gao, Qipeng Guo, and Xipeng Qiu.
\newblock Full parameter fine-tuning for large language models with limited resources.
\newblock {\em arXiv preprint arXiv:2306.09782}, 2023.

\bibitem[LZD{\etalchar{+}}24]{lzd+24}
Tianle Li, Ge~Zhang, Quy~Duc Do, Xiang Yue, and Wenhu Chen.
\newblock Long-context llms struggle with long in-context learning.
\newblock {\em arXiv preprint arXiv:2404.02060}, 2024.

\bibitem[MCW{\etalchar{+}}24]{mcw+24}
Da~Ma, Lu~Chen, Pengyu Wang, Hongshen Xu, Hanqi Li, Liangtai Sun, Su~Zhu, Shuai Fan, and Kai Yu.
\newblock Sparsity-accelerated training for large language models.
\newblock {\em arXiv preprint arXiv:2406.01392}, 2024.

\bibitem[MGA{\etalchar{+}}24]{mga+24}
Nanye Ma, Mark Goldstein, Michael~S Albergo, Nicholas~M Boffi, Eric Vanden-Eijnden, and Saining Xie.
\newblock Sit: Exploring flow and diffusion-based generative models with scalable interpolant transformers.
\newblock {\em arXiv preprint arXiv:2401.08740}, 2024.

\bibitem[MLG24]{mlg24}
Jesse Mu, Xiang Li, and Noah Goodman.
\newblock Learning to compress prompts with gist tokens.
\newblock {\em Advances in Neural Information Processing Systems}, 36, 2024.

\bibitem[MLH{\etalchar{+}}22]{mlh+22}
Sewon Min, Xinxi Lyu, Ari Holtzman, Mikel Artetxe, Mike Lewis, Hannaneh Hajishirzi, and Luke Zettlemoyer.
\newblock Rethinking the role of demonstrations: What makes in-context learning work?
\newblock In {\em Proceedings of the 2022 Conference on Empirical Methods in Natural Language Processing}, pages 11048--11064, 2022.

\bibitem[MWY{\etalchar{+}}23]{mwy+23}
Amama Mahmood, Junxiang Wang, Bingsheng Yao, Dakuo Wang, and Chien-Ming Huang.
\newblock Llm-powered conversational voice assistants: Interaction patterns, opportunities, challenges, and design guidelines.
\newblock {\em arXiv preprint arXiv:2309.13879}, 2023.

\bibitem[Ope24]{o24}
OpenAI.
\newblock Searchgpt, 2024.

\bibitem[PX23]{px23}
William Peebles and Saining Xie.
\newblock Scalable diffusion models with transformers.
\newblock In {\em Proceedings of the IEEE/CVF International Conference on Computer Vision}, pages 4195--4205, 2023.

\bibitem[QMS{\etalchar{+}}23]{qmsyz23}
Lianke Qin, Saayan Mitra, Zhao Song, Yuanyuan Yang, and Tianyi Zhou.
\newblock Fast heavy inner product identification between weights and inputs in neural network training.
\newblock In {\em 2023 IEEE International Conference on Big Data (BigData)}, pages 128--133. IEEE, 2023.

\bibitem[QSS23]{qss23}
Lianke Qin, Zhao Song, and Baocheng Sun.
\newblock Is solving graph neural tangent kernel equivalent to training graph neural network?
\newblock {\em arXiv preprint arXiv:2309.07452}, 2023.

\bibitem[QSZZ23]{qszz23}
Lianke Qin, Zhao Song, Lichen Zhang, and Danyang Zhuo.
\newblock An online and unified algorithm for projection matrix vector multiplication with application to empirical risk minimization.
\newblock In {\em International Conference on Artificial Intelligence and Statistics (AISTATS)}, pages 101--156. PMLR, 2023.

\bibitem[RBL{\etalchar{+}}22]{rbl+22}
Robin Rombach, Andreas Blattmann, Dominik Lorenz, Patrick Esser, and Bj{\"o}rn Ommer.
\newblock High-resolution image synthesis with latent diffusion models.
\newblock In {\em Proceedings of the IEEE/CVF conference on computer vision and pattern recognition}, pages 10684--10695, 2022.

\bibitem[RWC{\etalchar{+}}19]{rwc+19}
Alec Radford, Jeffrey Wu, Rewon Child, David Luan, Dario Amodei, and Ilya Sutskever.
\newblock Language models are unsupervised multitask learners.
\newblock {\em OpenAI blog}, 2019.

\bibitem[SAMB24]{samb24}
Tanmay Singh, Harshvardhan Aditya, Vijay~K Madisetti, and Arshdeep Bahga.
\newblock Whispered tuning: Data privacy preservation in fine-tuning llms through differential privacy.
\newblock {\em Journal of Software Engineering and Applications}, 17(1):1--22, 2024.

\bibitem[SBZ{\etalchar{+}}24]{sbz+24}
Jay Shah, Ganesh Bikshandi, Ying Zhang, Vijay Thakkar, Pradeep Ramani, and Tri Dao.
\newblock Flashattention-3: Fast and accurate attention with asynchrony and low-precision.
\newblock {\em arXiv preprint arXiv:2407.08608}, 2024.

\bibitem[SCZ{\etalchar{+}}24]{scz+24}
Jovan Stojkovic, Esha Choukse, Chaojie Zhang, Inigo Goiri, and Josep Torrellas.
\newblock Towards greener llms: Bringing energy-efficiency to the forefront of llm inference.
\newblock {\em arXiv preprint arXiv:2403.20306}, 2024.

\bibitem[SLBK24]{slbk24}
Mingjie Sun, Zhuang Liu, Anna Bair, and J~Zico Kolter.
\newblock A simple and effective pruning approach for large language models.
\newblock In {\em The Twelfth International Conference on Learning Representations}, 2024.

\bibitem[SMN{\etalchar{+}}24]{smn+24}
Zhenmei Shi, Yifei Ming, Xuan-Phi Nguyen, Yingyu Liang, and Shafiq Joty.
\newblock Discovering the gems in early layers: Accelerating long-context llms with 1000x input token reduction.
\newblock {\em arXiv preprint arXiv:2409.17422}, 2024.

\bibitem[SSC{\etalchar{+}}22]{ssc+22}
Weiyan Shi, Ryan Shea, Si~Chen, Chiyuan Zhang, Ruoxi Jia, and Zhou Yu.
\newblock Just fine-tune twice: Selective differential privacy for large language models.
\newblock In {\em Proceedings of the 2022 Conference on Empirical Methods in Natural Language Processing}, pages 6327--6340, 2022.

\bibitem[SSU18]{ssu18}
Mitchell Stern, Noam Shazeer, and Jakob Uszkoreit.
\newblock Blockwise parallel decoding for deep autoregressive models.
\newblock {\em Advances in Neural Information Processing Systems}, 31, 2018.

\bibitem[SWXL24]{swxl24}
Zhenmei Shi, Junyi Wei, Zhuoyan Xu, and Yingyu Liang.
\newblock Why larger language models do in-context learning differently?
\newblock {\em arXiv preprint arXiv:2405.19592}, 2024.

\bibitem[SY23]{sy23}
Zhao Song and Chiwun Yang.
\newblock An automatic learning rate schedule algorithm for achieving faster convergence and steeper descent.
\newblock {\em arXiv preprint arXiv:2310.11291}, 2023.

\bibitem[SYYZ23]{syyz23_dp}
Zhao Song, Xin Yang, Yuanyuan Yang, and Lichen Zhang.
\newblock Sketching meets differential privacy: fast algorithm for dynamic kronecker projection maintenance.
\newblock In {\em International Conference on Machine Learning (ICML)}, pages 32418--32462. PMLR, 2023.

\bibitem[SYZ23]{syz23}
Zhao Song, Mingquan Ye, and Lichen Zhang.
\newblock Streaming semidefinite programs: $ {O} (\sqrt n)$ passes, small space and fast runtime.
\newblock {\em arXiv preprint arXiv:2309.05135}, 2023.

\bibitem[SZK{\etalchar{+}}22]{szk+22}
John Schulman, Barret Zoph, Christina Kim, Jacob Hilton, Jacob Menick, Jiayi Weng, Juan Felipe~Ceron Uribe, Liam Fedus, Luke Metz, Michael Pokorny, et~al.
\newblock Chatgpt: Optimizing language models for dialogue.
\newblock {\em OpenAI blog}, 2(4), 2022.

\bibitem[SZKS21]{szks21}
Charlie Snell, Ruiqi Zhong, Dan Klein, and Jacob Steinhardt.
\newblock Approximating how single head attention learns.
\newblock {\em arXiv preprint arXiv:2103.07601}, 2021.

\bibitem[SZM{\etalchar{+}}23]{szc+23}
Siddharth Samsi, Dan Zhao, Joseph McDonald, Baolin Li, Adam Michaleas, Michael Jones, William Bergeron, Jeremy Kepner, Devesh Tiwari, and Vijay Gadepally.
\newblock From words to watts: Benchmarking the energy costs of large language model inference.
\newblock In {\em 2023 IEEE High Performance Extreme Computing Conference (HPEC)}, pages 1--9. IEEE, 2023.

\bibitem[TLI{\etalchar{+}}23]{tli23}
Hugo Touvron, Thibaut Lavril, Gautier Izacard, Xavier Martinet, Marie-Anne Lachaux, Timoth{\'e}e Lacroix, Baptiste Rozi{\`e}re, Naman Goyal, Eric Hambro, Faisal Azhar, et~al.
\newblock Llama: Open and efficient foundation language models.
\newblock {\em arXiv preprint arXiv:2302.13971}, 2023.

\bibitem[TSP{\etalchar{+}}24]{tsp+24}
Szymon Tworkowski, Konrad Staniszewski, Miko{\l}aj Pacek, Yuhuai Wu, Henryk Michalewski, and Piotr Mi{\l}o{\'s}.
\newblock Focused transformer: Contrastive training for context scaling.
\newblock {\em Advances in Neural Information Processing Systems}, 36, 2024.

\bibitem[VSP{\etalchar{+}}17]{vsp+17}
Ashish Vaswani, Noam Shazeer, Niki Parmar, Jakob Uszkoreit, Llion Jones, Aidan~N Gomez, {\L}ukasz Kaiser, and Illia Polosukhin.
\newblock Attention is all you need.
\newblock {\em Advances in neural information processing systems}, 30, 2017.

\bibitem[VZB{\etalchar{+}}23]{vzb+23}
Vijay Viswanathan, Chenyang Zhao, Amanda Bertsch, Tongshuang Wu, and Graham Neubig.
\newblock Prompt2model: Generating deployable models from natural language instructions.
\newblock {\em arXiv preprint arXiv:2308.12261}, 2023.

\bibitem[WCY{\etalchar{+}}23]{wcy+23}
Yuntao Wang, Zirui Cheng, Xin Yi, Yan Kong, Xueyang Wang, Xuhai Xu, Yukang Yan, Chun Yu, Shwetak Patel, and Yuanchun Shi.
\newblock Modeling the trade-off of privacy preservation and activity recognition on low-resolution images.
\newblock In {\em Proceedings of the 2023 CHI Conference on Human Factors in Computing Systems}, pages 1--15, 2023.

\bibitem[WCZ{\etalchar{+}}23]{wcz+23}
Yilin Wang, Zeyuan Chen, Liangjun Zhong, Zheng Ding, Zhizhou Sha, and Zhuowen Tu.
\newblock Dolfin: Diffusion layout transformers without autoencoder.
\newblock {\em arXiv preprint arXiv:2310.16305}, 2023.

\bibitem[WHHL24]{wu2024uniform}
Dennis Wu, Jerry Yao-Chieh Hu, Teng-Yun Hsiao, and Han Liu.
\newblock Uniform memory retrieval with larger capacity for modern hopfield models.
\newblock In {\em Forty-first International Conference on Machine Learning (ICML)}, 2024.

\bibitem[WHL{\etalchar{+}}24]{wu2023stanhop}
Dennis Wu, Jerry Yao-Chieh Hu, Weijian Li, Bo-Yu Chen, and Han Liu.
\newblock {ST}anhop: Sparse tandem hopfield model for memory-enhanced time series prediction.
\newblock In {\em The Twelfth International Conference on Learning Representations (ICLR)}, 2024.

\bibitem[WMS{\etalchar{+}}24]{wms+24}
Jiayu Wang, Yifei Ming, Zhenmei Shi, Vibhav Vineet, Xin Wang, and Neel Joshi.
\newblock Is a picture worth a thousand words? delving into spatial reasoning for vision language models.
\newblock {\em arXiv preprint arXiv:2406.14852}, 2024.

\bibitem[WSD{\etalchar{+}}23]{wsd+23}
Zirui Wang, Zhizhou Sha, Zheng Ding, Yilin Wang, and Zhuowen Tu.
\newblock Tokencompose: Grounding diffusion with token-level supervision.
\newblock {\em arXiv preprint arXiv:2312.03626}, 2023.

\bibitem[WXZ{\etalchar{+}}24]{wxz+24}
Yilin Wang, Haiyang Xu, Xiang Zhang, Zeyuan Chen, Zhizhou Sha, Zirui Wang, and Zhuowen Tu.
\newblock Omnicontrolnet: Dual-stage integration for conditional image generation.
\newblock In {\em Proceedings of the IEEE/CVF Conference on Computer Vision and Pattern Recognition}, pages 7436--7448, 2024.

\bibitem[XCG{\etalchar{+}}23]{xcg+23}
Zhiheng Xi, Wenxiang Chen, Xin Guo, Wei He, Yiwen Ding, Boyang Hong, Ming Zhang, Junzhe Wang, Senjie Jin, Enyu Zhou, et~al.
\newblock The rise and potential of large language model based agents: A survey.
\newblock {\em arXiv preprint arXiv:2309.07864}, 2023.

\bibitem[XGH{\etalchar{+}}21]{xgh+21}
Hu~Xu, Gargi Ghosh, Po-Yao Huang, Prahal Arora, Masoumeh Aminzadeh, Christoph Feichtenhofer, Florian Metze, and Luke Zettlemoyer.
\newblock Vlm: Task-agnostic video-language model pre-training for video understanding.
\newblock {\em arXiv preprint arXiv:2105.09996}, 2021.

\bibitem[XGW{\etalchar{+}}22]{xgw+22}
Xinchao Xu, Zhibin Gou, Wenquan Wu, Zheng-Yu Niu, Hua Wu, Haifeng Wang, and Shihang Wang.
\newblock Long time no see! open-domain conversation with long-term persona memory.
\newblock {\em arXiv preprint arXiv:2203.05797}, 2022.

\bibitem[XHH{\etalchar{+}}24]{xu2024bishop}
Chenwei Xu, Yu-Chao Huang, Jerry Yao-Chieh Hu, Weijian Li, Ammar Gilani, Hsi-Sheng Goan, and Han Liu.
\newblock Bishop: Bi-directional cellular learning for tabular data with generalized sparse modern hopfield model.
\newblock In {\em Forty-first International Conference on Machine Learning (ICML)}, 2024.

\bibitem[XSL24]{xsl24}
Zhuoyan Xu, Zhenmei Shi, and Yingyu Liang.
\newblock Do large language models have compositional ability? an investigation into limitations and scalability.
\newblock In {\em ICLR 2024 Workshop on Mathematical and Empirical Understanding of Foundation Models}, 2024.

\bibitem[XSW21]{xsw21}
Jing Xu, Arthur Szlam, and Jason Weston.
\newblock Beyond goldfish memory: Long-term open-domain conversation.
\newblock {\em arXiv preprint arXiv:2107.07567}, 2021.

\bibitem[XZS{\etalchar{+}}24]{xzs+24}
Chaojun Xiao, Zhengyan Zhang, Chenyang Song, Dazhi Jiang, Feng Yao, Xu~Han, Xiaozhi Wang, Shuo Wang, Yufei Huang, Guanyu Lin, et~al.
\newblock Configurable foundation models: Building llms from a modular perspective.
\newblock {\em arXiv preprint arXiv:2409.02877}, 2024.

\bibitem[ZBKR24]{zbkr24}
Michael Zhang, Kush Bhatia, Hermann Kumbong, and Christopher R{\'e}.
\newblock The hedgehog \& the porcupine: Expressive linear attentions with softmax mimicry.
\newblock {\em arXiv preprint arXiv:2402.04347}, 2024.

\bibitem[ZHDK23]{zhdk23}
Amir Zandieh, Insu Han, Majid Daliri, and Amin Karbasi.
\newblock Kdeformer: Accelerating transformers via kernel density estimation.
\newblock In {\em International Conference on Machine Learning}, pages 40605--40623. PMLR, 2023.

\bibitem[ZHJL24]{zhjl24}
Jingyi Zhang, Jiaxing Huang, Sheng Jin, and Shijian Lu.
\newblock Vision-language models for vision tasks: A survey.
\newblock {\em IEEE Transactions on Pattern Analysis and Machine Intelligence}, 2024.

\bibitem[ZKAW23]{zkaw23}
Jieyu Zhang, Ranjay Krishna, Ahmed~H Awadallah, and Chi Wang.
\newblock Ecoassistant: Using llm assistant more affordably and accurately.
\newblock {\em arXiv preprint arXiv:2310.03046}, 2023.

\bibitem[ZKV{\etalchar{+}}20]{zkv+20}
Jingzhao Zhang, Sai~Praneeth Karimireddy, Andreas Veit, Seungyeon Kim, Sashank Reddi, Sanjiv Kumar, and Suvrit Sra.
\newblock Why are adaptive methods good for attention models?
\newblock {\em Advances in Neural Information Processing Systems}, 33:15383--15393, 2020.

\bibitem[ZLD{\etalchar{+}}24]{zld+24}
Tianyi Zhang, Faisal Ladhak, Esin Durmus, Percy Liang, Kathleen McKeown, and Tatsunori~B Hashimoto.
\newblock Benchmarking large language models for news summarization.
\newblock {\em Transactions of the Association for Computational Linguistics}, 12:39--57, 2024.

\bibitem[ZTT{\etalchar{+}}22]{ztt+22}
Bowen Zhang, Zhi Tian, Quan Tang, Xiangxiang Chu, Xiaolin Wei, Chunhua Shen, et~al.
\newblock Segvit: Semantic segmentation with plain vision transformers.
\newblock {\em Advances in Neural Information Processing Systems}, 35:4971--4982, 2022.

\end{thebibliography}
\bibliographystyle{iclr2025_conference}

\newpage
\clearpage

\fi

\newpage
\onecolumn
\appendix

\begin{center}
\textbf{\LARGE Appendix }
\end{center}

\ifdefined\isarxiv

\else

{\hypersetup{linkcolor=black}
\tableofcontents
\bigbreak
\bigbreak
\bigbreak
\bigbreak
}
\fi

{\bf Roadmap.} 
In Section~\ref{sec:app_more_related_work}, we provide further related works of this paper.
In Section~\ref{sec:app_discussion}, we provide a detailed discussion about several potential extensions of our framework.

In Section~\ref{sec:app_preli}, we introduce basic notations and concepts used in our paper, along with the low-rank approximation technique introduced in \citet{as23} and \citet{as24}. 
In Section~\ref{sec:app_Ti_grad_matrix_view}, we provide details about how we integrate the gradient of $T_i(X)$ into matrix form. 
In Section~\ref{sec:app_Ti_grad_fast_compute}, we explain how to apply the low-rank approximation technique to accelerate the computation for the gradient on $T_i(X)$. 
In Section~\ref{sec:app_W_grad_fast_compute}, we extend the result of \citet{as24} to arbitrary loss functions and accelerate the computation of gradient on $W$ via the low-rank approximation technique. 
In Section~\ref{sec:app_V_grad_fast_compute}, we calculate the gradient on $W_V$ and accelerate the computation of the gradient on $W_V$. 
In Section~\ref{sec:app_entire_model_grad}, with the help of math induction, we analyze the time complexity and the approximation error across the entire model. 
In Section~\ref{sec:app_causal_attn_mask}, we discuss how our framework can expand to an attention mechanism with a causal attention mask.  
In Section~\ref{sec:app_residual_connection}, we provide details about how to integrate our framework with attention mechanism with the residual connection.
In Section~\ref{sec:app_multi_head}, we argue that, with the addition of multi-head attention, our algorithm can still achieve almost linear time gradient computation.

\section{More Related Work} \label{sec:app_more_related_work}

\paragraph{Attention mechanism.}
Attention mechanisms, including self-attention and cross-attention, are pivotal techniques employed in state-of-the-art neural networks. Since it was introduced in~\citet{vsp+17}, it has gained widespread adoption across various domains. In particular, it is integral to decoder-only LLMs~\citep{rwc+19} and the Vision Transformer (ViT) architecture~\citep{dbk+20}. The former has been instrumental in the remarkable success of LLMs, while the latter has significantly advanced the field of computer vision, encompassing applications such as image generation~\citep{rbl+22, wsd+23, wxz+24}, detection~\citep{lmgh22}, segmentation~\citep{ztt+22}, and layout generation~\citep{gla+21, czy23, wcz+23}.
Moreover, attention mechanism can be integrated into multi-modal models~\citep{xgh+21, zhjl24, lssz24_tensor,wms+24}, math reasoning~\citep{lls+24_grok}, diffusion models~\citep{px23, lssz24_diffusion, hu2024statistical, ekb+24, mga+24, lsw+24}, differential privacy \citep{bepp22, ssc+22, wcy+23, lssz24_dp_tree, samb24, csy23a, lsss24_dp_ntk, lls+24c, syyz23_dp} and many other techniques~\citep{lss+24_relu, lsy24, qmsyz23, qss23, qszz23, syz23, xzs+24, vzb+23}.

\paragraph{Attention theory.}

\citet{bcb14} introduced attention mechanisms in NLP, enhancing encoder-decoder architecture with variable-length vectors to improve machine translation. Building on this, \citet{lpm15} developed local and global attention variants, further refining NLP tasks.
Recent Large Language Model research has focused extensively on attention computation \citep{dms23, as23,zhdk23}. Studies by \citet{zhdk23,clp+21,kkl20} use Locality Sensitive Hashing for attention approximation, with \citet{zhdk23} offering efficient dot-product attention. \citet{bsz23} and \citet{as23} explore static and dynamic attention calculations, while \citet{lsz23} investigates hyperbolic regression regularization. \citet{dms23} proposes algorithms for reducing attention matrix dimensionality in LLMs.
Attention has also been examined from optimization and convergence perspectives \citep{llr23, gms23,szks21,zkv+20}, investigating word co-occurrence learning \citep{llr23}, regression problems with exponential activation functions \citep{gms23}, attention mechanism evolution during training \citep{szks21}, and the impact of heavy-tailed noise on stochastic gradient descent \citep{zkv+20}. Theoretical explorations of attention variants include quantum attention \citep{gsyz23}, tensor attention \citep{as24_iclr, lssz24_tensor}, and differentially private attention \citep{lssz24_dp_tree, gsy23_dp, lsss24_dp_ntk}.

\paragraph{More methods for model acceleration.}
Various techniques have been developed for model acceleration. 
One approach involves modifying model architectures to enable faster inference, such as Mamba~\citep{gd23}, Linearizing Transformers~\citep{zbkr24}, PolySketchFormer~\citep{kmz23}, and the Hopfield Model~\citep{hu2024nonparametric,hu2024outlier,wu2024uniform,xu2024bishop,hu2024computational,wu2023stanhop,hu2023sparse,hwl24} and so on. 
Another line of work is to prune the weights in a neural network to reduce running time and memory consumption~\citep{hci+21, jcr+22, fa22, fa23, slbk24, llss24_sparse,lls+24_prune}. 
In addition, specific techniques have been developed to accelerate LLM generation~\citep{cls+24, cll+24, sy23,lls+24_io}.

\section{Discussion and Extension Details}\label{sec:app_discussion}

In Section~\ref{sec:app_discussion:multi_head}, we argue that our framework can easily adapt to the multi-head attention mechanism.
In Section~\ref{sec:app_discussion:residual}, we introduce how to integrate residual connection to our framework. 
In Section~\ref{sec:app_discussion:causal_mask}, we detail the integration of the causal attention mask into our algorithm.
In Section~\ref{sec:app_discussion:system_acceleration}, we discuss the possibility of the synergy between our theoretical side attention acceleration and the existing system-level attention acceleration mechanism. 
In Section~\ref{sec:app_discussion:prompt_tuning}, we show how to expedite prompt tuning using our results.

\subsection{Multi-head attention} \label{sec:app_discussion:multi_head}

The multi-head attention mechanism was first introduced by~\citet{vsp+17}.
This innovation allows a token to simultaneously attend to multiple positions within the same layer, thereby enriching the model's capacity for capturing various dependencies. 
However, this enhanced capability comes with an increase in the size of the attention matrix $f(X)$ from $1 \times n \times n$ to $h \times n \times n$, where $h$ is the number of attention heads. 
To mitigate the computational burden, each head's vector is derived by splitting the original vector, reducing the dimensionality of each head to $d_h := d / h$. 
To summarize, the key distinctions between multi-head and single-head attention are (1) an enlarged attention matrix $f(X)$ and (2) a reduced dimensionality $d_h$ within each attention head.

\paragraph{Enlarged attention matrix.}
As previously discussed, the attention matrix's dimensionality increases with the number of heads, $h$. 
Despite this expansion, the application of the low-rank approximation technique, as outlined in Section~\ref{sec:tech_overview:low_rank}, ensures that the computation time for the attention matrix remains almost linear. 
Specifically, for a constant number of heads $h$ in the multi-head mechanism, the time complexity for computing $f(X) \in \R^{h \times n \times n}$ is $h \cdot n^{1+o(1)} = n^{1+o(1)}$.

\paragraph{Reduced dimensionality.}
Another differentiating factor of multi-head attention is the lower dimensionality processed by each head, i.e. $d_h := d/h$, compared the full $d$ in single-head attention. 
This reduction ensures that the gradient computation time does not increase with the introduction of multiple attention heads.

We provide comprehensive analysis of the synergy of our algorithm with multi-head attention in Section~\ref{sec:app_multi_head}. We first prove in Lemma~\ref{lem:analysis_of_multi_head_attention}, with the addition of multi-head attention, the gradient over the attention mechanism can be computed in almost linear time. 
Then, we further prove that for any multi-layer transformer, with multi-head attention, the gradient can be computed in almost linear time as well. 

\subsection{Residual connection} \label{sec:app_discussion:residual}

Residual connection is a pivotal technique in deep neural network architectures, effectively addressing issues such as vanishing and exploding gradients during training process, and facilitating faster convergence of the model. 
Residual connection is also integrated into the standard attention mechanism. 
Formally, given the intermediate variable $T_i(X)$ output by the $i$-th transformer layer as defined in Definition~\ref{def:Ti}, we provide the formal definition of residual connection in Definition~\ref{def:Z} and \ref{def:residual_connection}.
Since the residual connection only brings an additional add operation to each component and 
with $T_i(X)$ belonging to the space $\R^{n \times d}$, the residual connection introduces only a marginal computational overhead of $O(n \cdot d)$ per layer. Consequently, the total computational cost for each layer is $O(n \cdot d) + n^{1+o(1)}=n^{1+o(1)}$. 
Hence, by intuition, the inclusion of residual connections does not compromise the overall complexity of our method.

The detailed analysis is provided in Section~\ref{sec:app_residual_connection}, where we first prove in Lemma~\ref{lem:analysis_over_residual_connection}, that if the gradient over one structure can be computed in almost linear time, then with the addition of the residual connection, the gradient can also be computed in almost linear time. 
Then we use math induction to extend our result to the entire multi-layer transformer model. 

\subsection{Causal attention mask}\label{sec:app_discussion:causal_mask}

In transformer training, attention mask is a crucial component, designed to prevent a given token from attending to future tokens in the sequence. 
Causal attention mask is a widely used attention mask, which is configured as a lower triangular matrix, where elements on or below the main diagonal are ones, with all other entries being zeros.  

Now we describe how to incorporate this into our algorithm.
Let $M \in \{0, 1\}^{n \times n}$ represent the causal attention mask (see Definition~\ref{def:mask}).
Let $\wh{f}(X) := D^{-1} (M \odot A)$ where $A = \exp(X W X^\top /d)$ and $D:= \diag((M \odot A) \cdot {\bf 1}_n)$.
Lemma~\ref{lem:wt_A_small_rank} reveals that $A$ has a low-rank representation given by $U_0 V_0^\top$.
Using Lemma~\ref{lem:W_U1_U2T_v}, we know $(M \odot (U_0 V_0^\top)) \cdot v$ for any vector $v \in \mathbb{R}^{n}$ can be computed in almost linear time.

To integrate the causal mask into the gradient computation within each transformer layer, we first find all instances that have the structure of $f(X) \cdot H$ or $(f(X) \odot (U V^\top)) \cdot H$, where $H, U, V$ are low rank matrices. 
Then, we replace $f(X)$ with $\wh{f}(X)$ in these instances. 
More detailed analysis of causal attention can be found in Section~\ref{sec:app_causal_attn_mask}. 
To be more specific, we group the gradient components for $T_i, W_i, W_{V_i}$ into two categories, one for dot product (Lemma~\ref{lem:causal_mask_dot}), another for Hadamard product (Lemma~\ref{lem:causal_mask_hadamard}).
After showing each component can be calculated in almost linear time, the overall gradient computation remains $n^{1+o(1)}$ time. 
Thus, our framework can seamlessly accommodate causal attention masks.

\subsection{System-level attention acceleration} \label{sec:app_discussion:system_acceleration}

The attention computing acceleration involves a two-pronged strategy that leverages both system-level improvements (e.g. Flash Attention \citep{dfe+22, d23, sbz+24}) and the theoretical time complexity improvements (e.g. our work and \citet{hjk+23}).

Numerous efforts have been made in the literature to accelerate attention calculations at the system level. For instance, Flash Attention~\citep{dfe+22, d23, sbz+24} targets the I/O bottleneck inherent in attention mechanisms. Studies such as block-wise parallel decoding~\citep{ssu18} focus on implementing parallel decoding within transformer models to enhance inference speed. Additionally, recent advancements in the field of speculative decoding, such as Medusa~\citep{clg+24}, leverage a smaller, more efficient model to generate predictions, with the larger model only responsible for validating, the smaller model's outputs~\citep{lkm23}.

Despite these innovations, the aforementioned methods do not address the fundamental quadratic time complexity $O(n^2)$ of the attention mechanisms. This presents an opportunity to complement our low-rank approximation technique, with these system-level optimizations, thereby achieving an even greater acceleration in attention computation.
For instance, we could design an I/O-aware algorithm for Algorithm~\ref{alg:multi_layer_grad_descent}, similar to the approach taken by Flash Attention, to effectively leverage GPU acceleration.

To implement our algorithm practically on GPU, we have some coding challenges to fix: (1) we need to define some new tensor operations in PyTorch, e.g. Eq.~\eqref{eq:U7_V7}, Eq.~\eqref{eq:U9_V9}; (2) we need to systematically re-implement some back-propagation function of the current PyTorch function; (3) we need to implement some CUDA function to run our algorithm in parallel for the casual mask, see discussion in Section~\ref{sec:app_discussion:causal_mask}. 
We may leave this as our future work. 

\subsection{Prompt tuning} \label{sec:app_discussion:prompt_tuning}

Prompt tuning, as introduced by various studies~\citep{ll21, lac21, ljf+22, mlg24, hu2024computationalLo, lssy24}, has emerged as a parameter-efficient fine-tuning strategy for large language models (LLMs). 
Specifically, prompt tuning involves adjusting ``soft prompts'' conditioned on frozen LLMs. 
This method requires relatively small number of tuneable parameters compared with fine-tuning the entire LLMs, making it a popular choice for conserving training resources, including data and computational power.

The analysis reveals that the essence of prompt tuning involves computing gradients with respect to the soft prompts $X_p$ across the entire model. In both prompt tuning and full fine-tuning, the quadratic $O(n^2)$ computational complexity of gradient calculation remains the same due to the self-attention mechanism inherent in LLMs.

In this work, leveraging the low-rank approximation technique discussed in Section~\ref{sec:tech_overview:low_rank}, our algorithm (Algorithm~\ref{alg:multi_layer_grad_descent}) efficiently computes gradients on soft prompts $X_p$ over the entire model in almost linear time. This suggests that our method is universal and can also be applied within traditional prompt tuning frameworks.

\section{Preliminary on Gradient Calculation}\label{sec:app_preli}

In Section~\ref{sec:app_preli:math_facts}, we list several useful math facts used in the following sections of this paper.
In Section~\ref{sec:app_preli:grad_components_close_form}, we provide the close forms of the gradient components.
In Section~\ref{sec:app_preli:grad_compute_notations}, we introduce some mathematical definitions to facilitate understanding of gradient calculations. 
In Section~\ref{sec:app_preli:low_rank}, we list some low rank approximation technique introduced in \citet{as23} and \citet{as24}. 
In Section~\ref{sec:app_preli:bounded_entries}, we demonstrate that the entries of matrices defined in Section~\ref{sec:app_preli:grad_compute_notations} are bounded.

\paragraph{Notations.}
For two vectors $x \in \R^n$ and $y \in \R^n$, we use $\langle x, y \rangle$ to denote the inner product between $x,y$. Namely, $\langle x, y \rangle = \sum_{i=1}^n x_i y_i$.
We use $e_i$ to denote a vector where only $i$-th coordinate is $1$, and other entries are $0$.
For each $a, b \in \R^n$, we use $a \odot b \in \R^n$ to denote the Hardamard product, i.e. the $i$-th entry of $(a \odot b)$ is $a_i b_i$ for all $i \in [n]$.
We use ${\bf 1}_n$ to denote a length-$n$ vector where all the entries are ones.
We use $\|A\|_{\infty}$ to denote the $\ell_{\infty}$ norm of a matrix $A \in \R^{n \times d}$, i.e. $\|A\|_{\infty} := \max_{i \in [n], j \in [d]} |A_{i,j}|$.
We use $\poly(n)$ to denote polynomial time complexity with respective to $n$.

\subsection{Basic math facts} \label{sec:app_preli:math_facts}

In this section, we provide some useful basic math facts,

\begin{fact}\label{fac:circ_rules}
Let $x,y,z \in \R^n$. Then we have
\begin{itemize}
    \item $\langle x \odot y, z\rangle = x^\top \diag(y) z$.
    \item $\langle x, (y \odot z) \rangle = \langle y, (x \odot z) \rangle = \langle z, (y \odot x) \rangle$
    \item $\langle x, y \rangle = \langle x \odot y, {\bf 1}_n \rangle$.
\end{itemize}
\end{fact}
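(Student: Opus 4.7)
The plan is to verify each of the three identities by expanding both sides in coordinates and reading off equality from commutativity and associativity of scalar multiplication. Since every expression reduces to a symmetric triple (or double) sum of entrywise products, there is essentially no obstacle beyond bookkeeping; I would structure the argument so that a single base computation is reused across all three parts.

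First I would handle the opening identity $\langle x \odot y, z\rangle = x^\top \diag(y) z$. The plan is to expand the left-hand side as $\sum_{i=1}^n (x_i y_i) z_i$ using the definition of the Hadamard product and the standard inner product, then observe that the right-hand side, by definition of $\diag(y)$, equals $\sum_{i=1}^n x_i (\diag(y))_{ii} z_i = \sum_{i=1}^n x_i y_i z_i$. Equality is immediate.

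Next I would establish the three-way identity $\langle x, (y \odot z)\rangle = \langle y, (x \odot z)\rangle = \langle z, (y \odot x)\rangle$. My approach is to expand each of the three inner products into a single coordinate sum and observe that all three collapse to $\sum_{i=1}^n x_i y_i z_i$, using only the commutativity of scalar multiplication in $\mathbb{R}$. This is the cleanest way to see that the symmetry is a consequence of the symmetry of the underlying triple product.

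Finally I would dispatch $\langle x, y\rangle = \langle x \odot y, {\bf 1}_n\rangle$ by expanding the right-hand side: $(x \odot y)_i = x_i y_i$ and $({\bf 1}_n)_i = 1$, so $\langle x \odot y, {\bf 1}_n\rangle = \sum_{i=1}^n x_i y_i \cdot 1 = \langle x, y\rangle$. The hard part, if any, is purely notational — making sure the $\diag$ convention in the first identity aligns with the entrywise sum — but no nontrivial analytic or combinatorial content is required, so I expect the proof to fit in just a few lines.
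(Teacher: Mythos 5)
Your proposal is correct: each identity reduces, by entrywise expansion of the Hadamard product and the inner product, to the sum $\sum_{i=1}^n x_i y_i z_i$ (or $\sum_{i=1}^n x_i y_i$ for the last item), and the $\diag(y)$ term contributes exactly $y_i$ on the diagonal as you note. The paper states this fact without proof, and your coordinate-wise verification is precisely the routine argument it implicitly relies on, so there is nothing further to add.
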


Then, we introduce a classical folklore used for the Hadamard product of two matrices. 

\begin{fact}[Folklore,~\citep{as24}]\label{fac:olash_folklore}
Let $U_1, V_1 \in \R^{n \times k_1}$. Let $U_2, V_2 \in \R^{n \times k_2}$. Then we have
\begin{align*}
    (\underbrace{U_1}_{n \times k_1} \underbrace{V_1^\top}_{k_1 \times n}) \odot (\underbrace{U_2}_{n \times k_2} \underbrace{V_2^\top}_{k_2 \times n}) = \underbrace{(U_1 \oslash U_2)}_{n \times k_1 k_2} \underbrace{(V_1 \oslash V_2)^\top}_{k_1 k_2 \times n}
\end{align*}

Here, given $U_1 \in \R^{n \times k_1}$ and $U_2 \in \R^{n \times k_2}$, the $U_1 \oslash U_2 \in \R^{n \times k_1 k_2}$ is the row-wise Kronecker product, i.e., $(U_1 \oslash U_2)_{i, l_1 + (l_2 - 1)k_1 }:= (U_1)_{i,l_1} U_{i,l_2}$ for all $i \in [n]$, $l_1 \in [k_1]$ and $l_2 \in [k_2]$.
\end{fact}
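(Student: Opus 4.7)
The plan is to verify the identity entrywise. Fix an arbitrary pair $(i,j) \in [n] \times [n]$. First I would expand the $(i,j)$ entry of each low-rank factor as an inner product along the columns of the factorization: namely $(U_1 V_1^\top)_{i,j} = \sum_{l_1=1}^{k_1} (U_1)_{i,l_1} (V_1)_{j,l_1}$ and $(U_2 V_2^\top)_{i,j} = \sum_{l_2=1}^{k_2} (U_2)_{i,l_2} (V_2)_{j,l_2}$. By the definition of the Hadamard product, the $(i,j)$ entry of the left-hand side is the scalar product of these two sums, which by distributivity equals
\begin{align*}
\sum_{l_1=1}^{k_1} \sum_{l_2=1}^{k_2} (U_1)_{i,l_1} (U_2)_{i,l_2} \cdot (V_1)_{j,l_1} (V_2)_{j,l_2}.
\end{align*}

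Next I would match this double sum to the right-hand side using the row-wise Kronecker product. The map $(l_1, l_2) \mapsto l_1 + (l_2-1)k_1$ is a bijection between $[k_1] \times [k_2]$ and $[k_1 k_2]$, and applying the definition of $\oslash$ to both $U_1 \oslash U_2$ and $V_1 \oslash V_2$ converts each summand above into $(U_1 \oslash U_2)_{i,\, l_1+(l_2-1)k_1} \cdot (V_1 \oslash V_2)_{j,\, l_1+(l_2-1)k_1}$. Re-indexing the double sum by a single index $l \in [k_1 k_2]$ then rewrites the quantity as $\sum_{l=1}^{k_1 k_2} (U_1 \oslash U_2)_{i,l} (V_1 \oslash V_2)_{j,l}$, which is precisely the $(i,j)$ entry of $(U_1 \oslash U_2)(V_1 \oslash V_2)^\top$. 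Since $(i,j)$ was arbitrary, the two matrices agree entrywise.

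There is essentially no analytic obstacle here; the content is combinatorial. The one place where care is needed is to apply the same flattening convention $(l_1, l_2) \mapsto l_1 + (l_2 - 1)k_1$ consistently to both Kronecker factors, so that the cross terms on the right-hand side line up with the terms of the double sum derived from the Hadamard product. Once the indexing is fixed, the identity follows immediately from distributivity of multiplication over addition, so no auxiliary lemmas are required.
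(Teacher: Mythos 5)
Your proof is correct. The paper does not actually prove this statement --- it is quoted as a folklore fact from prior work --- and your entrywise verification (expand $(U_1V_1^\top)_{i,j}(U_2V_2^\top)_{i,j}$ by distributivity into the double sum $\sum_{l_1,l_2}(U_1)_{i,l_1}(U_2)_{i,l_2}(V_1)_{j,l_1}(V_2)_{j,l_2}$ and re-index through the bijection $(l_1,l_2)\mapsto l_1+(l_2-1)k_1$, applied with the same flattening convention to both $U_1\oslash U_2$ and $V_1\oslash V_2$) is precisely the standard argument one would supply, with no gaps.
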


\subsection{Close form of three gradient components} \label{sec:app_preli:grad_components_close_form}

We first restate the definition of self-attention, where we denote $W := W_Q W_K ^\top \in \R^{d \times d}$ for simplicity. 

\begin{definition} [Self-attention module] 
\label{def:single_layer_self_attn}
Let $X \in \R^{n \times d}$ denote the input sequence, where $n$ is the number of input tokens and $d$ is the hidden dimension size. 
Let $W_V \in \R^{d \times d} $ be the value weight matrix, and let $W := W_Q W_K ^\top \in \R^{d \times d}$ be the key-query weight matrix.
The self-attention function $\mathsf{Attn}(X)$ with weights $W, W_V$ is: 
\begin{align*}
    \mathsf{Attn}(X) = \mathsf{Softmax}(  X W X^\top / d ) \cdot X \cdot W_V.
\end{align*}
where $\mathsf{Softmax}$ is applied to each row of its input matrix.
The attention can be re-written as: 
\begin{align*}
    \mathsf{Attn}(X) = f(X) \cdot X \cdot W_V  ,
\end{align*}
where (1) $A := \exp( X W X^\top / d ) \in \R^{n \times n}$ and $\exp$ is applied element-wise, (2) $D := \diag (A {\bf 1}_n ) \in \R^{n \times n}$, and (3) $f(X) := D^{-1} A \in \R^{n \times n}$ is the attention matrix. 
\end{definition}

Note that the gradient of $W_Q$ and $W_K$ can easily be calculated from the gradient of $W$, i.e.,
\begin{align*}
    \frac{\d L(X)}{\d W_Q} & ~ =  \frac{\d L(X)}{\d W} \cdot \frac{\d W}{\d W_Q}  \\
    & ~ = \frac{\d L(X)}{\d W} \cdot W_K
\end{align*}
where the first step follows from the chain rule, and the second step follows from basic calculus. 

Then, we show how to derive the close form for the gradient components within each layer of a multi-layer transformer.
\begin{lemma} [Close form of gradient components, formal version of Lemma~\ref{lem:grad_components_close_form:informal}] \label{lem:grad_components_close_form}
If we have the below conditions,
\begin{itemize}
    \item Let $L(X)$ be defined as Definition~\ref{def:overall_loss_function}. 
    \item Let $W_i := W_{Q_i} W_{K_i}^\top \in \R^{d \times d}$ be the key-query weight matrix, $W_{V_i} \in \R^{d \times d}$ be the value weight matrix for the $i$-th transformer layer.  
    \item Let $T_i(X)$ denote the intermediate variable output by $i$-th self-attention transformer layer (see Definition~\ref{def:Ti}). 
    \item Let $G_{i} \in \mathbb{R}^{n \times d}$ denote the gradient matrix resulting from the application of the chain rule up to the function $g_{i}$, i.e., $G_{i} = \frac{\d L(X)}{\d \mathsf{Attn}_i(T_{i-1}(X))}$.
    \item For $i_2 \in [n], j_2 \in [d]$, let $G_i(i_2, j_2)$ denote the $(i_2, j_2)$-th entry of $G_i$, let $\frac{\d \mathsf{Attn}_i(T_{i-1}(X))_{i_2, j_2}}{\d  T_{i-1}(X)} \in \R^{n \times d}$ denote the gradient of $(i_2, j_2)$-th entry of $\mathsf{Attn}_i(T_{i-1}(X))$. 
\end{itemize}

Then, we can show that
\begin{itemize}
    \item {\bf Part 1.}
    \begin{align*}
        \frac{\d L(X)}{\d T_{i-1}(X)} = \sum_{i_2=1}^n \sum_{j_2=1}^d  G_i(i_2, j_2) \cdot \frac{\d \mathsf{Attn}_i(T_{i-1}(X))_{i_2, j_2}}{\d  T_{i-1}(X)}. 
    \end{align*}
    \item {\bf Part 2.}
    \begin{align*}
        \frac{\d L(X)}{\d W_i} = \sum_{i_2=1}^n \sum_{j_2=1}^d  G_i(i_2, j_2) \cdot \frac{\d \mathsf{Attn}_i(T_{i-1}(X))_{i_2, j_2}}{\d W_i}. 
    \end{align*}
    \item {\bf Part 3.}
    \begin{align*}
        \frac{\d L(X)}{\d W_{V_i}} = \sum_{i_2=1}^n \sum_{j_2=1}^d  G_i(i_2, j_2) \cdot \frac{\d \mathsf{Attn}_i(T_{i-1}(X))_{i_2, j_2}}{\d W_{V_i}}. 
    \end{align*}
\end{itemize}

\end{lemma}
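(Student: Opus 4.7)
The statement is a direct application of the multivariate chain rule, and the strategy is to expose $\mathsf{Attn}_i(T_{i-1}(X))$ as the unique ``intermediate node'' through which each of the three target variables $T_{i-1}(X)$, $W_i$, $W_{V_i}$ reaches the loss $L(X)$. First I would unfold the definition of $L(X)$ and recall that by Definition~\ref{def:multi_layer_self_attn}, the loss factors as a composition $L(X)=(\mathrm{rest})\circ g_i\circ \mathsf{Attn}_i\circ T_{i-1}$ where $(\mathrm{rest})$ collects all layers with index strictly greater than $i$ together with the final summation over the one-unit losses $\ell(X)_{j,k}$. Because $\mathsf{Attn}_i(T_{i-1}(X))\in\R^{n\times d}$, the chain rule naturally yields a double sum over the $n\cdot d$ entries of this intermediate matrix.

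For \textbf{Part 1}, I would apply the chain rule with respect to $T_{i-1}(X)$, observing that $T_{i-1}(X)$ influences $L(X)$ only through $\mathsf{Attn}_i(T_{i-1}(X))$ and everything downstream. Writing this entry-wise gives
\begin{align*}
\frac{\d L(X)}{\d T_{i-1}(X)}=\sum_{i_2=1}^{n}\sum_{j_2=1}^{d}\frac{\d L(X)}{\d \mathsf{Attn}_i(T_{i-1}(X))_{i_2,j_2}}\cdot\frac{\d \mathsf{Attn}_i(T_{i-1}(X))_{i_2,j_2}}{\d T_{i-1}(X)},
\end{align*}
and then substituting the definition $G_i(i_2,j_2)=\frac{\d L(X)}{\d \mathsf{Attn}_i(T_{i-1}(X))_{i_2,j_2}}$ closes the argument. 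Here the only subtlety is to make explicit that the ``upstream'' factor $G_i$ already absorbs all dependencies of $L$ on layers $i,i+1,\dots,m$ through the function $g_i$ and the subsequent composition; this is precisely what Definition~\ref{def:Ti} and the defining formula for $G_i$ guarantee.

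\textbf{Parts 2 and 3} are formally easier because the weights $W_i$ and $W_{V_i}$ are fresh parameters of the $i$-th layer and thus appear in $L(X)$ exclusively inside $\mathsf{Attn}_i(T_{i-1}(X))$. Applying the same entry-wise chain rule with $W_i$ (resp.\ $W_{V_i}$) in place of $T_{i-1}(X)$ yields the stated identities, again by substitution of $G_i(i_2,j_2)$. Unlike Part 1 there is no need to separately track propagation through $g_i$ and the later layers, since those downstream dependencies are already packaged in $G_i$.

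The only real ``obstacle'' is bookkeeping: verifying that each target variable enters $L$ through no path other than $\mathsf{Attn}_i(T_{i-1}(X))$ and handling matrix-valued derivatives cleanly. For $W_i$ and $W_{V_i}$ this is immediate from Definition~\ref{def:single_layer_self_attn_qk} (they are local to layer $i$); for $T_{i-1}(X)$ it follows because $T_{i-1}(X)$ is the argument of $\mathsf{Attn}_i$ and does not reappear later in the composition $g_m\circ\mathsf{Attn}_m\circ\cdots\circ g_i\circ\mathsf{Attn}_i$. Once these observations are in place, each of the three identities reduces to one line of multivariate chain rule followed by the definition of $G_i$.
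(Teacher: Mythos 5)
Your proposal is correct and follows essentially the same route as the paper: the paper's proof simply records the shapes of $L(X)$, $\mathsf{Attn}_i(T_{i-1}(X))$, $T_{i-1}(X)$, $W_i$, $W_{V_i}$ and then invokes the chain rule through the intermediate $\mathsf{Attn}_i(T_{i-1}(X))$, exactly as you do when you substitute $G_i(i_2,j_2)=\frac{\d L(X)}{\d \mathsf{Attn}_i(T_{i-1}(X))_{i_2,j_2}}$ into the entry-wise chain rule. Your write-up is in fact somewhat more explicit than the paper's (which says only ``simply applying chain rule''), particularly in checking that each variable reaches $L$ only through $\mathsf{Attn}_i(T_{i-1}(X))$, but there is no substantive difference in approach.
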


\begin{proof}
We have
\begin{itemize}
    \item $L(X) \in \R$. 
    \item $\mathsf{Attn}_i(T_{i-1}(X)) \in \R^{n \times d}, T_{i-1}(X) \in \R^{n \times d}$.
    \item $W_i \in \R^{d \times d}, W_{V_i} \in \R^{d \times d}$.
\end{itemize}

Therefore, we have
\begin{itemize}
    \item $\frac{\d L(X)}{\d T_{i-1}(X)} \in \R^{n \times d},  ~~ \frac{\d \mathsf{Attn}_i(T_{i-1}(X))}{\d T_{i-1}(X)} \in \R^{(n \times d) \times (n \times d)}$.
    \item $\frac{\d L(X)}{\d W_i} \in \R^{d \times d}, ~~ \frac{\d \mathsf{Attn}_i(T_{i-1}(X))}{\d W_i} \in \R^{(n \times d) \times (d \times d)}$.   
    \item $\frac{\d L(X)}{\d W_{V_i}} \in \R^{d \times d}, ~~ \frac{\d \mathsf{Attn}_i(T_{i-1}(X))}{\d W_{V_i}} \in \R^{(n \times d) \times (d \times d)}$.   
\end{itemize} 

Then, simply applying chain rule, we can get the final results. 
\end{proof}

\subsection{Basic notations for computing gradients} \label{sec:app_preli:grad_compute_notations}

Before we move on to compute gradients, we need to define some useful notations.

We begin with introducing the index for a matrix.

\begin{definition} [Simplified notations]
For any matrix $Z \in \R^{n \times d}$, for $i \in [n], j \in [d]$, we have following definitions:
\begin{itemize}
    \item Let $\underbrace{Z_{i, j}}_{\mathrm{scalar}}$ and $Z(i, j)$ denote the $(i, j)$-th entry of $Z$. 
    \item Let $\underbrace{Z_{i, *}}_{d \times 1}$ and $Z(i, *)$ denote the $i$-th row of $Z$. 
    \item Let $\underbrace{Z_{*, j}}_{n \times 1}$ and $Z(*, j)$ denote the $j$-th column of $Z$. 
\end{itemize}
\end{definition}
Then, we define the exponential matrix in the attention mechanism. 

\begin{definition}[Exponential function $u$] \label{def:u}
If we have the below conditions,
\begin{itemize}
    \item Let $X \in \R^{n \times d}$
    \item Let $W := W_Q W_K^\top \in \R^{d \times d}$
\end{itemize}
We define $u(X) \in \R^{n \times n}$ as follows
\begin{align*}
    u(X) := \exp(X W X^\top)
\end{align*}
\end{definition}

Then, we introduce the summation vector of the aforementioned exponential matrix. 

\begin{definition}[Sum function of softmax $\alpha$] \label{def:alpha}
If we have the below conditions,
\begin{itemize}
    \item Let $X \in \R^{n \times d}$
    \item Let $u(X)$ be defined as Definition $\ref{def:u}$ 
\end{itemize}
We define $\alpha(X) \in \R^n$ as follows
\begin{align*}
    \alpha(X) := u(X) \cdot {\bf 1}_n
\end{align*}
\end{definition}

Then, with the help of the summation vector, we are ready to normalize the exponential matrix and get the softmax probability matrix. 

\begin{definition}[Softmax probability function $f$] \label{def:f}
If we have the below conditions,
\begin{itemize}
    \item Let $X \in \R^{n \times d}$
    \item Let $u(X) \in \R^{n \times n}$ be defined as Definition $\ref{def:u}$ 
    \item Let $\alpha(X) \in \R^n$ be defined as Definition $\ref{def:alpha}$ 
\end{itemize}
We define $f(X) \in \R^{n \times n}$ as follows 
\begin{align*}
    f(X) := \diag(\alpha(X))^{-1} u(X) 
\end{align*}
where we define $f(X)_{j_0}^\top \in \R^n$ is the $j_0$-th row of $f(X)$. 
\end{definition}

Besides the probability matrix introduced above, we introduce the value matrix in the following definition.

\begin{definition}[Value function $h$] \label{def:h}
If we have the below conditions,
\begin{itemize}
    \item Let $X \in \R^{n \times d}$
    \item Let $W_V \in \R^{d \times d}$
\end{itemize}
We define $h(X) \in \R^{n \times d}$ as follows
\begin{align*}
  h(X) = X W_V
\end{align*}
\end{definition}

Then, we introduce $s(X)$ to represent the output of the attention mechanism. 

\begin{definition} [Self-attention output $s$] \label{def:s}
If we have the below conditions,
\begin{itemize}
    \item Let $f(X)$ be defined as Definition $\ref{def:f}$ 
    \item Let $h(X)$ be defined as Definition $\ref{def:h}$ 
\end{itemize}

We define $s(X) \in \R^{n \times d}$ as follows 
\begin{align*}
    s(X) = f(X) h(X)
\end{align*}
\end{definition}

Then, we introduce $q(X)$ and $p(X)$ to facilitate the calculation of the gradient on $W$. 

\begin{definition} [Definition of $q(X)$] \label{def:q}
If we have the below conditions,
\begin{itemize}
    \item Let $h(X) \in \R^{n \times d}$ be defined as in Definition~\ref{def:h}.
    \item Let $G_{i} \in \mathbb{R}^{n \times d}$ denote the gradient matrix resulting from the application of the chain rule up to the function $g_{i}$, i.e., $G_{i} = \frac{\d L(X)}{\d \mathsf{Attn}_i(T_{i-1}(X))}$.
    \item For $i_2 \in [n], j_2 \in [d]$, let $G_i(i_2, j_2)$ denote the $(i_2, j_2)$-th entry of $G_i$. 
\end{itemize}

We define $q(X) \in \R^{n \times n}$ as
\begin{align*}
    q(X) = \underbrace{G_i }_{n \times d} \underbrace{h(X)^\top}_{d \times n}.
\end{align*}
where we define $q(X)_{j_0}^\top \in \R^n$ is the $j_0$-th row of $q(X)$. 
\end{definition}

\begin{definition} [Definition of $p(X)$, Definition C.5 in \citet{as24}] \label{def:p}
For every index $j_0 \in [n]$, we define $p(X)_{j_0} \in \R^n$ as
\begin{align*}
p(X)_{j_0} := ( \diag( f(X)_{j_0} ) - f(X)_{j_0} f(X)_{j_0}^\top) q(X)_{j_0}
\end{align*}
where we have $p(X) \in \R^{n \times n}$ and we define $p(X)_{j_0}^\top \in \R^n$ is the $j_0$-th row of $p(X)$. 

Furthermore, we define $p_1(X) = f(X) \odot q(X)$ and $p_2(X) = \diag(p_1(X) \cdot {\bf 1}_n) f(X)$.
Additionally, we can calculate $p(X)$ as
\begin{align*}
    p(X) =  p_1(X) -  p_2(X)
\end{align*}
\end{definition}

\subsection{Low rank representations} \label{sec:app_preli:low_rank}

Using \citet{as23}'s polynomial method techniques, we can obtain the following low-rank representation result.
\begin{lemma}[Low rank representation to $f$, Section 3 of \citet{as23}, Lemma D.1 of  \citet{as24}]\label{lem:low_rank_f}
For any $A = o(\sqrt{\log n})$, there exists a $k_1 = n^{o(1)}$ such that: 
Let $X \in \R^{n \times d}$ and $W \in \R^{d \times d}$ be a square matrix.
It holds that $\| X W \|_{\infty} \leq R, \| X \|_{\infty} \leq R$, 
then there are two matrices $U_1, V_1 \in \R^{n \times k_1}$ such that $\| U_1 V_1^\top - f(X) \|_{\infty} \leq \epsilon/\poly(n)$. 
Here $f(X) = D^{-1} \exp(X W X^\top) $ (see also Definition~\ref{def:f}) and we define $D = \diag( \exp(X W X^\top) {\bf 1}_n )$ (see also Definition~\ref{def:alpha}). 
Moreover, these matrices $U_1, V_1$ can be explicitly constructed in $n^{1+o(1)}$ time.
\end{lemma}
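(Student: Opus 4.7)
The plan is to follow the polynomial method of \citet{aa22} as adapted to attention by \citet{as23}: approximate the scalar $\exp$ by a low-degree polynomial on the relevant interval, exploit the tensor structure of polynomial evaluation to obtain a low-rank factorization of $u(X) = \exp(XWX^\top/d)$, and finally push the normalization through to get a low-rank factorization of $f(X)$.

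First, I would bound the arguments of $\exp$. Since $\|XW\|_\infty \le R$ and $\|X\|_\infty \le R$, every entry of $XWX^\top$ is a sum of $d$ terms each of absolute value at most $R^2$, so $\|XWX^\top/d\|_\infty \le R^2$. With $R = o(\sqrt{\log n})$ this gives a bound $B = o(\log n)$ on the magnitudes. Next, I would invoke the standard Chebyshev-truncation lemma guaranteeing a polynomial $P_t(z) = \sum_{k=0}^t c_k z^k$ with $\sup_{|z|\le B} |P_t(z) - \exp(z)| \le \epsilon/\poly(n)$ provided $t = O(B + \log(\poly(n)/\epsilon)) = o(\log n)$.

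Second, I would turn the polynomial approximation into a low-rank factorization. For each integer $k \le t$ and each pair of indices $(i,j)$,
\begin{align*}
\bigl((XWX^\top/d)^{\odot k}\bigr)_{i,j} = \bigl\langle (XW)_{i,*}, X_{j,*} \bigr\rangle^k / d^k,
\end{align*}
and expanding this $k$-fold inner product yields a sum of $d^k$ monomials, each of which factors as a product depending only on $(XW)_{i,*}$ times a product depending only on $X_{j,*}$. Collecting these monomials across all $k\le t$ into explicit feature maps $\Phi, \Psi : \R^d \to \R^{K}$, the total dimension is $K = \sum_{k=0}^t d^k \le (d+1)^{t+1}$. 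With $d = O(\log n)$ and $t = o(\log n)$, this gives $K = 2^{o(\log n \cdot \log\log n)/\log\log n}$ --- more carefully, one chooses the parameters so that $K = n^{o(1)}$; this is exactly the regime handled in \citet{as23}. Setting $\widetilde U_{i,*} = \Phi((XW)_{i,*})$ and $\widetilde V_{j,*} = \Psi(X_{j,*})$, each of which can be written down explicitly in $n^{1+o(1)}$ time, yields $\widetilde U \widetilde V^\top = P_t(XWX^\top/d)$ exactly, and thus $\|\widetilde U\widetilde V^\top - u(X)\|_\infty \le \epsilon/\poly(n)$.

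Finally, I would pass from $u(X)$ to $f(X) = \diag(\alpha(X))^{-1} u(X)$. Compute $\widetilde \alpha := \widetilde U \widetilde V^\top {\bf 1}_n$ by first forming $\widetilde V^\top {\bf 1}_n \in \R^K$ and then $\widetilde U$ times that vector, all in $n^{1+o(1)}$ time, and set $U_1 := \diag(\widetilde\alpha)^{-1} \widetilde U$ and $V_1 := \widetilde V$. For the entrywise error bound, note that every entry of $u(X)$ lies in $[e^{-B}, e^B]$, so $\alpha(X)_j \ge n e^{-B} = n^{1-o(1)}$; this uniform lower bound ensures that the multiplicative error introduced by replacing $u$ by $\widetilde U \widetilde V^\top$ in both numerator and denominator is again $\epsilon/\poly(n)$ entrywise, by a routine $\frac{a+\delta_1}{b+\delta_2} - \frac{a}{b}$ expansion. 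The main obstacle I anticipate is choosing the polynomial degree $t$ so that $K = n^{o(1)}$ simultaneously with the approximation error being $\epsilon/\poly(n)$: this is a delicate balancing of $R^2$, $\log(1/\epsilon)$, and $d$, but it is precisely the calculation carried out in \citet{aa22, as23} and can be cited rather than redone.
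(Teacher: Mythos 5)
Your reconstruction follows essentially the same route as the source the paper cites for this lemma (the paper gives no proof of its own, only the citation to \citet{as23} and \citet{as24}): bound the entries of $XWX^\top/d$ by $R^2$, approximate $\exp$ by a low-degree polynomial, obtain a low-rank factorization of $u(X)$ via explicit feature maps, and push the normalization $D^{-1}$ through using the lower bound $\alpha(X)_j \geq n e^{-R^2} = n^{1-o(1)}$. The one soft spot is your crude rank count $K \leq (d+1)^{t+1}$, which is not $n^{o(1)}$ throughout the full regime $R = o(\sqrt{\log n})$ (the cited works count monomials more carefully, via a bound of the form $\binom{2(g+d)}{2g}$ as restated in Lemma~\ref{lem:wt_A_small_rank}), but since you explicitly defer that parameter balancing to \citet{aa22} and \citet{as23}, the argument is sound and matches the cited proof.
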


A similar technique can be applied to $s(X)$. 

\begin{lemma} [Low rank representation to $s$]
\label{lem:low_rank_s}
Let $d = O(\log n)$. Assume that each number in the $n \times d$ matrices $h(X) \in \R^{n \times d}$ can be written using $O(\log n)$ bits. Let $n \times d$ matrix $s(X) \in \R^{n \times d}$ be defined as Definition~\ref{def:s}. Then, there are two matrices $U_1, V_1 \in \R^{n \times k_1}$ we have $\| U_1 V_1^\top h(X) - s(X) \|_{\infty} \leq \epsilon / \poly(n)$.
\end{lemma}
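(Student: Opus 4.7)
The target $s(X) = f(X)\,h(X)$ factors the attention matrix against the value matrix, and Lemma~\ref{lem:low_rank_f} already hands us low-rank matrices $U_1, V_1 \in \R^{n \times k_1}$ approximating $f(X)$ entry-wise. The plan is therefore to lift the entry-wise bound on $f(X)$ to an entry-wise bound on $s(X)$ by pushing the error through a single matrix-vector multiplication by $h(X)$, using that $h(X)$ has only polynomially bounded entries.

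\textbf{Step-by-step outline.} First, I would invoke Lemma~\ref{lem:low_rank_f} with a target precision $\epsilon' := \epsilon / n^{c}$ for a constant $c$ to be fixed at the end, obtaining matrices $U_1, V_1 \in \R^{n \times k_1}$ with $k_1 = n^{o(1)}$ and
\begin{align*}
    \| U_1 V_1^\top - f(X) \|_\infty \leq \epsilon' / \poly(n).
\end{align*}
Next, I would write the residual as the clean identity
\begin{align*}
    U_1 V_1^\top h(X) - s(X) = \bigl( U_1 V_1^\top - f(X) \bigr) h(X),
\end{align*}
using the definition $s(X) = f(X)\,h(X)$ from Definition~\ref{def:s}. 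Fix any $(i,j) \in [n] \times [d]$ and expand the product entry-wise, so that
\begin{align*}
    \bigl| (U_1 V_1^\top h(X) - s(X))_{i,j} \bigr|
    &\leq \sum_{k=1}^{n} \bigl| (U_1 V_1^\top - f(X))_{i,k} \bigr| \cdot | h(X)_{k,j} | \\
    &\leq n \cdot \| U_1 V_1^\top - f(X) \|_\infty \cdot \| h(X) \|_\infty.
\end{align*}
Finally, I would bound $\| h(X) \|_\infty$: since each entry of $h(X)$ is represented using $O(\log n)$ bits by assumption, we have $\| h(X) \|_\infty \leq 2^{O(\log n)} = \poly(n)$. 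Combining with the previous inequality yields
\begin{align*}
    \| U_1 V_1^\top h(X) - s(X) \|_\infty \leq n \cdot (\epsilon' / \poly(n)) \cdot \poly(n),
\end{align*}
and choosing $c$ large enough (i.e., shrinking $\epsilon'$ by the fixed polynomial factor produced by $n \cdot \| h(X) \|_\infty$) absorbs the loss and gives the desired $\epsilon / \poly(n)$ bound.

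\textbf{Main obstacle.} This proof is essentially a short corollary of Lemma~\ref{lem:low_rank_f}, so there is no serious mathematical obstacle; the only care required is bookkeeping on polynomial factors. The one place to be careful is ensuring that the bound on $\| h(X) \|_\infty$ really is polynomial, which follows from the bit-length assumption on $h(X)$ rather than any delicate structural claim about $W_V$; as long as we invoke Lemma~\ref{lem:low_rank_f} with an appropriately shrunk precision parameter, the multiplication by an $n \times d$ matrix of polynomially bounded magnitude cannot blow up the error beyond $\epsilon / \poly(n)$. No new construction of $U_1, V_1$ is needed beyond what Lemma~\ref{lem:low_rank_f} already provides.
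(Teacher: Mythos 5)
Your proposal matches the paper's proof essentially line for line: both write $U_1V_1^\top h(X)-s(X)=(U_1V_1^\top-f(X))h(X)$, bound the result by $n\,\|U_1V_1^\top-f(X)\|_\infty\,\|h(X)\|_\infty$, use the $O(\log n)$-bit assumption to get $\|h(X)\|_\infty\le\poly(n)$, and absorb the polynomial factor into the $\epsilon/\poly(n)$ guarantee from Lemma~\ref{lem:low_rank_f}. Your explicit shrinking of the precision parameter $\epsilon'=\epsilon/n^{c}$ is just a slightly more careful bookkeeping of the same step the paper handles implicitly; the argument is correct.
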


\begin{proof}
We can show that
\begin{align*}
\| U_1 V_1^\top h(X) - s(X) \|_{\infty} 
= & ~ \| U_1 V_1^\top h(X) - f(X) h(X) \|_{\infty} \\
= & ~ \| (\underbrace{U_1 V_1^\top}_{n \times n} - \underbrace{f(X)}_{n \times n}) \underbrace{h(X)}_{n \times d} \|_{\infty} \\
\leq & ~ n \| \underbrace{U_1 V_1^\top}_{n \times n} - \underbrace{f(X)}_{n \times n} \|_{\infty} \| \underbrace{h(X)}_{n \times d} \|_{\infty} \\
\leq & ~ n \| \underbrace{U_1 V_1^\top}_{n \times n} - \underbrace{f(X)}_{n \times n} \|_{\infty} \cdot \poly (n) \\
\leq & ~ \epsilon / \poly (n)
\end{align*}
where the 1st step is from the choice of $s(X)$, the 2nd step comes from $AC - BC = (A - B)C$ holds for any matrices $A$, $B$, and $C$, the 3rd step is because of basic linear algebra, the 4th step is due to each number in $h(X)$ can be written using $O(\log (n))$ bits, the fifth step follows from $\| U_1 V_1^\top - f(X)  \|_{\infty} 
\leq \epsilon / \poly (n)$. 

\end{proof}

We can also get a low-rank representation of $p_1(x)$ and $p_2(x)$. 
\begin{lemma} [Low rank representation to $p_1(X)$, Lemma D.4 of \citet{as24}] \label{lem:low_rank_p1}
Let $k_1 = n^{o(1)}$. Let $k_2 = n^{o(1)}$.
Assume that $p_1(X) := f(X) \odot q(X)$. Assume $U_1, V_1 \in \R^{n \times k_1}$ approximates the $f(X)$ such that $\| U_1 V_1^\top - f(X) \|_{\infty} \leq \epsilon/\poly(n)$. Assume $U_2, V_2 \in \R^{n \times k_2}$ approximates the $q(X) \in \R^{n \times n}$ such that $\| U_2 V_2^\top - q(X) \|_{\infty} \leq \epsilon/\poly(n)$. Then there are matrices $U_3, V_3 \in \R^{n \times k_3}$ such that $\| U_3 V_3^\top - p_1(X) \|_{\infty} \leq \epsilon / \poly(n)$. The matrices $U_3, V_3$ can be explicitly constructed in $n^{1+o(1)}$ time.
\end{lemma}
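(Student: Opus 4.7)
The plan is to prove Lemma~\ref{lem:low_rank_p1} by directly constructing $U_3$ and $V_3$ via the row-wise Kronecker product trick from Fact~\ref{fac:olash_folklore}, and then tracking the approximation error through a single triangle inequality.

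First I would set $U_3 := U_1 \oslash U_2 \in \mathbb{R}^{n \times k_1 k_2}$ and $V_3 := V_1 \oslash V_2 \in \mathbb{R}^{n \times k_1 k_2}$, so that $k_3 := k_1 k_2 = n^{o(1)} \cdot n^{o(1)} = n^{o(1)}$ as required. By Fact~\ref{fac:olash_folklore}, we immediately get the identity
\begin{align*}
U_3 V_3^\top = (U_1 V_1^\top) \odot (U_2 V_2^\top),
\end{align*}
which reduces the problem to bounding $\|(U_1 V_1^\top) \odot (U_2 V_2^\top) - f(X) \odot q(X)\|_\infty$.

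The core of the proof is then a standard add-and-subtract step:
\begin{align*}
(U_1 V_1^\top) \odot (U_2 V_2^\top) - f(X) \odot q(X)
= (U_1 V_1^\top - f(X)) \odot (U_2 V_2^\top) + f(X) \odot (U_2 V_2^\top - q(X)).
\end{align*}
Applying $\|\cdot\|_\infty$ entrywise and using the elementary bound $\|A \odot B\|_\infty \leq \|A\|_\infty \|B\|_\infty$, I bound each term by (approximation error) $\times$ (entrywise magnitude). The approximation errors are $\epsilon/\poly(n)$ by hypothesis, and I will need the entrywise magnitudes $\|f(X)\|_\infty$ and $\|U_2 V_2^\top\|_\infty$ to be at most $\poly(n)$. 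For $f(X)$, the rows are softmax outputs so entries lie in $[0,1]$; for $U_2 V_2^\top$, I can bound it by $\|q(X)\|_\infty + \epsilon/\poly(n)$, and $\|q(X)\|_\infty \leq \poly(n)$ follows from the definition $q(X) = G_i h(X)^\top$ together with the $O(\log n)$-bit assumption on the matrix entries (these bounded-entry facts are collected in Section~\ref{sec:app_preli:bounded_entries} of the appendix). Combining these yields the desired $\epsilon/\poly(n)$ bound after absorbing the $\poly(n)$ factors into the $\poly(n)$ denominator.

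For the running time, constructing $U_1, V_1$ takes $n^{1+o(1)}$ by Lemma~\ref{lem:low_rank_f}, and $U_2, V_2$ are given. The row-wise Kronecker products $U_1 \oslash U_2$ and $V_1 \oslash V_2$ each have $n \cdot k_1 k_2 = n^{1+o(1)}$ entries, and every entry is just a product of two given numbers, so they can be written down in $n^{1+o(1)}$ time. No step is a serious obstacle here; the only thing to be careful about is the entrywise magnitude bound on $U_2 V_2^\top$, since we do not a priori control its $\infty$-norm — but it follows cleanly from the triangle inequality together with the bound on $q(X)$, which is why it is natural to state this lemma with the $\poly(n)$-bounded entries assumption in force.
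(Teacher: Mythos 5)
Your proof is correct and uses exactly the technique this paper relies on: the row-wise Kronecker construction $U_3 = U_1 \oslash U_2$, $V_3 = V_1 \oslash V_2$ via Fact~\ref{fac:olash_folklore}, an add-and-subtract triangle inequality, and the bounded-entry facts ($\|f(X)\|_\infty \leq 1$, $\|q(X)\|_\infty \leq \poly(n)$) to absorb the polynomial factors. The paper itself states this lemma as a citation to \citet{as24} without reproducing the argument, but your construction matches the same $\oslash$-based approximation argument the paper carries out explicitly for the analogous quantities in Lemmas~\ref{lem:z7(X)_fast_compute} and~\ref{lem:z4(X)_fast_compute}, so there is no gap.
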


\begin{lemma} [Low rank representation $p_2(X)$, Lemma D.5 of \citet{as24}] \label{lem:low_rank_p2}
Let $k_1 = n^{o(1)}$. Let $k_2 = n^{o(1)}$. Let $k_4 = n^{o(1)}$.
Assume that $p_2(X)$ is an $n \times n$ where $j_0$-th row $p_2(X)_{j_0} = f(X)_{j_0} f(X)^\top_{j_0} q(X)_{j_0}$ for each $j_0 \in [n]$. Assume $U_1, V_1 \in \R^{n \times k_1}$ approximates the $f(X)$ such that $\| U_1 V_1^\top - f(X) \|_{\infty} \leq \epsilon/\poly(n)$. Assume $U_2, V_2 \in \R^{n \times k_2}$ approximates the $q(X) \in \R^{n \times n}$ such that $\| U_2 V_2^\top - q(X) \|_{\infty} \leq \epsilon/\poly(n)$. Then there are matrices $U_4, V_4 \in \R^{n \times k_4}$ such that $\| U_4 V_4^\top - p_2(X) \|_{\infty} \leq \epsilon / \poly(n)$. The matrices $U_4, V_4$ can be explicitly constructed in $n^{1+o(1)}$ time.
\end{lemma}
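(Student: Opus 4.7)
The plan is to reduce the construction of $U_4, V_4$ to the already-established low-rank approximation of $p_1(X)$ from Lemma~\ref{lem:low_rank_p1}, together with an appropriate diagonal reweighting of the factorization of $f(X)$ given by Lemma~\ref{lem:low_rank_f}. First I would rewrite $p_2(X)$ in a more tractable matrix form: the $j_0$-th row of $p_2(X)$ equals the scalar $f(X)_{j_0}^\top q(X)_{j_0} = \langle f(X)_{j_0}, q(X)_{j_0} \rangle = (p_1(X) {\bf 1}_n)_{j_0}$ multiplied by the row vector $f(X)_{j_0}$. Setting $\alpha := p_1(X) {\bf 1}_n \in \R^n$, this gives the clean identity $p_2(X) = \diag(\alpha) \cdot f(X)$, which is exactly the form stated in Definition~\ref{def:p}.

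Next I would construct the approximation. Invoking Lemma~\ref{lem:low_rank_p1}, I obtain matrices $U_3, V_3 \in \R^{n \times k_3}$ with $k_3 = n^{o(1)}$ satisfying $\|U_3 V_3^\top - p_1(X)\|_\infty \leq \epsilon/\poly(n)$, constructible in $n^{1+o(1)}$ time. I define
\begin{align*}
    \widetilde{\alpha} := U_3 (V_3^\top {\bf 1}_n) \in \R^n,
\end{align*}
which can be formed in $O(n k_3) = n^{1+o(1)}$ time by first computing the length-$k_3$ vector $V_3^\top {\bf 1}_n$ and then left-multiplying by $U_3$. I then set
\begin{align*}
    U_4 := \diag(\widetilde{\alpha}) \cdot U_1 \in \R^{n \times k_1}, \qquad V_4 := V_1 \in \R^{n \times k_1},
\end{align*}
so that $k_4 := k_1 = n^{o(1)}$. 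Forming $U_4$ is a row-wise rescaling of $U_1$ and runs in $O(n k_1) = n^{1+o(1)}$, yielding an overall construction time of $n^{1+o(1)}$.

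For the error analysis I would use a standard add-and-subtract decomposition:
\begin{align*}
    U_4 V_4^\top - p_2(X) = \diag(\widetilde{\alpha}) \bigl( U_1 V_1^\top - f(X) \bigr) + \diag(\widetilde{\alpha} - \alpha) \cdot f(X).
\end{align*}
The first term is controlled by Lemma~\ref{lem:low_rank_f} after multiplying by the entry-wise bound on $\widetilde{\alpha}$, while the second term reduces to bounding $\|\widetilde{\alpha} - \alpha\|_\infty$, which satisfies
\begin{align*}
    \|\widetilde{\alpha} - \alpha\|_\infty \leq n \cdot \|U_3 V_3^\top - p_1(X)\|_\infty \leq \epsilon/\poly(n),
\end{align*}
because summing $n$ entry-wise-approximate entries amplifies the error by at most a factor of $n$. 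Combining these with the entry-wise bounds on $f(X)$ (Section~\ref{sec:app_preli:bounded_entries}) gives the desired $\|U_4 V_4^\top - p_2(X)\|_\infty \leq \epsilon/\poly(n)$.

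The main obstacle is the propagation of polynomial factors in the error analysis: $\widetilde{\alpha}$ is obtained by summing $n$ approximate entries, so its magnitude can be as large as $\poly(n)$, and this then multiplies the $\epsilon/\poly(n)$ error from $U_1 V_1^\top - f(X)$. This is handled, as in the analogous argument for Lemma~\ref{lem:low_rank_p1}, by choosing the initial approximation parameters with sufficient polynomial slack so that even after amplification the final bound still reads $\epsilon/\poly(n)$.
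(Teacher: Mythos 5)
Your proposal is correct and follows essentially the same route as the cited proof (the paper itself defers to Lemma~D.5 of \citet{as24} and gives no independent argument): write $p_2(X)=\diag(\alpha)f(X)$ with $\alpha=p_1(X){\bf 1}_n$, compute an approximate $\widetilde{\alpha}$ in $n^{1+o(1)}$ time from the low-rank factors of $p_1(X)$ (equivalently from $U_1\oslash U_2$, $V_1\oslash V_2$), and take $U_4=\diag(\widetilde{\alpha})U_1$, $V_4=V_1$. Your add-and-subtract error decomposition, the factor-$n$ loss in $\|\widetilde{\alpha}-\alpha\|_\infty$, and the use of $\|f(X)\|_\infty\leq 1$ with polynomial slack are all consistent with the paper's conventions, so no gap.
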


\subsection{Bounded entries of matrices} \label{sec:app_preli:bounded_entries}

In this section, we provide proof that entries of matrices are bounded. 

We begin with the exponential matrix $f(X)$. 

\begin{lemma} [Bounded entries of $f(X)$] \label{lem:f(X)_bound}
If we have the below conditions,
\begin{itemize}
    \item Let $f(X) \in \R^{n \times n}$ be defined in Definition~\ref{def:f}.
\end{itemize}

Then, we can show that
\begin{align*}
    \| f(X) \|_\infty \leq 1
\end{align*}
\end{lemma}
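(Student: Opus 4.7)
The plan is to unfold the definition of $f(X)$ and observe that each row is a probability distribution, hence entrywise bounded by $1$.

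First I would write out $f(X)_{j_0, k_0}$ explicitly using Definitions~\ref{def:u}, \ref{def:alpha}, and \ref{def:f}. Specifically, for any $j_0, k_0 \in [n]$,
\begin{align*}
f(X)_{j_0, k_0} = \frac{u(X)_{j_0, k_0}}{\alpha(X)_{j_0}} = \frac{\exp((X W X^\top)_{j_0, k_0})}{\sum_{\ell=1}^n \exp((X W X^\top)_{j_0, \ell})}.
\end{align*}
Since $\exp(\cdot) > 0$, every entry in the numerator is strictly positive, and the denominator is a sum of $n$ strictly positive terms, one of which is the numerator itself.

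Next I would conclude by the obvious comparison: the numerator is one of the $n$ positive summands in the denominator, so
\begin{align*}
0 < f(X)_{j_0, k_0} \leq 1
\end{align*}
for every pair $(j_0, k_0)$. Taking the maximum over all entries yields $\|f(X)\|_\infty \leq 1$, which is exactly the claim.

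There is no real obstacle here; the lemma is a direct consequence of the fact that the softmax operator outputs a row-stochastic matrix. The only care needed is to make sure the reader sees that $\alpha(X)_{j_0}$ is strictly positive (so the division is well-defined) and that the numerator appears in the sum defining the denominator.
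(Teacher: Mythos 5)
Your proposal is correct and follows essentially the same route as the paper: both unfold $f(X) = \diag(\alpha(X))^{-1} u(X)$ with $\alpha(X) = u(X) {\bf 1}_n$ and conclude that each entry, being a positive exponential divided by the row sum containing it, lies in $(0,1]$. Your write-up is simply more explicit about the positivity of the denominator and the numerator-as-summand observation, which the paper leaves implicit.
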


\begin{proof}
By Definition~\ref{def:f}, we have
\begin{align*}
    f(X) = \diag(\alpha(X))^{-1} u(X) 
\end{align*}

By Definition~\ref{def:alpha}, we have
\begin{align*}
    \alpha(X) = u(X) {\bf 1}_n
\end{align*}

Combining above two equations, we have
\begin{align*}
    \| f(X) \|_\infty \leq 1
\end{align*}
\end{proof}

A similar analysis can be applied to $h(X)$ and $s(X)$ as well. 

\begin{lemma} [Bounded entries of $h(X)$] \label{lem:h(X)_bound}
If we have the below conditions,
\begin{itemize}
    \item Let $X \in \R^{n \times d}, W, W_V \in \R^{d \times d}$ be defined in Definition~\ref{def:single_layer_self_attn}.
    \item Assuming each entry of $X, W, W_V$ can be re represented using $O(\log (n))$ bits.
    \item Let $h(X) \in \R^{n \times d}$ be defined in Definition~\ref{def:h}.
\end{itemize}

Then, we can show that
\begin{align*}
    \| h(X) \|_\infty \leq \poly(n)
\end{align*}
\end{lemma}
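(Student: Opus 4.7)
The plan is to expand the matrix product $h(X) = X W_V$ entrywise and bound each resulting sum by invoking the bit-complexity hypothesis. Concretely, for any $i \in [n]$ and $j \in [d]$, I would write
\begin{align*}
    h(X)_{i,j} = \sum_{k=1}^d X_{i,k} (W_V)_{k,j},
\end{align*}
and then bound each factor individually.

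Next, I would use the assumption that every entry of $X$ and $W_V$ is representable in $O(\log n)$ bits. This means $|X_{i,k}| \le \poly(n)$ and $|(W_V)_{k,j}| \le \poly(n)$, so each summand satisfies $|X_{i,k} (W_V)_{k,j}| \le \poly(n)$. Applying the triangle inequality over the $d$ terms in the sum gives
\begin{align*}
    |h(X)_{i,j}| \le d \cdot \poly(n),
\end{align*}
and since $d = O(\log n)$ by the standing assumption of the paper, the factor $d$ is absorbed into $\poly(n)$. Taking the maximum over $i,j$ yields $\| h(X) \|_\infty \le \poly(n)$ as desired.

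This proof is essentially routine; there is no real obstacle. The only thing to be careful about is making explicit that the $\poly(n)$ factors coming from the $O(\log n)$-bit representation of $X$ and $W_V$ are simply multiplied and that the summation of $d = O(\log n)$ such terms preserves the $\poly(n)$ bound. No low-rank machinery or approximation is needed for this lemma; it is a direct algebraic consequence of the bit-complexity assumption together with the small-dimension assumption $d = O(\log n)$ that is in force throughout the paper.
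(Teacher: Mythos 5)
Your proposal is correct and follows essentially the same route as the paper: bound each entry of $X$ and $W_V$ by $\poly(n)$ via the bit-representation assumption and absorb the summation length into $\poly(n)$ (the paper uses the cruder factor $n$ in place of your $d$, but both are swallowed by $\poly(n)$). No issues.
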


\begin{proof}
By Definition~\ref{def:h}, we have
\begin{align*}
 h(X) := X W_V
\end{align*}

Then, we have
\begin{align*}
    \| h(X) \|_\infty= &~ \| X W_V \|_\infty \\
    \leq & ~ n \| X \|_\infty \| W_V \|_\infty  \\
    \leq & ~ \poly (n)
\end{align*}
where the 1st step is from the definition of $h(X)$, the 2nd step comes from basic linear algebra, the 3rd step is because of each entry in $X$ and $W_V$ can be represented by $O(\log (n))$ bits. 
\end{proof}

\begin{lemma} [Bounded entries of $s(X)$] \label{lem:s(X)_bound}
If we have the below conditions,
\begin{itemize}
    \item Let $X \in \R^{n \times d}, W, W_V \in \R^{d \times d}$ be defined in Definition~\ref{def:single_layer_self_attn}.
    \item Assuming each entry of $X, W, W_V$ can be re represented using $O(\log (n))$ bits.
    \item Let $s(X) \in \R^{n \times d}$ be defined in Definition~\ref{def:s}. 
\end{itemize}

Then, we can show that
\begin{align*}
    \| s(X) \|_\infty \leq \poly(n)
\end{align*}
\end{lemma}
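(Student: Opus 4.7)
The plan is to bound $\|s(X)\|_\infty$ directly from the factorization $s(X) = f(X) h(X)$ given in Definition~\ref{def:s}, using the two preceding bounds Lemma~\ref{lem:f(X)_bound} and Lemma~\ref{lem:h(X)_bound}. Since both factors have already been controlled, the argument should reduce to a single matrix-product inequality, so no new ideas are required.

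First I would write each entry of $s(X)$ explicitly as an inner product: for any $i \in [n]$ and $j \in [d]$,
\begin{align*}
    s(X)_{i,j} = \sum_{k=1}^{n} f(X)_{i,k}\, h(X)_{k,j}.
\end{align*}
Next I would apply the triangle inequality and insert the entrywise bounds: $|f(X)_{i,k}| \le 1$ from Lemma~\ref{lem:f(X)_bound} and $|h(X)_{k,j}| \le \poly(n)$ from Lemma~\ref{lem:h(X)_bound}. A crude bound $|s(X)_{i,j}| \le n \cdot 1 \cdot \poly(n) = \poly(n)$ already suffices, since we only need a polynomial bound; taking the maximum over $(i,j)$ then yields $\|s(X)\|_\infty \le \poly(n)$.

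As a sharper alternative, one can observe that $f(X)$ is row-stochastic by Definition~\ref{def:f} (each row of $D^{-1}u(X)$ has nonnegative entries summing to $1$), so that $\sum_{k=1}^n f(X)_{i,k} = 1$ and consequently $|s(X)_{i,j}| \le \|h(X)\|_\infty \le \poly(n)$ without the extra factor of $n$. Either version suffices for the statement.

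I do not anticipate any real obstacle here: the bound on $f(X)$ is trivial from the softmax normalization, and the bound on $h(X) = X W_V$ follows from the $O(\log n)$-bit assumption on entries of $X$ and $W_V$, both of which are already established. The only small care is to remember that $s(X)$ is $n \times d$ rather than $n \times n$, so the sum in the matrix product has $n$ terms (not $d$), but this does not affect the polynomial bound.
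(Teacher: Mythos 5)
Your proposal is correct and follows essentially the same route as the paper: write $s(X) = f(X)h(X)$ and combine $\|f(X)\|_\infty \leq 1$ (Lemma~\ref{lem:f(X)_bound}) with $\|h(X)\|_\infty \leq \poly(n)$ (Lemma~\ref{lem:h(X)_bound}) via the entrywise bound $\|f(X)h(X)\|_\infty \leq n\|f(X)\|_\infty\|h(X)\|_\infty$. Your row-stochasticity refinement is a valid (slightly sharper) aside but not needed for the polynomial bound.
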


\begin{proof}
By Definition~\ref{def:s}, we have
\begin{align*}
    \underbrace{s(X)}_{n \times d} = \underbrace{f(X)}_{n \times n} \underbrace{h(X)}_{n \times d} 
\end{align*}

Then, we have
\begin{align*}
    \| s(X) \|_\infty = & ~ \| f(X) h(X) \|_\infty \\
    \leq & ~ n \| f(X) \|_\infty \| h(X) \|_\infty \\
    \leq & ~ \poly (n)
\end{align*}
where the 1st step is from the definition of $c(X)$, the 2nd step comes from basic linear algebra, the 3rd step is because of Lemma~\ref{lem:f(X)_bound}, \ref{lem:h(X)_bound}. 
\end{proof}

\section{Matrix View}
\label{sec:app_Ti_grad_matrix_view}

In this section, we dive into analyzing the gradient of $\frac{\d L(X)}{\d T_{i-1}(X)}$.

In Section~\ref{sec:app_Ti_grad_matrix_view:s_grad}, we give the gradient of $s(X)$ with respective to $X$. 
In Section~\ref{sec:app_Ti_grad_matrix_view:Ti_grad}, we show the close form of the gradient on $T_i(X)$ via the chain rule.
In Section~\ref{sec:app_Ti_grad_matrix_view:c(X)_matrix_view}, we integrate each $C_i(X)$ to its corresponding matrix term $B_i(X)$. 
In Section~\ref{sec:app_Ti_grad_matrix_view:Ti_grad_matrix_view}, applying the similar technique used in the previous section, we integrate the gradient on $T_i(X)$ into its corresponding matrix view. 
In Section~\ref{sec:app_Ti_grad_matrix_view:each_term_matrix_view}, we further apply matrix integration on each matrix term in the gradient on $T_i(X)$ calculated in the previous section. 
In Section~\ref{sec:app_Ti_grad_matrix_view:matrix_view_gradient}, we give the matrix view of all gradient components.

\subsection{Gradient of \texorpdfstring{$s(X)$}{}}
\label{sec:app_Ti_grad_matrix_view:s_grad}

In this section, we give the gradient of $s(X)$ with respective to $X$. 

The results from \citet{dsxy23} give the gradient of $c(X)$.
By chain rule, the gradient of $s(X)$ is equivalent to the gradient of $c(X)$ from \citet{dsxy23}, since $c(X)=s(X) - B$ where $B$ is a constant matrix.

\begin{lemma} [Gradient of $s(X)_{i_0,j_0}$, Lemma B.16 in \citet{dsxy23}] \label{lem:grad_s}
If we have the below conditions,
\begin{itemize}
    \item Let $s(X) \in \R^{n \times d}$ be defined as Definition~\ref{def:s}
\end{itemize}
Then, we have
\begin{itemize}
    \item {\bf Part 1.} For all $i_0 = i_1 \in [n]$, $j_0, j_1 \in [d]$, 
    \begin{align*}
        \frac{ \d s(X)_{i_0,j_0} }{ \d X_{i_1,j_1} } = C_1(X) + C_2(X) + C_3(X) + C_4(X) + C_5(X)
    \end{align*}
    where we have definitions:
    \begin{itemize}
        \item $C_1(X) := - s(X)_{i_0,j_0} \cdot f(X)_{i_0,i_0} \cdot \langle W_{j_1, *}, X_{i_0, *} \rangle$
        \item $C_2(X) := - s(X)_{i_0,j_0} \cdot \langle f(X)_{i_0, *} ,  X W_{*,j_1}  \rangle$
        \item $C_3(X) := f(X)_{i_0,i_0} \cdot h(X)_{i_0,j_0} \cdot \langle W_{j_1, *}, X_{i_0, *} \rangle $
        \item $C_4(X) := \langle f(X)_{i_0, *}  \odot ( X W_{*,j_1}), h(X)_{*, j_0} \rangle$
        \item $C_5(X) := f(X)_{i_0,i_0} \cdot (W_V)_{j_1,j_0}$
    \end{itemize}
    
    \item {\bf Part 2.} For all $i_0 \neq i_1 \in [n]$, $j_0, j_1 \in [d]$,
    \begin{align*}
        \frac{ \d s(X)_{i_0,j_0} }{ \d X_{i_1,j_1}} = C_6(X) + C_7(X) + C_8(X)
    \end{align*}
    where we have definitions: 
    \begin{itemize}
        \item $C_6(X) := - s(X)_{i_0,j_0} \cdot f(X)_{i_1,i_0} \cdot \langle W_{j_1, *}, X_{i_0, *} \rangle$
        \begin{itemize}
            \item This is corresponding to $C_1(X)$
        \end{itemize}
        \item $C_7(X) := f(X)_{i_1,i_0} \cdot h(X)_{i_1, j_0} \cdot \langle W_{j_1,*}, X_{i_0, *} \rangle $
        \begin{itemize}
            \item This is corresponding to $C_3(X)$
        \end{itemize}
        \item $C_8(X) := f(X)_{i_1,i_0} \cdot (W_V)_{j_1,j_0}$
        \begin{itemize}
            \item This is corresponding to $C_5(X)$
        \end{itemize}
    \end{itemize}
\end{itemize}
\end{lemma}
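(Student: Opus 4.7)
The plan is to differentiate $s(X)_{i_0, j_0} = \sum_{k=1}^n f(X)_{i_0, k}\, h(X)_{k, j_0}$ entry by entry via the product rule, reducing the task to the two local derivatives $\frac{\d h(X)_{k, j_0}}{\d X_{i_1, j_1}}$ and $\frac{\d f(X)_{i_0, k}}{\d X_{i_1, j_1}}$, and then splitting into the two cases $i_0 = i_1$ and $i_0 \neq i_1$ according to which Kronecker indicators survive. The terms $C_5$ and $C_8$ will come entirely from the $h$-derivative; the remaining six terms come from the $f$-derivative.

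The $h$-derivative is immediate: $h(X)_{k, j_0} = \sum_a X_{k, a} (W_V)_{a, j_0}$ gives $\frac{\d h(X)_{k, j_0}}{\d X_{i_1, j_1}} = \mathbf{1}[k = i_1] (W_V)_{j_1, j_0}$, so after summing against $f(X)_{i_0, k}$ this contribution collapses to $f(X)_{i_0, i_1} (W_V)_{j_1, j_0}$, which is $C_5$ when $i_0 = i_1$ and $C_8$ when $i_0 \neq i_1$. For the $f$-derivative I would first compute
\begin{align*}
\frac{\d (XWX^\top)_{i_0, k}}{\d X_{i_1, j_1}} = \mathbf{1}[i_1 = i_0] \langle W_{j_1, *}, X_{k, *} \rangle + \mathbf{1}[i_1 = k] \langle X_{i_0, *}, W_{*, j_1} \rangle,
\end{align*}
then chain through $u(X)_{i_0, k} = \exp((XWX^\top)_{i_0, k})$ and apply the quotient rule to $f(X)_{i_0, k} = u(X)_{i_0, k} / \alpha(X)_{i_0}$ using $\frac{\d \alpha(X)_{i_0}}{\d X_{i_1, j_1}} = \sum_{k'} \frac{\d u(X)_{i_0, k'}}{\d X_{i_1, j_1}}$.

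Substituting back into $\sum_k \frac{\d f(X)_{i_0, k}}{\d X_{i_1, j_1}} h(X)_{k, j_0}$, the ``direct'' piece of the quotient rule produces expressions of the form $f(X)_{i_0, k} \cdot (\text{bracket}) \cdot h(X)_{k, j_0}$ which, after applying the indicator logic and summing over $k$, give $C_3$ and $C_4$ in the diagonal case and $C_7$ in the off-diagonal case. The ``normalization'' piece equals $-f(X)_{i_0, k} \cdot \alpha(X)_{i_0}^{-1} \cdot \frac{\d \alpha(X)_{i_0}}{\d X_{i_1, j_1}} \cdot h(X)_{k, j_0}$; here the sum over $k$ factors out $s(X)_{i_0, j_0} = \sum_k f(X)_{i_0, k} h(X)_{k, j_0}$ as a scalar coefficient, yielding $C_1$ and $C_2$ in the diagonal case and $C_6$ in the off-diagonal case.

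The main obstacle is disciplined index tracking rather than any nontrivial analytic estimate: each of the two Kronecker indicators in the derivative of $(XWX^\top)_{i_0, k}$ behaves differently under the two sub-cases, and when combined with the quotient rule this produces several bookkeeping sub-cases. Particular care is needed because, for example, $C_1$ retains a diagonal factor $f(X)_{i_0, i_0}$ while $C_2$ arises from collapsing a $k'$ sum inside the normalization, so the indicator $\mathbf{1}[i_1 = k']$ inside that inner sum must be interpreted as selecting the $k' = i_0$ summand in the diagonal case and vanishing identically off-diagonally. Once the indicator logic is executed cleanly, matching the resulting expressions term-by-term to $C_1$ through $C_8$ is routine.
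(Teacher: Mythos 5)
Your overall strategy is sound, and it is genuinely different from what the paper does: the paper never proves this lemma, it imports it from \citet{dsxy23} after observing that $c(X)=s(X)-B$ for a constant $B$ has the same gradient, whereas you supply the actual calculus (product rule on $s=f\cdot h$, chain rule through $u$, quotient rule for $f$, then a case split on the Kronecker indicators). That is the right way to make the statement self-contained, and your attribution of the ``direct'' piece to $C_3,C_4$ (diagonal) and $C_7$ (off-diagonal) and of the normalization piece to $C_1,C_2$ and $C_6$ is correct. However, your description of the indicator logic is internally inconsistent: the indicator $\mathbf{1}[i_1=k']$ inside $\frac{\d \alpha(X)_{i_0}}{\d X_{i_1,j_1}}$ does \emph{not} vanish when $i_1\neq i_0$ --- it selects the $k'=i_1$ summand, and that surviving term is exactly what produces $C_6$; what vanishes off-diagonally is the other indicator $\mathbf{1}[i_1=i_0]$ (the contribution from differentiating the first copy of $X$ in $XWX^\top$), which is why $C_2$ and $C_4$ have no off-diagonal counterparts. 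Relatedly, you have the mechanisms of $C_1$ and $C_2$ swapped: $C_1$ (and $C_6$) come from the collapsed $\mathbf{1}[i_1=k']$ summand, while $C_2$ comes from the $\mathbf{1}[i_1=i_0]$ branch that retains the full $k'$ sum. Executed literally as written, your case analysis would drop $C_6$.

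The second, more substantive issue is that you assert term-by-term matching with $C_1,\dots,C_8$ while your own intermediate formulas do not literally match the stated index placements. Your $h$-contribution is $f(X)_{i_0,i_1}(W_V)_{j_1,j_0}$, which matches $C_5$ on the diagonal but is the transposed-index version of the stated $C_8=f(X)_{i_1,i_0}(W_V)_{j_1,j_0}$; likewise, a direct computation under Definitions~\ref{def:u}--\ref{def:s} places $\langle X_{i_0,*},W_{*,j_1}\rangle$ (a column of $W$) in the terms where the statement writes $\langle W_{j_1,*},X_{i_0,*}\rangle$ (a row of $W$), and conversely for the inner products appearing in $C_2$ and $C_4$. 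Since $W=W_QW_K^\top$ is not symmetric and $f(X)$ is row-normalized (hence not symmetric), these identifications are not free. You need to either carry the indices through and state the result in the convention you actually obtain, or explicitly reconcile your convention with the one the statement inherits from \citet{dsxy23}; silently declaring the match is the gap in your write-up.
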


\subsection{Gradient on \texorpdfstring{$T_i(X)$}{}}
\label{sec:app_Ti_grad_matrix_view:Ti_grad}

In the Lemma~\ref{lem:Ti_grad}, we use the chain rule to calculate the close form of the gradient on $T_i(X)$. 

\begin{lemma} [Gradient for $T_i(X)$] \label{lem:Ti_grad}
If we have the below conditions,
\begin{itemize}
    \item Let $\mathsf{Attn}_i$ be defined as Definition~\ref{def:single_layer_self_attn}. 
    \item Let $T_i(X) \in \R^{n \times d}$ be defined as Definition~\ref{def:Ti}. 
    \item Let $s(X)$ be defined as Definition~\ref{def:s}. 
    \item Let $G_{i} \in \mathbb{R}^{n \times d}$ denote the gradient matrix resulting from the application of the chain rule up to the function $g_{i}$, i.e., $G_{i} = \frac{\d L(X)}{\d \mathsf{Attn}_i(T_{i-1}(X))}$.
    \item For $i_2 \in [n], j_2 \in [d]$, let $G_i(i_2, j_2)$ denote the $(i_2, j_2)$-th entry of $G_i$. 
\end{itemize}

Then, we can show that, for $i_1 \in [n]$, $j_1 \in [d]$, we have
\begin{align*}
    \frac{\d L(X)}{\d T_{i-1}(X)_{i_1,j_1}} = \sum_{i_0=1}^n \sum _{j_0=1}^d  G_i(i_0, j_0) \cdot \frac{\d s(X)_{i_0,j_0} }{ \d X_{i_1,j_1} }
\end{align*}
\end{lemma}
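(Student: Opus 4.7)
The plan is to prove this statement by a direct application of the multivariate chain rule, treating the loss as a composition $L(X) = L \circ \mathsf{Attn}_i \circ T_{i-1}(X)$ and using the fact that the self-attention module $\mathsf{Attn}_i$ has exactly the functional form of $s(\cdot)$ from Definition~\ref{def:s} (with $W_i$ and $W_{V_i}$ playing the roles of $W$ and $W_V$). In particular, evaluating $s$ at the input $T_{i-1}(X)$ gives $\mathsf{Attn}_i(T_{i-1}(X))$ entry by entry.

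First I would write
\begin{align*}
\frac{\d L(X)}{\d T_{i-1}(X)_{i_1, j_1}} = \sum_{i_0=1}^n \sum_{j_0=1}^d \frac{\d L(X)}{\d \mathsf{Attn}_i(T_{i-1}(X))_{i_0, j_0}} \cdot \frac{\d \mathsf{Attn}_i(T_{i-1}(X))_{i_0, j_0}}{\d T_{i-1}(X)_{i_1, j_1}},
\end{align*}
which is just the standard multivariate chain rule summing over all intermediate coordinates $(i_0, j_0)$ through which $T_{i-1}(X)_{i_1, j_1}$ can influence $L(X)$. The first factor on the right is, by definition, $G_i(i_0, j_0)$. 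The second factor, by identifying $s$ with $\mathsf{Attn}_i$ on the input $T_{i-1}(X)$, equals $\frac{\d s(X)_{i_0, j_0}}{\d X_{i_1, j_1}}$ evaluated at $X = T_{i-1}(X)$. Substituting both observations yields the desired identity.

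The steps in order are: (i) invoke the chain rule in its full two-index form, being careful that both inner and outer indices range over the $n \times d$ shape of the attention output; (ii) substitute the definition $G_i = \frac{\d L(X)}{\d \mathsf{Attn}_i(T_{i-1}(X))}$ from Lemma~\ref{lem:grad_components_close_form:informal}; (iii) match $\mathsf{Attn}_i$ applied to $T_{i-1}(X)$ with $s$ applied to its argument, using Definitions~\ref{def:single_layer_self_attn} and~\ref{def:s}. Differentiability of all components is assumed from Definition~\ref{def:overall_loss_function} and the smoothness of softmax.

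The only mildly delicate point will be keeping the notation $s(X)$ vs.\ $\mathsf{Attn}_i(T_{i-1}(X))$ consistent, since the symbol $X$ in $s(X)$ is a dummy input while $X$ on the left-hand side denotes the overall network input. I would resolve this by explicitly noting that $s$ as defined in Definition~\ref{def:s} is precisely the functional form of a self-attention block, so $\mathsf{Attn}_i(Z) = s(Z)$ with the weight matrices of the $i$-th layer, and thus the partial derivative $\frac{\d \mathsf{Attn}_i(T_{i-1}(X))_{i_0, j_0}}{\d T_{i-1}(X)_{i_1, j_1}}$ coincides with $\frac{\d s(X)_{i_0, j_0}}{\d X_{i_1, j_1}}$ after the (harmless) relabeling of the dummy variable. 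Beyond this renaming, the argument is entirely routine chain-rule bookkeeping and should be very short.
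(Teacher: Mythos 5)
Your proposal is correct and follows essentially the same route as the paper: the paper invokes the chain-rule identity of Lemma~\ref{lem:grad_components_close_form}, observes that $\mathsf{Attn}_i(T_{i-1}(X)) = s(T_{i-1}(X))$ by Definitions~\ref{def:single_layer_self_attn} and~\ref{def:s}, and substitutes $X$ for $T_{i-1}(X)$, which is exactly your three-step argument including the careful handling of the dummy variable in $s(X)$.
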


\begin{proof}
By Lemma~\ref{lem:grad_components_close_form}, we have
\begin{align*}
    \frac{\d L(X)}{\d T_{i-1}(X)} = \sum_{i_2=1}^n \sum_{j_2=1}^d  G_i(i_2, j_2) \cdot \frac{\d \mathsf{Attn}_i(T_{i-1}(X))_{i_2, j_2}}{\d T_{i-1}(X)}. 
\end{align*}

By Definition~\ref{def:single_layer_self_attn} and Definition~\ref{def:s}, we have
\begin{align*}
    \mathsf{Attn}_i(T_{i-1}(X)) = s(T_{i-1}(X))
\end{align*}

Therefore, by combining above two equations and substituting variable $T_{i-1}(X) = X$, we have
\begin{align*}
    \frac{\d L(X)}{\d T_{i-1}(X)_{i_1,j_1}} = \sum_{i_0=1}^n \sum _{j_0=1}^d  G_i(i_0, j_0) \cdot \frac{\d s(X)_{i_0,j_0} }{ \d X_{i_1,j_1} }
\end{align*}
\end{proof}

\subsection{Matrix view of \texorpdfstring{$C(X)$}{}}
\label{sec:app_Ti_grad_matrix_view:c(X)_matrix_view}

In this section, we will provide the matrix view of $C_i(X) \in \R$, for $i \in \{6, 7, 8, 2, 4\}$. 
We will consider each $C_i(X)$ one by one. 
We begin with $C_6(X)$. 

\begin{lemma} 
[Matrix view of $C_6(X)$]
\label{lem:C6_matrix_view}
If we have the below conditions,
\begin{itemize}
    \item Let $C_6(X, i_1, j_1) := - s(X)_{i_0,j_0} \cdot f(X)_{i_1,i_0} \cdot \langle W_{j_1, *}, X_{i_0, *} \rangle$ be defined as in Lemma~\ref{lem:grad_s}. 
    \item We define a matrix $B_6(X) \in \R^{n \times d}$. 
    For all $i_1 \in [n], j_1 \in [d]$, let $B_6(i_1, j_1)$ denote the $(i_1, j_1)$-th entry of $B_6(X)$. We define $B_6(i_1, j_1) = C_6(X, i_1, j_1)$. 
\end{itemize}

Then, we can show that
\begin{align*}
    \underbrace{B_6(X)}_{n \times d} = \underbrace{- s(X)_{i_0,j_0}}_{1 \times 1} \underbrace{f(X)_{*,i_0}}_{n \times 1} \underbrace{(W \cdot X_{i_0, *})^\top}_{1 \times d}
\end{align*}
\end{lemma}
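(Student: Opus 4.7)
The plan is to unpack the entry-wise definition of $C_6(X, i_1, j_1)$ and recognize that, with $i_0$ and $j_0$ fixed, the dependence on the free indices $(i_1, j_1)$ has a clean outer-product structure. First I would rewrite the inner product factor as a single vector entry: since $W_{j_1, *}$ is the $j_1$-th row of $W \in \R^{d \times d}$ and $X_{i_0,*} \in \R^d$, we have
\begin{align*}
\langle W_{j_1,*}, X_{i_0,*} \rangle = (W \cdot X_{i_0,*})_{j_1},
\end{align*}
so the $j_1$-dependence of $C_6$ is captured by the $j_1$-th coordinate of the vector $W X_{i_0,*} \in \R^d$.

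Next I would observe that the $i_1$-dependence sits only in the factor $f(X)_{i_1, i_0}$, which is the $i_1$-th entry of the column vector $f(X)_{*, i_0} \in \R^n$. Putting these together gives
\begin{align*}
B_6(i_1, j_1) = C_6(X, i_1, j_1) = -\, s(X)_{i_0, j_0} \cdot \bigl( f(X)_{*, i_0} \bigr)_{i_1} \cdot \bigl( W X_{i_0, *} \bigr)_{j_1}.
\end{align*}
Since for any $a \in \R^n$ and $b \in \R^d$ the outer product satisfies $(a b^\top)_{i_1, j_1} = a_{i_1} b_{j_1}$, this entry-wise identity is exactly the coordinate form of the claimed matrix identity
\begin{align*}
B_6(X) = -\, s(X)_{i_0, j_0} \cdot f(X)_{*, i_0} \cdot (W \cdot X_{i_0, *})^\top,
\end{align*}
with dimension check $1 \cdot (n \times 1) \cdot (1 \times d) = n \times d$, matching the declared shape of $B_6(X)$.

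There is no real obstacle here; the lemma is essentially a bookkeeping step that repackages a scalar formula depending separately on $i_1$ and on $j_1$ into an outer product. The only thing to be careful about is index conventions, namely that $W_{j_1, *}$ refers to a row of $W$ (so contracting with $X_{i_0,*}$ against the column index of $W$ yields the $j_1$-th entry of $W X_{i_0,*}$, not $W^\top X_{i_0,*}$), and that $f(X)_{*, i_0}$ is the $i_0$-th column (not row) of $f(X)$. Once these conventions are fixed, the identity follows immediately and the scalar factor $-s(X)_{i_0, j_0}$ simply rides along. This matrix-view reformulation is the setup needed for subsequently applying the low-rank approximation of $f(X)$ (Lemma~\ref{lem:low_rank_f}) to obtain an almost linear time algorithm for the $C_6$-contribution to the gradient.
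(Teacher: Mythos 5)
Your proof is correct and follows essentially the same route as the paper's: rewrite $\langle W_{j_1,*}, X_{i_0,*}\rangle$ as the $j_1$-th entry of $W X_{i_0,*}$, isolate the $i_1$-dependence in $f(X)_{*,i_0}$, and recognize the resulting entry-wise formula as the outer product $-s(X)_{i_0,j_0}\, f(X)_{*,i_0} (W X_{i_0,*})^\top$. The paper phrases this row-by-row ($B_6(X)(i_1,*) = -s(X)_{i_0,j_0} f(X)_{i_1,i_0} W X_{i_0,*}$) rather than entry-by-entry, but the content is identical.
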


\begin{proof}
We have
\begin{align*}
    C_6(X, i_1, j_1) 
    = & ~ - s(X)_{i_0,j_0} \cdot f(X)_{i_1,i_0} \cdot \langle W_{j_1, *}, X_{i_0, *} \rangle \\
    = & ~ - s(X)_{i_0,j_0} \cdot f(X)_{i_1,i_0} \cdot X_{i_0, *}^\top W_{j_1, *} 
\end{align*}
where the 1st step is from the choice of $C_6(X)$, the 2nd step comes from $\langle a, b \rangle = a^\top b$ holds for any $a, b \in \R^d$.

We have
\begin{align*}
    \underbrace{B_6(X)(i_1, *)}_{d \times 1} 
    = & ~ - \underbrace{s(X)_{i_0,j_0}}_{1 \times 1} \underbrace{f(X)_{i_1,i_0}}_{1 \times 1} \underbrace{W}_{d \times d} \underbrace{X_{i_0, *}}_{d \times 1}
\end{align*}

Then, we have
\begin{align*}
    \underbrace{B_6(X)}_{n \times d} = \underbrace{- s(X)_{i_0,j_0}}_{1 \times 1} \underbrace{f(X)_{*,i_0}}_{n \times 1} \underbrace{(W \cdot X_{i_0, *})^\top}_{1 \times d}
\end{align*}
\end{proof}

A similar analysis procedure can also be applied on $C_7(X)$. 

\begin{lemma} 
[Matrix view of $C_7(X)$]
\label{lem:C7_matrix_view}
If we have the below conditions,
\begin{itemize}
    \item Let $C_7(X, i_1, j_1) 
    := f(X)_{i_1,i_0} \cdot h(X)_{j_0,i_1} \cdot \langle W_{j_1, *}, X_{i_0, *} \rangle$ be defined as in Lemma~\ref{lem:grad_s}. 
    \item We define a matrix $B_7(X) \in \R^{n \times d}$. 
    For all $i_1 \in [n], j_1 \in [d]$, let $B_7(i_1, j_1)$ denote the $(i_1, j_1)$-th entry of $B_7(X)$. We define $B_7(i_1, j_1) = C_7(X, i_1, j_1)$.  
\end{itemize}

Then, we can show that
\begin{align*}
    \underbrace{B_7(X)}_{n \times d} = & ~ \underbrace{(f(X)_{*,i_0} \odot h(X)_{*, j_0})}_{n \times 1} \cdot \underbrace{(W \cdot X_{i_0, *})^\top}_{1 \times d}
\end{align*}
\end{lemma}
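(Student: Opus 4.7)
The plan is to mirror the argument just used for $B_6(X)$ in Lemma~\ref{lem:C6_matrix_view}, since $C_7$ has exactly the same structural form as $C_6$ except that the scalar factor $-s(X)_{i_0,j_0}$ is replaced by $h(X)_{i_1,j_0}$, which depends on the free row index $i_1$ rather than being constant. So the only real difference will be how this row-dependent factor gets absorbed when assembling columns into a matrix.

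First I would unfold the inner product: write $\langle W_{j_1,*}, X_{i_0,*}\rangle = X_{i_0,*}^\top W_{j_1,*} = (W \cdot X_{i_0,*})_{j_1}$, using the fact that $W$ is symmetric in this context only through the transpose identity (more precisely, treat $W X_{i_0,*}$ as a $d$-dimensional column vector whose $j_1$-th entry is exactly that inner product). This lets me rewrite $C_7(X,i_1,j_1)$ as the product of a scalar depending only on $i_1$, namely $f(X)_{i_1,i_0} \cdot h(X)_{i_1,j_0}$, with the $j_1$-th entry of $W X_{i_0,*}$.

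Next I would assemble the $i_1$-th row of $B_7(X)$ across $j_1 \in [d]$. Since only the second factor depends on $j_1$, the entire row equals the scalar $f(X)_{i_1,i_0} \cdot h(X)_{i_1,j_0}$ times the row vector $(W X_{i_0,*})^\top \in \R^{1 \times d}$. Stacking over $i_1 \in [n]$, the column vector of scalar prefactors is precisely the Hadamard product $f(X)_{*,i_0} \odot h(X)_{*,j_0} \in \R^n$, so the full matrix is the outer product
\begin{align*}
\underbrace{B_7(X)}_{n \times d} = \underbrace{(f(X)_{*,i_0} \odot h(X)_{*,j_0})}_{n \times 1} \cdot \underbrace{(W \cdot X_{i_0,*})^\top}_{1 \times d},
\end{align*}
which is exactly the claim.

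No step here is really an obstacle. The only thing to be careful about is the Hadamard factor: in $B_6$ the row-independent scalar $-s(X)_{i_0,j_0}$ could be pulled out front, while in $B_7$ the $i_1$-dependent factor $h(X)_{i_1,j_0}$ must be kept inside and combined with $f(X)_{i_1,i_0}$ entrywise over $i_1$. Phrasing this combination via $\odot$ (using Fact~\ref{fac:circ_rules} implicitly) is what converts the entrywise definition into the clean outer-product matrix form, and it is the one bookkeeping detail worth stating explicitly.
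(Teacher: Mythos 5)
Your proposal is correct and follows essentially the same route as the paper: rewrite the inner product as the $j_1$-th entry of $W X_{i_0,*}$, read off the $i_1$-th row of $B_7(X)$ as the scalar $f(X)_{i_1,i_0}\, h(X)_{i_1,j_0}$ times $(W X_{i_0,*})^\top$, and stack the rows so the $i_1$-dependent prefactors form the Hadamard product $f(X)_{*,i_0}\odot h(X)_{*,j_0}$. The one bookkeeping point you flag (the row-dependent factor forcing the $\odot$ rather than a pulled-out scalar as in $B_6$) is exactly the step the paper's proof also handles, so no gap remains.
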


\begin{proof}
We have
\begin{align*}
C_7(X, i_1, j_1) 
= & ~ f(X)_{i_1,i_0} \cdot h(X)_{i_1, j_0} \cdot \langle W_{j_1, *}, X_{i_0, *} \rangle \\
= & ~ f(X)_{i_1,i_0} \cdot h(X)_{i_1, j_0} \cdot W_{j_1, *}^\top X_{i_0, *} 
\end{align*}
where the 1st step is from the choice of $C_7(X)$, 
the 2nd step comes from $\langle a, b \rangle = a^\top b$ holds for any $a, b \in \R^d$. 

We have
\begin{align*}
    B_7(X)(i_1, *) = f(X)_{i_1,i_0} \cdot h(X)_{i_1, j_0} \cdot W \cdot X_{i_0, *}
\end{align*}

Then, we have
\begin{align*}
    \underbrace{B_7(X)}_{n \times d} = & ~ \underbrace{(f(X)_{*,i_0} \odot h(X)_{*, j_0})}_{n \times 1} \cdot \underbrace{(W \cdot X_{i_0, *})^\top}_{1 \times d}
\end{align*}
\end{proof}
Then, we provide an analysis of $C_8(X)$.
\begin{lemma} 
[Matrix view of $C_8(X)$]
\label{lem:C8_matrix_view}
If we have the below conditions,
\begin{itemize}
    \item Let $C_8(X, i_1, j_1) :=  f(X)_{i_1,i_0} \cdot (W_V)_{j_1,j_0}$ be defined as in Lemma~\ref{lem:grad_s}. 
    \item We define a matrix $B_8(X) \in \R^{n \times d}$. 
    For all $i_1 \in [n], j_1 \in [d]$, let $B_8(i_1, j_1)$ denote the $(i_1, j_1)$-th entry of $B_8(X)$. We define $B_8(i_1, j_1) = C_8(X, i_1, j_1)$. 
\end{itemize}

Then, we can show that
\begin{align*}
    \underbrace{B_8(X)}_{n \times d}  = \underbrace{f(X)_{*,i_0}}_{n \times 1} \underbrace{(W_V)_{* ,j_0}^\top}_{1 \times d} 
\end{align*}
\end{lemma}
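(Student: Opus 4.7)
The plan is to verify that the scalar formula $C_8(X, i_1, j_1) = f(X)_{i_1, i_0} \cdot (W_V)_{j_1, j_0}$ factors cleanly as a product of an $i_1$-only term and a $j_1$-only term, which immediately identifies $B_8(X)$ as a rank-one outer product. This is the simplest of the three matrix-view lemmas in this block, since unlike $C_6(X)$ and $C_7(X)$, the expression for $C_8(X)$ contains no inner product $\langle W_{j_1,*}, X_{i_0,*} \rangle$ that needs to be unfolded into a matrix-vector product.

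First I would fix $i_0 \in [n]$ and $j_0 \in [d]$ (the indices over which the outer summation of Lemma~\ref{lem:Ti_grad} is taken) and observe that the right-hand side of the defining equation separates: the factor $f(X)_{i_1, i_0}$ depends only on $i_1$, and the factor $(W_V)_{j_1, j_0}$ depends only on $j_1$. By definition of $B_8(X)$, the $(i_1, j_1)$-th entry equals exactly this product. Next I would identify the two one-dimensional ``fibers'': as $i_1$ ranges over $[n]$, the values $f(X)_{i_1, i_0}$ trace out the $i_0$-th column of $f(X)$, namely $f(X)_{*, i_0} \in \R^n$; as $j_1$ ranges over $[d]$, the values $(W_V)_{j_1, j_0}$ trace out the $j_0$-th column of $W_V$, namely $(W_V)_{*, j_0} \in \R^d$.

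Finally I would combine these observations: for any two vectors $u \in \R^n$ and $v \in \R^d$, the matrix whose $(i_1, j_1)$-th entry is $u_{i_1} v_{j_1}$ is exactly the outer product $u v^\top \in \R^{n \times d}$. Applying this with $u = f(X)_{*, i_0}$ and $v = (W_V)_{*, j_0}$ yields
\begin{align*}
    B_8(X) = f(X)_{*, i_0} \cdot (W_V)_{*, j_0}^\top,
\end{align*}
which is the claimed identity. I do not anticipate any obstacle: the result is purely a reindexing of a separable scalar expression into an outer product, and no approximation, norm bound, or rank argument is required at this stage. The lemma is preparatory and will be consumed later when one assembles the five matrix terms $B_2, B_4, B_6, B_7, B_8$ and invokes low-rank approximation of $f(X)$ to accelerate the computation of $\frac{\d L(X)}{\d T_{i-1}(X)}$.
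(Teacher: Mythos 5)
Your proposal is correct and follows essentially the same route as the paper's proof: both observe that the entry formula $f(X)_{i_1,i_0}\cdot (W_V)_{j_1,j_0}$ is separable in $i_1$ and $j_1$ and assemble it into the rank-one outer product $f(X)_{*,i_0}(W_V)_{*,j_0}^\top$ (the paper does this row by row, you do it by the generic outer-product identity, a purely cosmetic difference). No gap.
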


\begin{proof}
We have
\begin{align*}
C_8(X, i_1, j_1) 
= & ~ f(X)_{i_1,i_0} \cdot (W_V)_{j_1,j_0} 
\end{align*}
where the 1st step is from the choice of $C_7(X)$. 

We have
\begin{align*}
    B_8(X)(i_1, *) = f(X)_{i_1,i_0} \cdot (W_V)_{* ,j_0}
\end{align*}

Then, we have
\begin{align*}
    \underbrace{B_8(X)}_{n \times d}  = \underbrace{f(X)_{*,i_0}}_{n \times 1} \underbrace{(W_V)_{* ,j_0}^\top}_{1 \times d}
\end{align*}
\end{proof}

Now, we consider $C_2(X)$.

\begin{lemma} 
[Matrix view of $C_2(X)$]
\label{lem:C2_matrix_view}
If we have the below conditions,
\begin{itemize}
    \item Let $C_2(X, j_1) := - s(X)_{i_0,j_0} \cdot \langle f(X)_{i_0, *} ,  X W_{*,j_1}  \rangle$ be defined as in Lemma~\ref{lem:grad_s}. 
    \item We define a matrix $B_2(X) \in \R^d$. For all $j_1 \in [d]$, the $j_1$-th entry of $B_2(X)$ is defined as $C_2(X, j_1)$. 
\end{itemize}

Then, we can show that
\begin{align*}
    \underbrace{B_2(X)}_{d \times 1} = \underbrace{- s(X)_{i_0,j_0}}_{1 \times 1} \underbrace{W^\top}_{d \times d} \underbrace{X^\top}_{d \times n} \underbrace{f(X)_{i_0, *}}_{n \times 1}
\end{align*}
\end{lemma}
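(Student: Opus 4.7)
The plan is to mimic the matrix-view derivations already carried out for $C_6$, $C_7$, and $C_8$ in Lemmas~\ref{lem:C6_matrix_view}--\ref{lem:C8_matrix_view}: rewrite the scalar $C_2(X, j_1)$ as the $j_1$-th coordinate of an explicit matrix-vector product, and then read off the full vector $B_2(X)$.

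First I would expand the inner product using the identity $\langle a, b \rangle = a^\top b$, obtaining
\begin{align*}
    C_2(X, j_1) = -s(X)_{i_0, j_0} \cdot f(X)_{i_0, *}^\top \, X \, W_{*, j_1}.
\end{align*}
Since $W_{*, j_1}$ is the $j_1$-th column of $W$, the scalar $f(X)_{i_0,*}^\top X W_{*,j_1}$ is exactly the $j_1$-th coordinate of the vector $W^\top X^\top f(X)_{i_0, *} \in \R^d$ (using $(W_{*,j_1})^\top = (W^\top)_{j_1, *}$). Stacking over $j_1 \in [d]$ and pulling the scalar factor $-s(X)_{i_0, j_0}$ outside yields the claimed identity
\begin{align*}
    \underbrace{B_2(X)}_{d \times 1} = \underbrace{-s(X)_{i_0, j_0}}_{1 \times 1} \underbrace{W^\top}_{d \times d} \underbrace{X^\top}_{d \times n} \underbrace{f(X)_{i_0, *}}_{n \times 1}.
\end{align*}

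There is no real obstacle here: the argument is a direct transcription of the scalar formula into matrix notation, with the only bookkeeping subtlety being to correctly identify which index is free (namely $j_1$) and which indices are contracted (namely the summation indices hidden inside the inner product and the matrix product $X W_{*,j_1}$). In contrast to the $C_6, C_7, C_8$ cases, where the free index $i_1$ from the row dimension had to be carried through, here the output is a $d$-vector rather than an $n \times d$ matrix since $C_2(X, j_1)$ does not depend on $i_1$; this distinction is what produces the simpler column-vector form above.
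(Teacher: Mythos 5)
Your proposal is correct and follows essentially the same route as the paper: expand the inner product via $\langle a,b\rangle = a^\top b$, recognize the result as the $j_1$-th coordinate of $W^\top X^\top f(X)_{i_0,*}$, and stack over $j_1$ to obtain the $d$-dimensional vector form with the scalar $-s(X)_{i_0,j_0}$ factored out. No gaps.
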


\begin{proof}
We have
\begin{align*}
C_2(X, j_1) = & ~ - s(X)_{i_0,j_0} \cdot \langle f(X)_{i_0, *} ,  X W_{*,j_1}  \rangle \\
= & ~ - s(X)_{i_0,j_0} \cdot (X W_{*,j_1})^\top f(X)_{i_0, *} \\
= & ~ \underbrace{- s(X)_{i_0,j_0}}_{1 \times 1} \underbrace{W_{*,j_1}^\top}_{1 \times d} \underbrace{X^\top}_{d \times n} \underbrace{f(X)_{i_0, *}}_{n \times 1}
\end{align*}
where the 1st step is from the choice of $C_2(X)$, the second step follows 
from $\langle a, b \rangle = a^\top b$, for any $a, b \in \R^n$. 

Then, we have
\begin{align*}
    \underbrace{B_2(X)}_{d \times 1} = \underbrace{- s(X)_{i_0,j_0}}_{1 \times 1} \underbrace{W^\top}_{d \times d} \underbrace{X^\top}_{d \times n} \underbrace{f(X)_{i_0, *}}_{n \times 1}
\end{align*}

\end{proof}

Finally, we analyze $C_4(X)$, which is the last term we need to compute.
\begin{lemma} 
[Matrix view of $C_4(X)$]
\label{lem:C4_matrix_view}
If we have the below conditions,
\begin{itemize}
    \item Let $C_4(X, j_1) :=  \langle f(X)_{i_0, *}  \odot ( X W_{*,j_1}), h(X)_{*, j_0} \rangle$ be defined as in Lemma~\ref{lem:grad_s}. 
    \item We define a matrix $B_4(X) \in \R^d$. For all $j_1 \in [d]$, the $j_1$-th entry of $B_4(X)$ is defined as $C_4(X, j_1)$. 
\end{itemize}

Then, we can show that
\begin{align*}
    \underbrace{B_4(X)}_{d \times 1} 
    = \underbrace{W^\top}_{d \times d} \underbrace{X^\top}_{d \times n} \underbrace{(f(X)_{i_0, *} \odot h(X)_{*, j_0})}_{n \times 1}
\end{align*}
\end{lemma}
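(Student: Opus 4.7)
This lemma is a direct scalar-to-matrix rewriting with no approximation step, so the plan is purely algebraic. First I would invoke the Hadamard-product identity from Fact~\ref{fac:circ_rules}, specifically the cyclic rule $\langle x, y \odot z\rangle = \langle y, x \odot z\rangle$, to move $f(X)_{i_0,*}$ off of the factor $XW_{*,j_1}$ and pair it with $h(X)_{*,j_0}$ instead, rewriting
\begin{align*}
    C_4(X, j_1) = \bigl\langle X W_{*,j_1},\ f(X)_{i_0,*} \odot h(X)_{*,j_0}\bigr\rangle.
\end{align*}
This reorganization is the key conceptual step: it isolates the $j_1$-dependence in a single vector $X W_{*,j_1}$, which is what will ultimately allow us to stack across $j_1$.

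Second, I would convert the inner product into a matrix-vector product via $\langle a,b\rangle = a^\top b$ to obtain
\begin{align*}
    C_4(X, j_1) = W_{*,j_1}^\top \, X^\top \bigl(f(X)_{i_0,*} \odot h(X)_{*,j_0}\bigr).
\end{align*}
Here the dimensions match cleanly: $W_{*,j_1}^\top \in \mathbb{R}^{1 \times d}$, $X^\top \in \mathbb{R}^{d \times n}$, and the Hadamard product lies in $\mathbb{R}^{n \times 1}$.

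Third, I would recognize the right-hand side as exactly the $j_1$-th entry of the column vector
\begin{align*}
    W^\top \, X^\top \bigl(f(X)_{i_0,*} \odot h(X)_{*,j_0}\bigr) \in \mathbb{R}^{d},
\end{align*}
since left-multiplying by $W_{*,j_1}^\top$ picks out the $j_1$-th row of $W^\top$ times the rest. Stacking over all $j_1 \in [d]$ then yields the claimed matrix identity for $B_4(X)$.

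There is no genuine obstacle here; the only subtlety is choosing the correct pairing in the cyclic Hadamard identity so that the vector isolating the $j_1$-dependence (namely $X W_{*,j_1}$) is pulled out cleanly, which is what produces the $W^\top$ on the outside after stacking. This mirrors exactly the strategy used in Lemma~\ref{lem:C2_matrix_view} for $C_2(X)$, the only difference being that the Hadamard product rather than a plain $s(X)_{i_0,j_0}$ scalar sits inside the inner product.
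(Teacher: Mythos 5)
Your proposal is correct and follows essentially the same route as the paper: apply the cyclic Hadamard identity from Fact~\ref{fac:circ_rules} to pair $f(X)_{i_0,*}$ with $h(X)_{*,j_0}$, rewrite the inner product as $(XW_{*,j_1})^\top(f(X)_{i_0,*}\odot h(X)_{*,j_0})$, and stack over $j_1$ to get $W^\top X^\top(f(X)_{i_0,*}\odot h(X)_{*,j_0})$. If anything, you spell out the final stacking step slightly more explicitly than the paper does, which is fine.
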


\begin{proof}
We have
\begin{align*}
C_4(X, j_1)
= & ~ \langle f(X)_{i_0, *}  \odot ( X W_{*,j_1}), h(X)_{*, j_0} \rangle \\
= & ~ \langle f(X)_{i_0, *} \odot h(X)_{*, j_0} , (X W_{*,j_1}) \rangle \\
= & ~ (X W_{*,j_1})^\top (f(X)_{i_0, *} \odot h(X)_{*, j_0})
\end{align*}
where the 1st step is from the choice of $C_4(X)$, the 2nd step comes from Fact~\ref{fac:circ_rules}, and the last step follows from basic linear algebra.
\end{proof}

\subsection{Matrix view of gradient on \texorpdfstring{$T_i(X)$}{}}
\label{sec:app_Ti_grad_matrix_view:Ti_grad_matrix_view}

Since we have got the matrix view of each $C_i(X)$ term in the previous section, we can get the matrix view of the gradient on $T_i(X)$ in Lemma~\ref{lem:L(X)_i0_j0_matrix_view}. 

\begin{lemma} [Matrix view of single entry of gradient] \label{lem:L(X)_i0_j0_matrix_view}
If we have the below conditions,
\begin{itemize}
    \item Let $s(X)$ be defined as Definition~\ref{def:s}. 
    \item Let $G_{i} \in \mathbb{R}^{n \times d}$ denote the gradient matrix resulting from the application of the chain rule up to the function $g_{i}$, i.e., $G_{i} = \frac{\d L(X)}{\d \mathsf{Attn}_i(T_{i-1}(X))}$.
    \item For $i_2 \in [n], j_2 \in [d]$, let $G_i(i_2, j_2)$ denote the $(i_2, j_2)$-th entry of $G_i$. 
    \item Let $B_6(X), B_7(X), B_8(X) \in \R^{n \times d}$ be defined in Lemma~\ref{lem:C6_matrix_view}, Lemma~\ref{lem:C7_matrix_view}, and Lemma~\ref{lem:C8_matrix_view}
    \item Let $B_2(X), B_4(X) \in \R^d$ be defined in Lemma~\ref{lem:C2_matrix_view} and Lemma~\ref{lem:C4_matrix_view}.
\end{itemize}

For any $i_0 \in [n], j_0 \in [d]$, we have
\begin{align*}
    G_i(i_0, j_0) \cdot \frac{\d s(X)_{i_0, j_0}}{\d X} = \underbrace{G_i(i_0, j_0)}_{1 \times 1} \cdot (\underbrace{B_6(X) + B_7(X) + B_8(X)}_{n \times d} + \underbrace{e_{i_0}}_{n \times 1} \underbrace{(B_2(X) + B_4(X))^\top}_{1 \times d} )
\end{align*}
\end{lemma}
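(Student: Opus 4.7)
The plan is to apply Lemma~\ref{lem:grad_s} entrywise and then recognize the resulting scalars as entries of the matrices $B_6, B_7, B_8, B_2, B_4$ already constructed in Lemmas~\ref{lem:C6_matrix_view}--\ref{lem:C4_matrix_view}. The key structural observation is that $C_1, C_3, C_5$ (the diagonal-case terms) are obtained from $C_6, C_7, C_8$ simply by setting $i_1 = i_0$ in the formulas, so the matrices $B_6(X), B_7(X), B_8(X)$, whose $(i_1, j_1)$-entries are defined as $C_6(X, i_1, j_1), C_7(X, i_1, j_1), C_8(X, i_1, j_1)$ for all $i_1 \in [n]$, simultaneously capture both the $i_0 = i_1$ and $i_0 \neq i_1$ rows without any case split.

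First I would fix $i_0 \in [n]$, $j_0 \in [d]$ and view $\frac{\d s(X)_{i_0,j_0}}{\d X}$ as an $n \times d$ matrix whose $(i_1, j_1)$-entry is $\frac{\d s(X)_{i_0,j_0}}{\d X_{i_1,j_1}}$. By Lemma~\ref{lem:grad_s}, this entry equals $C_6 + C_7 + C_8$ when $i_1 \neq i_0$, and $C_1 + C_2 + C_3 + C_4 + C_5$ when $i_1 = i_0$. Using the explicit formulas, one checks directly that $C_1(X, i_0, j_1) = C_6(X, i_0, j_1)$, $C_3(X, i_0, j_1) = C_7(X, i_0, j_1)$, and $C_5(X, i_0, j_1) = C_8(X, i_0, j_1)$. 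Hence across \emph{all} $i_1 \in [n]$ the contribution from the $\{C_6, C_7, C_8\}$-type terms is exactly $B_6(X) + B_7(X) + B_8(X)$.

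Next, the remaining terms $C_2(X, j_1)$ and $C_4(X, j_1)$ appear only in the row $i_1 = i_0$ and are independent of the choice of $i_1$ beyond that restriction. So their contribution to $\frac{\d s(X)_{i_0,j_0}}{\d X}$ is a matrix that is zero in every row except the $i_0$-th, whose $j_1$-th entry is $C_2(X, j_1) + C_4(X, j_1) = (B_2(X) + B_4(X))_{j_1}$. This matrix is precisely $e_{i_0} (B_2(X) + B_4(X))^\top$. Summing the two contributions and multiplying by the scalar $G_i(i_0, j_0)$ yields the claimed identity.

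The argument is essentially a bookkeeping verification, so no step should be a genuine obstacle; the only subtlety worth being careful about is the observation that the definitions of $B_6, B_7, B_8$ in Lemmas~\ref{lem:C6_matrix_view}--\ref{lem:C8_matrix_view} extend the formulas of $C_6, C_7, C_8$ to all $i_1$ (including $i_1 = i_0$), and that this extension exactly reproduces $C_1, C_3, C_5$ on the diagonal row, so that no separate correction term is needed for those contributions.
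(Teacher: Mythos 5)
Your proposal is correct and follows essentially the same route as the paper's proof: both apply Lemma~\ref{lem:grad_s} entrywise, split the rows into the $i_0$-th row versus the rest, use the fact that $C_1, C_3, C_5$ are exactly $C_6, C_7, C_8$ evaluated at $i_1 = i_0$ so that $B_6 + B_7 + B_8$ covers all rows, and place the $C_2, C_4$ contributions in the $i_0$-th row via $e_{i_0}(B_2 + B_4)^\top$. Your write-up is in fact slightly more explicit than the paper about verifying the diagonal correspondence, which the paper only asserts.
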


\begin{proof}

By Lemma~\ref{lem:grad_s}, we have
\begin{itemize}
    \item {\bf Part 1.} For all $i_0 = i_1 \in [n]$, $j_0, j_1 \in [d]$, 
    \begin{align} \label{eq:dL_dx_c1_to_5}
        \frac{ \d s(X)_{i_0,j_0} }{ \d X_{i_1,j_1} } = C_1(X) + C_2(X) + C_3(X) + C_4(X) + C_5(X)
    \end{align}
    \item {\bf Part 2.} For all $i_0 \neq i_1 \in [n]$, $j_0, j_1 \in [d]$,
    \begin{align} \label{eq:dL_dx_c6_to_8}
        \frac{ \d s(X)_{i_0,j_0} }{ \d X_{i_1,j_1}} = C_6(X) + C_7(X) + C_8(X)
    \end{align}
\end{itemize}

Since for any $i_1 \in [n], j_1 \in [d]$, let $G_i(i_0, j_0) \cdot \frac{\d s(X)_{i_0, j_0}}{\d X_{i_1, j_1}}$ denote the $(i_1, j_1)$-th entry of $G_i(i_0, j_0) \cdot \frac{\d s(X)_{i_0, j_0}}{\d X}$, we consider the following two cases:
\begin{itemize}
    \item {\bf Case 1.} The $i_0$-th row of $G_i(i_0, j_0) \cdot \frac{\d s(X)_{i_0, j_0}}{\d X}$.
    \item {\bf Case 2.} The other $n - 1$ rows of $G_i(i_0, j_0) \cdot \frac{\d s(X)_{i_0, j_0}}{\d X}$ where $i_1 \neq i_0$.
\end{itemize}

We first consider {\bf Case 1.} 

Recall that the matrix view of $C_2(X), C_4(X) \in \R$ are $B_2(X), B_4(X) \in \R^d$, and the matrix view of $C_6(X), C_7(X), C_8(X) \in \R$ are $B_6(X), B_7(X), B_8(X) \in \R^{n \times d}$, respectively. 

For $k \in \{ 6, 7, 8 \}$, we use $B_k(X)(s, *) \in \R^d$ to denote the $s$-th row of $B_k(X)$.

We use $(G_i(i_0, j_0) \cdot \frac{\d s(X)_{i_0, j_0}}{\d X})(i_0, *) \in \R^d$ to denote the $i_0$-th row of $G_i(i_0, j_0) \cdot \frac{\d s(X)_{i_0, j_0}}{\d X}$.

Since $C_6(X), C_7(X), C_8(X)$ are the corresponding parts of $C_1(X), C_3(X), C_5(X)$, and by Eq.~\eqref{eq:dL_dx_c1_to_5}, then we can have the following
\begin{align*}
    & ~ (G_i(i_0, j_0) \cdot \frac{\d s(X)_{i_0, j_0}}{\d X})(i_0, *) \\
    = & ~ \underbrace{G_i(i_0, j_0)}_{1 \times 1} \cdot \underbrace{(B_6(X)(i_0, *) + B_7(X)(i_0, *) + B_8(X)(i_0, *) +  B_2(X) + B_4(X))}_{d \times 1}
\end{align*}

We then consider {\bf Case 2.}

For $k \in \{ 6, 7, 8 \}$, we use $B_k(X)(\neq s, *) \in \R^{(n-1) \times d}$ to denote the matrix $B_k(X)$ with the $s$-th row removed.  

Similarly, we use $(G_i(i_0, j_0) \cdot \frac{\d s(X)_{i_0, j_0}}{\d X})(\neq i_0, *) \in \R^{(n - 1) \times d}$ to denote the matrix $G_i(i_0, j_0) \cdot \frac{\d s(X)_{i_0, j_0}}{\d X}$ with the $i_0$-th row removed. 

By Eq.~\eqref{eq:dL_dx_c6_to_8}, we have
\begin{align*}
    (G_i(i_0, j_0) \cdot \frac{\d s(X)_{i_0, j_0}}{\d X})(\neq i_0, *) = \underbrace{G_i(i_0, j_0)}_{1 \times 1} \cdot \underbrace{(B_6(X)(\neq i_0, *) + B_7(X)(\neq i_0, *) + B_8(X)(\neq i_0, *))}_{d \times (n - 1)}
\end{align*}

Combining {\bf Case 1} and {\bf Case 2} together, we have
\begin{align*}
    G_i(i_0, j_0) \cdot \frac{\d s(X)_{i_0, j_0}}{\d X} = \underbrace{G_i(i_0, j_0)}_{1 \times 1} \cdot (\underbrace{B_6(X) + B_7(X) + B_8(X)}_{n \times d} + \underbrace{e_{i_0}}_{n \times 1} \underbrace{(B_2(X) + B_4(X))^\top}_{1 \times d} )
\end{align*}

\end{proof}

Then, we have the matrix view of $T_i(X)$ gradient. 

\begin{lemma} [Matrix view of $T_i(X)$ gradient] \label{lem:matrix_view_Ti(X)_grad}
If we have the below conditions,
\begin{itemize}
    \item Let $L(X)$ be defined as Definition~\ref{def:overall_loss_function}. 
    \item Let $T(X)$ be defined as Definition~\ref{def:Ti}. 
    \item Let $G_{i} \in \mathbb{R}^{n \times d}$ denote the gradient matrix resulting from the application of the chain rule up to the function $g_{i}$, i.e., $G_{i} = \frac{\d L(X)}{\d \mathsf{Attn}_i(T_{i-1}(X))}$.
    \item For $i_2 \in [n], j_2 \in [d]$, let $G_i(i_2, j_2)$ denote the $(i_2, j_2)$-th entry of $G_i$. 
    \item Let $B_6(X), B_7(X), B_8(X) \in \R^{n \times d}$ be defined in Lemma~\ref{lem:C6_matrix_view}, Lemma~\ref{lem:C7_matrix_view}, and Lemma~\ref{lem:C8_matrix_view}
    \item Let $B_2(X), B_4(X) \in \R^d$ be defined in Lemma~\ref{lem:C2_matrix_view} and Lemma~\ref{lem:C4_matrix_view}.
\end{itemize}

Then, we have
\begin{align*}
    \frac{\d L(X)}{\d T_{i-1}(X)} = \sum_{i_0 =1}^n \sum_{j_0 = 1}^d \underbrace{G_i(i_0, j_0)}_{1 \times 1} \cdot (\underbrace{B_6(X) + B_7(X) + B_8(X)}_{n \times d} + \underbrace{e_{i_0}}_{n \times 1} \underbrace{(B_2(X) + B_4(X))^\top}_{1 \times d} )
\end{align*}
\end{lemma}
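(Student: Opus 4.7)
\textbf{Proof proposal for Lemma~\ref{lem:matrix_view_Ti(X)_grad}.}
The plan is to prove the lemma by directly assembling two previously established pieces: the entry-wise closed form in Lemma~\ref{lem:Ti_grad}, and the matrix-view identity for a single summand in Lemma~\ref{lem:L(X)_i0_j0_matrix_view}. The structural observation driving the proof is that the gradient $\frac{\d L(X)}{\d T_{i-1}(X)} \in \R^{n \times d}$ is just the matrix whose $(i_1, j_1)$-entry is $\frac{\d L(X)}{\d T_{i-1}(X)_{i_1, j_1}}$, so lifting the entry-wise formula to the full matrix formula is simply a matter of running over all $(i_1, j_1)$ on both sides.

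First I would invoke Lemma~\ref{lem:Ti_grad}, which after substituting the dummy variable $T_{i-1}(X) = X$ gives, for every $i_1 \in [n]$ and $j_1 \in [d]$,
\begin{align*}
    \frac{\d L(X)}{\d T_{i-1}(X)_{i_1,j_1}} = \sum_{i_0=1}^n \sum_{j_0=1}^d G_i(i_0,j_0) \cdot \frac{\d s(X)_{i_0,j_0}}{\d X_{i_1,j_1}}.
\end{align*}
Since the right-hand side depends linearly on $(i_1, j_1)$ only through the scalar derivatives $\frac{\d s(X)_{i_0,j_0}}{\d X_{i_1,j_1}}$, which are by definition the entries of the matrix $\frac{\d s(X)_{i_0,j_0}}{\d X} \in \R^{n \times d}$, I can read off the matrix-level identity
\begin{align*}
    \frac{\d L(X)}{\d T_{i-1}(X)} = \sum_{i_0=1}^n \sum_{j_0=1}^d G_i(i_0,j_0) \cdot \frac{\d s(X)_{i_0,j_0}}{\d X}.
\end{align*}

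Next I would apply Lemma~\ref{lem:L(X)_i0_j0_matrix_view}, which replaces each inner summand by
\begin{align*}
    G_i(i_0, j_0) \cdot \frac{\d s(X)_{i_0, j_0}}{\d X} = G_i(i_0, j_0) \cdot \Big( B_6(X) + B_7(X) + B_8(X) + e_{i_0} (B_2(X) + B_4(X))^\top \Big).
\end{align*}
Substituting this into the double sum yields exactly the claimed identity. This final substitution is the only step beyond bookkeeping, and it is immediate once one notes that the matrix and row/column-vector shapes $(n \times d)$, $(n \times 1)$, and $(1 \times d)$ agree with the stated form.

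The only place where one has to be slightly careful is the implicit dependence of $B_6, B_7, B_8, B_2, B_4$ on the outer indices $(i_0, j_0)$ (their definitions in Lemmas~\ref{lem:C6_matrix_view}--\ref{lem:C4_matrix_view} are parametrized by $i_0, j_0$ via factors such as $s(X)_{i_0, j_0}$, $f(X)_{*, i_0}$, $h(X)_{*, j_0}$, $X_{i_0, *}$, and $(W_V)_{*, j_0}$). I would therefore make this dependence explicit when presenting the proof, so that the reader sees the summation over $(i_0, j_0)$ is non-trivial and is actually summing distinct matrices; otherwise the proof is just a two-line combination and presents no real obstacle.
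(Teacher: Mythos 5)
Your proposal is correct and follows essentially the same route as the paper: the paper combines the chain-rule identity of Lemma~\ref{lem:grad_components_close_form} (of which your intermediate step via Lemma~\ref{lem:Ti_grad} is just a restatement using $s(X) = \mathsf{Attn}_i(T_{i-1}(X))$) with the per-summand matrix view of Lemma~\ref{lem:L(X)_i0_j0_matrix_view}, exactly as you do. Your remark about making the $(i_0, j_0)$-dependence of $B_2, B_4, B_6, B_7, B_8$ explicit is a fair presentational point but does not change the argument.
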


\begin{proof}
By Lemma~\ref{lem:L(X)_i0_j0_matrix_view}, we have
\begin{align*}
    G_i(i_0, j_0) \cdot \frac{\d s(X)_{i_0, j_0}}{\d X} 
    = \underbrace{G_i(i_0, j_0)}_{1 \times 1} \cdot (\underbrace{B_6(X) + B_7(X) + B_8(X)}_{n \times d} + \underbrace{e_{i_0}}_{n \times 1} \underbrace{(B_2(X) + B_4(X))^\top}_{1 \times d} )
\end{align*}

Then, by Lemma~\ref{lem:grad_components_close_form}
we have
\begin{align*}
    \frac{\d L(X)}{\d T_{i-1}(X)} = \sum_{i_2=1}^n \sum_{j_2=1}^d  G_i(i_2, j_2) \cdot \frac{\d \mathsf{Attn}_i(T_{i-1}(X))_{i_2, j_2}}{\d  T_{i-1}(X)}. 
\end{align*}

After combining the above two equations, we are done. 
\end{proof}

\subsection{Matrix view of each term in gradient on \texorpdfstring{$T_i(X)$}{}}
\label{sec:app_Ti_grad_matrix_view:each_term_matrix_view}

In this subsection, we reduce the double summation to a matrix product for easy and clear analysis. 

We first work on the $B_6$ term. 
\begin{lemma} [Matrix view of $B_6(X)$ term] \label{lem:B6_matrix_view}
If we have the below conditions,
\begin{itemize}
    \item Let $\underbrace{B_6(X)}_{n \times d} = \underbrace{- s(X)_{i_0,j_0}}_{1 \times 1} \underbrace{f(X)_{*,i_0}}_{n \times 1} \underbrace{(W \cdot X_{i_0, *})^\top}_{1 \times d}$ be defined in Lemma~\ref{lem:C6_matrix_view}. 
    \item We define $z_6(X) \in \R^{n \times n}$, which satisfies
    \begin{align*}
        \underbrace{z_6(X)_{*, i_0}}_{n \times 1} = (\underbrace{G_i(i_0, *)^\top}_{1 \times d} \underbrace{s(X)_{i_0,*}}_{d \times 1}) \underbrace{f(X)_{*,i_0}}_{n \times 1}
    \end{align*}
    \item Let $f(X) \in \R^{n \times n}$ be defined in Definition~\ref{def:f}.
    \item Let $W \in \R^{d \times d}$ be defined in Definition~\ref{def:single_layer_self_attn}.
    \item Let $G_{i} \in \mathbb{R}^{n \times d}$ denote the gradient matrix resulting from the application of the chain rule up to the function $g_{i}$, i.e., $G_{i} = \frac{\d L(X)}{\d \mathsf{Attn}_i(T_{i-1}(X))}$.
    \item For $i_2 \in [n], j_2 \in [d]$, let $G_i(i_2, j_2)$ denote the $(i_2, j_2)$-th entry of $G_i$.
\end{itemize}

Then we have
\begin{align*}
    \sum_{i_0 =1}^n \sum_{j_0 = 1}^d \underbrace{G_i(i_0, j_0)}_{1 \times 1} \underbrace{B_6(X)}_{n \times d} = - \underbrace{z_6(X)}_{n \times n} \underbrace{X}_{n \times d} \underbrace{W^\top}_{d \times d}
\end{align*}
\end{lemma}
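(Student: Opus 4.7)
The plan is to carry out a direct calculation by substituting the definition of $B_6(X)$ and then grouping the two summations so that the $j_0$-sum is absorbed into the definition of $z_6(X)$, leaving an outer-product sum over $i_0$ that assembles into a matrix product with $X$.

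First, I would expand the left-hand side using the formula from Lemma~\ref{lem:C6_matrix_view}, giving
\begin{align*}
\sum_{i_0=1}^n \sum_{j_0=1}^d G_i(i_0,j_0) \cdot B_6(X)
= -\sum_{i_0=1}^n \sum_{j_0=1}^d G_i(i_0,j_0)\, s(X)_{i_0,j_0}\, f(X)_{*,i_0} (W \cdot X_{i_0,*})^\top.
\end{align*}
Since $f(X)_{*,i_0}$ and $(W \cdot X_{i_0,*})^\top$ do not depend on $j_0$, I would pull them out of the inner sum. The inner sum then collapses to the scalar $\sum_{j_0=1}^d G_i(i_0,j_0) s(X)_{i_0,j_0} = G_i(i_0,*)^\top s(X)_{i_0,*}$, which multiplied by $f(X)_{*,i_0}$ is exactly $z_6(X)_{*,i_0}$ by hypothesis.

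Next, after this reduction the expression becomes $-\sum_{i_0=1}^n z_6(X)_{*,i_0} \cdot X_{i_0,*}^\top W^\top$. I would then recognize this as a standard rank-one decomposition of a matrix product: viewing $z_6(X)$ as the matrix whose $i_0$-th column is $z_6(X)_{*,i_0}$ and $X$ as the matrix whose $i_0$-th row is $X_{i_0,*}^\top$, the identity $\sum_{i_0=1}^n z_6(X)_{*,i_0} X_{i_0,*}^\top = z_6(X) \cdot X$ is immediate from the definition of matrix multiplication. Factoring out the constant $W^\top$ on the right then yields $-z_6(X) \cdot X \cdot W^\top$, which is the desired equality.

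There is no real obstacle here; the computation is entirely mechanical. The only point that requires some care is bookkeeping of the row-versus-column conventions (since $X_{i_0,*}$ is defined as a $d\times 1$ column vector, its contribution to a row of $X$ appears as $X_{i_0,*}^\top$), and making sure that the scalar produced by the $j_0$-sum is correctly identified with the factor $G_i(i_0,*)^\top s(X)_{i_0,*}$ appearing in the definition of $z_6(X)$. Once these conventions are fixed, the argument is two lines of linear algebra.
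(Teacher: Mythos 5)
Your proposal is correct and follows essentially the same route as the paper's proof: collapse the inner $j_0$-sum into the scalar $G_i(i_0,*)^\top s(X)_{i_0,*}$, identify the resulting column with $z_6(X)_{*,i_0}$, rewrite $(W X_{i_0,*})^\top = X_{i_0,*}^\top W^\top$, and assemble the rank-one sum over $i_0$ into $-z_6(X) X W^\top$. No gaps.
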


\begin{proof}

\begin{align*}
    \sum_{i_0 = 1}^n \sum_{j_0 = 1}^d G_i(i_0, j_0) B_6(X) 
    = & ~ - \sum_{i_0 = 1}^n \sum_{j_0 = 1}^d \underbrace{G_i(i_0, j_0)}_{1 \times 1} \underbrace{s(X)_{i_0,j_0}}_{1 \times 1} \underbrace{f(X)_{*,i_0}}_{n \times 1} \underbrace{(W \cdot X_{i_0, *})^\top}_{1 \times d} \\
    = & ~ - \sum_{i_0 = 1}^n (\sum_{j_0 = 1}^d \underbrace{G_i(i_0, j_0)}_{1 \times 1} \underbrace{s(X)_{i_0,j_0}}_{1 \times 1}) \underbrace{f(X)_{*,i_0}}_{n \times 1} \underbrace{(W \cdot X_{i_0, *})^\top}_{1 \times d} \\
    = & ~ - \sum_{i_0 = 1}^n (\underbrace{G_i(i_0, *)^\top}_{1 \times d} \underbrace{s(X)_{i_0,*}}_{d \times 1}) \underbrace{f(X)_{*,i_0}}_{n \times 1} \underbrace{(W \cdot X_{i_0, *})^\top}_{1 \times d} \\
    = & ~ - \sum_{i_0 = 1}^n (\underbrace{G_i(i_0, *)^\top}_{1 \times d} \underbrace{s(X)_{i_0,*}}_{d \times 1}) \underbrace{f(X)_{*,i_0}}_{n \times 1} \underbrace{X_{i_0, *}^\top}_{1 \times d} \underbrace{W^\top}_{d \times d}
\end{align*}
where the 1st step is from the choice of $B_6(X)$, the 2nd step comes from basic algebra, the 3rd step is because of $a^\top b = \sum_{i = 1}^d a_i \cdot b_i$ holds for any $a, b \in \R^d$, the 4th step is due to $(AB)^\top = B^\top A^\top$ for any matrices $A$ and $B$.  

Recall that we have $
\underbrace{z_6(X)_{*, i_0}}_{n \times 1}
= (\underbrace{G_i(i_0, *)^\top}_{1 \times d} \underbrace{s(X)_{i_0,*}}_{d \times 1}) \underbrace{f(X)_{*,i_0}}_{n \times 1}
$. 

Then, we have
\begin{align*}
    & ~ - \sum_{i_0 = 1}^n (\underbrace{G_i(i_0, *)^\top}_{1 \times d} \underbrace{s(X)_{i_0,*}}_{d \times 1}) \underbrace{f(X)_{*,i_0}}_{n \times 1} \underbrace{X_{i_0, *}^\top}_{1 \times d} \underbrace{W^\top}_{d \times d} \\
    = & ~ - \sum_{i_0 = 1}^n \underbrace{z_6(X)_{*, i_0}}_{n \times 1} \underbrace{X_{i_0, *}^\top}_{1 \times d} \underbrace{W^\top}_{d \times d} \\
    = & ~ - \underbrace{z_6(X)}_{n \times n} \underbrace{X}_{n \times d} \underbrace{W^\top}_{d \times d}
\end{align*}
where the 1st step is from the choice of $z_6(X)$, the 2nd step comes from basic linear algebra.
\end{proof}

Then, we can get the matrix view of $B_7(X)$ term.
\begin{lemma} [Matrix view of $B_7(X)$ term] \label{lem:B7_matrix_view}
If we have the below conditions,
\begin{itemize}
    \item Let $\underbrace{B_7(X)}_{n \times d} = \underbrace{(f(X)_{*,i_0} \odot h(X)_{*, j_0})}_{n \times 1} \cdot \underbrace{(W \cdot X_{i_0, *})^\top}_{1 \times d}$ be defined in Lemma~\ref{lem:C7_matrix_view}.
    \item We define $z_7(X) \in \R^{n \times n}$, which satisfies 
    \begin{align*}
        \underbrace{z_7(X)_{*, i_0}}_{n \times 1} = \underbrace{f(X)_{*,i_0} }_{n \times 1} \odot (\underbrace{h(X)}_{n \times d} \underbrace{G_i(i_0, *)}_{d \times 1}).
    \end{align*}
    \item Let $X \in \R^{n \times d}, W \in \R^{d \times d}$ be defined in Definition~\ref{def:single_layer_self_attn}. 
    \item Let $G_{i} \in \mathbb{R}^{n \times d}$ denote the gradient matrix resulting from the application of the chain rule up to the function $g_{i}$, i.e., $G_{i} = \frac{\d L(X)}{\d \mathsf{Attn}_i(T_{i-1}(X))}$.
    \item For $i_2 \in [n], j_2 \in [d]$, let $G_i(i_2, j_2)$ denote the $(i_2, j_2)$-th entry of $G_i$.
\end{itemize}

Then we have
\begin{align*}
    \sum_{i_0 =1}^n \sum_{j_0 = 1}^d \underbrace{G_i(i_0, j_0)}_{1 \times 1} \underbrace{B_7(X)}_{n \times d} = \underbrace{z_7(X)}_{n \times n} \underbrace{X}_{n \times d} \underbrace{W^\top}_{d \times d}
\end{align*}
\end{lemma}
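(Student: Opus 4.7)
The plan is to mirror the three-step pattern used in the proof of Lemma~\ref{lem:B6_matrix_view}: substitute the outer-product form of $B_7(X)$, push the scalar $G_i(i_0, j_0)$ into the Hadamard product, perform the inner sum over $j_0$ to recognize $z_7(X)_{*, i_0}$, and then repackage the outer sum over $i_0$ as a matrix product.

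First, I would expand the double sum using the definition from Lemma~\ref{lem:C7_matrix_view}:
\begin{align*}
    \sum_{i_0=1}^{n} \sum_{j_0=1}^{d} G_i(i_0, j_0) \, B_7(X)
    = \sum_{i_0=1}^{n} \sum_{j_0=1}^{d} G_i(i_0, j_0) \cdot \bigl(f(X)_{*,i_0} \odot h(X)_{*, j_0}\bigr) \bigl(W X_{i_0, *}\bigr)^\top.
\end{align*}
Since $(W X_{i_0, *})^\top$ does not depend on $j_0$, it can be pulled out of the inner sum, leaving a sum whose terms are Hadamard products of $f(X)_{*, i_0}$ with the scalar-weighted column $G_i(i_0, j_0) h(X)_{*, j_0}$.

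Next, I would use that $f(X)_{*, i_0}$ is independent of $j_0$ and that $\odot$ is bilinear, so that
\begin{align*}
    \sum_{j_0=1}^{d} f(X)_{*,i_0} \odot \bigl(G_i(i_0, j_0)\, h(X)_{*, j_0}\bigr)
    = f(X)_{*, i_0} \odot \Bigl( \sum_{j_0=1}^{d} G_i(i_0, j_0) \, h(X)_{*, j_0} \Bigr)
    = f(X)_{*, i_0} \odot \bigl( h(X) \, G_i(i_0, *) \bigr),
\end{align*}
where the last identity rewrites the weighted column sum as a matrix-vector product, using that $G_i(i_0, *)$ is a $d \times 1$ vector. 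By the definition of $z_7(X)$ given in the statement, this is exactly $z_7(X)_{*, i_0}$.

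Finally, the remaining outer sum becomes
\begin{align*}
    \sum_{i_0=1}^{n} z_7(X)_{*, i_0} \, X_{i_0, *}^\top W^\top = z_7(X) \, X \, W^\top,
\end{align*}
by the standard identity $\sum_{i_0} u_{i_0} v_{i_0}^\top = U V^\top$ applied to $U = z_7(X)$ and $V = X$, followed by associativity to absorb the $W^\top$ on the right. I do not expect a genuine obstacle here: the argument is essentially an exact transcription of the $B_6$ case, with the sign flip and the $-s(X)_{i_0, j_0}$ scalar replaced by an $h(X)_{*, j_0}$ that enters through the Hadamard factor instead of as an overall coefficient. The only point to be careful about is tracking the row/column conventions from the simplified notation (namely that $Z(i, *)$ denotes a $d \times 1$ object) so that the matrix-vector product $h(X) \, G_i(i_0, *)$ is dimensionally coherent.
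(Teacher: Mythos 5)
Your proposal is correct and follows essentially the same route as the paper's proof: expand the double sum, pull the $j_0$-independent factor $(W X_{i_0,*})^\top$ out, use bilinearity of $\odot$ to collapse $\sum_{j_0} G_i(i_0,j_0)\, h(X)_{*,j_0}$ into $h(X)\, G_i(i_0,*)$, identify $z_7(X)_{*,i_0}$, and rewrite the outer sum of rank-one terms as $z_7(X)\, X\, W^\top$. No gaps.
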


\begin{proof}
We have
\begin{align*}
\sum_{i_0 =1}^n \sum_{j_0 = 1}^d \underbrace{G_i(i_0, j_0)}_{1 \times 1} \underbrace{B_7(X)}_{n \times d} 
= & ~ \sum_{i_0 =1}^n \sum_{j_0 = 1}^d \underbrace{G_i(i_0, j_0)}_{1 \times 1} \underbrace{(f(X)_{*,i_0} \odot h(X)_{*, j_0})}_{n \times 1} \cdot \underbrace{(W \cdot X_{i_0, *})^\top}_{1 \times d} \\
= & ~ \sum_{i_0 =1}^n  (\underbrace{f(X)_{*,i_0} }_{n \times 1} \odot (\sum_{j_0 = 1}^d \underbrace{G_i(i_0, j_0)}_{1 \times 1}  \underbrace{h(X)_{*, j_0}}_{n \times 1})) \cdot \underbrace{(W \cdot X_{i_0, *})^\top}_{1 \times d} \\
= & ~ \sum_{i_0 =1}^n  (\underbrace{f(X)_{*,i_0} }_{n \times 1} \odot (\underbrace{h(X)}_{n \times d} \underbrace{G_i(i_0, *)}_{d \times 1})) \cdot \underbrace{( X_{i_0, *}^\top W^\top ) }_{1 \times d} 
\end{align*}
where the 1st step is from the choice of $B_7(X)$, the 2nd step comes from basic algebra, the 3rd step is because of basic linear algebra. 

Recall that we have $\underbrace{z_7(X)_{*, i_0}}_{n \times 1} = \underbrace{f(X)_{*,i_0} }_{n \times 1} \odot (\underbrace{h(X)}_{n \times d} \underbrace{G_i(i_0, *)}_{d \times 1})$. 

Then we have
\begin{align*}
& ~ \sum_{i_0 =1}^n  (\underbrace{f(X)_{*,i_0} }_{n \times 1} \odot (\underbrace{h(X)}_{n \times d} \underbrace{G_i(i_0, *)}_{d \times 1})) \cdot \underbrace{( X_{i_0, *}^\top W^\top ) }_{1 \times d}  \\
= & ~ \sum_{i_0 =1}^n \underbrace{z_7(X)_{*, i_0}}_{n \times 1} \underbrace{X_{i_0, *}^\top}_{1 \times d} \underbrace{W^\top}_{d \times d}  \\
= & ~ \underbrace{z_7(X)}_{n \times n} \underbrace{X}_{n \times d} \underbrace{W^\top}_{d \times d}
\end{align*}
where the 1st step is from the choice of $z_7(X)$, the 2nd step comes from basic linear algebra.
\end{proof}

Then, we consider $B_8(X)$.
\begin{lemma} [Matrix view of $B_8(X)$ term] \label{lem:B8_matrix_view}
If we have the below conditions,
\begin{itemize}
    \item Let $\underbrace{B_8(X)}_{n \times d} = \underbrace{f(X)_{*,i_0}}_{n \times 1} \underbrace{(W_V)_{* ,j_0}^\top}_{1 \times d}$ be defined in Lemma~\ref{lem:C8_matrix_view}.
    \item Let $G_{i} \in \mathbb{R}^{n \times d}$ denote the gradient matrix resulting from the application of the chain rule up to the function $g_{i}$, i.e., $G_{i} = \frac{\d L(X)}{\d \mathsf{Attn}_i(T_{i-1}(X))}$.
    \item For $i_2 \in [n], j_2 \in [d]$, let $G_i(i_2, j_2)$ denote the $(i_2, j_2)$-th entry of $G_i$.
\end{itemize}

Then we have
\begin{align*}
    \sum_{i_0 =1}^n \sum_{j_0 = 1}^d \underbrace{G_i(i_0, j_0)}_{1 \times 1} \underbrace{B_8(X)}_{n \times d}  = \underbrace{f(X)}_{n \times n} \underbrace{G_i}_{n \times d} \underbrace{W_V^\top}_{d \times d}
\end{align*}
\end{lemma}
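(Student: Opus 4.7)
The plan is to compute the double sum directly by rearranging the order of summation and recognizing the resulting expressions as standard matrix products. I would start by substituting the definition from Lemma~\ref{lem:C8_matrix_view}, so that each summand is $G_i(i_0,j_0) \cdot f(X)_{*,i_0} (W_V)_{*,j_0}^\top$, an $n \times d$ rank-one matrix for each fixed $(i_0,j_0)$. The key structural observation is that $B_8(X)$ is already separable: the $i_0$-dependence sits entirely in the left factor $f(X)_{*,i_0}$, and the $j_0$-dependence sits entirely in the right factor $(W_V)_{*,j_0}^\top$.

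Next I would exploit this separability by pulling $f(X)_{*,i_0}$ outside the inner sum over $j_0$, yielding, for each fixed $i_0$,
\begin{align*}
    \underbrace{f(X)_{*,i_0}}_{n \times 1} \cdot \Big( \sum_{j_0=1}^d \underbrace{G_i(i_0,j_0)}_{1 \times 1} \underbrace{(W_V)_{*,j_0}^\top}_{1 \times d} \Big).
\end{align*}
The bracketed expression is a $1 \times d$ row vector whose $k$-th coordinate is $\sum_{j_0=1}^d G_i(i_0,j_0) (W_V)_{k,j_0}$; this is exactly the $(i_0,k)$-entry of the product $G_i W_V^\top$, so the bracket equals $(G_i W_V^\top)_{i_0,*}$.

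Finally, the remaining outer sum has the form $\sum_{i_0=1}^n f(X)_{*,i_0} \cdot (G_i W_V^\top)_{i_0,*}$, which is the standard column-by-row expansion of a matrix product. This directly identifies the total as $f(X) \cdot (G_i W_V^\top) = f(X) G_i W_V^\top$, matching the desired expression. There is no serious obstacle in this lemma: the argument is pure index bookkeeping, and the already-separated rank-one form of $B_8(X)$ means both summations factor cleanly without the need for Hadamard-product manipulations or auxiliary matrices $z(\cdot)$ of the sort introduced in Lemmas~\ref{lem:B6_matrix_view} and~\ref{lem:B7_matrix_view}.
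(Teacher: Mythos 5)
Your proposal is correct and follows essentially the same route as the paper's proof: substitute the rank-one form of $B_8(X)$, pull $f(X)_{*,i_0}$ out of the inner sum over $j_0$, identify that inner sum with the $i_0$-th row of $G_i W_V^\top$, and recognize the outer sum as the column-by-row expansion of $f(X) G_i W_V^\top$. No gaps.
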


\begin{proof}
We have
\begin{align*}
    \sum_{i_0 =1}^n \sum_{j_0 = 1}^d \underbrace{G_i(i_0, j_0)}_{1 \times 1} \underbrace{B_8(X)}_{n \times d} 
    = & ~ \sum_{i_0 =1}^n \sum_{j_0 = 1}^d \underbrace{G_i(i_0, j_0)}_{1 \times 1} \underbrace{f(X)_{*,i_0}}_{n \times 1} \underbrace{(W_V)_{* ,j_0}^\top}_{1 \times d} \\
    = & ~ \sum_{i_0 =1}^n  \underbrace{f(X)_{*,i_0}}_{n \times 1} (\sum_{j_0 = 1}^d \underbrace{G_i(i_0, j_0)}_{1 \times 1}\underbrace{(W_V)_{* ,j_0}^\top}_{1 \times d}) \\
    = & ~ \sum_{i_0 =1}^n  \underbrace{f(X)_{*,i_0}}_{n \times 1} \underbrace{G_i(i_0, *)^\top}_{1 \times d} \underbrace{W_V^\top}_{d \times d}  \\
    = & ~ \underbrace{f(X)}_{n \times n} \underbrace{G_i}_{n \times d} \underbrace{W_V^\top}_{d \times d}
\end{align*}
where the 1st step is from the choice of $B_8(X)$, the 2nd step comes from basic algebra, the 3rd step is because of basic linear algebra, the 4th step is due to basic linear algebra.

\end{proof}

Now, we can do the matrix view of $B_2(X)$ term. 
\begin{lemma} [Matrix view of $B_2(X)$ term] \label{lem:B2_matrix_view}
If we have the below conditions,
\begin{itemize}
    \item Let $\underbrace{B_2(X)}_{d \times 1} = \underbrace{- s(X)_{i_0,j_0}}_{1 \times 1} \underbrace{W^\top}_{d \times d} \underbrace{X^\top}_{d \times n} \underbrace{f(X)_{i_0, *}}_{n \times 1}$ be defined in Lemma~\ref{lem:C2_matrix_view}
    \item Let $G_{i} \in \mathbb{R}^{n \times d}$ denote the gradient matrix resulting from the application of the chain rule up to the function $g_{i}$, i.e., $G_{i} = \frac{\d L(X)}{\d \mathsf{Attn}_i(T_{i-1}(X))}$.
    \item For $i_2 \in [n], j_2 \in [d]$, let $G_i(i_2, j_2)$ denote the $(i_2, j_2)$-th entry of $G_i$.
    \item We define $z_2(X) \in \R^{n \times n}$, which satisfies 
    \begin{align*}
        \underbrace{z_2(X)_{i_0, *}}_{n \times 1} = (\underbrace{G_i(i_0, *)^\top}_{1 \times d} \underbrace{s(X)_{i_0,*}}_{d \times 1}) \underbrace{f(X)_{i_0, *}}_{n \times 1}
    \end{align*}
    \item Let $X \in \R^{n \times d}, W \in \R^{d \times d}$ be defined in Definition~\ref{def:single_layer_self_attn}
\end{itemize}

Then we have 
\begin{align*}
    \sum_{i_0 =1}^n \sum_{j_0 = 1}^d \underbrace{G_i(i_0, j_0)}_{1 \times 1} \underbrace{e_{i_0}}_{n \times 1} \underbrace{B_2(X)^\top}_{1 \times d} = - \underbrace{z_2(X)}_{n \times n} \underbrace{X}_{n \times d} \underbrace{W}_{d \times d}
\end{align*}
\end{lemma}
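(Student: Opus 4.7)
The plan is a direct algebraic manipulation: substitute the closed form of $B_2(X)$ from Lemma~\ref{lem:C2_matrix_view} into the double sum, regroup by scalar/vector/matrix factors according to which index they depend on, and recognize the resulting expression as $-z_2(X) X W$ by identifying the $i_0$-th row of $z_2(X)$ with the coefficient of $e_{i_0}$. No low-rank machinery is needed here; the lemma is essentially a bookkeeping identity.

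First I would substitute $B_2(X)^\top = -s(X)_{i_0,j_0}\, f(X)_{i_0,*}^\top X W$ (obtained by transposing the closed form in Lemma~\ref{lem:C2_matrix_view}) into the target sum. Observing that $f(X)_{i_0,*}^\top X W$, $e_{i_0}$, and the factor $X W$ on the right depend only on $i_0$, while only $s(X)_{i_0,j_0}$ and $G_i(i_0,j_0)$ depend on $j_0$, I pull the $j_0$-independent pieces out of the inner sum to obtain
\begin{align*}
    \sum_{i_0=1}^n \sum_{j_0=1}^d G_i(i_0,j_0)\, e_{i_0} B_2(X)^\top
    = -\sum_{i_0=1}^n \Big( \sum_{j_0=1}^d G_i(i_0,j_0)\, s(X)_{i_0,j_0} \Big) e_{i_0}\, f(X)_{i_0,*}^\top X W.
\end{align*}

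Next, collapse the inner sum via $\sum_{j_0=1}^d G_i(i_0,j_0)\, s(X)_{i_0,j_0} = G_i(i_0,*)^\top s(X)_{i_0,*}$, which is precisely the scalar that multiplies $f(X)_{i_0,*}$ in the definition of $z_2(X)_{i_0,*}$. Therefore the coefficient of $e_{i_0}$, namely $\big(G_i(i_0,*)^\top s(X)_{i_0,*}\big) f(X)_{i_0,*}^\top$, is the transpose of $z_2(X)_{i_0,*}$ as a row vector. Using the standard identity $\sum_{i_0=1}^n e_{i_0}\, (\text{row } i_0 \text{ of } Z) = Z$ for any $Z \in \R^{n\times n}$, I conclude
\begin{align*}
    \sum_{i_0=1}^n \sum_{j_0=1}^d G_i(i_0,j_0)\, e_{i_0} B_2(X)^\top
    = -z_2(X)\, X\, W,
\end{align*}
after factoring out $XW$ on the right (which is valid because it is constant in $i_0$).

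The main obstacle is not conceptual but notational: the lemma statement oscillates between treating $f(X)_{i_0,*}$, $s(X)_{i_0,*}$, and $G_i(i_0,*)$ as row slices versus as $n\times 1$ or $d\times 1$ column vectors, and the stated dimensions for $z_2(X)_{i_0,*}$ (an $n\times 1$ vector that is simultaneously the $i_0$-th row of an $n\times n$ matrix) must be interpreted consistently. The only care required is to transpose correctly at each step so that the outer product $e_{i_0}\, f(X)_{i_0,*}^\top$ produces a matrix whose $i_0$-th row is $f(X)_{i_0,*}^\top$ and whose other rows vanish, which then matches the row structure of $z_2(X)$ after right-multiplication by $XW$.
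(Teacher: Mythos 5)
Your proposal is correct and follows essentially the same route as the paper's proof: substitute the closed form of $B_2(X)$, pull the $j_0$-independent factors out of the inner sum, collapse $\sum_{j_0} G_i(i_0,j_0) s(X)_{i_0,j_0}$ into the inner product $G_i(i_0,*)^\top s(X)_{i_0,*}$ defining $z_2(X)_{i_0,*}$, and then resum over $i_0$ using $\sum_{i_0} e_{i_0} z_2(X)_{i_0,*}^\top X W = z_2(X) X W$. No gaps.
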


\begin{proof}
We have
\begin{align*}
    \sum_{i_0 =1}^n \sum_{j_0 = 1}^d \underbrace{G_i(i_0, j_0)}_{1 \times 1} 
    \underbrace{e_{i_0}}_{n \times 1} \underbrace{B_2(X)^\top}_{1 \times d}  
    = & ~ - \sum_{i_0 =1}^n \sum_{j_0 = 1}^d \underbrace{G_i(i_0, j_0)}_{1 \times 1} \underbrace{s(X)_{i_0,j_0}}_{1 \times 1} \underbrace{e_{i_0}}_{n \times 1} \underbrace{f(X)_{i_0, *}^\top}_{1 \times n} \underbrace{X}_{n \times d} \underbrace{W}_{d \times d}  \\
    = & ~ - \sum_{i_0 =1}^n (\sum_{j_0 = 1}^d \underbrace{G_i(i_0, j_0)}_{1 \times 1} \underbrace{s(X)_{i_0,j_0}}_{1 \times 1}) \underbrace{e_{i_0}}_{n \times 1} \underbrace{f(X)_{i_0, *}^\top}_{1 \times n} \underbrace{X}_{n \times d} \underbrace{W}_{d \times d} \\
    = & ~ - \sum_{i_0 =1}^n (\underbrace{G_i(i_0, *)^\top}_{1 \times d} \underbrace{s(X)_{i_0,*}}_{d \times 1}) \underbrace{e_{i_0}}_{n \times 1} \underbrace{f(X)_{i_0, *}^\top}_{1 \times n} \underbrace{X}_{n \times d} \underbrace{W}_{d \times d} \\
    = & ~ - \sum_{i_0 =1}^n  \underbrace{e_{i_0}}_{n \times 1} (\underbrace{G_i(i_0, *)^\top}_{1 \times d} \underbrace{s(X)_{i_0,*}}_{d \times 1}) \underbrace{f(X)_{i_0, *}^\top}_{1 \times n} \underbrace{X}_{n \times d} \underbrace{W}_{d \times d}
\end{align*} 
where the 1st step is from the choice of $B_2(X)$, the 2nd step comes from basic algebra, the 3rd step is because of $a^\top b = \sum_{i = 1}^d a_i \cdot b_i$ holds for any $a, b \in \R^d$, the 4th step is due to $(AB)^\top = B^\top A^\top$ holds for any matrix $A, B$. 

Recall that we have
$\underbrace{z_2(X)_{i_0, *}}_{n \times 1} = (\underbrace{G_i(i_0, *)^\top}_{1 \times d} \underbrace{s(X)_{i_0,*}}_{d \times 1}) \underbrace{f(X)_{i_0, *}}_{n \times 1}$. 

Then, we have
\begin{align*}
    & ~ - \sum_{i_0 =1}^n  \underbrace{e_{i_0}}_{n \times 1} (\underbrace{G_i(i_0, *)^\top}_{1 \times d} \underbrace{s(X)_{i_0,*}}_{d \times 1}) \underbrace{f(X)_{i_0, *}^\top}_{1 \times n} \underbrace{X}_{n \times d} \underbrace{W}_{d \times d} \\
    = & ~ - \sum_{i_0 =1}^n  \underbrace{e_{i_0}}_{n \times 1} \underbrace{z_2(X)_{i_0, *}^\top}_{1 \times n} \underbrace{X}_{n \times d} \underbrace{W}_{d \times d} \\
    = & ~ - \underbrace{z_2(X)}_{n \times n} \underbrace{X}_{n \times d} \underbrace{W}_{d \times d}
\end{align*}
where the 1st step is from the choice of $z_2(X)$, the 2nd step comes from basic linear algebra.
\end{proof}

Finally, we do a similar analysis for the term $B_4(X)$. Then, we get all the matrix views we need. 
\begin{lemma} [Matrix view of $B_4(X)$ term] \label{lem:B4_matrix_view}
If we have the below conditions,
\begin{itemize}
    \item Let $\underbrace{B_4(X)}_{d \times 1} 
    = \underbrace{W^\top}_{d \times d} \underbrace{X^\top}_{d \times n} \underbrace{(f(X)_{i_0, *} \odot h(X)_{*, j_0})}_{n \times 1}$ be defined in Lemma~\ref{lem:C4_matrix_view}. 
    \item Let $G_{i} \in \mathbb{R}^{n \times d}$ denote the gradient matrix resulting from the application of the chain rule up to the function $g_{i}$, i.e., $G_{i} = \frac{\d L(X)}{\d \mathsf{Attn}_i(T_{i-1}(X))}$.
    \item For $i_2 \in [n], j_2 \in [d]$, let $G_i(i_2, j_2)$ denote the $(i_2, j_2)$-th entry of $G_i$.
    \item We define $z_4(X) \in \R^{n \times n}$, which satisfies 
    \begin{align*}
        \underbrace{z_4(X)_{i_0, *}}_{n \times 1} = \underbrace{f(X)_{i_0, *}}_{n \times 1} \odot \underbrace{(h(X) G_i(i_0, *))}_{n \times 1} 
    \end{align*}
\end{itemize}

Then we have 
\begin{align*}
    \sum_{i_0 =1}^n \sum_{j_0 = 1}^d \underbrace{G_i(i_0, j_0)}_{1 \times 1}
    \underbrace{e_{i_0}}_{n \times 1} \underbrace{B_4(X)^\top}_{1 \times d}  = \underbrace{z_4(X)}_{n \times n} \underbrace{X}_{n \times d} \underbrace{W}_{d \times d}
\end{align*}

\end{lemma}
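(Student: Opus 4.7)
The plan is to mirror the argument used for Lemma~\ref{lem:B2_matrix_view}, with the modification that the Hadamard product $f(X)_{i_0,*}\odot h(X)_{*,j_0}$ now plays the role previously played by the product of scalars $s(X)_{i_0,j_0}\cdot f(X)_{i_0,*}^\top$. First I would substitute the closed-form $B_4(X)^\top=(f(X)_{i_0,*}\odot h(X)_{*,j_0})^\top X W$ (obtained by transposing the expression from Lemma~\ref{lem:C4_matrix_view}) into the double sum, so the factor $XW$ on the right is already exposed and can be pulled outside.

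Next I would swap the order of summation, making the sum over $j_0$ the innermost one, and push the scalar $G_i(i_0,j_0)$ into the Hadamard product using the elementary identity $c\cdot(a\odot b)=a\odot(cb)$ for a scalar $c$. This gives
\begin{align*}
\sum_{j_0=1}^d G_i(i_0,j_0)\cdot\bigl(f(X)_{i_0,*}\odot h(X)_{*,j_0}\bigr)
= f(X)_{i_0,*}\odot\Bigl(\sum_{j_0=1}^d G_i(i_0,j_0)\,h(X)_{*,j_0}\Bigr).
\end{align*}
The inner sum is exactly the matrix-vector product $h(X)\,G_i(i_0,*)$, where $G_i(i_0,*)\in\R^d$ is treated as a column vector. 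By the definition of $z_4(X)$ given in the hypothesis, this collapses to $z_4(X)_{i_0,*}$.

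Finally, the outer sum is $\sum_{i_0=1}^n e_{i_0}\, z_4(X)_{i_0,*}^\top\, X W$. Since $\sum_{i_0=1}^n e_{i_0}\,z_4(X)_{i_0,*}^\top$ simply reassembles the matrix $z_4(X)$ row by row, the result equals $z_4(X)\,X\,W$, as claimed.

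The only real obstacle is notational bookkeeping: the definition of $z_4(X)_{i_0,*}$ is written as an $n\times 1$ vector while in the final assembly we need it treated as a row of a matrix, and $G_i(i_0,*)$ must be consistently viewed as a column vector in $\R^d$. Beyond tracking these transposes, every manipulation is a routine use of bilinearity of the Hadamard product, the identity $\sum_j c_j h(X)_{*,j}=h(X)\,c$, and $\sum_{i_0} e_{i_0} v_{i_0}^\top=V$ when $v_{i_0}^\top$ are the rows of $V$. No analytic estimates or approximation arguments are needed.
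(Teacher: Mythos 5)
Your proposal is correct and follows essentially the same route as the paper's proof: substitute the transposed closed form of $B_4(X)$ so that $XW$ factors out, collapse the inner sum over $j_0$ into $h(X)\,G_i(i_0,*)$ via bilinearity of the Hadamard product, identify the result with $z_4(X)_{i_0,*}$, and reassemble the outer sum as $\sum_{i_0} e_{i_0} z_4(X)_{i_0,*}^\top = z_4(X)$. The only differences are cosmetic bookkeeping of transposes, which you already flag and handle correctly.
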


\begin{proof}
We have
\begin{align*}
& ~ \sum_{i_0 =1}^n \sum_{j_0 = 1}^d \underbrace{G_i(i_0, j_0)}_{1 \times 1} \underbrace{e_{i_0}}_{n \times 1} \underbrace{B_4(X)^\top}_{1 \times d} \\
= & ~ \sum_{i_0 =1}^n \sum_{j_0 = 1}^d \underbrace{G_i(i_0, j_0)}_{1 \times 1} \underbrace{e_{i_0}}_{n \times 1} \underbrace{(f(X)_{i_0, *}^\top \odot h(X)_{*, j_0}^\top)}_{1 \times n} \underbrace{X}_{n \times d} \underbrace{W}_{d \times d}  \\
= & ~ \sum_{i_0 =1}^n \underbrace{e_{i_0}}_{n \times 1} (\underbrace{f(X)_{i_0, *}^\top}_{1 \times n} \odot (\sum_{j_0 = 1}^d \underbrace{G_i(i_0, j_0)}_{1 \times 1} \underbrace{h(X)_{*, j_0}^\top}_{1 \times n}) ) \underbrace{X}_{n \times d} \underbrace{W}_{d \times d} \\
= & ~ \sum_{i_0 =1}^n \underbrace{e_{i_0}}_{n \times 1} (\underbrace{f(X)_{i_0, *}^\top}_{1 \times n} \odot \underbrace{(h(X) G_i(i_0, *))^\top}_{1 \times n} ) \underbrace{X}_{n \times d} \underbrace{W}_{d \times d} \\
= & ~  \sum_{i_0 =1}^n \underbrace{e_{i_0}}_{n \times 1} \underbrace{z_4(X)_{i_0, *}^\top}_{1 \times n} \underbrace{X}_{n \times d} \underbrace{W}_{d \times d} \\
= & ~ \underbrace{z_4(X)}_{n \times n} \underbrace{X}_{n \times d} \underbrace{W}_{d \times d}
\end{align*}
where the 1st step is from the choice of $B_4(X)$, the 2nd step comes from basic algebra, the 3rd step is because of basic linear algebra, the 4th step is due to the choice of $z_4(X)$, the 5th step follows from  basic linear algebra. 
\end{proof}

\subsection{Components of gradient on \texorpdfstring{$T_i(X)$}{}}
\label{sec:app_Ti_grad_matrix_view:matrix_view_gradient}

\begin{definition} [Definition of $D_k$] \label{def:Dk}
If we have the below conditions,
\begin{itemize}
    \item For $k_1 \in \{6, 7, 8\}$, let $B_{k_1}(X) \in \R^{n \times d}$ be defined as Lemma~\ref{lem:C6_matrix_view}, \ref{lem:C7_matrix_view}, and \ref{lem:C8_matrix_view}, respectively. 
    \item For $k_2 \in \{2, 4\}$, let $B_{k_2}(X) \in \R^{d \times 1}$ be defined as Lemma~\ref{lem:C2_matrix_view} and \ref{lem:C4_matrix_view}, respectively. 
    \item Let $G_{i} \in \mathbb{R}^{n \times d}$ denote the gradient matrix resulting from the application of the chain rule up to the function $g_{i}$, i.e., $G_{i} = \frac{\d L(X)}{\d \mathsf{Attn}_i(T_{i-1}(X))}$.
\end{itemize}

We define $D_k \in \R^{n \times d}$ as follows:
\begin{itemize}
    \item For $k_1 \in \{6, 7, 8\}$, we define
    \begin{align*}
        D_{k_1} := \sum_{i_0 = 1}^n \sum_{j_0 = 1}^d \underbrace{G_i(i_0, j_0)}_{1 \times 1} \underbrace{B_{k_1}(X)}_{n \times d}
    \end{align*}
    \item For $k_2 \in \{2, 4\}$, we define
    \begin{align*}
        D_{k_2} := \sum_{i_0 =1}^n \sum_{j_0 = 1}^d \underbrace{G_i(i_0, j_0)}_{1 \times 1} \underbrace{e_{i_0}}_{n \times 1} \underbrace{B_{k_2}(X)^\top}_{1 \times d}
    \end{align*}
\end{itemize}
    
\end{definition}

\begin{definition} [Definition of $K$] \label{def:k}
If we have the below conditions,
\begin{itemize}
    \item Let $s(X) \in \R^{n \times d}$ be defined as Definition~\ref{def:s}. 
    \item Let $G_{i} \in \mathbb{R}^{n \times d}$ denote the gradient matrix resulting from the application of the chain rule up to the function $g_{i}$, i.e., $G_{i} = \frac{\d L(X)}{\d \mathsf{Attn}_i(T_{i-1}(X))}$.
\end{itemize}

We define $K \in \R^n$, where for each $i_0 \in [n]$, we define
\begin{align*}
    \underbrace{K_{i_0}}_{1 \times 1} = \underbrace{G_i(i_0, *)^\top}_{1 \times d} \underbrace{s(X)_{i_0,*}}_{d \times 1}
\end{align*}

Furthermore, we have 
\begin{align*}
    \underbrace{K}_{n \times 1} = \underbrace{(G_i \odot s(X))}_{n \times d} \underbrace{{\bf 1}_d}_{d \times 1}
\end{align*}
\end{definition}

\begin{lemma} [Close form of $D_k$]\label{lem:D_k_close_form}
If we have the below conditions,
\begin{itemize}
    \item Let $X \in \R^{n \times d}, W \in \R^{d \times d}$ be defined as Definition~\ref{def:single_layer_self_attn}. 
    \item For $k \in \{6, 7, 8, 2, 4\}$, let $D_k \in \R^{n \times d}$ be defined as Definition~\ref{def:Dk}. 
    \item For $k_3 \in \{6, 7, 2, 4\}$, let $z_{k_3}(X) \in \R^{n \times n}$ be defined as Lemma~\ref{lem:B6_matrix_view}, \ref{lem:B7_matrix_view}, \ref{lem:B2_matrix_view}, and \ref{lem:B4_matrix_view}, respectively. 
    \item Let $K \in \R^n$ be defined as Definition~\ref{def:k}. 
    \item We define $z_6(X) \in \R^{n \times n}$, which satisfies
    \begin{align*}
        \underbrace{z_6(X)}_{n \times n} = \underbrace{f(X)}_{n \times n} \underbrace{\diag(K)}_{n \times n}.
    \end{align*}
    \item We define $z_7(X) \in \R^{n \times n}$, which satisfies 
    \begin{align*}
        \underbrace{z_7(X)}_{n \times n} = \underbrace{f(X)}_{n \times n} \odot (  \underbrace{h(X)}_{n \times d} \underbrace{G_i^\top}_{d \times n})
    \end{align*}
    \item We define $z_2(X) \in \R^{n \times n}$, which satisfies 
    \begin{align*}
        \underbrace{z_2(X)}_{n \times n} = \underbrace{\diag(K)}_{n \times n} \underbrace{f(X)}_{n \times n} 
    \end{align*}
    \item We define $z_4(X) \in \R^{n \times n}$, which satisfies 
    \begin{align*}
        \underbrace{z_4(X)}_{n \times n} = \underbrace{f(X)}_{n \times n} \odot ( \underbrace{G_i}_{n \times d} \underbrace{h(X)^\top}_{d \times n})
    \end{align*}
\end{itemize}

Then, we can show that the close forms of $D_k$ can be written as follows:
\begin{itemize}
    \item $D_6 = - \underbrace{z_6(X)}_{n \times n} \underbrace{X}_{n \times d} \underbrace{W^\top}_{d \times d}$. 
    \item $D_7 = \underbrace{z_7(X)}_{n \times n} \underbrace{X}_{n \times d} \underbrace{W^\top}_{d \times d}$. 
    \item $D_8 = \underbrace{f(X)}_{n \times n} \underbrace{G_i}_{n \times d} \underbrace{W_V^\top}_{d \times d}$. 
    \item $D_2 = - \underbrace{z_2(X)}_{n \times n} \underbrace{X}_{n \times d} \underbrace{W}_{d \times d}$. 
    \item $D_4 = \underbrace{z_4(X)}_{n \times n} \underbrace{X}_{n \times d} \underbrace{W}_{d \times d}$. 
\end{itemize}

\end{lemma}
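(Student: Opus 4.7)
The plan is to assemble the closed forms of $D_k$ for $k \in \{6,7,8,2,4\}$ by invoking the matrix-view lemmas already established in Section~\ref{sec:app_Ti_grad_matrix_view:each_term_matrix_view} and then rewriting the column-wise (or row-wise) descriptions of $z_k(X)$ appearing there into the compact matrix forms claimed in this lemma. The only real content is verifying an entry-wise equality between two equivalent presentations of each $z_k(X)$; once that is done, the identities for $D_k$ follow immediately.

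First I would record the single scalar that ties several pieces together: by Definition~\ref{def:k}, $K_{i_0}=\langle G_i(i_0,*),\,s(X)_{i_0,*}\rangle$, equivalently $K=(G_i\odot s(X))\,{\bf 1}_d$. This $K$ is exactly the inner double-sum scalar $\sum_{j_0=1}^d G_i(i_0,j_0)\,s(X)_{i_0,j_0}$ that appears in both Lemma~\ref{lem:B6_matrix_view} and Lemma~\ref{lem:B2_matrix_view}. Then I would dispatch the two ``diagonal-rescaling'' cases:
\begin{itemize}
\item For $D_6$: apply Lemma~\ref{lem:B6_matrix_view} to get $D_6=-z_6(X)\,X\,W^\top$ with $z_6(X)_{*,i_0}=K_{i_0}\,f(X)_{*,i_0}$. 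Since right-multiplication by $\diag(K)$ scales the $i_0$-th column by $K_{i_0}$, this is precisely $z_6(X)=f(X)\diag(K)$.
\item For $D_2$: apply Lemma~\ref{lem:B2_matrix_view} to get $D_2=-z_2(X)\,X\,W$ with $z_2(X)_{i_0,*}=K_{i_0}\,f(X)_{i_0,*}$. Since left-multiplication by $\diag(K)$ scales the $i_0$-th row by $K_{i_0}$, this is $z_2(X)=\diag(K)\,f(X)$.
\end{itemize}

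Next I would handle the two ``Hadamard'' cases in the same spirit. For $D_7$, Lemma~\ref{lem:B7_matrix_view} gives $D_7=z_7(X)\,X\,W^\top$ with $z_7(X)_{*,i_0}=f(X)_{*,i_0}\odot\bigl(h(X)\,G_i(i_0,*)\bigr)$; the key identity is $\bigl(h(X)\,G_i^\top\bigr)_{*,i_0}=h(X)\,G_i(i_0,*)$ (the $i_0$-th column of $G_i^\top$ is $G_i(i_0,*)$ written as a column), so column-wise comparison yields $z_7(X)=f(X)\odot\bigl(h(X)\,G_i^\top\bigr)$. For $D_4$, Lemma~\ref{lem:B4_matrix_view} gives $D_4=z_4(X)\,X\,W$ with $z_4(X)_{i_0,*}=f(X)_{i_0,*}\odot\bigl(h(X)\,G_i(i_0,*)\bigr)$; noting that the $i_0$-th row of $G_i\,h(X)^\top$ is $G_i(i_0,*)\,h(X)^\top$, whose transpose equals $h(X)\,G_i(i_0,*)^\top$, I get row-wise agreement with $f(X)\odot\bigl(G_i\,h(X)^\top\bigr)$. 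The case $D_8$ requires no rewriting: Lemma~\ref{lem:B8_matrix_view} directly states $D_8=f(X)\,G_i\,W_V^\top$.

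I do not anticipate a genuine obstacle here, since the lemma is essentially a repackaging statement and all the hard combinatorial work of collapsing the double sums $\sum_{i_0,j_0}$ into matrix products was already done in Section~\ref{sec:app_Ti_grad_matrix_view:each_term_matrix_view}. The one thing that requires care is bookkeeping of the ``$n\times 1$ vs.\ $1\times n$'' conventions in the earlier lemmas (e.g., $f(X)_{i_0,*}$ is annotated as $n\times 1$ in Lemma~\ref{lem:B2_matrix_view} even though it denotes a row), so I would explicitly verify the entry-wise equalities $z_6(X)_{j,i_0}=f(X)_{j,i_0}K_{i_0}$, $z_2(X)_{i_0,j}=K_{i_0}f(X)_{i_0,j}$, and the analogous entry identities for $z_7$ and $z_4$, to leave no ambiguity before concluding.
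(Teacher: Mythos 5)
Your proposal is correct and follows essentially the same route as the paper: the paper's proof simply cites Lemmas~\ref{lem:B6_matrix_view}, \ref{lem:B7_matrix_view}, \ref{lem:B8_matrix_view}, \ref{lem:B2_matrix_view}, and \ref{lem:B4_matrix_view} part by part, exactly as you do. Your additional entry-wise verification that the column/row-wise definitions of $z_6, z_7, z_2, z_4$ coincide with the compact forms $f(X)\diag(K)$, $f(X)\odot(h(X)G_i^\top)$, $\diag(K)f(X)$, $f(X)\odot(G_ih(X)^\top)$ is a detail the paper leaves implicit, and it is carried out correctly.
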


\begin{proof}
We finish the proof by parts.
\begin{itemize}
    \item By Lemma~\ref{lem:B6_matrix_view}, we have the close form of $D_6$. 
    \item By Lemma~\ref{lem:B7_matrix_view}, we have the close form of $D_7$. 
    \item By Lemma~\ref{lem:B8_matrix_view}, we have the close form of $D_8$. 
    \item By Lemma~\ref{lem:B2_matrix_view}, we have the close form of $D_2$. 
    \item By Lemma~\ref{lem:B4_matrix_view}, we have the close form of $D_4$. 
\end{itemize}

\end{proof}

\section{Fast Computation for Gradient on \texorpdfstring{$T(X)$}{}}
\label{sec:app_Ti_grad_fast_compute}

In this section, we give an almost linear time $n^{1+o(1)}$ algorithm for each $B_i(X)$ term.
Namely, we consider $B_6(X), B_7(X), B_8(X), B_2(X), B_4(X)$ in Section~\ref{sec:appendix:B6_term}, \ref{sec:appendix:B7_term}, \ref{sec:appendix:B8_term}, \ref{sec:appendix:B2_term}, and \ref{sec:appendix:B4_term}, respectively. 

\subsection{Fast computation for \texorpdfstring{$B_6(X)$}{} term}
\label{sec:appendix:B6_term}

Before we introduce the almost linear time algorithm for $B_6(X)$ term, we need to introduce the accelerated algorithm for the key component term, $z_6(X)$, in Lemma~\ref{lem:z6(X)_fast_compute}. 

We first compute $K$, which is defined in Definition~\ref{def:k}
\begin{lemma} [Computation time for $K$] \label{lem:K_compute_time}
If we have the below conditions,
\begin{itemize}
    \item Let $K \in \R^n$ be defined as Definition~\ref{def:k}. 
\end{itemize}

Then, we can show that $K$ can be computed in $O(n \cdot d)$ time.
\end{lemma}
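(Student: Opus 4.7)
The plan is direct. By Definition~\ref{def:k}, $K$ admits the closed form
\begin{align*}
    K = (G_i \odot s(X)) \cdot {\bf 1}_d \in \R^n,
\end{align*}
where $G_i, s(X) \in \R^{n \times d}$. Treating these two $n \times d$ matrices as inputs, I would first form the entry-wise Hadamard product $G_i \odot s(X)$, an $n \times d$ matrix, at a cost of $n \cdot d$ scalar multiplications. Next, I would right-multiply the resulting matrix by ${\bf 1}_d$ to accumulate the row sums, producing the vector $K$ at a cost of at most $n \cdot d$ scalar additions. Summing the two stages gives the stated $O(n \cdot d)$ bound; equivalently, one may verify this coordinate-wise using $K_{i_0} = \langle G_i(i_0, *), s(X)_{i_0, *} \rangle$, an inner product in $\R^d$ executed $n$ times.

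There is no genuine obstacle at this step. The only mild subtlety is the implicit assumption that $s(X)$ (or a low-rank surrogate of it) is already available: a direct evaluation of $s(X) = f(X) h(X)$ from its definition would cost $\Omega(n^2 d)$, but by Lemma~\ref{lem:low_rank_s} the approximation $U_1 V_1^\top h(X)$ to $s(X)$ can be assembled in $n^{1+o(1)}$ time via the factorization from Lemma~\ref{lem:low_rank_f}, so this preprocessing fits the ambient time budget and does not enter the $O(nd)$ accounting that the lemma isolates. Hence the whole claim reduces to the bookkeeping above.
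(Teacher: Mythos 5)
Your proposal is correct and matches the paper's own argument: the paper likewise observes that each entry $K_{i_0} = G_i(i_0,*)^\top s(X)_{i_0,*}$ is a length-$d$ inner product costing $O(d)$, repeated over $n$ entries for $O(n\cdot d)$ total, which is exactly your Hadamard-product-plus-row-sum bookkeeping in coordinate form. Your side remark about $s(X)$ needing to be available beforehand is a reasonable observation but, as you note, lies outside what the lemma isolates, so nothing is missing.
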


\begin{proof}
Since for each $i_0 \in [n]$, we have
\begin{align*}
    \underbrace{K_{i_0}}_{1 \times 1} = \underbrace{G_i(i_0, *)^\top}_{1 \times d} \underbrace{s(X)_{i_0,*}}_{d \times 1}
\end{align*}

Then, we have that it takes $O(d)$ time for calculating each entry. 

Since there are total $n$ entries in $K$, the overall computation time for $K$ is $O(n \cdot d)$. 
\end{proof}

We now compute $z_6(X)$. 
\begin{lemma} [Fast computation for $z_6(X)$] \label{lem:z6(X)_fast_compute}
If we have the below conditions,
\begin{itemize}
    \item Let $X \in \R^{n \times d}, W, W_V \in \R^{d \times d}$ be defined in Definition~\ref{def:single_layer_self_attn}.
    \item Let $G_{i} \in \mathbb{R}^{n \times d}$ denote the gradient matrix resulting from the application of the chain rule up to the function $g_{i}$, i.e., $G_{i} = \frac{\d L(X)}{\d \mathsf{Attn}_i(T_{i-1}(X))}$.
    \item Assuming each entry of $X, W, W_V, G_i$ can be re represented using $O(\log (n))$ bits.
    \item Let $z_6(X) \in \R^{n \times n}$ be defined in Lemma~\ref{lem:B6_matrix_view}. 
\end{itemize}

Then, for some $k_6 = n^{o(1)}$, there are matrices $U_6, V_6 \in \R^{n \times k_6}$ such that $\| U_6 V_6^\top - z_6(X) \|_\infty \leq \epsilon / \poly (n)$. The matrices $U_6, V_6$ can be constructed in $n^{1 + o(1)}$ time. 
\end{lemma}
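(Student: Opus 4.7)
The plan is to obtain a low-rank factorization of $z_6(X) = f(X) \diag(K)$ by first approximating $f(X)$ via the polynomial-method factorization of Lemma~\ref{lem:low_rank_f} and then absorbing the diagonal factor $\diag(K)$ into the right factor. This is the natural move because $\diag(K)$ scales columns of $f(X)$, so a column-scaled low-rank approximation of $f(X)$ immediately yields one for $z_6(X)$, with no blowup in rank.

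First, I would invoke Lemma~\ref{lem:low_rank_f} to obtain matrices $U_1, V_1 \in \mathbb{R}^{n \times k_1}$ with $k_1 = n^{o(1)}$ such that $\|U_1 V_1^\top - f(X)\|_\infty \leq \epsilon'/\poly(n)$ for a sufficiently small target $\epsilon'$, constructible in $n^{1+o(1)}$ time; the hypotheses on the bit complexity of $X,W$ and the resulting bound $\|XW\|_\infty,\|X\|_\infty \leq \poly(n)$ give the required $R = o(\sqrt{\log n})$ regime after rescaling by the $1/d$ factor in Definition~\ref{def:single_layer_self_attn}. Next, compute the vector $K \in \mathbb{R}^n$ in $O(nd)$ time via Lemma~\ref{lem:K_compute_time}. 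Then define
\begin{align*}
    U_6 := U_1, \qquad V_6 := \diag(K)\, V_1 \in \mathbb{R}^{n \times k_1},
\end{align*}
where $\diag(K) V_1$ is just a row-wise rescaling of $V_1$ by the entries of $K$, taking $O(n k_1) = n^{1+o(1)}$ time. By construction $U_6 V_6^\top = U_1 V_1^\top \diag(K)$, which approximates $f(X) \diag(K) = z_6(X)$, and the total construction time is $n^{1+o(1)}$.

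For the error bound, note that $(U_6 V_6^\top - z_6(X))_{i,j} = (U_1 V_1^\top - f(X))_{i,j} \cdot K_j$, so
\begin{align*}
    \|U_6 V_6^\top - z_6(X)\|_\infty \leq \|U_1 V_1^\top - f(X)\|_\infty \cdot \|K\|_\infty.
\end{align*}
Since $K_{i_0} = \langle G_i(i_0, *), s(X)_{i_0, *}\rangle$ with $d$ terms, and $\|s(X)\|_\infty \leq \poly(n)$ by Lemma~\ref{lem:s(X)_bound} together with $\|G_i\|_\infty \leq \poly(n)$ from the $O(\log n)$-bit assumption, we obtain $\|K\|_\infty \leq d \cdot \poly(n) = \poly(n)$. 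Choosing $\epsilon'$ to be $\epsilon$ divided by a suitable polynomial then yields $\|U_6 V_6^\top - z_6(X)\|_\infty \leq \epsilon/\poly(n)$, as required.

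The main obstacle is conceptually mild: because $\diag(K)$ is diagonal, the entrywise error of $(U_1 V_1^\top - f(X))\diag(K)$ scales only with $\|K\|_\infty$ rather than with $n\cdot\|K\|_\infty$, which is what lets us avoid any extra $n$ factor that a general matrix product would produce when converting $\ell_\infty$ error through multiplication. The only genuine verification required is the $\poly(n)$ bound on $\|K\|_\infty$, which reduces to the already-established bounds on $s(X)$ and $G_i$; after that, the rank $k_6 = k_1 = n^{o(1)}$ and the $n^{1+o(1)}$ runtime are inherited directly from Lemma~\ref{lem:low_rank_f}.
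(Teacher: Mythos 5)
Your proposal is correct and follows essentially the same route as the paper: write $z_6(X) = f(X)\diag(K)$, take $U_1,V_1$ from Lemma~\ref{lem:low_rank_f}, set $U_6 = U_1$ and $V_6 = \diag(K)V_1$, and bound the error using $\|\diag(K)\|_\infty \leq \poly(n)$. Your version is in fact slightly tighter, since you exploit that right-multiplication by a diagonal matrix only scales columns (avoiding the factor $n$ the paper's bound carries) and you justify $\|K\|_\infty \leq \poly(n)$ explicitly via the bounds on $s(X)$ and $G_i$, but these are refinements of the same argument rather than a different approach.
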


\begin{proof}
Recall in Lemma~\ref{lem:B6_matrix_view}, we have define $z_6(X)$ satisfying the following equation
\begin{align} \label{eq:z6(X)_col_1}
        \underbrace{z_6(X)_{*, i_0}}_{n \times 1} = (\underbrace{G_i(i_0, *)^\top}_{1 \times d} \underbrace{s(X)_{i_0,*}}_{d \times 1}) \underbrace{f(X)_{*,i_0}}_{n \times 1}
\end{align}

Recall that $K \in \R^n$ has been defined in Definition~\ref{def:k}. 
By Lemma~\ref{lem:K_compute_time}, we have $K$ can be computed in $O(n \cdot d)$ time. 

We also have 
\begin{align*}
    \underbrace{z_6(X)}_{n \times n} = \underbrace{f(X)}_{n \times n} \underbrace{\diag(K)}_{n \times n}
\end{align*}

By Lemma~\ref{lem:low_rank_f}, we have $U_1,V_1 \in \R^{n \times k_1}$ such that
\begin{align*}
    \| U_1 V_1^\top - f(X) \|_\infty \leq \epsilon / \poly(n)
\end{align*}

Let $U_6 = U_1$, $V_6 = \diag(K) V_1$. 

We have $V_6 = \underbrace{\diag(K)}_{n \times n} \underbrace{V_1}_{n \times k_1}$ can be computed in $n k_1$ time. 

The overall running time for constructing $U_6$ and $V_6$ is $n^{1+o(1)}$. 

Then, we consider the error bound. 

We have
\begin{align*}
    \| U_6 V_6^\top - z_6(X) \|_\infty 
    = & ~ \| U_1 V_1^\top \diag(K) - f(X) \diag(K) \|_\infty \\
    \leq & ~ n \| U_1 V_1^\top - f(X) \|_\infty  \| \diag(K) \|_\infty \\
    \leq & ~ n (\epsilon / \poly(n)) \| \diag(K) \|_\infty \\
    \leq & ~ \epsilon / \poly(n)
\end{align*}
where the 1st step is from the choice of $U_6$, $V_6$, the 2nd step comes from basic linear algebra, the 3rd step is because of Lemma~\ref{lem:low_rank_f}, the 4th step is due to $\| \diag(K) \|_\infty \leq \poly(n)$. 

\end{proof}

Then, we are ready to introduce the almost linear time algorithm for $B_6(X)$ term. 

\begin{lemma} [Fast computation for $B_6(X)$ term] \label{lem:B6(X)_fast_compute}
If we have the below conditions,
\begin{itemize}
    \item Let $X \in \R^{n \times d}, W, W_V \in \R^{d \times d}$ be defined in Definition~\ref{def:single_layer_self_attn}.
    \item Assuming each entry of $X, W, W_V, G_i$ can be re represented using $O(\log (n))$ bits.
    \item Let $B_6(X) \in \R^{n \times n} 
    $ be defined in Lemma~\ref{lem:C6_matrix_view}. 
    \item We define $D_6 \in \R^{n \times d}$, where $D_6 := \sum_{i_0 = 1}^n \sum_{j_0 = 1}^d G_i(i_0, j_0) B_6(X)$. 
    \item Let $G_{i} \in \mathbb{R}^{n \times d}$ denote the gradient matrix resulting from the application of the chain rule up to the function $g_{i}$, i.e., $G_{i} = \frac{\d L(X)}{\d \mathsf{Attn}_i(T_{i-1}(X))}$.
    \item For $i_2 \in [n], j_2 \in [d]$, let $G_i(i_2, j_2)$ denote the $(i_2, j_2)$-th entry of $G_i$.
\end{itemize}

Then, we can show that, there is an algorithm to approximate $D_6$ in $n^{1 + o(1)}$ time, and it can achieve $\epsilon / \poly (n)$ accuracy. 

Namely, the algorithm output $\wt{D}_6$ satisfying 
\begin{align*}
    \| D_6 - \wt{D}_6 \|_{\infty} \leq \epsilon / \poly(n)
\end{align*}
\end{lemma}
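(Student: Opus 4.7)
The plan is to exploit the closed form from Lemma~\ref{lem:D_k_close_form}, namely $D_6 = -z_6(X)\, X\, W^\top$, and to substitute the low-rank proxy for $z_6(X)$ guaranteed by Lemma~\ref{lem:z6(X)_fast_compute}. Concretely, I would invoke Lemma~\ref{lem:z6(X)_fast_compute} to construct matrices $U_6, V_6 \in \mathbb{R}^{n\times k_6}$ with $k_6 = n^{o(1)}$ in $n^{1+o(1)}$ time such that $\|U_6 V_6^\top - z_6(X)\|_\infty \le \epsilon/\poly(n)$, and then define the algorithm's output as $\wt{D}_6 := -\,U_6\,(V_6^\top X)\, W^\top$.

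For the running time, the key is to perform the multiplications in the correct associativity, a trick already used in Section~\ref{sec:tech_overview:low_rank}. First compute $V_6^\top X \in \mathbb{R}^{k_6\times d}$ in $O(k_6\,n\,d) = n^{1+o(1)}$ time (using $d=O(\log n)$ and $k_6=n^{o(1)}$), then compute $U_6(V_6^\top X) \in \mathbb{R}^{n\times d}$ in $O(n\,k_6\,d) = n^{1+o(1)}$ time, and finally right-multiply by $W^\top \in \mathbb{R}^{d\times d}$ in $O(n\,d^2) = n^{1+o(1)}$ time. Combined with the $n^{1+o(1)}$ preprocessing from Lemma~\ref{lem:z6(X)_fast_compute}, the whole procedure stays within $n^{1+o(1)}$ time.

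For the approximation error, write $E := U_6 V_6^\top - z_6(X)$, so that $\wt{D}_6 - D_6 = -E\,X\,W^\top$. Bounding entrywise,
\begin{align*}
\|\wt{D}_6 - D_6\|_\infty \le n\cdot d\cdot \|E\|_\infty\cdot \|X\|_\infty \cdot \|W\|_\infty.
\end{align*}
Because each entry of $X$ and $W$ uses $O(\log n)$ bits (so $\|X\|_\infty,\|W\|_\infty \le \poly(n)$) and $d=O(\log n)$, absorbing these factors into the $\poly(n)$ denominator from Lemma~\ref{lem:z6(X)_fast_compute} yields $\|\wt{D}_6 - D_6\|_\infty \le \epsilon/\poly(n)$, as required.

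The only mildly delicate point is the error analysis: one must be sure that the $n$ and $\poly(n)$ blow-up arising from $n\times n$-by-$n\times d$ matrix multiplication in the $\ell_\infty$ norm is truly dominated by the $1/\poly(n)$ slack in Lemma~\ref{lem:z6(X)_fast_compute}. This is handled by picking the polynomial in the denominator of the guarantee for $U_6 V_6^\top$ to be a sufficiently large power of $n$ (which Lemma~\ref{lem:low_rank_f} and hence Lemma~\ref{lem:z6(X)_fast_compute} permit). No other step presents a genuine obstacle---the construction is essentially a re-association of the product $z_6(X)\,X\,W^\top$ once the low-rank factorization is in hand.
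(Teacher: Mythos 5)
Your proposal is correct and follows essentially the same route as the paper: invoke Lemma~\ref{lem:z6(X)_fast_compute} for the low-rank factors $U_6,V_6$, use the closed form $D_6=-z_6(X)XW^\top$ from Lemma~\ref{lem:B6_matrix_view}, re-associate the product for an $n^{1+o(1)}$ running time, and bound the entrywise error by $\poly(n)$ factors absorbed into the $\epsilon/\poly(n)$ slack. The only cosmetic difference is the multiplication order (you apply $W^\top$ last, the paper applies it before $U_6$), which does not affect correctness or complexity.
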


\begin{proof}

Recall that in Lemma~\ref{lem:B6_matrix_view}, we have defined $z_6(X) \in \R^{n \times n}$, which satisfies 
\begin{align*} 
    \underbrace{z_6(X)_{*, i_0}}_{n \times 1} = (\underbrace{G_i(i_0, *)^\top}_{1 \times d} \underbrace{s(X)_{i_0,*}}_{d \times 1}) \underbrace{f(X)_{*,i_0}}_{n \times 1}
\end{align*}

And, in that Lemma, we also have
\begin{align*}
    \sum_{i_0 =1}^n \sum_{j_0 = 1}^d \underbrace{G_i(i_0, j_0)}_{1 \times 1} \underbrace{B_6(X)}_{n \times d} = - \underbrace{z_6(X)}_{n \times n} \underbrace{X}_{n \times d} \underbrace{W^\top}_{d \times d}
\end{align*}

Let $U_6, V_6 \in \R^{n \times k_6}$ be defined as Lemma~\ref{lem:z6(X)_fast_compute}. 

Let $\wt{z}_6(X) = U_6 V_6^\top$. 

By Lemma~\ref{lem:z6(X)_fast_compute}, we have
\begin{align} \label{eq:z6_err_bound}
    \| \wt{z}_6(X) - z_6(X) \|_\infty \leq \epsilon / \poly(n)
\end{align}

{\bf Proof of running time. }

We compute in the following way:
\begin{itemize}
    \item Compute $\underbrace{V_6^\top}_{k_6 \times n} \underbrace{X}_{n \times d}$, which takes $n^{1+o(1)}$ time. 
    \item Compute $\underbrace{V_6^\top X}_{k_6 \times d} \underbrace{W^\top}_{d \times d}$, which takes $n^{1 + o(1)}$ time.
    \item Compute $\underbrace{U_6}_{n \times k_6} \underbrace{V_6^\top X W^\top}_{k_6 \times d}$, which takes $n^{1 + o(1)}$ time. 
\end{itemize}

Therefore, the overall running time is $n^{1 + o(1)}$. 

{\bf Proof of error bound. }

We have
\begin{align*}
    & ~ \| \wt{z}_6(X) X W^\top - z_6(X) X W^\top \|_{\infty} \\
    \leq & ~ d \cdot n \| \wt{z}_6(X) - z_6(X) \|_{\infty} \| X \|_\infty \| W \|_\infty \\
    \leq & ~ d \cdot n (\epsilon / \poly(n)) \| X \|_\infty \| W \|_\infty \\
    \leq & ~ \epsilon / \poly(n)
\end{align*}
where the 1st step is from basic linear algebra, the 2nd step comes from Eq.\eqref{eq:z6_err_bound}, the 3rd step is because of $\| W \|_\infty \leq \poly(n)$ and $\| X \|_\infty \leq \poly(n)$. 

\end{proof}

\subsection{Fast computation for \texorpdfstring{$B_7(X)$}{} term}
\label{sec:appendix:B7_term}

Similar to the analysis process of $B_6(X)$ term, we first provide the almost linear time algorithm for $z_7(X)$, then provide that algorithm for $B_7(X)$. 

\begin{lemma} [Fast computation for $z_7(X)$] \label{lem:z7(X)_fast_compute}
If we have the below conditions,
\begin{itemize}
    \item Let $z_7(X) \in \R^{n \times n}$ be defined in Lemma~\ref{lem:B7_matrix_view}. 
    \item By Lemma~\ref{lem:low_rank_f}, let $U_1, V_1$ be the low rank approximation of $f(X)$, such that $\| U_1 V_1^\top - f(X) \|_\infty \leq \epsilon / \poly(n)$. 
    \item Let $X \in \R^{n \times d}, W, W_V \in \R^{d \times d}$ be defined in Definition~\ref{def:single_layer_self_attn}.
    \item Assuming each entry of $X, W, W_V, G_i$ can be re represented using $O(\log (n))$ bits.
    \item Let $G_{i} \in \mathbb{R}^{n \times d}$ denote the gradient matrix resulting from the application of the chain rule up to the function $g_{i}$, i.e., $G_{i} = \frac{\d L(X)}{\d \mathsf{Attn}_i(T_{i-1}(X))}$.
    \item For $i_2 \in [n], j_2 \in [d]$, let $G_i(i_2, j_2)$ denote the $(i_2, j_2)$-th entry of $G_i$.
\end{itemize}

Then, for some $k_7 = n^{o(1)}$, there are matrices $U_7, V_7 \in \R^{n \times k_7}$ such that $\| U_7 V_7^\top - z_7(X) \|_\infty \leq \epsilon / \poly (n)$. The matrices $U_7, V_7$ can be constructed in $n^{1 + o(1)}$ time. 
\end{lemma}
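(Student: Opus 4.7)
}

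The plan is to combine the low-rank approximation of $f(X)$ from Lemma~\ref{lem:low_rank_f} with the exact low-rank factorization of $h(X) G_i^\top$, and then exploit the row-wise Kronecker trick from Fact~\ref{fac:olash_folklore} to obtain a low-rank factorization of the Hadamard product. Recall from Lemma~\ref{lem:D_k_close_form} that $z_7(X) = f(X) \odot (h(X) G_i^\top)$, where $h(X) \in \R^{n \times d}$ and $G_i \in \R^{n \times d}$, so the second factor is already an $n \times n$ matrix of rank at most $d = O(\log n)$, written explicitly as the product $h(X) \cdot G_i^\top$ of two $n \times d$ matrices.

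First, invoke Lemma~\ref{lem:low_rank_f} to obtain $U_1, V_1 \in \R^{n \times k_1}$ with $k_1 = n^{o(1)}$ such that $\|U_1 V_1^\top - f(X)\|_\infty \leq \epsilon/\poly(n)$, in construction time $n^{1+o(1)}$. Next, apply Fact~\ref{fac:olash_folklore} with the factorizations $f(X) \approx U_1 V_1^\top$ and $h(X) G_i^\top = h(X) \cdot G_i^\top$ to define
\begin{align*}
    U_7 := U_1 \oslash h(X) \in \R^{n \times k_1 d}, \qquad V_7 := V_1 \oslash G_i \in \R^{n \times k_1 d},
\end{align*}
and set $k_7 := k_1 \cdot d = n^{o(1)}$. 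Each row-wise Kronecker product can be assembled entry-by-entry in $O(n k_1 d) = n^{1+o(1)}$ time, so the total construction cost is $n^{1+o(1)}$. By Fact~\ref{fac:olash_folklore} we have the identity $U_7 V_7^\top = (U_1 V_1^\top) \odot (h(X) G_i^\top)$.

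For the error bound, observe that
\begin{align*}
    \| U_7 V_7^\top - z_7(X) \|_\infty
    & = \| (U_1 V_1^\top - f(X)) \odot (h(X) G_i^\top) \|_\infty \\
    & \leq \| U_1 V_1^\top - f(X) \|_\infty \cdot \| h(X) G_i^\top \|_\infty.
\end{align*}
The first factor is at most $\epsilon/\poly(n)$ by Lemma~\ref{lem:low_rank_f}. For the second, each entry of $h(X) G_i^\top$ is an inner product of length $d = O(\log n)$ between a row of $h(X)$ and a row of $G_i$, and by Lemma~\ref{lem:h(X)_bound} together with the $O(\log n)$-bit assumption on $G_i$, both factors are bounded by $\poly(n)$ entrywise, giving $\|h(X) G_i^\top\|_\infty \leq d \cdot \poly(n) \leq \poly(n)$. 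Absorbing this polynomial into the flexibility of $\poly(n)$ in the denominator of Lemma~\ref{lem:low_rank_f}, we conclude $\|U_7 V_7^\top - z_7(X)\|_\infty \leq \epsilon/\poly(n)$.

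The main (mild) obstacle is ensuring that the rank $k_7 = k_1 d$ of the Hadamard-product factorization remains $n^{o(1)}$; this is immediate since $d = O(\log n)$ and $k_1 = n^{o(1)}$, so their product is still $n^{o(1)}$. The only other subtlety is the error-propagation bookkeeping, where one must verify that the $\poly(n)$ blow-up from $\|h(X) G_i^\top\|_\infty$ is tolerated by the slack built into Lemma~\ref{lem:low_rank_f} — a completely standard step also used in the proof of Lemma~\ref{lem:z6(X)_fast_compute}.
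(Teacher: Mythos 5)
Your proposal is correct and follows essentially the same route as the paper: the identical choice $U_7 = U_1 \oslash h(X)$, $V_7 = V_1 \oslash G_i$ via Fact~\ref{fac:olash_folklore}, the same $O(n k_1 d) = n^{1+o(1)}$ construction-time accounting, and the same entrywise error bound using the $\poly(n)$ bounds on $h(X)$ and $G_i$. No gaps.
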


\begin{proof}
Recall that in Lemma~\ref{lem:B7_matrix_view}, we have defined $z_7(X) \in \R^{n \times n}$, where the $i_0$-th column of $z_7(X)$ satisfies 
\begin{align*} 
    \underbrace{z_7(X)_{*, i_0}}_{n \times 1} = \underbrace{f(X)_{*,i_0} }_{n \times 1} \odot (\underbrace{h(X)}_{n \times d} \underbrace{G_i(i_0, *)}_{d \times 1})
\end{align*}

which is equivalent to 
\begin{align*}
    \underbrace{z_7(X)}_{n \times n} = \underbrace{f(X)}_{n \times n} \odot (  \underbrace{h(X)}_{n \times d} \underbrace{G_i^\top}_{d \times n})
\end{align*}

By Lemma~\ref{lem:low_rank_f}, we know $\wt{f}(X) := U_1 V_1^\top$ is a good approximation for $f(X)$.

We choose $U_7 = U_1 \oslash h(X)$ and $V_7 = V_1 \oslash G_i$,
where $U_7, V_7 \in \R^{n \times k_1 d}$.

{\bf Proof of running time.}

For $U_7 = U_1 \oslash h(X)$, since $U_1 \in \R^{n \times k_1}, h(X) \in \R^{n \times d}$, constructing $U_7$ takes $O(n d k_1) = O(n^{1 + o(1)})$ time. 

Similarly, constructing $V_7$ takes $O(n^{1 + o(1)})$ time.

{\bf Proof of error bound.}

Using Fact~\ref{fac:olash_folklore}, we have
\begin{align} \label{eq:U7_V7}
\| U_7 V_7^\top - z_7(X) \|_\infty 
= & ~ \|  U_7 V_7^\top - f(X) \odot (h(X) G_i^\top) \|_\infty \notag \\
= & ~ \|  (U_1 \oslash h(X)) (V_1 \oslash G_i)^\top - f(X) \odot (h(X) G_i^\top) \|_\infty \notag \\
= & ~ \|  (U_1 V_1^\top) \odot (h(X) G_i^\top) - f(X) \odot (h(X) G_i^\top) \|_\infty \notag \\
= & ~ \|  \wt{f}(X) \odot (h(X) G_i^\top) - f(X) \odot (h(X) G_i^\top) \|_\infty \notag \\
\leq & ~ d \| h(X) \|_\infty \| G_i \|_\infty \cdot \epsilon / \poly(n) \notag \\
\leq & ~ \epsilon / \poly(n)
\end{align}
where the 1st step is from the definition of $z_7(X)$, the 2nd step comes from the choice of $U_7$ and $V_7$, the 3rd step is because of Fact~\ref{fac:olash_folklore}, the 4th step is due to the definition of $\wt{f}(X)$, the 5th step follows from  $\| \wt{f}(X) - f(X) \|_\infty \leq \epsilon / \poly (n)$, the sixth step follows from Lemma~\ref{lem:h(X)_bound} and $\| G_i \|_\infty \leq \poly(n)$. 

\end{proof}

Then, we can do similarly fast computation for $B_7$ term. 
\begin{lemma} [Fast computation for $B_7(X)$ term] \label{lem:B7(X)_fast_compute}
If we have the below conditions,
\begin{itemize}
    \item Let $B_7(X) \in \R^{n \times d} 
    $ be defined in Lemma~\ref{lem:C7_matrix_view}.
    \item We define $D_7 \in \R^{n \times d}$, where $D_7 := \sum_{i_0 = 1}^n \sum_{j_0 = 1}^d G_i(i_0, j_0) B_7(X)$. 
    \item Let $X \in \R^{n \times d}, W, W_V \in \R^{d \times d}, B \in \R^{n \times d}$ be defined in Definition~\ref{def:single_layer_self_attn}.
    \item Assuming each entry of $X, W, W_V, G_i$ can be re represented using $O(\log (n))$ bits.
    \item Let $G_{i} \in \mathbb{R}^{n \times d}$ denote the gradient matrix resulting from the application of the chain rule up to the function $g_{i}$, i.e., $G_{i} = \frac{\d L(X)}{\d \mathsf{Attn}_i(T_{i-1}(X))}$.
    \item For $i_2 \in [n], j_2 \in [d]$, let $G_i(i_2, j_2)$ denote the $(i_2, j_2)$-th entry of $G_i$.
\end{itemize}

Then, we can show that, there is an algorithm to approximate $D_7$ in $n^{1 + o(1)}$ time, and it can achieve $\epsilon / \poly (n)$ accuracy. 

Namely, the algorithm output $\wt{D}_7$ satisfies 
\begin{align*}
    \| D_7 - \wt{D}_7 \|_{\infty} \leq \epsilon / \poly(n)
\end{align*}
\end{lemma}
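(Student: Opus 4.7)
The plan is to mirror almost exactly the argument used for the $B_6(X)$ term in Lemma~\ref{lem:B6(X)_fast_compute}, swapping in the low-rank structure of $z_7(X)$ instead of $z_6(X)$. By Lemma~\ref{lem:B7_matrix_view}, we already have the clean closed form
\begin{align*}
D_7 \;=\; \sum_{i_0=1}^n \sum_{j_0=1}^d G_i(i_0,j_0)\, B_7(X) \;=\; z_7(X)\, X\, W^\top,
\end{align*}
so the task reduces to approximating the right-hand side quickly.

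First I would invoke Lemma~\ref{lem:z7(X)_fast_compute} to obtain matrices $U_7, V_7 \in \R^{n \times k_7}$ with $k_7 = n^{o(1)}$ such that $\wt{z}_7(X) := U_7 V_7^\top$ satisfies $\|\wt{z}_7(X) - z_7(X)\|_\infty \leq \epsilon/\poly(n)$, and this construction itself runs in $n^{1+o(1)}$ time. Then I would define the approximation
\begin{align*}
\wt{D}_7 \;:=\; U_7\, (V_7^\top X)\, W^\top,
\end{align*}
and evaluate it in the order dictated by the parenthesization: compute $V_7^\top X \in \R^{k_7 \times d}$ in $O(n k_7 d) = n^{1+o(1)}$ time, multiply on the right by $W^\top \in \R^{d \times d}$ in $O(k_7 d^2) = n^{o(1)}$ time, and finally left-multiply by $U_7 \in \R^{n \times k_7}$ in $O(n k_7 d) = n^{1+o(1)}$ time. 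The total runtime is $n^{1+o(1)}$ as required.

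For the error bound, I would use the triangle inequality together with the submultiplicative behavior of the entrywise max norm under matrix multiplication, exactly as in the $B_6$ proof:
\begin{align*}
\| \wt{D}_7 - D_7 \|_\infty
\;=\; \| (\wt{z}_7(X) - z_7(X))\, X\, W^\top \|_\infty
\;\leq\; n \cdot d \cdot \| \wt{z}_7(X) - z_7(X) \|_\infty \cdot \| X \|_\infty \cdot \| W \|_\infty.
\end{align*}
Since each entry of $X$ and $W$ is representable in $O(\log n)$ bits so that $\|X\|_\infty, \|W\|_\infty \leq \poly(n)$, and since $\|\wt{z}_7(X) - z_7(X)\|_\infty \leq \epsilon/\poly(n)$ with a polynomial factor we can absorb by choosing the $\poly(n)$ slack in Lemma~\ref{lem:z7(X)_fast_compute} large enough, we conclude $\|\wt{D}_7 - D_7\|_\infty \leq \epsilon / \poly(n)$.

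There is no real obstacle here: the hard work was already done in Lemma~\ref{lem:z7(X)_fast_compute}, where the Hadamard-product structure of $z_7(X) = f(X) \odot (h(X) G_i^\top)$ was handled via the row-wise Kronecker product trick of Fact~\ref{fac:olash_folklore} applied to the low-rank factors of $f(X)$ from Lemma~\ref{lem:low_rank_f}. The only subtlety in the present lemma is to be careful about the order of the three matrix multiplications so that no intermediate product is of size $n \times n$; choosing the bracketing $U_7 ((V_7^\top X) W^\top)$ as above avoids ever instantiating $\wt{z}_7(X)$ explicitly.
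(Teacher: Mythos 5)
Your proposal is correct and follows essentially the same route as the paper's proof: both reduce $D_7$ to $z_7(X)XW^\top$ via Lemma~\ref{lem:B7_matrix_view}, invoke the low-rank factors $U_7,V_7$ from Lemma~\ref{lem:z7(X)_fast_compute}, multiply in the order $U_7((V_7^\top X)W^\top)$ to stay within $n^{1+o(1)}$ time, and bound the error by $n\cdot d\cdot\|\wt{z}_7(X)-z_7(X)\|_\infty\|X\|_\infty\|W\|_\infty$. No gaps to report.
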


\begin{proof}
In Lemma~\ref{lem:B7_matrix_view}, we have
\begin{align*}
    \sum_{i_0 =1}^n \sum_{j_0 = 1}^d \underbrace{G_i(i_0, j_0)}_{1 \times 1} \underbrace{B_7(X)}_{n \times d} =  \underbrace{z_7(X)}_{n \times n} \underbrace{X}_{n \times d} \underbrace{W^\top}_{d \times d}
\end{align*}

Let $U_7, V_7 \in\R^{n \times k_7}$ be defined in Lemma~\ref{lem:z7(X)_fast_compute}. 

Let $\wt{z}_7(X) := U_7 V_7^\top$.

By Lemma~\ref{lem:z7(X)_fast_compute}, we have
\begin{align} \label{eq:z7(X)_bound}
    \| \wt{z}_7(X) - z_7(X) \|_\infty \leq \epsilon / \poly (n)
\end{align}

{\bf Proof of running time. }

We compute in the following way:
\begin{itemize}
    \item Compute $\underbrace{V_7^\top}_{k_7 \times n} \underbrace{X}_{n \times d}$, which takes $n^{1+o(1)}$ time. 
    \item Compute $\underbrace{V_7^\top X}_{k_7 \times d} \underbrace{W^\top}_{d \times d}$, which takes $n^{1 + o(1)}$ time.
    \item Compute $\underbrace{U_7}_{n \times k_7} \underbrace{V_7^\top X W^\top}_{k_7 \times d}$, which takes $n^{1 + o(1)}$ time. 
\end{itemize}

Therefore, the overall running time is $n^{1 + o(1)}$.

{\bf Proof of error bound. }

We have
\begin{align*}
    & ~ \| \wt{z}_7(X) X W^\top - z_7(X) X W^\top \|_{\infty} \\
    \leq & ~ d \cdot n \| \wt{z}_7(X) - z_7(X) \|_{\infty} \| X \|_\infty \| W \|_\infty \\
    \leq & ~ d \cdot n (\epsilon / \poly(n)) \| X \|_\infty \| W \|_\infty \\
    \leq & ~ \epsilon / \poly(n)
\end{align*}
where the 1st step is from basic linear algebra, the 2nd step comes from Eq.~\eqref{eq:z7(X)_bound}, the 3rd step is because of $\| W \|_\infty \leq \poly(n)$ and $\| X \|_\infty \leq \poly(n)$.

\end{proof}

\subsection{Fast computation for \texorpdfstring{$B_8(X)$}{} term }
\label{sec:appendix:B8_term}

Then, we can do fast computations on $B_8(X)$ term.
\begin{lemma} [Fast computation for $B_8(X)$ term] \label{lem:B8(X)_fast_compute}
If we have the below conditions,
\begin{itemize}
    \item Let $B_8(X) \in \R^{n \times d} 
    $ be defined in Lemma~\ref{lem:C8_matrix_view}.
    \item We define $D_8 \in \R^{n \times d}$, where $D_8 := \sum_{i_0 = 1}^n \sum_{j_0 = 1}^d G_i(i_0, j_0) B_8(X)$. 
    \item Let $X \in \R^{n \times d}, W, W_V \in \R^{d \times d}$ be defined in Definition~\ref{def:single_layer_self_attn}.
    \item Assuming each entry of $X, W, W_V, G_i$ can be re represented using $O(\log (n))$ bits.
    \item Let $G_{i} \in \mathbb{R}^{n \times d}$ denote the gradient matrix resulting from the application of the chain rule up to the function $g_{i}$, i.e., $G_{i} = \frac{\d L(X)}{\d \mathsf{Attn}_i(T_{i-1}(X))}$.
    \item For $i_2 \in [n], j_2 \in [d]$, let $G_i(i_2, j_2)$ denote the $(i_2, j_2)$-th entry of $G_i$.
\end{itemize}

Then, we can show that, there is an algorithm to approximate $D_8$ in $n^{1 + o(1)}$ time, and it can achieve $\epsilon / \poly (n)$ accuracy. 

Namely, the algorithm output $\wt{D}_8$ satisfies 
\begin{align*}
    \| D_8 - \wt{D}_8 \|_{\infty} \leq \epsilon / \poly(n)
\end{align*}
\end{lemma}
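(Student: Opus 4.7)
The plan is to mirror the strategy used for $B_6(X)$ and $B_7(X)$, but the analysis here is even simpler because the matrix-view closed form from Lemma~\ref{lem:B8_matrix_view} already gives
\begin{align*}
D_8 \;=\; \sum_{i_0=1}^n \sum_{j_0=1}^d G_i(i_0,j_0)\, B_8(X) \;=\; \underbrace{f(X)}_{n\times n}\underbrace{G_i}_{n\times d}\underbrace{W_V^\top}_{d\times d},
\end{align*}
so no auxiliary ``$z_8(X)$'' matrix needs to be constructed; we can feed $f(X)$ directly into the low-rank machinery.

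First I would invoke Lemma~\ref{lem:low_rank_f} to obtain, in $n^{1+o(1)}$ time, matrices $U_1,V_1 \in \R^{n \times k_1}$ with $k_1 = n^{o(1)}$ satisfying $\|U_1 V_1^\top - f(X)\|_\infty \le \epsilon/\poly(n)$. Define the approximation
\begin{align*}
\wt{D}_8 \;:=\; U_1\bigl(V_1^\top G_i\bigr) W_V^\top.
\end{align*}
I would then compute this right-to-left-style in three stages: (i) $V_1^\top G_i \in \R^{k_1 \times d}$ costs $O(k_1 n d) = n^{1+o(1)}$; (ii) multiplying the $k_1 \times d$ result by $W_V^\top \in \R^{d\times d}$ costs $O(k_1 d^2) = n^{o(1)}$ since $d = O(\log n)$; (iii) left-multiplying by $U_1$ costs another $O(n k_1 d) = n^{1+o(1)}$. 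The total runtime is therefore $n^{1+o(1)}$ as required.

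For correctness, the error decomposes as
\begin{align*}
\|\wt{D}_8 - D_8\|_\infty \;=\; \bigl\|(U_1 V_1^\top - f(X))\, G_i\, W_V^\top\bigr\|_\infty \;\le\; n\cdot d \cdot \|U_1 V_1^\top - f(X)\|_\infty \cdot \|G_i\|_\infty \cdot \|W_V\|_\infty,
\end{align*}
using the standard bound for the $\ell_\infty$ norm under matrix multiplication. Since entries of $W_V$ and $G_i$ are $O(\log n)$-bit (so bounded by $\poly(n)$) and $d = O(\log n)$, the prefactor is at most $\poly(n)$, and multiplying through by the $\epsilon/\poly(n)$ guarantee from Lemma~\ref{lem:low_rank_f} yields $\|\wt{D}_8 - D_8\|_\infty \le \epsilon/\poly(n)$.

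There is no real obstacle in this proof; unlike the $B_6$ and $B_7$ cases, the closed form for $D_8$ does not involve a Hadamard product or a diagonal reweighting, so we do not need Fact~\ref{fac:olash_folklore} or an auxiliary $z(X)$ factorization, and we can apply Lemma~\ref{lem:low_rank_f} verbatim. The only thing to be careful about is the order of matrix multiplications: one must never form $U_1 V_1^\top \in \R^{n \times n}$ explicitly, as that alone would cost $\Omega(n^2)$. The bracketing $U_1 ((V_1^\top G_i) W_V^\top)$ avoids this and keeps every intermediate matrix of size at most $\max(n,k_1) \times \max(d,k_1) = n^{1+o(1)}$ entries.
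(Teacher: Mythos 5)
Your proposal is correct and follows essentially the same route as the paper's own proof: both use the closed form $D_8 = f(X)\,G_i\,W_V^\top$ from Lemma~\ref{lem:B8_matrix_view}, substitute the low-rank approximation $U_1 V_1^\top$ from Lemma~\ref{lem:low_rank_f}, multiply in the order $V_1^\top G_i$, then $\cdot\,W_V^\top$, then $U_1\cdot(\cdot)$ for $n^{1+o(1)}$ time, and bound the error by $n\cdot d\cdot\|U_1V_1^\top-f(X)\|_\infty\|G_i\|_\infty\|W_V\|_\infty \le \epsilon/\poly(n)$. No gaps.
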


\begin{proof}
Recall that in Lemma~\ref{lem:B8_matrix_view}, we have
\begin{align*}
    \sum_{i_0 =1}^n \sum_{j_0 = 1}^d \underbrace{G_i(i_0, j_0)}_{1 \times 1} \underbrace{B_8(X)}_{n \times d} = \underbrace{f(X)}_{n \times n} \underbrace{G_i}_{n \times d} \underbrace{W_V^\top}_{d \times d}
\end{align*}

Let $\wt{f}(X) := U_1 V_1^\top$ denote the approximation of $f(X)$. 

By Lemma~\ref{lem:low_rank_f}, we have
\begin{align} \label{eq:f(X)_bound_1}
    \| f(X) - \wt{f}(X) \|_{\infty} \leq \epsilon / \poly(n)
\end{align}

{\bf Proof of running time. }

We compute in the following way:
\begin{itemize}
    \item Compute $\underbrace{V_1^\top}_{k_1 \times n} \underbrace{G_i}_{n \times d}$, which takes $n^{1+o(1)}$ time. 
    \item Compute $\underbrace{V_1^\top G_i}_{k_1 \times d} \underbrace{W_V^\top}_{d \times d}$, which takes $n^{1 + o(1)}$ time.
    \item Compute $\underbrace{U_1}_{n \times k_1} \underbrace{V_1^\top G_i W_V^\top}_{k_1 \times d}$, which takes $n^{1 + o(1)}$ time. 
\end{itemize}

Therefore, the overall running time is $n^{1 + o(1)}$. 

{\bf Proof of error bound. }

We have
\begin{align*}
    & ~ \| \wt{f}(X) G_i W_V^\top - f(X) G_i W_V^\top \|_{\infty} \\
    \leq & ~ d \cdot n \| \wt{f}(X) - f(X) \|_{\infty} \| G_i \|_\infty \| W_V \|_\infty \\
    \leq & ~ d \cdot n (\epsilon / \poly(n)) \| G_i \|_\infty \| W_V \|_\infty \\
    \leq & ~ \epsilon / \poly(n)
\end{align*}
where the 1st step is from basic linear algebra, the 2nd step comes from Eq.\eqref{eq:f(X)_bound_1}, the 3rd step is because of $\| G_i \|_\infty \leq \poly(n)$ and $\| W_V \|_\infty \leq \poly(n)$. 

\end{proof}

\subsection{Fast computation for \texorpdfstring{$B_2(X)$}{} term }
\label{sec:appendix:B2_term}
Then, we provide the proof of how to do fast computation on $B_2(X)$.
\begin{lemma} [Fast computation for $z_2(X)$] \label{lem:z2(X)_fast_compute}
If we have the below conditions,
\begin{itemize}
    \item Let $z_2(X) \in \R^{n \times n}$ be defined as in Lemma~\ref{lem:B2_matrix_view}. 
    \item Let $X \in \R^{n \times d}, W, W_V \in \R^{d \times d}$ be defined in Definition~\ref{def:single_layer_self_attn}.
    \item Assuming each entry of $X, W, W_V, G_i$ can be re represented using $O(\log (n))$ bits.
    \item Let $G_{i} \in \mathbb{R}^{n \times d}$ denote the gradient matrix resulting from the application of the chain rule up to the function $g_{i}$, i.e., $G_{i} = \frac{\d L(X)}{\d \mathsf{Attn}_i(T_{i-1}(X))}$.
    \item For $i_2 \in [n], j_2 \in [d]$, let $G_i(i_2, j_2)$ denote the $(i_2, j_2)$-th entry of $G_i$.
\end{itemize}

Then, for some $k_9 = n^{o(1)}$, there are matrices $U_9, V_9 \in \R^{n \times k_9}$ such that $\| U_9 V_9^\top - z_2(X) \|_\infty \leq \epsilon / \poly (n)$. The matrices $U_9, V_9$ can be constructed in $n^{1 + o(1)}$ time. 
\end{lemma}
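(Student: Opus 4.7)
The plan is to mirror the argument used for $z_6(X)$ in Lemma~\ref{lem:z6(X)_fast_compute}, exploiting the closed form
\[
\underbrace{z_2(X)}_{n \times n} \;=\; \underbrace{\diag(K)}_{n \times n}\,\underbrace{f(X)}_{n \times n}
\]
established in Lemma~\ref{lem:D_k_close_form}. The key observation is that left-multiplication by $\diag(K)$ preserves any low-rank factorization of $f(X)$: if $f(X) \approx U_1 V_1^\top$, then $z_2(X) \approx (\diag(K)\, U_1)\, V_1^\top$, which is again a rank-$k_1$ product.

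First, I would invoke Lemma~\ref{lem:K_compute_time} to compute $K \in \R^n$ explicitly in $O(nd) = n^{1+o(1)}$ time. Second, I would apply Lemma~\ref{lem:low_rank_f} to obtain matrices $U_1, V_1 \in \R^{n \times k_1}$ with $k_1 = n^{o(1)}$ satisfying $\|U_1 V_1^\top - f(X)\|_\infty \leq \epsilon/\poly(n)$, constructed in $n^{1+o(1)}$ time. Third, I would define
\[
U_9 \;:=\; \diag(K)\, U_1 \in \R^{n \times k_1}, \qquad V_9 \;:=\; V_1 \in \R^{n \times k_1},
\]
so that $k_9 = k_1 = n^{o(1)}$. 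Forming $U_9$ is an entrywise rescaling of the rows of $U_1$ and takes $O(n k_1) = n^{1+o(1)}$ time; the total construction cost is therefore $n^{1+o(1)}$.

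For the error bound, note that for every $(i,j) \in [n] \times [n]$,
\[
\bigl|(U_9 V_9^\top)_{i,j} - z_2(X)_{i,j}\bigr| \;=\; |K_i| \cdot \bigl|(U_1 V_1^\top)_{i,j} - f(X)_{i,j}\bigr|,
\]
so $\|U_9 V_9^\top - z_2(X)\|_\infty \leq \|K\|_\infty \cdot \|U_1 V_1^\top - f(X)\|_\infty$. Using $K_{i_0} = G_i(i_0,*)^\top s(X)_{i_0,*}$ together with $\|G_i\|_\infty \leq \poly(n)$ and Lemma~\ref{lem:s(X)_bound}, we get $\|K\|_\infty \leq d \cdot \|G_i\|_\infty \cdot \|s(X)\|_\infty \leq \poly(n)$. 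Combining with the bound from Lemma~\ref{lem:low_rank_f} yields $\|U_9 V_9^\top - z_2(X)\|_\infty \leq \epsilon/\poly(n)$, by absorbing the polynomial factor into the $\poly(n)$ denominator (i.e., rescaling $\epsilon$ at the application of Lemma~\ref{lem:low_rank_f}).

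The only subtlety (and the step most deserving of care) is verifying that $\|K\|_\infty$ is indeed polynomially bounded; this rests on the bounded-entries results in Section~\ref{sec:app_preli:bounded_entries} and the assumption that entries of the relevant matrices are representable with $O(\log n)$ bits. Once this is in hand, the proof is routine and entirely parallel to Lemma~\ref{lem:z6(X)_fast_compute}.
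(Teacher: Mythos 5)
Your proposal is correct and follows essentially the same route as the paper's proof: write $z_2(X) = \diag(K) f(X)$, compute $K$ via Lemma~\ref{lem:K_compute_time}, take $U_9 = \diag(K) U_1$, $V_9 = V_1$ from the low-rank approximation of $f(X)$ in Lemma~\ref{lem:low_rank_f}, and bound the error using $\|\diag(K)\|_\infty \leq \poly(n)$. Your entrywise error bound is in fact slightly cleaner than the paper's (which inserts an unnecessary factor of $n$), and your explicit justification of $\|K\|_\infty \leq \poly(n)$ via Lemma~\ref{lem:s(X)_bound} fills in a step the paper only asserts.
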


\begin{proof}

Recall that in Lemma~\ref{lem:B2_matrix_view}, we have defined $z_2(X) \in \R^{n \times n}$, where the $i_0$-th row of $z_2(X)$ satisfies 
\begin{align*} 
    \underbrace{z_2(X)_{i_0, *}}_{n \times 1} = (\underbrace{G_i(i_0, *)^\top}_{1 \times d} \underbrace{s(X)_{i_0,*}}_{d \times 1}) \underbrace{f(X)_{i_0, *}}_{n \times 1}
\end{align*}

Recall that $K \in \R^n$ has been defined in Definition~\ref{def:k}. 

By Lemma~\ref{lem:K_compute_time}, we have $K$ can be computed in $O(n \cdot d)$ time. 

We also have 
\begin{align*}
    \underbrace{z_2(X)}_{n \times n} = \underbrace{\diag(K)}_{n \times n} \underbrace{f(X)}_{n \times n} 
\end{align*}

By Lemma~\ref{lem:low_rank_f}, let $U_1, V_1$ be the low rank approximation of $f(X)$, such that $\| U_1 V_1^\top - f(X) \|_\infty \leq \epsilon / \poly(n)$.

Let $U_9 = \diag(K) U_1$, $V_6 = V_1$. 

We have $U_9 = \underbrace{\diag(K)}_{n \times n} \underbrace{U_1}_{n \times k_1}$ can be computed in $n k_1$ time. 

The overall running time for constructing $U_9$ and $V_9$ is $n^{1+o(1)}$. 

Then, we consider the error bound. 

We have
\begin{align} \label{eq:U9_V9}
    \| U_9 V_9^\top - z_2(X) \|_\infty 
    = & ~ \| \diag(K) U_1 V_1^\top  - \diag(K) f(X) \|_\infty \notag \\
    \leq & ~ n \| U_1 V_1^\top - f(X) \|_\infty  \| \diag(K) \|_\infty \notag \\
    \leq & ~ n (\epsilon / \poly(n)) \| \diag(K) \|_\infty \notag \\
    \leq & ~ \epsilon / \poly(n)
\end{align}
where the 1st step is from the choice of $U_6$, $V_6$, the 2nd step comes from basic linear algebra, the 3rd step is because of Lemma~\ref{lem:low_rank_f}, the 4th step is due to $\| \diag(K) \|_\infty \leq \poly(n)$. 

\end{proof}

\begin{lemma} [Fast computation for $B_2(X)$ term] \label{lem:B2(X)_fast_compute}
If we have the below conditions,
\begin{itemize}
    \item Let $B_2(X) \in \R^{n \times d} 
    $ be defined in Lemma~\ref{lem:C2_matrix_view}. 
    \item We define $D_2 \in \R^{n \times d}$, where $D_2 := \sum_{i_0 =1}^n \sum_{j_0 = 1}^d \underbrace{G_i(i_0, j_0)}_{1 \times 1} \underbrace{e_{i_0}}_{n \times 1} \underbrace{B_2(X)^\top}_{1 \times d}$. 
    \item Let $X \in \R^{d \times n}, W, W_V \in \R^{d \times d}, B \in \R^{n \times d}$ be defined in Definition~\ref{def:single_layer_self_attn}.
    \item Assuming each entry of $X, W, W_V, B, G_i$ can be re represented using $O(\log (n))$ bits.
    \item Let $G_{i} \in \mathbb{R}^{n \times d}$ denote the gradient matrix resulting from the application of the chain rule up to the function $g_{i}$, i.e., $G_{i} = \frac{\d L(X)}{\d \mathsf{Attn}_i(T_{i-1}(X))}$.
    \item For $i_2 \in [n], j_2 \in [d]$, let $G_i(i_2, j_2)$ denote the $(i_2, j_2)$-th entry of $G_i$.
\end{itemize}

Then, we can show that, there is an algorithm to approximate $D_2$ in $n^{1 + o(1)}$ time, and it can achieve $\epsilon / \poly (n)$ accuracy. 

Namely, the algorithm output $\wt{D}_2$ satisfies 
\begin{align*}
    \| D_2 - \wt{D}_2 \|_{\infty} \leq \epsilon / \poly(n)
\end{align*}
\end{lemma}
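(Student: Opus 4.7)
The plan is to mirror the structure used in Lemmas~\ref{lem:B6(X)_fast_compute}, \ref{lem:B7(X)_fast_compute}, and \ref{lem:B8(X)_fast_compute}, combining the matrix-view identity for $D_2$ from Lemma~\ref{lem:B2_matrix_view} with the low-rank approximation of $z_2(X)$ from Lemma~\ref{lem:z2(X)_fast_compute}. Specifically, Lemma~\ref{lem:B2_matrix_view} gives the closed form $D_2 = -z_2(X) X W$, reducing the double summation to a product of three matrices of shapes $n \times n$, $n \times d$, and $d \times d$. Lemma~\ref{lem:z2(X)_fast_compute} provides matrices $U_9, V_9 \in \R^{n \times k_9}$ with $k_9 = n^{o(1)}$ and construction time $n^{1+o(1)}$ such that $\widetilde{z}_2(X) := U_9 V_9^\top$ satisfies $\|\widetilde{z}_2(X) - z_2(X)\|_\infty \leq \epsilon/\poly(n)$.

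The algorithm I would use is: first construct $U_9, V_9$ via Lemma~\ref{lem:z2(X)_fast_compute}, then set $\widetilde{D}_2 := -\,U_9\bigl((V_9^\top X)\,W\bigr)$, being careful about the order of multiplications. Concretely, compute $V_9^\top X \in \R^{k_9 \times d}$, then $(V_9^\top X)\, W \in \R^{k_9 \times d}$, then $U_9 \cdot \bigl((V_9^\top X)\, W\bigr) \in \R^{n \times d}$. Each of these three multiplications costs $O(n\,k_9\,d) = n^{1+o(1)}$, and the upfront construction of $U_9, V_9$ is also $n^{1+o(1)}$, so the total running time is $n^{1+o(1)}$, as required.

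For the error bound, I would write $\widetilde{D}_2 - D_2 = -(\widetilde{z}_2(X) - z_2(X))\, X\, W$ and use the standard entrywise bound
\begin{align*}
    \|\widetilde{D}_2 - D_2\|_\infty \leq n \cdot d \cdot \|\widetilde{z}_2(X) - z_2(X)\|_\infty \cdot \|X\|_\infty \cdot \|W\|_\infty.
\end{align*}
Since each entry of $X$ and $W$ can be written in $O(\log n)$ bits, we have $\|X\|_\infty, \|W\|_\infty \leq \poly(n)$, and combining with $\|\widetilde{z}_2(X) - z_2(X)\|_\infty \leq \epsilon/\poly(n)$ yields $\|\widetilde{D}_2 - D_2\|_\infty \leq \epsilon/\poly(n)$ after absorbing polynomial factors into the $\poly(n)$ denominator.

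The main obstacle is essentially routine: one must be careful that the multiplication order avoids ever forming a dense $n \times n$ intermediate (as Section~\ref{sec:tech_overview:low_rank} emphasizes), and that the error analysis correctly tracks the $n \cdot d$ factor arising from entrywise matrix product bounds so that the $\epsilon/\poly(n)$ guarantee on $\widetilde{z}_2(X)$ is strong enough to absorb this blow-up; this is why Lemma~\ref{lem:z2(X)_fast_compute} is stated with a flexible $\poly(n)$ denominator. No new ideas beyond those already used for $D_6, D_7, D_8$ are needed, so the proof reduces to a short verification of running time and error propagation.
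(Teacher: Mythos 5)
Your proposal is correct and matches the paper's own proof essentially step for step: both reduce $D_2$ to $-z_2(X)XW$ via Lemma~\ref{lem:B2_matrix_view}, approximate $z_2(X)$ by $U_9V_9^\top$ from Lemma~\ref{lem:z2(X)_fast_compute}, multiply in the order $V_9^\top X$, then $(V_9^\top X)W$, then $U_9(\cdot)$ for $n^{1+o(1)}$ time, and bound the error by $n\cdot d\cdot\|\wt{z}_2(X)-z_2(X)\|_\infty\|X\|_\infty\|W\|_\infty\leq\epsilon/\poly(n)$. No gaps; nothing further is needed.
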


\begin{proof}

In Lemma~\ref{lem:B2_matrix_view}, we have
\begin{align*}
    \sum_{i_0 =1}^n \sum_{j_0 = 1}^d \underbrace{G_i(i_0, j_0)}_{1 \times 1} \underbrace{e_{i_0}}_{n \times 1} \underbrace{B_2(X)^\top}_{1 \times d} = - \underbrace{z_2(X)}_{n \times n} \underbrace{X}_{n \times d} \underbrace{W}_{d \times d}
\end{align*}

Let $U_9, V_9 \in\R^{n \times k_9}$ be defined in Lemma~\ref{lem:z2(X)_fast_compute}. 

Let $\wt{z}_2(X) := U_9 V_9^\top$.

By Lemma~\ref{lem:z2(X)_fast_compute}, we have
\begin{align} \label{eq:z2(X)_bound}
    \| \wt{z}_2(X) - z_2(X) \|_\infty \leq \epsilon / \poly (n)
\end{align}

{\bf Proof of running time. }

We compute in the following way:
\begin{itemize}
    \item Compute $\underbrace{V_9^\top}_{k_9 \times n} \underbrace{X}_{n \times d}$, which takes $n^{1+o(1)}$ time. 
    \item Compute $\underbrace{V_9^\top X}_{k_9 \times d} \underbrace{W}_{d \times d}$, which takes $n^{1 + o(1)}$ time.
    \item Compute $\underbrace{U_9}_{n \times k_9} \underbrace{V_9^\top X W}_{k_9 \times d}$, which takes $n^{1 + o(1)}$ time. 
\end{itemize}

Therefore, the overall running time is $n^{1 + o(1)}$. 

{\bf Proof of error bound. }

We have
\begin{align*}
    & ~ \| \wt{z}_2(X) X W - z_2(X) X W \|_{\infty} \\
    \leq & ~ d \cdot n \| \wt{z}_2(X) - z_2(X) \|_{\infty} \| X \|_\infty \| W \|_\infty \\
    \leq & ~ d \cdot n (\epsilon / \poly(n)) \| X \|_\infty \| W \|_\infty \\
    \leq & ~ \epsilon / \poly(n)
\end{align*}
where the 1st step is from basic linear algebra, the 2nd step comes from Eq.\eqref{eq:z2(X)_bound}, the 3rd step is because of $\| W \|_\infty \leq \poly(n)$ and $\| X \|_\infty \leq \poly(n)$. 

\end{proof}

\subsection{Fast computation for \texorpdfstring{$B_4(X)$}{} term }
\label{sec:appendix:B4_term}
Finally, our analysis shows that we can do fast computations for $B_4(X)$ term. After that, we showed that all terms can be computed quickly. 
\begin{lemma} [Fast computation for $z_4(X)$] \label{lem:z4(X)_fast_compute}
If we have the below conditions,
\begin{itemize}
    \item Let $z_4(X) \in\R^{n \times n}$ be defined in Lemma~\ref{lem:B4_matrix_view}. 
    \item Let $X \in \R^{n \times d}, W, W_V \in \R^{d \times d}$ be defined in Definition~\ref{def:single_layer_self_attn}.
    \item Assuming each entry of $X, W, W_V, G_i$ can be re represented using $O(\log (n))$ bits.
    \item Let $G_{i} \in \mathbb{R}^{n \times d}$ denote the gradient matrix resulting from the application of the chain rule up to the function $g_{i}$, i.e., $G_{i} = \frac{\d L(X)}{\d \mathsf{Attn}_i(T_{i-1}(X))}$.
    \item For $i_2 \in [n], j_2 \in [d]$, let $G_i(i_2, j_2)$ denote the $(i_2, j_2)$-th entry of $G_i$.
\end{itemize}

Then, for some $k_{10} = n^{o(1)}$, there are matrices $U_{10}, V_{10} \in \R^{n \times k_{10}}$, let $\wt{z}_4(X) := U_{10} V_{10}^\top $, such that $\| \wt{z}_4(X) - z_4(X) \|_\infty \leq \epsilon / \poly (n)$. The matrices $U_{10}, V_{10}$ can be constructed in $n^{1 + o(1)}$ time. 
\end{lemma}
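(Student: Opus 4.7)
\textbf{Proof proposal for Lemma~\ref{lem:z4(X)_fast_compute}.} The plan is to mimic the strategy used for $z_7(X)$ in Lemma~\ref{lem:z7(X)_fast_compute}, since by Lemma~\ref{lem:B4_matrix_view} and Lemma~\ref{lem:D_k_close_form} we have the compact form
\begin{align*}
    \underbrace{z_4(X)}_{n \times n} \;=\; \underbrace{f(X)}_{n \times n} \,\odot\, \bigl(\underbrace{G_i}_{n \times d}\,\underbrace{h(X)^\top}_{d \times n}\bigr),
\end{align*}
which differs from $z_7(X) = f(X) \odot (h(X)\,G_i^\top)$ only by swapping the roles of $G_i$ and $h(X)$ inside the Hadamard product. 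Thus the entire machinery transfers with minimal modification.

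First, I invoke Lemma~\ref{lem:low_rank_f} to obtain $U_1, V_1 \in \R^{n \times k_1}$ with $k_1 = n^{o(1)}$ such that $\|U_1 V_1^\top - f(X)\|_\infty \le \epsilon/\poly(n)$; these matrices can be constructed in $n^{1+o(1)}$ time. Next, I define
\begin{align*}
    U_{10} \;:=\; U_1 \oslash G_i \in \R^{n \times k_1 d}, \qquad V_{10} \;:=\; V_1 \oslash h(X) \in \R^{n \times k_1 d},
\end{align*}
so that $k_{10} := k_1 d = n^{o(1)}$ since $d = O(\log n)$. Constructing $U_{10}$ and $V_{10}$ requires only $O(n \cdot k_1 \cdot d) = n^{1+o(1)}$ time, since the row-wise Kronecker product fills each of the $n \cdot k_1 d$ entries in constant time.

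For the error bound, I apply Fact~\ref{fac:olash_folklore} to rewrite
\begin{align*}
    U_{10} V_{10}^\top \;=\; (U_1 \oslash G_i)(V_1 \oslash h(X))^\top \;=\; (U_1 V_1^\top) \odot (G_i\, h(X)^\top),
\end{align*}
and then estimate entrywise
\begin{align*}
    \|U_{10} V_{10}^\top - z_4(X)\|_\infty
    \;=\; \bigl\|\bigl(U_1 V_1^\top - f(X)\bigr) \odot (G_i\, h(X)^\top)\bigr\|_\infty
    \;\le\; \|U_1 V_1^\top - f(X)\|_\infty \cdot \|G_i\, h(X)^\top\|_\infty.
\end{align*}
Bounding $\|G_i\, h(X)^\top\|_\infty \le d\,\|G_i\|_\infty\,\|h(X)\|_\infty \le \poly(n)$ by the $O(\log n)$-bit assumption and Lemma~\ref{lem:h(X)_bound}, and combining with the $\epsilon/\poly(n)$ guarantee from Lemma~\ref{lem:low_rank_f}, yields the desired $\epsilon/\poly(n)$ error.

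There is no substantive obstacle here: the proof is essentially symmetric to that of Lemma~\ref{lem:z7(X)_fast_compute}, and the only thing to verify carefully is that the $\oslash$ identity is applied in the right order (with $G_i$ paired to $U_1$ and $h(X)$ paired to $V_1$, since $G_i$ is the left factor in the product $G_i h(X)^\top$). Everything else — bounded entries of $h(X)$, $G_i$, and $f(X)$, and polynomial slack in the error — has already been established in Section~\ref{sec:app_preli:bounded_entries} and the preceding low-rank lemmas.
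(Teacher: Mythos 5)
Your proposal is correct and matches the paper's own proof essentially step for step: the same identification $z_4(X) = f(X) \odot (G_i\, h(X)^\top)$, the same choice $U_{10} = U_1 \oslash G_i$, $V_{10} = V_1 \oslash h(X)$ via Fact~\ref{fac:olash_folklore}, the same $O(n k_1 d) = n^{1+o(1)}$ construction time, and the same entrywise error bound using $\|G_i h(X)^\top\|_\infty \le d\,\|G_i\|_\infty\|h(X)\|_\infty \le \poly(n)$. Nothing further is needed.
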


\begin{proof}

In Lemma~\ref{lem:B4_matrix_view}, we have defined $z_4(X) \in \R^{n \times n}$, where the $i_0$-th column of $z_4(X)$ satisfies 
\begin{align*} 
    \underbrace{z_4(X)_{i_0, *}}_{n \times 1} = (\underbrace{f(X)_{i_0, *}}_{n \times 1} \odot \underbrace{(h(X) G_i(i_0, *))}_{n \times 1} )
\end{align*}

which is equivalent to 
\begin{align*}
    \underbrace{z_4(X)}_{n \times n} = (\underbrace{f(X)}_{n \times n} \odot \underbrace{G_i}_{n \times d} \underbrace{h(X)^\top}_{d \times n})
\end{align*}

By Lemma~\ref{lem:low_rank_f}, let $U_1, V_1$ be the low rank approximation of $f(X)$, such that $\| U_1 V_1^\top - f(X) \|_\infty \leq \epsilon / \poly(n)$. 

We choose $U_{10} = U_1 \oslash G_i$ and $V_{10} = V_1 \oslash h(X)$,
where $U_{10}, V_{10} \in \R^{n \times k_1 d}$.

{\bf Proof of running time.}

For $U_{10} = U_1 \oslash G_i$, since $U_1 \in \R^{n \times k_1}, G_i \in \R^{n \times d}$, constructing $U_{10}$ takes $O(n d k_1) = O(n^{1 + o(1)})$ time. 

Similarly, constructing $V_{10}$ takes $O(n^{1 + o(1)})$ time.

{\bf Proof of error bound.}

Let $\wt{f}(X) := U_1 V_1^\top$.

Using Fact~\ref{fac:olash_folklore}, we have
\begin{align*}
& ~ \| \wt{z}_4(X) - z_4(X) \|_\infty \\
= & ~ \| U_{10} V_{10}^\top - f(X) \odot (G_i\cdot h(X)^\top) \|_\infty \\
= & ~ \| (U_1 \oslash G_i) (V_1 \oslash h(X))^\top - f(X) \odot ( G_i\cdot h(X)^\top) \|_\infty \\
= & ~ \| (U_1 V_1^\top) \odot (G_i\cdot h(X)^\top) - f(X) \odot (G_i\cdot h(X)^\top) \|_\infty
\end{align*}
where the 1st step is from the definition of $\wt{z}_4(X), z_4(X)$, 
the 2nd step comes from the choice of $U_{10}$ and $V_{10}$, the 3rd step is because of Fact~\ref{fac:olash_folklore}.

\begin{align*}
& ~  \| (U_1 V_1^\top) \odot (G_i\cdot h(X)^\top) - f(X) \odot (G_i\cdot h(X)^\top) \|_\infty \\
= & ~ \| U_1 V_1^\top - f(X) \|_\infty \| G_i \cdot h(X)^\top \|_\infty \\
\leq & ~ d \cdot (\epsilon / \poly(n)) \| h(X) \|_\infty \| G_i \|_\infty \\
\leq & ~ \epsilon / \poly(n)
\end{align*}
where the 1st step is from basic linear algebra, the 2nd step comes from $\| U_1 V_1 - f(X) \|_\infty \leq \epsilon / \poly (n)$, the 3rd step is because of Lemma~\ref{lem:h(X)_bound} and $\| G_i \|_\infty \leq \poly(n)$.

\end{proof}

\begin{lemma} [Fast computation for $B_4(X)$ term] \label{lem:B4(X)_fast_compute}
If we have the below conditions,
\begin{itemize}
    \item Let $B_4(X) \in \R^{n \times d} 
    $ be defined in Lemma~\ref{lem:C4_matrix_view}.
    \item We define $D_4 \in \R^{n \times d}$, where $D_4 := \sum_{i_0 =1}^n \sum_{j_0 = 1}^d \underbrace{G_i(i_0, j_0)}_{1 \times 1}
    \underbrace{e_{i_0}}_{n \times 1} \underbrace{B_4(X)^\top}_{1 \times d}$. 
    \item Let $X \in \R^{n \times d}, W, W_V \in \R^{d \times d}$ be defined in Definition~\ref{def:single_layer_self_attn}.
    \item Assuming each entry of $X, W, W_V, G_i$ can be re represented using $O(\log (n))$ bits.
    \item Let $G_{i} \in \mathbb{R}^{n \times d}$ denote the gradient matrix resulting from the application of the chain rule up to the function $g_{i}$, i.e., $G_{i} = \frac{\d L(X)}{\d \mathsf{Attn}_i(T_{i-1}(X))}$.
    \item For $i_2 \in [n], j_2 \in [d]$, let $G_i(i_2, j_2)$ denote the $(i_2, j_2)$-th entry of $G_i$.
\end{itemize}

Then, we can show that, there is an algorithm to approximate $D_4$ in $n^{1 + o(1)}$ time, and it can achieve $\epsilon / \poly (n)$ accuracy. 

Namely, the algorithm output $\wt{D}_4$ satisfies 
\begin{align*}
    \| D_4 - \wt{D}_4 \|_{\infty} \leq \epsilon / \poly(n)
\end{align*}
\end{lemma}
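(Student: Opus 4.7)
The plan is to mirror the structure used for the preceding terms $B_6(X)$, $B_7(X)$, $B_2(X)$ (Lemmas~\ref{lem:B6(X)_fast_compute}, \ref{lem:B7(X)_fast_compute}, \ref{lem:B2(X)_fast_compute}), since Lemma~\ref{lem:B4_matrix_view} has already rewritten the double sum into the clean matrix product
$
D_4 = \underbrace{z_4(X)}_{n\times n}\underbrace{X}_{n\times d}\underbrace{W}_{d\times d}.
$
So all that is needed is to replace $z_4(X)$ by its low-rank surrogate and pick the right multiplication order.

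First, I would invoke Lemma~\ref{lem:z4(X)_fast_compute} to obtain matrices $U_{10},V_{10}\in\mathbb{R}^{n\times k_{10}}$ with $k_{10}=n^{o(1)}$ such that $\wt{z}_4(X):=U_{10}V_{10}^\top$ satisfies $\|\wt{z}_4(X)-z_4(X)\|_\infty \leq \epsilon/\poly(n)$, with construction time $n^{1+o(1)}$. I then define the algorithm's output as $\wt{D}_4 := U_{10}(V_{10}^\top X)W$, evaluated left-to-right in the order: (i) $V_{10}^\top X \in\mathbb{R}^{k_{10}\times d}$ in time $n\cdot k_{10}\cdot d = n^{1+o(1)}$; (ii) multiply by $W$ on the right in time $k_{10}\cdot d^2 = n^{o(1)}$; (iii) multiply by $U_{10}$ on the left in time $n\cdot k_{10}\cdot d = n^{1+o(1)}$. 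The total construction plus evaluation is $n^{1+o(1)}$.

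For the error bound, I would simply write
\begin{align*}
\|\wt{D}_4 - D_4\|_\infty
&= \|\wt{z}_4(X)XW - z_4(X)XW\|_\infty \\
&\leq d\cdot n \cdot \|\wt{z}_4(X)-z_4(X)\|_\infty \cdot \|X\|_\infty \cdot \|W\|_\infty \\
&\leq d\cdot n \cdot (\epsilon/\poly(n)) \cdot \poly(n) \\
&\leq \epsilon/\poly(n),
\end{align*}
where the third step uses the $O(\log n)$-bit assumption on the entries of $X$ and $W$ (so $\|X\|_\infty,\|W\|_\infty \leq \poly(n)$) together with $d=O(\log n)$.

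There is essentially no obstacle here: all the heavy lifting has already been absorbed into Lemma~\ref{lem:B4_matrix_view} (which recasts the double sum as a product of low-rank-approximable matrices with bounded factors) and Lemma~\ref{lem:z4(X)_fast_compute} (which provides the low-rank surrogate through the $\oslash$-trick on $U_1\oslash G_i$ and $V_1\oslash h(X)$ via Fact~\ref{fac:olash_folklore}). The only thing to be careful about is the order of matrix multiplications, so that no intermediate matrix of size $n\times n$ is ever materialized; once that is done, running time and error propagation are routine.
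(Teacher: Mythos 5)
Your proposal is correct and follows essentially the same route as the paper: invoke Lemma~\ref{lem:B4_matrix_view} to write $D_4 = z_4(X) X W$, substitute the low-rank surrogate $\wt{z}_4(X) = U_{10} V_{10}^\top$ from Lemma~\ref{lem:z4(X)_fast_compute}, multiply in the order $V_{10}^\top X$, then by $W$, then by $U_{10}$, and propagate the $\epsilon/\poly(n)$ error using the $\poly(n)$ bounds on $\|X\|_\infty$ and $\|W\|_\infty$. No gaps.
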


\begin{proof}
In Lemma~\ref{lem:B4_matrix_view}, we have
\begin{align*}
    \sum_{i_0 =1}^n \sum_{j_0 = 1}^d \underbrace{G_i(i_0, j_0)}_{1 \times 1}
    \underbrace{e_{i_0}}_{n \times 1} \underbrace{B_4(X)^\top}_{1 \times d}  = \underbrace{z_4(X)}_{n \times n} \underbrace{X}_{n \times d} \underbrace{W}_{d \times d}
\end{align*}

Let $\wt{z}_4(X) := U_{10} V_{10}^\top$.

By Lemma~\ref{lem:z4(X)_fast_compute}, we have
\begin{align} \label{eq:z4(X)_bound}
    \| \wt{z}_4(X) - z_4(X) \|_\infty \leq \epsilon / \poly (n)
\end{align}

{\bf Proof of running time. }

We compute in the following way:
\begin{itemize}
    \item Compute $\underbrace{V_{10}^\top}_{k_{10} \times n} \underbrace{X}_{n \times d}$, which takes $n^{1+o(1)}$ time. 
    \item Compute $\underbrace{V_{10}^\top X}_{k_{10} \times d} \underbrace{W}_{d \times d}$, which takes $n^{1 + o(1)}$ time.
    \item Compute $\underbrace{U_{10}}_{n \times k_{10}} \underbrace{V_{10}^\top X W}_{k_{10} \times d}$, which takes $n^{1 + o(1)}$ time. 
\end{itemize}

Therefore, the overall running time is $n^{1 + o(1)}$. 

{\bf Proof of error bound. }

We have
\begin{align*}
    & ~ \| \wt{z}_4(X) X W - z_4(X) X W \|_{\infty} \\
    \leq & ~ d \cdot n \| \wt{z}_4(X) - z_4(X) \|_{\infty} \| X \|_\infty \| W \|_\infty \\
    \leq & ~ d \cdot n (\epsilon / \poly(n)) \| X \|_\infty \| W \|_\infty \\
    \leq & ~ \epsilon / \poly(n)
\end{align*}
where the 1st step is from basic linear algebra, the 2nd step comes from Eq.\eqref{eq:z4(X)_bound}, the 3rd step is because of $\| W \|_\infty \leq \poly(n)$ and $\| X \|_\infty \leq \poly(n)$. 

\end{proof}

\subsection{Putting everything together}
\label{sec:appendix:Ti_grad_put_together}

After we have analyzed each $B_i(X)$ term in the previous section, we put them together in this section, to analyze the overall running time and error bound of the gradient of $L(X)$ on $T_i(X)$ in Lemma~\ref{lem:dL_dt_fast_compute}. 

\begin{lemma} [Fast computation for $\frac{\d L(X)}{\d T_{i-1}(X)}$, formal version of Lemma~\ref{lem:dL_dt_fast_compute:informal}] \label{lem:dL_dt_fast_compute}
If we have the below conditions,
\begin{itemize}
    \item Let $L(X)$ be defined as Definition~\ref{def:overall_loss_function}. 
    \item Let $m$ denote the number of self-attention transformer model (see Definition~\ref{def:multi_layer_self_attn}).
    \item For any $i \in [m]$, let $T_i(X)$ be defined as Definition~\ref{def:Ti}.
    \item Let $X \in \R^{n \times d}, W, W_V \in \R^{d \times d}$ be defined in Definition~\ref{def:single_layer_self_attn}.
    \item Assuming each entry of $X, W, W_V, G_i$ can be re represented using $O(\log (n))$ bits.
    \item Let $G_{i} \in \mathbb{R}^{n \times d}$ denote the gradient matrix resulting from the application of the chain rule up to the function $g_{i}$, i.e., $G_{i} = \frac{\d L(X)}{\d \mathsf{Attn}_i(T_{i-1}(X))}$. 
    \item Assume $G_i$ can be computed in $n^{1 +o(1)}$ time. 
\end{itemize}

We can show that $\frac{\d L(X)}{\d T_{i-1}(X)}$ can be approximated in $n^{1 +o(1)}$ time, with $1 / \poly (n)$ approximation error. 
Namely, our algorithm can output $\wt{g}_t$ in $n^{1 + o(1)}$ time, which satisfies
\begin{align*}
    \| \wt{g}_t - \frac{\d L(X)}{\d T_{i-1}(X)} \|_\infty \leq 1 / \poly(n)
\end{align*}
    
\end{lemma}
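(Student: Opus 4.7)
The plan is to combine the matrix-view decomposition of $\frac{\d L(X)}{\d T_{i-1}(X)}$ with the five per-term fast-computation lemmas already established, and then aggregate via the triangle inequality. Specifically, by Lemma~\ref{lem:matrix_view_Ti(X)_grad} together with Definition~\ref{def:Dk}, the gradient admits the closed form
\begin{align*}
    \frac{\d L(X)}{\d T_{i-1}(X)} = D_6 + D_7 + D_8 + D_2 + D_4,
\end{align*}
where each $D_k \in \R^{n \times d}$ is the double summation of $G_i(i_0, j_0)$ against either $B_k(X)$ or $e_{i_0} B_k(X)^\top$. Hence it suffices to produce a fast, accurate approximation of each $D_k$ separately, then sum them.

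The second step is to invoke the single-term fast computation results directly. Lemmas~\ref{lem:B6(X)_fast_compute}, \ref{lem:B7(X)_fast_compute}, \ref{lem:B8(X)_fast_compute}, \ref{lem:B2(X)_fast_compute}, and \ref{lem:B4(X)_fast_compute} respectively yield approximations $\wt{D}_6, \wt{D}_7, \wt{D}_8, \wt{D}_2, \wt{D}_4$, each computed in $n^{1+o(1)}$ time, with
\begin{align*}
    \| \wt{D}_k - D_k \|_\infty \leq \epsilon / \poly(n), \qquad k \in \{6,7,8,2,4\}.
\end{align*}
These lemmas implicitly rely on $G_i$ being available; the hypothesis that $G_i$ can be computed in $n^{1+o(1)}$ time discharges this requirement, so the preprocessing does not affect the overall running time bound.

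The third step is aggregation. I would define $\wt{g}_t := \wt{D}_6 + \wt{D}_7 + \wt{D}_8 + \wt{D}_2 + \wt{D}_4$, which can be formed in $O(nd) = n^{1+o(1)}$ additional time since each summand is an $n \times d$ matrix. By the triangle inequality,
\begin{align*}
    \Big\| \wt{g}_t - \frac{\d L(X)}{\d T_{i-1}(X)} \Big\|_\infty \leq \sum_{k \in \{6,7,8,2,4\}} \| \wt{D}_k - D_k \|_\infty \leq 5 \epsilon / \poly(n) \leq 1/\poly(n),
\end{align*}
after choosing $\epsilon$ to be a sufficiently small inverse polynomial. The total running time is the sum of five $n^{1+o(1)}$ contributions plus the $O(nd)$ aggregation, still $n^{1+o(1)}$.

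The main obstacle is essentially bookkeeping rather than a deep new idea: one must verify that the bounded-entry preconditions ($\|X\|_\infty, \|W\|_\infty, \|W_V\|_\infty, \|G_i\|_\infty \leq \poly(n)$, and $d = O(\log n)$, representable in $O(\log n)$ bits) required by each $B_k(X)$ lemma are inherited from the statement's hypotheses, so that the uniform $\epsilon/\poly(n)$ guarantee applies simultaneously to all five terms. Once that is checked, the running time and error claims follow immediately from the decomposition plus triangle inequality.
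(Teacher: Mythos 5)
Your proposal matches the paper's proof essentially step for step: the same decomposition of $\frac{\d L(X)}{\d T_{i-1}(X)}$ into $D_2, D_4, D_6, D_7, D_8$ via Lemma~\ref{lem:matrix_view_Ti(X)_grad}, the same invocation of the five per-term fast-computation lemmas, and the same aggregation with the triangle inequality followed by choosing $\epsilon = 1/\poly(n)$. It is correct and needs no changes.
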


\begin{proof}
By Lemma~\ref{lem:matrix_view_Ti(X)_grad}, we have
\begin{align*}
    \frac{\d L(X)}{\d T_{i-1}(X)} 
    = & ~ \sum_{i_0 =1}^n \sum_{j_0 = 1}^d \underbrace{G_i(i_0, j_0)}_{1 \times 1} \cdot (\underbrace{B_6(X) + B_7(X) + B_8(X)}_{n \times d} + \underbrace{e_{i_0}}_{n \times 1} \underbrace{(B_2(X) + B_4(X))^\top}_{1 \times d} ) \\
    = & ~ \sum_{i \in \{2,4,6,7,8\}} D_i
\end{align*}
where the 1st step is from Lemma~\ref{lem:matrix_view_Ti(X)_grad}, the 2nd step comes from the definition of $D_6, D_7, D_8, D_2, D_4$. 

Then, by Lemma~\ref{lem:B6(X)_fast_compute}, \ref{lem:B7(X)_fast_compute}, \ref{lem:B8(X)_fast_compute}, \ref{lem:B2(X)_fast_compute}, \ref{lem:B4(X)_fast_compute}, we have $D_6, D_7, D_8, D_2, D_4 \in \R^{n \times d}$ can be approximated in $n^{1 + o(1)}$ time, with up to $\epsilon / \poly (n)$ error. 

Namely, for $i \in \{ 2, 4, 6, 7, 8\}$, let $\wt{D}_i \in \R^{n \times d}$ denote the approximated version of $D$, we have
\begin{align*}
    \| \wt{D}_i - D \| _\infty \leq \epsilon / \poly (n)
\end{align*}

Let $\wt{g}_t = \sum_{i \in \{2,4,6,7,8\}} \wt{D}_i$. 

{\bf Proof of running time.}

The running time for $\wt{g}_t = \sum_{i \in \{2,4,6,7,8\}} \wt{D}_i$ is $5 n d$. 

Therefore, the overall running time for computing $\wt{g}_t$ is $n^{1 + o(1)}$. 

{\bf Proof of error bound.}

We have
\begin{align*}
    \| \wt{g}_t - \frac{\d L(X)}{\d T_{i-1}(X)} \|_\infty 
    = & ~ \| \sum_{i \in \{2,4,6,7,8\}} (\wt{D}_i - D_i ) \|_\infty \\
    \leq & ~ \sum_{i \in \{2,4,6,7,8\}} \| (\wt{D}_i - D_i ) \|_\infty \\
    \leq & ~ \epsilon / \poly(n)
\end{align*}
where the 1st step is from the definition of $\wt{g}_t$ and $\frac{\d L(X)}{\d T_{i-1}(X)}$, the 2nd step comes from basic algebra, the 3rd step is because of $\| \wt{D}_i - D \| _\infty \leq \epsilon / \poly (n)$.

Then, choose $\epsilon = 1 / \poly(n)$, we have
\begin{align*}
    \| \wt{g}_t - \frac{\d L(X)}{\d T_{i-1}(X)} \|_\infty \leq 1 / \poly(n)
\end{align*}

\end{proof}

\section{Fast Computation for Gradient on \texorpdfstring{$W$}{}}
\label{sec:app_W_grad_fast_compute}

In Section~\ref{sec:app_W_grad_fast_compute:notations}, we introduce some essential notations used in this section. 
In Section~\ref{sec:app_W_grad_fast_compute:s(X)_grad_on_W}, we offer the gradient of $s(X)$ on $W$, which is equivalent to the gradient of the output of the attention mechanism on $W$. 
In Section~\ref{sec:app_W_grad_fast_compute:L(X)_grad_on_W}, we illustrate the gradient of $L(X)$ on $W$. 
In Section~\ref{sec:app_W_grad_fast_compute:fast_compute}, we introduce the almost linear time algorithm for calculating the gradient of $L(X)$ on $W$, along with the error bound analysis. 

\subsection{Key concepts} \label{sec:app_W_grad_fast_compute:notations}

\begin{definition} [Definition of $\A$, \citep{as24}] \label{def:A}
Let $A_1, A_2 \in \R^{n \times d}$ be two matrices. Suppose that $\A = A_1 \otimes A_2 \in \R^{n^2 \times d^2}$. We define $\A_{j_0} \in \R^{n \times d^2}$ be a $n \times d^2$ size sub-block from $\A$. Note that there are $n$ such sub-blocks.
\end{definition}

\begin{remark}
Note that the $A_1, A_2$ matrices in Definition~\ref{def:A} is $X$ in our setting. 
Since in \citet{as24}, they consider a more general setting, where $A_1, A_2$ can be difference matrices, while in our problem, we consider self-attention. Therefore, in our paper, we have $A_1 = A_2 = X$. 
\end{remark}

\subsection{Gradient of \texorpdfstring{$s(X)$}{} on \texorpdfstring{$W$}{}}
\label{sec:app_W_grad_fast_compute:s(X)_grad_on_W}

We begin with introducing the close form of the gradient of $s(X)$.

\citet{as24} proved the close form of the gradient of $c(X)=s(X)-B$ with respect to $W$ for a constant matrix $B$. 
By chain rule, this is equivalent to the gradient of $s(X)$ with respect to $W$.

\begin{lemma}[Gradient of $s(X)$ on $W$, Lemma~B.1 in \citet{as24}]\label{lem:s(X)_grad_on_w}
If we have the below conditions,
\begin{itemize}
    \item Let $\A$ be defined as Definition~\ref{def:A}. For every $i \in [d^2]$, define $\A_{j_0,i} \in \R^n$ to be the $i$-th column for $\A_{j_0} \in \R^{n \times d^2}$.
    \item Let $f(X), h(X), s(X)$ be defined as Definition~\ref{def:f}, \ref{def:h}, \ref{def:s}. 
    \item Let $W \in \R^{d \times d}$ be defined as Definition~\ref{def:single_layer_self_attn}. Let $w \in \R^{d^2}$ denote the vector representation of $W$. 
\end{itemize}
 
Then, for each $i \in [d^2]$, we have
For each $j_0 \in [n]$, for every $i_0 \in [d]$
\begin{align*}
    \frac{ \d s(X)_{j_0,i_0} }{ \d w_i } = \langle \A_{j_0,i} \odot  f(X)_{j_0}, h(X)_{i_0} \rangle - \langle  f(X)_{j_0} , h(X)_{i_0} \rangle \cdot \langle\A_{j_0,i}, f(X)_{j_0} \rangle
\end{align*}
\end{lemma}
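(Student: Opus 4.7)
The plan is to derive the closed form directly by differentiating the explicit expression $s(X)_{j_0,i_0} = \sum_{k=1}^n f(X)_{j_0,k}\, h(X)_{k,i_0}$, treating the softmax as a quotient and applying the quotient rule. The key is to understand how $w_i$ enters $u(X) = \exp(XWX^\top)$ through the Kronecker structure that defines $\A$.

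First, I would write the scalar attention output as
\begin{align*}
    s(X)_{j_0,i_0} = \frac{\langle u(X)_{j_0}, h(X)_{*,i_0}\rangle}{\langle u(X)_{j_0}, \mathbf{1}_n\rangle},
\end{align*}
where $h(X) = X W_V$ has no dependence on $w$, so only $u(X)$ will be differentiated. For the numerator and denominator, the needed atomic derivative is $\d u(X)_{j_0,k}/\d w_i$. Since $(XWX^\top)_{j_0,k} = X_{j_0,*} W X_{k,*}^\top$ is linear in the entries of $W$, the Kronecker identity $\operatorname{vec}(a^\top W b) = (a \otimes b)^\top w$ (applied entrywise) identifies the partial derivative of $(XWX^\top)_{j_0,k}$ with respect to $w_i$ as the $i$-th coordinate of $X_{j_0,*}\otimes X_{k,*}$, which by construction of $\A=X\otimes X$ is exactly $(\A_{j_0,i})_k$. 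Hence by the chain rule $\d u(X)_{j_0,k}/\d w_i = u(X)_{j_0,k}\cdot (\A_{j_0,i})_k$, i.e.\ as a vector in $\R^n$, $\d u(X)_{j_0}/\d w_i = u(X)_{j_0} \odot \A_{j_0,i}$.

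Next, I would apply the quotient rule:
\begin{align*}
    \frac{\d s(X)_{j_0,i_0}}{\d w_i}
    = \frac{\langle u(X)_{j_0}\odot \A_{j_0,i}, h(X)_{*,i_0}\rangle}{\langle u(X)_{j_0},\mathbf{1}_n\rangle}
    - \frac{\langle u(X)_{j_0}, h(X)_{*,i_0}\rangle}{\langle u(X)_{j_0},\mathbf{1}_n\rangle}\cdot\frac{\langle u(X)_{j_0}\odot\A_{j_0,i},\mathbf{1}_n\rangle}{\langle u(X)_{j_0},\mathbf{1}_n\rangle}.
\end{align*}
Then I would absorb the normalization by $\langle u(X)_{j_0},\mathbf{1}_n\rangle = \alpha(X)_{j_0}$ into the softmax, using $f(X)_{j_0} = u(X)_{j_0}/\alpha(X)_{j_0}$, so that each ratio becomes an inner product against $f(X)_{j_0}$ (possibly Hadamard-multiplied with $\A_{j_0,i}$).

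Finally, I would simplify $\langle f(X)_{j_0}\odot\A_{j_0,i},\mathbf{1}_n\rangle = \langle f(X)_{j_0},\A_{j_0,i}\rangle$ via Fact~\ref{fac:circ_rules}, yielding the stated identity. The only subtle obstacle is reconciling the indexing convention of the Kronecker product with the definition of the sub-blocks $\A_{j_0}\in\R^{n\times d^2}$ so that $(\A_{j_0,i})_k$ indeed equals the relevant coordinate of $X_{j_0,*}\otimes X_{k,*}$; everything else is a routine quotient-rule computation.
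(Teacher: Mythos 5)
Your proposal is correct. Note that the paper does not prove this lemma at all — it is imported verbatim as Lemma~B.1 of \citet{as24} — so there is no in-paper argument to compare against; your derivation fills in exactly the standard computation behind that citation. Writing $s(X)_{j_0,i_0}$ as the ratio $\langle u(X)_{j_0}, h(X)_{*,i_0}\rangle / \langle u(X)_{j_0}, \mathbf{1}_n\rangle$, using linearity of $(XWX^\top)_{j_0,k}$ in $w$ to get $\d u(X)_{j_0}/\d w_i = u(X)_{j_0}\odot \A_{j_0,i}$, and then applying the quotient rule and Fact~\ref{fac:circ_rules} yields precisely the stated identity; the only point requiring care is the one you flag yourself, namely fixing the vectorization convention for $w$ so that the $i$-th coordinate of $X_{j_0,*}\otimes X_{k,*}$ is indeed $(\A_{j_0,i})_k$ under Definition~\ref{def:A}, which is a bookkeeping choice rather than a gap.
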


\subsection{Gradient of \texorpdfstring{$L(X)$}{} on \texorpdfstring{$W$}{}}
\label{sec:app_W_grad_fast_compute:L(X)_grad_on_W}

Differing from the $\ell_2$ loss function used in \citet{as24}, our framework supports arbitrary loss functions. 
Therefore, we use Lemma~\ref{lem:L(X)_grad_on_w} to illustrate the gradient of $L(X)$ on $W$. 

\begin{lemma} [Gradient of $L(X)$ on $W$] \label{lem:L(X)_grad_on_w}
If we have the below conditions,
\begin{itemize}
    \item Let $L(X)$ be defined as Definition~\ref{def:overall_loss_function}. 
    \item Let $W \in \R^{d \times d}, X \in \R^{n \times d}$ be Defined as Definition~\ref{def:single_layer_self_attn}. 
    \item Let $p(X)$ be defined as Definition~\ref{def:p}. 
\end{itemize}

Then, we can show that
\begin{align*}
    \frac{\d L(X)}{\d W_i} = X^\top \cdot p(X) \cdot X
\end{align*}
\end{lemma}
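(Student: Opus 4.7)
\textbf{Proof proposal for Lemma~\ref{lem:L(X)_grad_on_w}.}

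My plan is to start from the chain-rule decomposition in Part~2 of Lemma~\ref{lem:grad_components_close_form}, substitute the per-entry closed form of $\d s(X)_{j_0,i_0}/\d w_i$ from Lemma~\ref{lem:s(X)_grad_on_w}, collapse the inner sum over $i_0 \in [d]$ against $G_i(j_0,i_0)$ into the vector $q(X)_{j_0}$, and finally rewrite the outer sum over $j_0 \in [n]$ as a single matrix triple product using the Kronecker structure of $\A = X \otimes X$.

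Concretely, let $w \in \R^{d^2}$ be the vectorization of $W$. By Lemma~\ref{lem:grad_components_close_form} (Part 2), for each coordinate $i \in [d^2]$,
\begin{align*}
\frac{\d L(X)}{\d w_i} = \sum_{j_0=1}^n \sum_{i_0=1}^d G_i(j_0,i_0) \cdot \frac{\d s(X)_{j_0,i_0}}{\d w_i}.
\end{align*}
Plugging in Lemma~\ref{lem:s(X)_grad_on_w} and observing that $\A_{j_0,i}$ and $f(X)_{j_0}$ are independent of $i_0$, I can pull the $i_0$-sum past them. Since $h(X)_{i_0}$ denotes the $i_0$-th column of $h(X) \in \R^{n\times d}$, the identity
\begin{align*}
\sum_{i_0=1}^d G_i(j_0,i_0)\, h(X)_{i_0} = h(X)\, G_i(j_0,*)^\top = q(X)_{j_0}
\end{align*}
follows immediately from Definition~\ref{def:q}. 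This reduces the inner sum to
\begin{align*}
\langle \A_{j_0,i} \odot f(X)_{j_0},\, q(X)_{j_0}\rangle \;-\; \langle f(X)_{j_0},\, q(X)_{j_0}\rangle \cdot \langle \A_{j_0,i},\, f(X)_{j_0}\rangle.
\end{align*}

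Next, I apply Fact~\ref{fac:circ_rules} to rewrite the first term as $\langle \A_{j_0,i},\, f(X)_{j_0} \odot q(X)_{j_0}\rangle = \langle \A_{j_0,i},\, \diag(f(X)_{j_0}) q(X)_{j_0}\rangle$, and rewrite the scalar $\langle f(X)_{j_0}, q(X)_{j_0}\rangle$ into the matrix form $f(X)_{j_0} f(X)_{j_0}^\top q(X)_{j_0}$ inside the second inner product. Using Definition~\ref{def:p}, the two terms combine to $\langle \A_{j_0,i},\, p(X)_{j_0}\rangle$, giving
\begin{align*}
\frac{\d L(X)}{\d w_i} = \sum_{j_0=1}^n \langle \A_{j_0,i},\, p(X)_{j_0}\rangle.
\end{align*}

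Finally, I unpack the Kronecker structure: in the self-attention setting $\A = X \otimes X$, so identifying the coordinate $i \in [d^2]$ with a pair $(a,b) \in [d]\times[d]$, the $k_0$-th entry of $\A_{j_0,(a,b)} \in \R^n$ is $X_{j_0,a}\,X_{k_0,b}$. Substituting and exchanging the $j_0,k_0$ sums yields
\begin{align*}
\frac{\d L(X)}{\d W_{a,b}} = \sum_{j_0=1}^n \sum_{k_0=1}^n X_{j_0,a}\, p(X)_{j_0,k_0}\, X_{k_0,b} = (X^\top p(X) X)_{a,b},
\end{align*}
which is exactly the claim. The one step requiring some care is the third one — matching the column-vs-row convention for $h(X)_{i_0}$ with the row-vector definition of $q(X)_{j_0}$ so that the identification $q(X)_{j_0} = h(X)G_i(j_0,*)^\top$ is clean; once that bookkeeping is pinned down the remaining manipulations are routine inner-product algebra and a relabeling of the Kronecker index.
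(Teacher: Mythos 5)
Your proposal is correct and follows essentially the same route as the paper's proof: chain rule via Lemma~\ref{lem:grad_components_close_form}, substitution of Lemma~\ref{lem:s(X)_grad_on_w}, collapsing the $i_0$-sum into $q(X)_{j_0}$, invoking Definition~\ref{def:p} to form $p(X)_{j_0}$, and summing $\A_{j_0}^\top p(X)_{j_0}$ over $j_0$ into $X^\top p(X) X$. The only cosmetic difference is that you spell out the final Kronecker-index relabeling explicitly where the paper simply cites ``tensor tricks,'' which if anything makes the last step more self-contained.
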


\begin{proof}
By Lemma~\ref{lem:s(X)_grad_on_w}, we have, for each $i \in [d^2]$, we have
For each $j_0 \in [n]$, for every $i_0 \in [d]$
\begin{align} \label{eq:ds(X)ji_dwi}
    \frac{ \d s(X)_{j_0,i_0} }{ \d w_i } = \langle \underbrace{\A_{j_0,i}}_{n \times 1} \odot  \underbrace{f(X)_{j_0}}_{n \times 1}, \underbrace{h(X)_{i_0}}_{n \times 1} \rangle - \langle  \underbrace{f(X)_{j_0}}_{n \times 1} , \underbrace{h(X)_{i_0}}_{n \times 1} \rangle \cdot \langle \underbrace{\A_{j_0,i}}_{n \times 1}, \underbrace{f(X)_{j_0}}_{n \times 1} \rangle
\end{align}

By Fact~\ref{fac:circ_rules}, we have
\begin{align*}
    \langle \A_{j_0,i} \odot f(X)_{j_0}  , h(X)_{i_0} \rangle = \A_{j_0,i}^\top \diag(f(X)_{j_0}) h(X)_{i_0}
\end{align*}
and 
\begin{align*}
    \langle  f(X)_{j_0} , h(X)_{i_0} \rangle \cdot \langle f(X)_{j_0}, \A_{j_0,i} \rangle
    = \A_{j_0,i}^\top f(X)_{j_0} f(X)_{j_0}^\top h(X)_{i_0}
\end{align*}

By Eq.~\eqref{eq:ds(X)ji_dwi}, for each $i \in [d^2]$, we have
For each $j_0 \in [n]$, for every $i_0 \in [d]$, we have
\begin{align*}
    \frac{ \d s(X)_{j_0,i_0} }{ \d w_i } = \A_{j_0,i}^\top (\diag(f(X)_{j_0}) -  f(X)_{j_0} f(X)_{j_0}^\top) h(X)_{i_0}
\end{align*}

which implies, 
\begin{align} \label{eq:ds(X)ji_dwi_1}
    \frac{ \d s(X)_{j_0,i_0} }{ \d W } = \underbrace{\A_{j_0}^\top}_{d^2 \times n} \underbrace{(\diag(f(X)_{j_0}) -  f(X)_{j_0} f(X)_{j_0}^\top)}_{n \times n} \underbrace{h(X)_{i_0}}_{n \times 1}
\end{align}

By Lemma~\ref{lem:grad_components_close_form}, for $i \in [m]$, we have
\begin{align} \label{eq:dL_dwi}
    \frac{\d L(X)}{\d W_i} = \sum_{i_2=1}^n \sum_{j_2=1}^d  G_i(i_2, j_2) \cdot \frac{\d \mathsf{Attn}_i(T_{i-1}(X))_{i_2, j_2}}{\d W_i}. 
\end{align}

By the definition of $s(X)$ (Definition~\ref{def:s}), we have
\begin{align*}
    s(X) = \mathsf{Attn}_i(T_{i-1}(X))
\end{align*}

Combining Eq.~\eqref{eq:ds(X)ji_dwi_1} and Eq.~\eqref{eq:dL_dwi}, for each $i \in [m]$, we have
\begin{align} \label{eq:dL(X)_dWi}
    \frac{\d L(X)}{\d W_i} = \sum_{j_0=1}^n \sum_{i_0=1}^d  \underbrace{G_i(j_0, i_0)}_{1 \times 1} \cdot \underbrace{\A_{j_0}^\top}_{d^2 \times n} \underbrace{(\diag(f(X)_{j_0}) -  f(X)_{j_0} f(X)_{j_0}^\top)}_{n \times n} \underbrace{h(X)_{i_0}}_{n \times 1}
\end{align}

Recall that we have defined $q(X)$ in Definition~\ref{def:q},  
\begin{align} \label{eq:q(X)_def}
q(X)_{j_0} := \sum_{i_0=1}^d G_i(j_0, i_0) \cdot h(X)_{i_0}
\end{align}

Recall that $p(x)_{j_0} \in \R^n$ is define as Definition~\ref{def:p}, 
\begin{align}\label{eq:p(X)_def}
p(x)_{j_0} := ( \diag( f(x)_{j_0} ) - f(x)_{j_0} f(x)_{j_0}^\top) q(x)_{j_0}.
\end{align}

Then, we have
\begin{align*}
    & ~ \frac{\d L(X)}{\d W_i} \\
    = & ~ \sum_{j_0=1}^n \sum_{i_0=1}^d  \underbrace{G_i(j_0, i_0)}_{1 \times 1} \cdot \underbrace{\A_{j_0}^\top}_{d^2 \times n} \underbrace{(\diag(f(X)_{j_0}) -  f(X)_{j_0} f(X)_{j_0}^\top)}_{n \times n} \underbrace{h(X)_{i_0}}_{n \times 1} \\
    = & ~ \sum_{j_0=1}^n \underbrace{\A_{j_0}^\top}_{d^2 \times n} \underbrace{(\diag(f(X)_{j_0}) -  f(X)_{j_0} f(X)_{j_0}^\top)}_{n \times n} \underbrace{q(X)_{j_0}}_{n \times 1} \\
    = & ~ \sum_{j_0=1}^n \A_{j_0}^\top p_{j_0}(X) \\
    = & ~ \underbrace{X^\top}_{d \times n} \underbrace{p(X)}_{n \times n} \underbrace{X}_{n \times d}
\end{align*}
where the 1st step is from Eq.~\eqref{eq:dL(X)_dWi}, the 2nd step comes from Eq.~\eqref{eq:q(X)_def}, the 3rd step is because of Eq.~\eqref{eq:p(X)_def}, the 4th step is due to the tensor tricks.

\end{proof}

\subsection{Fast computation} \label{sec:appendix:W_grad_fast_compute:fast_compute}
\label{sec:app_W_grad_fast_compute:fast_compute}

Finally, we introduce the almost linear time algorithm and its error analysis of the gradient of $L(X)$ on $W$ in Lemma~\ref{lem:dL_dw_fast_compute}. 

\begin{lemma}
[Fast computation for $\frac{\d L(X)}{\d W_i}$] 
\label{lem:dL_dw_fast_compute}
If we have the below conditions,
\begin{itemize}
    \item Let $L(X)$ be defined as Definition~\ref{def:overall_loss_function}. 
    \item Let $m$ denote the number of self-attention transformer layers (see Definition~\ref{def:multi_layer_self_attn}).
    \item For any $i \in [m]$, let $W_i = W_{Q_i} W_{K_i}^\top$ denote the attention weight in the $i$-th transformer layer. 
\end{itemize}

We can show that $\frac{\d L(X)}{\d W_i}$ can be approximated in $n^{1 +o(1)}$ time, with $1 / \poly (n)$ approximation error. 
Namely, our algorithm can output $\wt{g}_w$ in $n^{1 + o(1)}$ time, which satisfies
\begin{align*}
    \| \wt{g}_w - \frac{\d L(X)}{\d W_i} \|_\infty \leq 1 / \poly(n)
\end{align*}
\end{lemma}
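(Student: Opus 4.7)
The plan is to combine the closed-form expression from Lemma~\ref{lem:L(X)_grad_on_w}, namely $\frac{\d L(X)}{\d W_i} = X^\top p(X) X$, with low-rank approximations of $p(X) = p_1(X) - p_2(X)$ already supplied by Lemmas~\ref{lem:low_rank_p1} and \ref{lem:low_rank_p2}. First I would check that the hypotheses of those lemmas are satisfied in our generalized setting: $f(X)$ has a rank-$k_1$ approximation $U_1 V_1^\top$ by Lemma~\ref{lem:low_rank_f}, and $q(X) = G_i h(X)^\top$ is \emph{exactly} rank $d = O(\log n)$ with entries bounded by $\poly(n)$ (using Lemma~\ref{lem:h(X)_bound} together with the bit-assumption on $G_i$). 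Plugging these into Lemmas~\ref{lem:low_rank_p1} and \ref{lem:low_rank_p2} yields $U_3, V_3 \in \R^{n \times k_3}$ and $U_4, V_4 \in \R^{n \times k_4}$ with $k_3, k_4 = n^{o(1)}$ such that the corresponding products approximate $p_1(X), p_2(X)$ entrywise to error $\epsilon/\poly(n)$.

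Next I would stack these into a single low-rank approximation of $p(X)$: let $U := [U_3,\ -U_4] \in \R^{n \times (k_3+k_4)}$ and $V := [V_3,\ V_4] \in \R^{n \times (k_3+k_4)}$, so that $\|UV^\top - p(X)\|_\infty \leq 2\epsilon/\poly(n)$. Define the output as
\begin{align*}
\wt{g}_w \;=\; \underbrace{X^\top U}_{d \times (k_3+k_4)}\; \cdot\; \underbrace{V^\top X}_{(k_3+k_4) \times d}.
\end{align*}
Performing the multiplications in this order, each of $X^\top U$ and $V^\top X$ costs $O(n \cdot d \cdot (k_3+k_4)) = n^{1+o(1)}$, and the final $d \times d$ product costs $O(d^2(k_3+k_4)) = n^{o(1)}$. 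Constructing $U, V$ takes $n^{1+o(1)}$ time by Lemmas~\ref{lem:low_rank_p1} and \ref{lem:low_rank_p2}, so the overall runtime is $n^{1+o(1)}$.

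For the error analysis, I would bound
\begin{align*}
\|\wt{g}_w - X^\top p(X) X\|_\infty
\;=\; \|X^\top (UV^\top - p(X)) X\|_\infty
\;\leq\; n^2 \cdot \|X\|_\infty^2 \cdot \|UV^\top - p(X)\|_\infty
\;\leq\; \poly(n) \cdot \epsilon/\poly(n),
\end{align*}
and choose $\epsilon = 1/\poly(n)$ small enough to absorb the $\poly(n)$ factors, matching the desired $1/\poly(n)$ bound. Combined with Lemma~\ref{lem:L(X)_grad_on_w}, this gives $\|\wt{g}_w - \frac{\d L(X)}{\d W_i}\|_\infty \leq 1/\poly(n)$.

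There is no deep obstacle here; the argument is a direct adaptation of the $W$-gradient acceleration in \citet{as24}. The only subtlety is that \citet{as24} worked with the $\ell_2$ loss, which produced a specific form of $q(X)$. In our setting, $q(X)$ is built from an arbitrary upstream gradient $G_i$, but since the low-rank machinery of Lemmas~\ref{lem:low_rank_p1} and \ref{lem:low_rank_p2} only uses that $q(X)$ admits a rank-$n^{o(1)}$ approximation with $\poly(n)$-bounded entries — a property that $G_i h(X)^\top$ trivially satisfies for any loss whose backpropagated signal has $O(\log n)$-bit entries — the extension to general $L(X)$ goes through without modification.
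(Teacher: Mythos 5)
Your proposal is correct and follows essentially the same route as the paper: both invoke Lemma~\ref{lem:L(X)_grad_on_w} to write $\frac{\d L(X)}{\d W_i} = X^\top (p_1(X)-p_2(X)) X$, apply Lemmas~\ref{lem:low_rank_p1} and \ref{lem:low_rank_p2} (fed by Lemma~\ref{lem:low_rank_f} and the exact rank-$d$ factorization $q(X)=G_i h(X)^\top$), and bound the error by $n^2\|X\|_\infty^2$ times the entrywise approximation error. Your only deviation — concatenating $[U_3,-U_4]$, $[V_3,V_4]$ into one factorization rather than computing $X^\top \wt{p}_1(X) X$ and $X^\top \wt{p}_2(X) X$ separately and subtracting — is immaterial, and your explicit check that $q(X)$ satisfies the hypotheses of the low-rank lemmas is a point the paper leaves implicit.
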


\begin{proof}
Recall by Lemma~\ref{lem:low_rank_p1}, \ref{lem:low_rank_p2}, we have defined $p_1(X), p_2(X) \in \R^{n \times n}$. 

In those Lemmas, we have $p_1(X), p_2(X)$ have low rank approximation $U_3 V_3^\top$ and $U_4 V_4^\top$, respectively. 

By the definition of $p(X)$ (Definition~\ref{def:p}), we have
\begin{align} \label{eq:p1_minus_p2}
    p(X) = p_1(X) - p_2(X)
\end{align}

Then, by Lemma~\ref{lem:L(X)_grad_on_w}, we have
\begin{align*}
    & ~ \frac{\d L(X)}{\d W_i} \\
    = &~ X^\top p(X) X \\
    = &~ X^\top (p_1(X) - p_2(X)) X
\end{align*}
where the 1st step is from Lemma~\ref{lem:L(X)_grad_on_w}, the 2nd step comes from Eq.~\eqref{eq:p1_minus_p2}. 

Let $\wt{p}_1(X), \wt{p}_2(X)$ denote the low rank approximations for $p_1(X), p_2(X)$, respectively. 

{\bf Proof of running time.}
We first compute $X^\top \wt{p}_1(X) X$ in following order
\begin{itemize}
    \item Compute $\underbrace{X^\top}_{d \times n} \underbrace{U_3}_{n \times k_3}$, which takes $n^{1+o(1)}$ time. 
    \item Compute $\underbrace{X^\top U_3}_{d \times k_3} \underbrace{V_3^\top}_{k_3 \times n}$, which takes $n^{1+o(1)}$ time. 
    \item Compute $\underbrace{X^\top U_3 V_3^\top}_{d \times n} \underbrace{X}_{n \times d}$, which takes $n^{1+o(1)}$ time.
\end{itemize}
The overall running time for $X^\top \wt{p}_1(X) X$ is $n^{1+o(1)}$. 

Similarly, the overall running time for $X^\top \wt{p}_2(X) X$ is $n^{1+o(1)}$.

Since $X^\top \wt{p}_1(X) X, X^\top \wt{p}_2(X) X \in\R^{d \times d}$, the computation time for $X^\top (\wt{p}_1(X) - \wt{p}_2(X)) X$ is $O(d^2)$.

Therefore, the overall running time for $X^\top (\wt{p}_1(X) - \wt{p}_2(X)) X$ is $n^{1+o(1)}$. 

{\bf Proof of error bound.}

We consider the error for $X^\top \wt{p}_1(X) X$ first. 

\begin{align} \label{eq:p1_err_bound}
    & ~ \| X^\top \wt{p}_1(X) X - X^\top p_1(X) X \|_\infty \notag \\
    = & ~ \| X^\top (\wt{p}_1(X) - p_1(X)) X \|_\infty \notag \\
    \leq & ~ n^2 \| X \|_\infty^2 \| \wt{p}_1(X) - p_1(X) \|_\infty  \notag \\
    \leq & ~ n^2 (\epsilon / \poly(n)) \| X \|_\infty^2 \notag \\
    \leq & ~ \epsilon / \poly(n)
\end{align}
where the 1st step is from basic algebra, the 2nd step comes from basic linear algebra, the 3rd step is because of $\|\wt{p}_1(X) - p_1(X) \|_\infty \leq \epsilon / \poly(n)$, the 4th step is due to $\| X \|_\infty \leq \poly(n)$. 

Similarly, we can have
\begin{align} \label{eq:p2_err_bound}
     \| X^\top \wt{p}_2(X) X - X^\top p_2(X) X \|_\infty \leq \epsilon / \poly(n)
\end{align}

Therefore, we have
\begin{align*}
& ~ \| X^\top \wt{p}(X) X - X^\top p(X) X \|_\infty \\
= & ~ \| X^\top \wt{p}_1(X) X - X^\top p_1(X) X + X^\top \wt{p}_2(X) X - X^\top p_2(X) X\|_\infty \\
\leq & ~ \| X^\top \wt{p}_1(X) X - X^\top p_1(X) X \|_\infty +  \| X^\top \wt{p}_2(X) X - X^\top p_2(X) X \|_\infty \\
\leq & ~ (\epsilon / \poly(n)) + (\epsilon / \poly(n)) \\
= & ~ \epsilon / \poly(n)
\end{align*}
where the 1st step is from basic algebra, the 2nd step comes from triangle inequality, the 3rd step is because of Eq.~\eqref{eq:p1_err_bound} and Eq.~\eqref{eq:p2_err_bound}, the 4th step is due to basic algebra. 

Then, we choose $\epsilon = 1 / \poly(n)$, we have
\begin{align*}
    \| \wt{g}_w - \frac{\d L(X)}{\d W_i} \|_\infty \leq 1 / \poly(n)
\end{align*}

\end{proof}

\section{Fast Computation for Gradient on \texorpdfstring{$W_V$}{}}
\label{sec:app_V_grad_fast_compute}

In Section~\ref{sec:app_V_grad_fast_compute:s(X)_grad_on_V}, we introduce the close form of the gradient of $s(X)$ on $W_V$. 
In Section~\ref{sec:app_V_grad_fast_compute:L(X)_grad_on_V}, we provide the close form of the gradient of $L(X)$ on $W_V$. 
In Section~\ref{sec:app_V_grad_fast_compute:fast_compute}, based on the close form calculated in the previous section, we introduce the almost linear time algorithm for computing the gradient of $L(X)$ on $W_V$. 

\subsection{Gradient of \texorpdfstring{$s(X)$}{} on \texorpdfstring{$W_V$}{}}
\label{sec:app_V_grad_fast_compute:s(X)_grad_on_V}

Since $s(X) = f(X) h(X)$, we begin with considering the gradient of $h(X)$ on $W_V$ in Lemma~\ref{lem:h(X)_grad_on_V}. 

\begin{lemma} [Gradient of $h(X)$ on $W_V$] \label{lem:h(X)_grad_on_V}
If we have the below conditions,
\begin{itemize}
    \item Let $h(X)$ be defined as Definition~\ref{def:h}.
    \item Let $W_V$ be defined as Definition~\ref{def:single_layer_self_attn}. 
\end{itemize}

Then, for any $i_0 \in [n], j_0 \in [d]$ and any $i_1, j_1 \in [d]$, we have
\begin{align*}
    \frac{\d h(X)_{i_0, j_0}}{\d (W_V)_{i_1, j_1}} = 
    \begin{cases}
        X_{i_0, i_1} & ~ j_0 = j_1 \\
        0 & ~ j_0 \neq j_1
    \end{cases}
\end{align*}
\end{lemma}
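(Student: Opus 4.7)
The plan is to prove this by directly expanding the matrix product $h(X) = X W_V$ entry-wise and then differentiating term-by-term. By Definition~\ref{def:h}, for any $i_0 \in [n]$ and $j_0 \in [d]$, we have the identity
\begin{align*}
    h(X)_{i_0, j_0} = \sum_{k=1}^{d} X_{i_0, k} (W_V)_{k, j_0}.
\end{align*}
The only dependence on $W_V$ is through the second factor $(W_V)_{k, j_0}$, and the variable $X_{i_0, k}$ is treated as a constant with respect to $W_V$.

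Next, I would differentiate both sides with respect to $(W_V)_{i_1, j_1}$. In the sum above, the entry $(W_V)_{i_1, j_1}$ appears as the summand indexed by $k = i_1$ only when $j_0 = j_1$; otherwise every summand involves a different column $j_0 \neq j_1$ of $W_V$. This gives the two cases of the claim: if $j_0 = j_1$, then only the $k = i_1$ term survives with coefficient $X_{i_0, i_1}$, and if $j_0 \neq j_1$, every partial derivative $\frac{\d (W_V)_{k, j_0}}{\d (W_V)_{i_1, j_1}}$ vanishes.

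Since this is a one-line computation using only linearity of matrix multiplication and the elementary identity $\frac{\d (W_V)_{k, j_0}}{\d (W_V)_{i_1, j_1}} = \mathbf{1}\{k = i_1, j_0 = j_1\}$, there is no real obstacle; the statement is essentially a restatement of the chain rule applied to a linear map. The only care needed is to keep the four indices $(i_0, j_0, i_1, j_1)$ straight, since $i_0, i_1$ range over different factors ($n$ and $d$ respectively). I would present the result as an immediate consequence of the expansion above, with no further technical work required.
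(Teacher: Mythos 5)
Your proposal is correct and matches the paper's own proof: the paper likewise writes $h(X)_{i_0,j_0} = X_{i_0,*}^\top (W_V)_{*,j_0}$, observes the entry depends only on column $j_0$ of $W_V$ (giving zero when $j_0 \neq j_1$), and reads off the coefficient $X_{i_0,i_1}$ when $j_0 = j_1$. Your explicit summation over $k$ is just a slightly more spelled-out version of the same one-line computation.
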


\begin{proof}
Since $h_{i_0, j_0}$ satisfies
\begin{align*}
    h_{i_0, j_0} = X_{i_0, *}^\top (W_V)_{*, j_0},
\end{align*}
we have $h_{i_0, j_0}$ only depends on $(W_V)_{*, j_0}$.

Hence, we have, for $j_0 \neq j_1$,
\begin{align*}
    \frac{\d h(X)_{i_0, j_0}}{\d (W_V)_{i_1, j_1}} = 0
\end{align*}

For $j_0 = j_1$ case, we have
\begin{align*}
    \frac{\d h(X)_{i_0, j_0}}{\d (W_V)_{i_1, j_0}} = X_{i_0, i_1}
\end{align*}

\end{proof}

Combining the result in the previous Lemma and the chain rule, we can have the gradient of $s(X)$ on $W_V$ in Lemma~\ref{lem:s(X)_grad_on_V}. 

\begin{lemma} [Gradient of $s(X)$ on $W_V$] \label{lem:s(X)_grad_on_V}
If we have the below conditions,
\begin{itemize}
    \item Let $s(X)$ be defined as Definition~\ref{def:s}.
    \item Let $W_V$ be defined as Definition~\ref{def:single_layer_self_attn}. 
\end{itemize}

Then, for any $i_2 \in [n], j_2 \in [d]$ and any $i_1, j_1 \in [d]$, we have
\begin{itemize}
    \item {\bf Part 1.}
    \begin{align*}
        \frac{\d s(X)_{i_2, j_2}}{\d (W_V)_{i_1, j_1}} = 
        \begin{cases}
            f(X)_{i_2, *}^\top X_{*, i_1} & ~ j_2 = j_1 \\
            0 & ~ j_2 \neq j_1
        \end{cases}
    \end{align*}
    \item {\bf Part 2.}
    \begin{align*}
        \underbrace{\frac{\d s(X)_{i_2, j_2}}{\d W_V}}_{d \times d} =  \underbrace{X^\top}_{d \times n} \underbrace{f(X)_{i_2, *}}_{n \times 1} \underbrace{e_{j_2}^\top}_{1 \times d}
    \end{align*}
\end{itemize}
\end{lemma}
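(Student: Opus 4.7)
The plan is to reduce the claim to a direct application of Lemma~\ref{lem:h(X)_grad_on_V} via the chain rule, using the key observation that $f(X)$ depends only on $W = W_Q W_K^\top$ and is therefore a constant with respect to $W_V$. Starting from the definition $s(X) = f(X) h(X)$ in Definition~\ref{def:s}, I would first expand entry-wise:
\begin{align*}
s(X)_{i_2, j_2} = \sum_{k=1}^{n} f(X)_{i_2, k} \cdot h(X)_{k, j_2}.
\end{align*}
Since $\frac{\d f(X)_{i_2, k}}{\d (W_V)_{i_1, j_1}} = 0$ for all indices, differentiation passes through $f(X)_{i_2,k}$ cleanly, giving
\begin{align*}
\frac{\d s(X)_{i_2, j_2}}{\d (W_V)_{i_1, j_1}} = \sum_{k=1}^n f(X)_{i_2, k} \cdot \frac{\d h(X)_{k, j_2}}{\d (W_V)_{i_1, j_1}}.
\end{align*}

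For Part 1, I would then substitute the two-case formula from Lemma~\ref{lem:h(X)_grad_on_V}. When $j_2 \neq j_1$, every term in the sum vanishes, yielding $0$. When $j_2 = j_1$, each summand contributes $f(X)_{i_2, k} \cdot X_{k, i_1}$, so the sum collapses to the inner product $\sum_{k=1}^n f(X)_{i_2, k} \cdot X_{k, i_1} = f(X)_{i_2, *}^\top X_{*, i_1}$, which matches the stated expression.

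For Part 2, I would assemble the $(i_1, j_1)$-entry results from Part 1 into a $d \times d$ matrix. Because all columns with index $j_1 \neq j_2$ are zero and the $j_2$-th column equals $(f(X)_{i_2, *}^\top X_{*, i_1})_{i_1 \in [d]} = X^\top f(X)_{i_2, *}$, the full matrix factors as the outer product $(X^\top f(X)_{i_2, *}) \cdot e_{j_2}^\top$, which is exactly the claimed formula. I would verify the dimensions: $X^\top f(X)_{i_2,*} \in \R^{d \times 1}$ and $e_{j_2}^\top \in \R^{1 \times d}$, producing a $d \times d$ matrix as required.

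There is no real obstacle here; the lemma is essentially an unpacking of the chain rule combined with Lemma~\ref{lem:h(X)_grad_on_V}. The only place to be careful is bookkeeping of the column index $j_2$ versus the differentiating index $j_1$, and ensuring the orientation of the outer product in Part 2 places the non-zero column in the $j_2$ slot rather than the $j_2$ row.
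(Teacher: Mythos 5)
Your proposal is correct and follows essentially the same route as the paper: differentiating $s(X)_{i_2,j_2}=\sum_k f(X)_{i_2,k}h(X)_{k,j_2}$ via the chain rule with $f(X)$ constant in $W_V$, invoking Lemma~\ref{lem:h(X)_grad_on_V}, and then assembling the single nonzero column into the outer product for Part 2.
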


\begin{proof}
{\bf Proof of Part 1.}

By Definition~\ref{def:s}, we have
\begin{align} \label{eq:s(X)_i2_j2}
    s(X)_{i_2,j_2} := f(X)_{i_2, *}^\top h(X)_{*, j_2}
\end{align}

Therefore, $s(X)_{i_2,j_2}$ is only depends on $h(X)_{*, j_2}$, which further means $s(X)_{i_2,j_2}$ is only depends on $(W_V)_{*, j_2}$. 

Hence, for $j_1 \neq j_2$, we have
\begin{align*}
    \frac{\d s(X)_{i_2, j_2}}{\d (W_V)_{i_1, j_2}} = 0
\end{align*}

We consider $j_1 = j_2$ case.

By, Eq.~\eqref{eq:s(X)_i2_j2}, we can derive that
\begin{align} \label{eq:ds(X)_dh(X)}
    \frac{\d s(X)_{i_2, j_2}}{\d h(X)_{i_3, j_2}} = f(X)_{i_2, i_3}
\end{align}

By chain rule, we have
\begin{align} \label{eq:ds(X)_dV}
    & ~ \frac{\d s(X)_{i_2, j_2}}{\d (W_V)_{i_1, j_2}} \notag \\
    = & ~ \sum_{i_3=1}^d \frac{\d s(X)_{i_2, j_2}}{\d h(X)_{i_3, j_2}} \frac{\d h(X)_{i_3, j_2}}{\d (W_V)_{i_1, j_2}} \notag \\
    = & ~ \sum_{i_3=1}^d f(X)_{i_2, i_3} \frac{\d h(X)_{i_3, j_2}}{\d (W_V)_{i_1, j_2}} \notag \\
    = & ~ \sum_{i_3=1}^d f(X)_{i_2, i_3} X_{i_3, i_1} \notag \\
    = & ~ f(X)_{i_2, *}^\top X_{*, i_1}
\end{align}
where the 1st step is from chain rule, the 2nd step comes from Eq.~\eqref{eq:ds(X)_dh(X)}, the 3rd step is because of Lemma~\ref{lem:h(X)_grad_on_V}, the 4th step is due to basic linear algebra. 

{\bf Proof of Part 2.}

By Eq~\eqref{eq:ds(X)_dV}, we have
\begin{align*}
     \underbrace{\frac{\d s(X)_{i_2, j_2}}{\d (W_V)_{*, j_2}}}_{d \times 1} =  \underbrace{X^\top}_{d \times n} \underbrace{f(X)_{i_2, *}}_{n \times 1}
\end{align*}

which implies
\begin{align*}
    \underbrace{\frac{\d s(X)_{i_2, j_2}}{\d W_V}}_{d \times d} =  \underbrace{X^\top}_{d \times n} \underbrace{f(X)_{i_2, *}}_{n \times 1} \underbrace{e_{j_2}^\top}_{1 \times d}
\end{align*}

\end{proof}

\subsection{Gradient of \texorpdfstring{$L(X)$}{} on \texorpdfstring{$W_V$}{}}
\label{sec:app_V_grad_fast_compute:L(X)_grad_on_V}

Since we have already got the close form of the gradient of $s(X)$ on $W_V$, we can easily extend it and get the close form of the gradient of $L(X)$ on $W_V$ in Lemma~\ref{lem:L(X)_grad_on_V}. 

\begin{lemma} [Gradient of $L(X)$ on $W_V$] \label{lem:L(X)_grad_on_V}
If we have the below conditions,
\begin{itemize}
    \item Let $L(X)$ be defined as Definition~\ref{def:overall_loss_function}. 
    \item Let $W_V$ be defined as Definition~\ref{def:single_layer_self_attn}. 
\end{itemize}

Then, we can show that
\begin{align*}
    \underbrace{\frac{\d L(X)}{\d W_{V_i}}}_{d \times d} = \underbrace{X^\top}_{d \times n} \underbrace{f(X)}_{n \times n} \underbrace{G_i}_{n \times d} 
\end{align*}
\end{lemma}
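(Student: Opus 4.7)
The plan is to follow the same template as the proofs for $\frac{\d L(X)}{\d T_{i-1}(X)}$ and $\frac{\d L(X)}{\d W_i}$: start from the chain-rule expansion in Lemma~\ref{lem:grad_components_close_form}, substitute the single-entry derivative derived in Lemma~\ref{lem:s(X)_grad_on_V}, and then collapse the resulting double sum into one matrix product. Concretely, I would first identify $\mathsf{Attn}_i(T_{i-1}(X))$ with $s(X)$ under the substitution $X \leftarrow T_{i-1}(X)$, which immediately gives
\begin{align*}
\frac{\d L(X)}{\d W_{V_i}} = \sum_{i_2=1}^n \sum_{j_2=1}^d G_i(i_2,j_2) \cdot \frac{\d s(X)_{i_2,j_2}}{\d W_{V_i}}.
\end{align*}

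Next I would plug in Part~2 of Lemma~\ref{lem:s(X)_grad_on_V}, which states $\frac{\d s(X)_{i_2,j_2}}{\d W_V} = X^\top f(X)_{i_2,*} e_{j_2}^\top$. Since $X^\top$ does not depend on the summation indices, it can be pulled out of the double sum, leaving
\begin{align*}
\frac{\d L(X)}{\d W_{V_i}} = X^\top \sum_{i_2=1}^n f(X)_{i_2,*} \Bigl( \sum_{j_2=1}^d G_i(i_2,j_2)\, e_{j_2}^\top \Bigr).
\end{align*}
The inner sum over $j_2$ is exactly the $i_2$-th row of $G_i$ written as a $1 \times d$ row vector, so the bracketed quantity is $G_i(i_2,*)^\top$ in the paper's convention.

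Finally I would recognize the remaining sum over $i_2$ as a sum of rank-one outer products, one for each index $i_2$, which assembles into the single matrix product of $f(X)$ and $G_i$. That is, $\sum_{i_2=1}^n f(X)_{i_2,*}\, G_i(i_2,*)^\top$ equals the product that contracts the shared index $i_2$, yielding (up to the placement of transpose dictated by the row/column convention from the ``Simplified notations'' block) the $n\times d$ matrix $f(X)\,G_i$. Pre-multiplying by $X^\top$ produces the claimed $d \times d$ formula $X^\top f(X)\, G_i$.

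The routine part is the chain-rule expansion and the substitution; the only real bookkeeping obstacle is the row-versus-column convention, where the paper writes $f(X)_{i_2,*}$ as an $n\times 1$ object even though it denotes a row. I would dedicate a short verification that $\sum_{i_2} f(X)_{i_2,*}\,G_i(i_2,*)^\top$ indeed assembles into $f(X)\, G_i$ by computing an arbitrary $(k,l)$ entry, namely $\sum_{i_2} f(X)_{i_2,k}\, G_i(i_2,l)$, and matching it to the corresponding entry of the claimed product. This entrywise check is the only place where a sign- or transpose-style mistake could creep in, and handling it carefully closes the proof.
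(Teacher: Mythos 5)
Your proposal follows the paper's proof essentially verbatim: the paper likewise combines the chain-rule expansion of Lemma~\ref{lem:grad_components_close_form} with Part~2 of Lemma~\ref{lem:s(X)_grad_on_V} (after identifying $\mathsf{Attn}_i(T_{i-1}(X))$ with $s(X)$), pulls $X^\top$ out of the double sum, identifies the inner sum over $j_2$ with $G_i(i_2,*)^\top$, and collapses $\sum_{i_2} f(X)_{i_2,*}\,G_i(i_2,*)^\top$ into the stated matrix product by ``basic linear algebra.'' The entrywise $(k,l)$ verification you flag is exactly the step the paper leaves implicit, so your write-up is, if anything, slightly more careful about the row/column bookkeeping at that final contraction.
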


\begin{proof}
We slightly abuse the notation, using $W_V$ to represent $V_i$ in Lemma~\ref{lem:h(X)_grad_on_V}, \ref{lem:s(X)_grad_on_V}. 

By Lemma~\ref{lem:s(X)_grad_on_V}, we have
\begin{align} \label{eq:ds(X)_i2j2_dV}
    \underbrace{\frac{\d s(X)_{i_2, j_2}}{\d W_V}}_{d \times d} =  \underbrace{X^\top}_{d \times n} \underbrace{f(X)_{i_2, *}}_{n \times 1} \underbrace{e_{j_2}^\top}_{1 \times d}
\end{align}

By Lemma~\ref{lem:grad_components_close_form}, we have
\begin{align} \label{eq:dL(X)_dVi}
    \frac{\d L(X)}{\d W_{V_i}} = \sum_{i_2=1}^n \sum_{j_2=1}^d  G_i(i_2, j_2) \cdot \frac{\d \mathsf{Attn}_i(T_{i-1}(X))_{i_2, j_2}}{\d W_{V_i}}. 
\end{align}

By Definition~\ref{def:s} and Definition~\ref{def:single_layer_self_attn}, we have
\begin{align*}
    s(X) = \mathsf{Attn}_i(T_{i-1}(X))
\end{align*}

Therefore, combining Eq.~\eqref{eq:ds(X)_i2j2_dV} and Eq.~\eqref{eq:dL(X)_dVi}, we have
\begin{align*}
    & ~ \frac{\d L(X)}{\d W_{V_i}} \\
    = & ~ \sum_{i_2=1}^n \sum_{j_2=1}^d  \underbrace{G_i(i_2, j_2)}_{1 \times 1} \underbrace{X^\top}_{d \times n} \underbrace{f(X)_{i_2, *}}_{n \times 1} \underbrace{e_{j_2}^\top}_{1 \times d} \\
    = & ~ \sum_{i_2=1}^n \underbrace{X^\top}_{d \times n} \underbrace{f(X)_{i_2, *}}_{n \times 1} \sum_{j_2=1}^d  \underbrace{G_i(i_2, j_2)}_{1 \times 1} \underbrace{e_{j_2}^\top}_{1 \times d} \\
    = & ~ \sum_{i_2=1}^n \underbrace{X^\top}_{d \times n} \underbrace{f(X)_{i_2, *}}_{n \times 1} \underbrace{G_i(i_2, *)^\top}_{1 \times d} \\
    = & ~ \underbrace{X^\top}_{d \times n} \underbrace{f(X)}_{n \times n} \underbrace{G_i}_{n \times d} 
\end{align*}
where the 1st step is from Eq.~\eqref{eq:ds(X)_i2j2_dV} and Eq.~\eqref{eq:dL(X)_dVi}, the 2nd step comes from basic algebra, the 3rd step is because of basic linear algebra, the 4th step is due to basic linear algebra.

\end{proof}

\subsection{Fast computation} 
\label{sec:app_V_grad_fast_compute:fast_compute}

Finally, we can introduce our almost linear time algorithm for computing the $L(X)$ gradient on $W_V$. 

\begin{lemma}
[Fast computation for $\frac{\d L(X)}{\d (W_V)_i}$] 
\label{lem:dL_dv_fast_compute}
If we have the below conditions,
\begin{itemize}
    \item Let $L(X)$ be defined as Definition~\ref{def:overall_loss_function}. 
    \item Let $m$ denote the number of self-attention transformer layers (see Definition~\ref{def:multi_layer_self_attn}).
    \item For any $i \in [m]$, let $W_{V_i} \in \R^{d \times d}$ denote the attention weight in the $i$-th transformer layer. 
\end{itemize}

We can show that $\frac{\d L(X)}{\d W_{V_i}}$ can be approximated in $n^{1 +o(1)}$ time, with $1 / \poly (n)$ approximation error. 
Namely, our algorithm can output $\wt{g}_v$ in $n^{1 + o(1)}$ time, which satisfies
\begin{align*}
    \| \wt{g}_v - \frac{\d L(X)}{\d W_{V_i}} \|_\infty \leq 1 / \poly(n)
\end{align*}

\end{lemma}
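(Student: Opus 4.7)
The plan is to leverage the closed form $\frac{\d L(X)}{\d W_{V_i}} = X^\top f(X) G_i$ established in Lemma~\ref{lem:L(X)_grad_on_V}, and then replace the dense $n \times n$ factor $f(X)$ with its low-rank surrogate $U_1 V_1^\top$ from Lemma~\ref{lem:low_rank_f}, choosing the multiplication order to avoid ever materializing an $n \times n$ product. Concretely, I would define $\wt{g}_v := (X^\top U_1)(V_1^\top G_i)$ as the algorithm's output, noting that $X^\top U_1 \in \R^{d \times k_1}$ and $V_1^\top G_i \in \R^{k_1 \times d}$ with $k_1 = n^{o(1)}$.

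For the running time, I would first invoke Lemma~\ref{lem:low_rank_f} to construct $U_1, V_1 \in \R^{n \times k_1}$ in $n^{1+o(1)}$ time. Then the three multiplications $X^\top U_1$, $V_1^\top G_i$, and $(X^\top U_1)(V_1^\top G_i)$ cost $O(n d k_1)$, $O(n d k_1)$, and $O(d^2 k_1)$ respectively, each of which is $n^{1+o(1)}$ under the assumptions $d = O(\log n)$ and $k_1 = n^{o(1)}$. Aggregating, the total running time is $n^{1+o(1)}$.

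For the error bound, I would write
\begin{align*}
\| \wt{g}_v - \tfrac{\d L(X)}{\d W_{V_i}} \|_\infty
= \| X^\top (U_1 V_1^\top - f(X)) G_i \|_\infty
\leq n^2 \| X \|_\infty \, \| U_1 V_1^\top - f(X) \|_\infty \, \| G_i \|_\infty,
\end{align*}
invoke the assumption that entries of $X$ and $G_i$ are bounded by $\poly(n)$, and apply the $\epsilon/\poly(n)$ bound from Lemma~\ref{lem:low_rank_f}. Choosing $\epsilon = 1/\poly(n)$ with a sufficiently large polynomial absorbs the $n^2 \cdot \poly(n) \cdot \poly(n)$ factors to yield $\|\wt{g}_v - \frac{\d L(X)}{\d W_{V_i}}\|_\infty \leq 1/\poly(n)$.

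There is no real obstacle here: once the closed form is in hand from Lemma~\ref{lem:L(X)_grad_on_V}, this lemma is essentially a reprise of the low-rank-plus-associativity argument used for Lemma~\ref{lem:B8(X)_fast_compute} (whose computation $f(X) G_i W_V^\top$ is structurally identical to $X^\top f(X) G_i$ up to transposition and the identity of the small factor). The only point requiring care is to keep the approximation entirely on the $f(X)$ side rather than attempting to approximate $s(X)$ directly, so that the error analysis only accumulates a single $\|U_1 V_1^\top - f(X)\|_\infty$ factor scaled by the $\poly(n)$ bounds on $X$ and $G_i$.
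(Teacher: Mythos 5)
Your proposal is correct and follows essentially the same route as the paper: take the closed form $X^\top f(X) G_i$ from Lemma~\ref{lem:L(X)_grad_on_V}, substitute the low-rank factors $U_1 V_1^\top$ from Lemma~\ref{lem:low_rank_f}, multiply in an order that never forms an $n \times n$ matrix, and bound the error by $n^2 \|X\|_\infty \|U_1V_1^\top - f(X)\|_\infty \|G_i\|_\infty$. The only (immaterial) difference is your grouping $(X^\top U_1)(V_1^\top G_i)$ versus the paper's $((X^\top U_1)V_1^\top)G_i$, both of which run in $n^{1+o(1)}$ time and indeed your grouping matches the pseudocode in Algorithm~\ref{alg:multi_layer_grad_descent}.
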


\begin{proof}

Recall in Lemma~\ref{lem:low_rank_f}, $U_1 V_1^\top$ is the low rank approximation of $f(X)$. 

Let $\wt{f}(X) := U_1 V_1^\top$ denote the low rank approximation of $f(X)$. 

Recall in Lemma~\ref{lem:L(X)_grad_on_V}, we have
\begin{align*}
    \underbrace{\frac{\d L(X)}{\d W_{V_i}}}_{d \times d} = \underbrace{X^\top}_{d \times n} \underbrace{f(X)}_{n \times n} \underbrace{G_i}_{n \times d} 
\end{align*}

{\bf Proof of running time.}

We compute $X^\top \wt{f}(X) G_i$ in following order
\begin{itemize}
    \item Compute $\underbrace{X^\top}_{d \times n} \cdot \underbrace{U_1}_{n \times k_1}$, which takes $n^{1 + o(1)}$ time.
    \item Compute $\underbrace{X^\top \cdot U_1}_{d \times k_1} \cdot \underbrace{V_1^\top}_{k_1 \times n}$, which takes $n^{1 + o(1)}$ time.
    \item Compute $\underbrace{X^\top \cdot U_1 \cdot V_1^\top}_{d \times n} \cdot \underbrace{G_i}_{n\times d}$, which takes $d^2 \cdot n$ time.
\end{itemize}

The overall running time is $n^{1+o(1)}$. 

{\bf Proof of error bound.}

We have
\begin{align*}
    & ~ \| X^\top \cdot f(X) \cdot G_i - X^\top \cdot \wt{f}(X) \cdot G_i \|_\infty \\
    = & ~ \| X^\top \cdot (f(X) - \wt{f}(X)) \cdot G_i \|_\infty \\
    \leq & ~ n^2 \| X \|_\infty  \| f(X) - \wt{f}(X) \|_\infty  \| G_i \|_\infty \\
    \leq & ~ n^2 (\epsilon / \poly(n)) \| X \|_\infty \| G_i \|_\infty \\
    \leq & ~ \epsilon / \poly(n)
\end{align*}
where the 1st step is from basic algebra, the 2nd step comes from basic linear algebra, the 3rd step is because of $\| f(X) - \wt{f}(X) \|_\infty \leq \epsilon / \poly(n)$, the 4th step is due to $\| X \|_\infty \leq \poly(n)$ and $\| G_i \|_\infty \leq \poly(n)$. 

Let $\wt{g}_v = X^\top \cdot \wt{f}(X) \cdot G_i$. 

We choose $\epsilon = 1 / \poly(n)$. Then, we have
\begin{align*}
    \| \wt{g}_v - \frac{\d L(X)}{\d W_{V_i}} \|_\infty \leq 1 / \poly(n)
\end{align*}

\end{proof}

\section{Gradient Approximation for Entire Model}\label{sec:app_entire_model_grad}

In Section~\ref{sec:app_entire_model_grad:Gi_compute_time}, we introduce the close form of $G_i$ and argue that $G_i$ can be computed in almost linear time $n^{1+o(1)}$. 
In Section~\ref{sec:app_entire_model_grad:single_layer}, we provide the almost linear time algorithm for gradient computing on a single-layer transformer.
In Section~\ref{sec:app_entire_model_grad:multi_layer}, with the help of math induction, we introduce the almost linear time algorithm for computing the gradient of the multi-layer transformer, along with its approximation error. 

\subsection{Computation time for \texorpdfstring{$G_i$}{}}
\label{sec:app_entire_model_grad:Gi_compute_time}

Here we consider $g_i$ in Definition~\ref{def:multi_layer_self_attn} as a linear layer with an arbitrary non-linear activation $\phi$. 
Since $g_i$ can be viewed as a composition of an MLP and an activation function, we begin with analyzing the $T_i$ gradient on $\mathsf{Attn_i}$. 

\begin{lemma} [Gradient of $T_i$ on $\mathsf{Attn}_i$
] \label{lem:Ti_grad_on_attn}
If we have the below conditions,
\begin{itemize}
    \item Let $T_i(X)$ be defined as Definition~\ref{def:Ti}. 
    \item Assuming for any $Z \in \R^{n \times d}$, we have $g_i(Z) \in \R^{n \times d}$, and $g_i(Z) = \phi(Z W_{g})$, where $W_{g} \in \R^{d \times d}$ and $\phi: \R \rightarrow \R$ denotes any element-wise activation function. Let $\phi'$ denote the derivative of $\phi$.
    \item We simplify the notation, using $T_i$ and $\mathsf{Attn}_i$ to represent $T_i(X)$ and $\mathsf{Attn}_i(T_{i-1}(X))$, respectively.
    \item For any matrix $Z \in \R^{n \times d}$, we use $Z(i, j)$ to denote the $(i, j)$-th entry of $Z$. 
\end{itemize}

Then, we can show that, for any $i_4, i_5 \in [n], j_4, j_5 \in [d]$, 
\begin{itemize}
    \item {\bf Part 1.}
    \begin{align*}
        \frac{\d T_i(i_4, j_4)}{\d \mathsf{Attn}_i(i_5, j_5)} = 
        \begin{cases}
            \underbrace{\phi'(\mathsf{Attn}_i(i_4, *)^\top W_g(*, j_4))}_{1 \times 1} \underbrace{W_g(j_5, j_4)}_{1 \times 1} & ~ i_4 = i_5 \\
            0 & ~ i_4 \neq i_5
        \end{cases}
    \end{align*}
    \item {\bf Part 2.}
    \begin{align*}
        \underbrace{\frac{\d T_i(i_4, j_4)}{\d \mathsf{Attn}_i}}_{n \times d} = \underbrace{\phi'(\mathsf{Attn}_i(i_4, *)^\top W_g(*, j_4))}_{1 \times 1} \underbrace{e_{i_4}}_{n \times 1} \underbrace{W_g(*, j_4)^\top}_{1 \times d}  
    \end{align*}
\end{itemize}
\end{lemma}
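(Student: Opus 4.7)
The plan is to unfold the definition of $g_i$ explicitly at the entry level and then apply the single-variable chain rule for $\phi$, case-splitting on whether the row index of the differentiation variable matches the row index of the target entry.

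For \textbf{Part 1}, I would first write out $T_i(i_4, j_4)$ in closed form using the hypothesis $g_i(Z) = \phi(ZW_g)$. Since the $(i_4, j_4)$ entry of $\mathsf{Attn}_i \cdot W_g$ is the inner product of the $i_4$-th row of $\mathsf{Attn}_i$ with the $j_4$-th column of $W_g$, we get
\begin{align*}
T_i(i_4, j_4) = \phi\bigl(\mathsf{Attn}_i(i_4, *)^\top W_g(*, j_4)\bigr) = \phi\Bigl(\sum_{k=1}^d \mathsf{Attn}_i(i_4, k) \cdot W_g(k, j_4)\Bigr).
\end{align*}
Since this expression only involves entries of $\mathsf{Attn}_i$ in row $i_4$, the partial derivative with respect to $\mathsf{Attn}_i(i_5, j_5)$ vanishes whenever $i_4 \neq i_5$. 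For the case $i_4 = i_5$, the chain rule yields
\begin{align*}
\frac{\d T_i(i_4, j_4)}{\d \mathsf{Attn}_i(i_4, j_5)} = \phi'\bigl(\mathsf{Attn}_i(i_4, *)^\top W_g(*, j_4)\bigr) \cdot \frac{\d}{\d \mathsf{Attn}_i(i_4, j_5)}\sum_{k=1}^d \mathsf{Attn}_i(i_4, k) W_g(k, j_4),
\end{align*}
and the inner sum contributes $W_g(j_5, j_4)$ (the $k = j_5$ term). This gives the two-case expression stated in Part 1.

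For \textbf{Part 2}, I would assemble the $n \times d$ gradient matrix $\frac{\d T_i(i_4, j_4)}{\d \mathsf{Attn}_i}$ from the entrywise formula in Part 1. By the vanishing property for $i_4 \neq i_5$, all rows except the $i_4$-th are zero, which accounts for the factor $e_{i_4} \in \R^n$ (placing content only in row $i_4$). The nonzero row, indexed by $j_5 \in [d]$, reads $\phi'(\mathsf{Attn}_i(i_4,*)^\top W_g(*, j_4)) \cdot W_g(j_5, j_4)$, which is exactly the scalar $\phi'(\mathsf{Attn}_i(i_4, *)^\top W_g(*, j_4))$ times the row vector $W_g(*, j_4)^\top$. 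Combining these two observations gives the outer-product expression
\begin{align*}
\underbrace{\frac{\d T_i(i_4, j_4)}{\d \mathsf{Attn}_i}}_{n \times d} = \underbrace{\phi'\bigl(\mathsf{Attn}_i(i_4, *)^\top W_g(*, j_4)\bigr)}_{1 \times 1} \cdot \underbrace{e_{i_4}}_{n \times 1} \cdot \underbrace{W_g(*, j_4)^\top}_{1 \times d}.
\end{align*}

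This argument is essentially routine calculus with careful bookkeeping of indices; there is no substantive obstacle, since the element-wise nature of $\phi$ decouples the rows and the linear layer $\cdot W_g$ is handled by a direct index calculation. The only point requiring minor care is keeping track of which index belongs to the row versus column of $W_g$ when the derivative picks out $W_g(j_5, j_4)$, which is what produces the transpose in the final outer-product form.
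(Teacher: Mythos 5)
Your proposal is correct and follows essentially the same route as the paper's proof: unfold $T_i(i_4,j_4)=\phi(\mathsf{Attn}_i(i_4,*)^\top W_g(*,j_4))$, observe it depends only on row $i_4$ of $\mathsf{Attn}_i$ (giving the zero case), apply the chain rule to obtain the $\phi'(\cdot)\,W_g(j_5,j_4)$ factor, and assemble the rows into the outer product $e_{i_4}W_g(*,j_4)^\top$ for Part 2. No gaps.
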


\begin{proof}
{\bf Proof of Part 1.}

By the definition of $T_i$ (Definition~\ref{def:Ti}), for $i_4 \in [d], j_4 \in [n]$, we have
\begin{align*}
    T_i(i_4, j_4) = \phi(\mathsf{Attn}_i(i_4, *)^\top W_g(*, j_4))
\end{align*}

Therefore, for any $i_5 \neq i_4$, we have
\begin{align*}
    \frac{\d T_i(i_4, j_4)}{\d \mathsf{Attn}_i(i_5, j_5)} = 0
\end{align*}

Then, we consider $i_4 = i_5$ case.

By basic calculus, we have
\begin{align*}
\frac{\d T_i(i_4, j_4)}{\d \mathsf{Attn}_i(i_4, j_5)} = \underbrace{\phi'(\mathsf{Attn}_i(i_4, *)^\top W_g(*, j_4))}_{1 \times 1} \underbrace{W_g(j_5, j_4)}_{1 \times 1}
\end{align*}

Combining two equations mentioned above, we have the result for {\bf Part 1}.

{\bf Proof of Part 2.}

By result of {\bf Part 1}, for $i_5 = i_4$, we have
\begin{align*}
\frac{\d T_i(i_4, j_4)}{\d \mathsf{Attn}_i(i_4, j_5)} = \underbrace{\phi'(\mathsf{Attn}_i(i_4, *)^\top W_g(*, j_4))}_{1 \times 1} \underbrace{W_g(j_5, j_4)}_{1 \times 1}
\end{align*}

which implies
\begin{align*}
    \frac{\d T_i(i_4, j_4)}{\d \mathsf{Attn}_i(i_4, *)} = \underbrace{\phi'(\mathsf{Attn}_i(i_4, *)^\top W_g(*, j_4))}_{1 \times 1} \underbrace{W_g(*, j_4)}_{d \times 1}
\end{align*}

By result of {\bf Part 1}, for $i_5 \neq i_4$, we have
\begin{align*}
   \frac{\d T_i(i_4, j_4)}{\d \mathsf{Attn}_i(i_5, *)} = 0 
\end{align*}

By basic linear algebra, combining the two equations mentioned above, we have
\begin{align*}
    \frac{\d T_i(i_4, j_4)}{\d \mathsf{Attn}_i} = \underbrace{\phi'(\mathsf{Attn}_i(i_4, *)^\top W_g(*, j_4))}_{1 \times 1} \underbrace{e_{i_4}}_{n \times 1} \underbrace{W_g(*, j_4)^\top}_{1 \times d} 
\end{align*}

\end{proof}

Then, we can argue that the computation for $G_i$ can be done in almost linear time $n^{1+o(1)}$. 

\begin{lemma} [Computation time for $G_i$, formal version of Lemma~\ref{lem:Gi_compute_time:informal}] \label{lem:Gi_compute_time}
If we have the below conditions,
\begin{itemize}
    \item Let $G_{i} \in \mathbb{R}^{n \times d}$ denote the gradient matrix resulting from the application of the chain rule up to the function $g_{i}$, i.e., $G_{i} = \frac{\d L(X)}{\d \mathsf{Attn}_i(T_{i-1}(X))}$.
    \item Assuming we already have $\frac{\d L(X)}{\d T_i(X)}$. 
    \item Assuming for any $Z \in \R^{n \times d}$, we have $g_i(Z) \in \R^{n \times d}$, and $g_i(Z) = \phi(Z W_{g})$, where $W_{g} \in \R^{d \times d}$ and $\phi: \R \rightarrow \R$ denotes any element-wise activation function. Let $\phi'$ denote the derivative of $\phi$.
    \item We simplify the notation, using $T_i$ and $\mathsf{Attn}_i$ to represent $T_i(X)$ and $\mathsf{Attn}_i(T_{i-1}(X))$, respectively.
    \item For any matrix $Z \in \R^{n \times d}$, we use $Z(i, j)$ to denote the $(i, j)$-th entry of $Z$.
\end{itemize}

Then, we can show that $G_i$ can be computed in $n^{1+o(1)}$ time. 
\end{lemma}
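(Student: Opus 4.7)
\textbf{Proof plan for Lemma~\ref{lem:Gi_compute_time}.} The plan is to apply the chain rule, exploit the sparse structure of $\frac{\d T_i}{\d \mathsf{Attn}_i}$ from Lemma~\ref{lem:Ti_grad_on_attn}, and then reduce the remaining work to a few matrix products that each admit either a direct $O(n d^2) = n^{1+o(1)}$ evaluation or a low-rank evaluation via Lemma~\ref{lem:low_rank_f}.

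\textbf{Step 1: Collapse the chain rule using Lemma~\ref{lem:Ti_grad_on_attn}.} By the chain rule,
\begin{align*}
G_i(i_5, j_5) = \sum_{i_4=1}^n \sum_{j_4=1}^d \frac{\d L(X)}{\d T_i(i_4,j_4)} \cdot \frac{\d T_i(i_4,j_4)}{\d \mathsf{Attn}_i(i_5,j_5)}.
\end{align*}
Lemma~\ref{lem:Ti_grad_on_attn} Part 1 tells us that the inner derivative vanishes unless $i_4 = i_5$, and on the diagonal it factors as $\phi'(\mathsf{Attn}_i(i_5,*)^\top W_g(*,j_4)) \cdot W_g(j_5, j_4)$. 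Substituting this collapses the double sum to
\begin{align*}
G_i(i_5, j_5) = \sum_{j_4=1}^d \underbrace{\frac{\d L(X)}{\d T_i(i_5,j_4)} \cdot \phi'(\mathsf{Attn}_i(i_5,*)^\top W_g(*,j_4))}_{=: P(i_5, j_4)} \cdot W_g(j_5, j_4),
\end{align*}
so that $G_i = P \, W_g^\top$ where $P \in \R^{n \times d}$.

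\textbf{Step 2: Compute $P$ in almost linear time.} Observe that $P = \frac{\d L(X)}{\d T_i} \odot \phi'(\mathsf{Attn}_i \cdot W_g)$, with $\phi'$ applied entry-wise. The matrix $\frac{\d L(X)}{\d T_i} \in \R^{n \times d}$ is given by assumption. The only non-trivial piece is the $n \times d$ matrix $\mathsf{Attn}_i \cdot W_g = f(T_{i-1}) \, h(T_{i-1}) \, W_g$. Since we never need $\mathsf{Attn}_i$ materialized on its own, we invoke Lemma~\ref{lem:low_rank_f} to obtain $U_1, V_1 \in \R^{n \times k_1}$ with $k_1 = n^{o(1)}$ and $\|U_1 V_1^\top - f(T_{i-1})\|_\infty \le 1/\poly(n)$. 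We then compute, in the order
\begin{align*}
\underbrace{h(T_{i-1}) W_g}_{n \times d} \;\to\; \underbrace{V_1^\top (h(T_{i-1}) W_g)}_{k_1 \times d} \;\to\; \underbrace{U_1 \bigl(V_1^\top (h(T_{i-1}) W_g)\bigr)}_{n \times d},
\end{align*}
each multiplication running in time $O(n d^2 + n d k_1) = n^{1+o(1)}$ since $d = O(\log n)$ and $k_1 = n^{o(1)}$. Applying $\phi'$ entrywise and then the Hadamard product with $\frac{\d L(X)}{\d T_i}$ are both $O(nd)$ operations. Finally, $G_i = P W_g^\top$ costs another $O(n d^2) = n^{1+o(1)}$.

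\textbf{Main obstacle and remark on errors.} The conceptually delicate step is Step 2: we cannot afford to form $\mathsf{Attn}_i$ explicitly with its $O(n^2)$ attention matrix, so we must commute the multiplication by $W_g$ past the softmax and re-use the low-rank factorization of $f$. Once this is in place, every remaining computation is a bounded-sequence of $n \times k_1$ by $k_1 \times d$ or $n \times d$ by $d \times d$ products. The approximation incurred by replacing $f(T_{i-1})$ with $U_1 V_1^\top$ propagates to an entrywise error of at most $1/\poly(n)$ in $\mathsf{Attn}_i W_g$, which is absorbed into the global error budget established in Lemma~\ref{lem:entrie_model_error_bound:informal} (this error analysis is handled in a separate lemma, so here we only need the running-time bound). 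Summing costs across Steps 1--2 yields the claimed $n^{1+o(1)}$ total runtime.
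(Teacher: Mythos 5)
Your Step 1 is exactly the paper's argument: the paper also uses Lemma~\ref{lem:Ti_grad_on_attn} to collapse the double sum and arrives at the same closed form $G_i = \bigl(\frac{\d L(X)}{\d T_i} \odot \phi'(\mathsf{Attn}_i W_g)\bigr) W_g^\top$, after which it simply prices the Hadamard product and the $n\times d$ times $d\times d$ product at $O(nd^2) = n^{1+o(1)}$. Where you diverge is Step 2: the paper never invokes Lemma~\ref{lem:low_rank_f} here, because $\mathsf{Attn}_i$ is treated as an available $n\times d$ forward activation (it is stored as a member of the data structure in Algorithm~\ref{alg:multi_layer_grad_descent}), so $\mathsf{Attn}_i W_g$ is just a direct $O(nd^2)$ product and $G_i$ is computed \emph{exactly} given its inputs. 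Your low-rank detour through $f(T_{i-1})\,h(T_{i-1})\,W_g$ also yields the claimed runtime and has the virtue of being self-contained about where $\mathsf{Attn}_i$ comes from, but it buys this at a cost: it replaces $\mathsf{Attn}_i W_g$ by an approximation \emph{inside} $\phi'$, and since $\phi$ is only assumed to be an arbitrary element-wise activation (no Lipschitz bound on $\phi'$), the $1/\poly(n)$ entrywise error is not automatically preserved after applying $\phi'$; moreover, the paper's downstream error bookkeeping (Lemma~\ref{lem:err_for_single_layer_transformer}) explicitly relies on the fact that no approximation error is introduced when computing $G_i$, so your variant would require an added regularity assumption on $\phi'$ and a corresponding adjustment to the error-propagation argument. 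For the runtime-only claim of this lemma your proof is fine; just note that the extra low-rank machinery is unnecessary under the paper's convention and slightly weakens the exactness property the later lemmas use.
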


\begin{proof}

Let $g_{T_i} := \frac{\d L(X)}{\d T_i}$, and for any $i_4 \in [n], j_4 \in [d]$, let $g_{T_i}(i_4, j_4)$ denote the $(i_4, j_4)$-th entry of $g_{T_i}$. 

Similarly, for any $i_5 \in [n], j_5 \in [d]$, let $T_i(i_5, j_5)$ denote the $(i_5, j_5)$-th entry of $T_i$. 

We can have 
\begin{align*}
    G_i 
    = & ~ \frac{\d L(X)}{\d \mathsf{Attn}_i} \\
    = & ~ \frac{\d L(X)}{\d T_i} \cdot \frac{\d T_i}{\d \mathsf{Attn}_i} \\
    = & ~ g_{T_i} \cdot \frac{\d T_i}{\d \mathsf{Attn}_i} \\
    = & ~ \sum_{i_4=1}^n \sum_{j_4=1}^d g_{T_i}(i_4, j_4) \cdot \frac{\d T_i(i_4, j_4)}{\d \mathsf{Attn}_i} 
\end{align*}
where the 1st step is from the definition of $G_i$, the 2nd step comes from chain rule, the 3rd step is because of the definition of $g_{T_i}$, the 4th step is due to chain rule. 

\begin{align} \label{eq:Gi_close_form}
    & ~ \sum_{i_4=1}^n \sum_{j_4=1}^d g_{T_i}(i_4, j_4) \cdot \frac{\d T_i(i_4, j_4)}{\d \mathsf{Attn}_i} \notag \\
    = & ~ \sum_{i_4=1}^n \sum_{j_4=1}^d \underbrace{g_{T_i}(i_4, j_4)}_{1 \times 1} \underbrace{\phi'(\mathsf{Attn}_i(i_4, *)^\top W_g(*, j_4))}_{1 \times 1} \underbrace{e_{i_4}}_{n \times 1} \underbrace{W_g(*, j_4)^\top}_{1 \times d} \notag \\
    = & ~ \sum_{i_4=1}^n  \underbrace{e_{i_4}}_{n \times 1} \sum_{j_4=1}^d \underbrace{g_{T_i}(i_4, j_4)}_{1 \times 1} \underbrace{\phi'(\mathsf{Attn}_i(i_4, *)^\top W_g(*, j_4))}_{1 \times 1} \underbrace{W_g(*, j_4)^\top}_{1 \times d} \notag \\
    = & ~ \sum_{i_4=1}^n  \underbrace{e_{i_4}}_{n \times 1} (\underbrace{W_g}_{d \times d}(\underbrace{g_{T_i}(i_4, *)}_{d \times 1} \odot \underbrace{\phi'(\mathsf{Attn}_i(i_4, *)^\top W_g)}_{d \times 1}))^\top  \notag \\
    = & ~ \underbrace{(g_{T_i} \odot \phi'(\mathsf{Attn}_i W_g))}_{n \times d} \underbrace{W_g^\top}_{d \times d}
\end{align}
where the 1st step is from Lemma~\ref{lem:Ti_grad_on_attn}, the 2nd step comes from basic algebra, the 3rd step is because of basic linear algebra, the 4th step is due to basic linear algebra.

By Eq.~\eqref{eq:Gi_close_form}, we have the close form of $G_i$. 

We can compute $G_i$ in the following order
\begin{itemize}
    \item Compute $\underbrace{(g_{T_i} \odot \phi'(\mathsf{Attn}_i W_g))}_{n \times d}$, which takes $n \cdot d$ time.
    \item Compute $\underbrace{(g_{T_i} \odot \phi'(\mathsf{Attn}_i W_g))}_{n \times d} \underbrace{W_g^\top}_{d \times d}$, which takes $d^2 \cdot n$ time.
\end{itemize}
Therefore, the overall running time for $G_i$ is $n^{1+o(1)}$.

\end{proof}

\subsection{Fast computation for single-layer transformer}
\label{sec:app_entire_model_grad:single_layer}

In this section, we dive into the computation time and approximation error of the gradient of a single-layer transformer. 
We demonstrate in the following Lemma that the gradient of a single-layer transformer can be computed in almost linear time $n^{1+o(1)}$, and its error can be bounded by $1 / \poly(n)$. 

\begin{lemma}[Single-layer transformer gradient approximation] \label{lem:err_for_single_layer_transformer} 
If we have the below conditions,
\begin{itemize}
    \item Let $L(X)$ be defined as Definition~\ref{def:overall_loss_function}.
    \item Let $X$ be defined as Definition~\ref{def:single_layer_self_attn}. 
    \item Let the gradient matrix $G_{i} \in \mathbb{R}^{n \times d}$ be defined as $G_{i} = \frac{\d L(X)}{\d \mathsf{Attn}_i(T_{i-1}(X))}$.
    \item For $i_2 \in [n], j_2 \in [d]$, let $G_i(i_2, j_2)$ denote the $(i_2, j_2)$-th entry of $G_i$.
    \item Assuming for any $Z \in \R^{n \times d}$, we have $g_i(Z) \in \R^{n \times d}$, and $g_i(Z) = \phi(Z \cdot W_{g})$, where $W_{g} \in \R^{d \times d}$ and $\phi: \R \rightarrow \R$ denotes any element-wise activation function. Let $\phi'$ denote the derivative of $\phi$.
    \item Suppose we have a single-layer transformer (see Definition~\ref{def:multi_layer_self_attn}).
\end{itemize}

Then, we can show that,
\begin{itemize}
    \item {\bf Part 1: running time.} Our algorithm can approximate $\frac{\d L(X)}{\d X}$ in $n^{1 + o(1)}$ time. 
    \item {\bf Part 2: error bound.} The approximation error of the single-layer transformer can be bounded by $1 / \poly(n)$. 
    Namely, our algorithm output $\wt{g}_1$ satisfies
    \begin{align*}
        \| \wt{g}_1 - \frac{\d L(X)}{\d X} \|_\infty \leq 1 / \poly(n)
    \end{align*}
\end{itemize}
\end{lemma}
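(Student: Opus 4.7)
The plan is to unroll the single-layer transformer according to Definition~\ref{def:multi_layer_self_attn} (with $m=1$) as the composition $X \mapsto T_0(X) = g_0(X) \mapsto \mathsf{Attn}_1(T_0(X)) \mapsto T_1(X) = g_1(\mathsf{Attn}_1(T_0(X)))$, and then chain-rule backwards from the loss $L$ to the input $X$ through exactly three links. Since each link already has a fast approximate algorithm proved elsewhere in the excerpt, the whole proof reduces to assembling them in the right order and bookkeeping the $1/\poly(n)$ errors.

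First I would initialize with $\frac{\d L(X)}{\d T_1(X)}$, which by Definition~\ref{def:overall_loss_function} is the sum of entry-wise derivatives of a differentiable one-unit loss and can be written exactly (or to $1/\poly(n)$ for a general $L$) in $O(nd) = n^{1+o(1)}$ time. Next, I apply Lemma~\ref{lem:Gi_compute_time} with $i=1$ to obtain $G_1 = \frac{\d L(X)}{\d \mathsf{Attn}_1(T_0(X))}$ in $n^{1+o(1)}$ time via the closed form $G_1 = (g_{T_1} \odot \phi'(\mathsf{Attn}_1 W_{g_1})) W_{g_1}^\top$; note this only involves Hadamard products and $n \times d$ by $d \times d$ matrix products, so no approximation is introduced beyond that already inherited from $g_{T_1}$. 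Third, I invoke Lemma~\ref{lem:dL_dt_fast_compute} with $i=1$ to approximate $\frac{\d L(X)}{\d T_0(X)}$ in $n^{1+o(1)}$ time with $1/\poly(n)$ error. Finally, to propagate through $g_0$ and land on $\frac{\d L(X)}{\d X}$, I repeat the Lemma~\ref{lem:Gi_compute_time} style derivation applied to $g_0(X) = \phi(X W_{g_0})$, yielding
\begin{align*}
\frac{\d L(X)}{\d X} = \Big( \frac{\d L(X)}{\d T_0(X)} \odot \phi'(X W_{g_0}) \Big) W_{g_0}^\top,
\end{align*}
again computable in $n^{1+o(1)}$ time. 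Summing the four stages keeps the total time $n^{1+o(1)}$.

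For the error analysis, let $\wt{g}_{T_1}, \wt{G}_1, \wt{g}_{T_0}, \wt{g}_1$ denote the approximations produced at the four stages. The exact initialization gives $\|\wt{g}_{T_1} - \frac{\d L}{\d T_1}\|_\infty \le 1/\poly(n)$. The $G_1$ step is linear in $g_{T_1}$, with coefficients bounded by $\poly(n)$ under the $O(\log n)$-bit assumption and bounded $\phi'$, so $\|\wt{G}_1 - G_1\|_\infty \le 1/\poly(n)$. Then Lemma~\ref{lem:dL_dt_fast_compute} adds at most another $1/\poly(n)$ to produce $\wt{g}_{T_0}$, provided the input $\wt{G}_1$ is treated as the ``true'' upstream gradient; rerunning the estimates inside Lemma~\ref{lem:dL_dt_fast_compute} with a perturbed $G_1$ only multiplies the error by $\poly(n)$ factors coming from bounds on $f(X), h(X), W, X$, all of which are $\poly(n)$. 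The last linear back-prop through $g_0$ is likewise a $\poly(n)$-Lipschitz operation in $\ell_\infty$, so the final output $\wt{g}_1$ satisfies $\|\wt{g}_1 - \frac{\d L(X)}{\d X}\|_\infty \le 1/\poly(n)$ after absorbing all constants by choosing the low-rank approximation parameter $\epsilon$ sufficiently small (this can be done while keeping $k_1 = n^{o(1)}$ by Lemma~\ref{lem:low_rank_f}).

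The main obstacle I anticipate is the error-propagation step, not the running-time step: Lemma~\ref{lem:dL_dt_fast_compute} is stated for an exact upstream $G_i$, but here we feed it the approximate $\wt{G}_1$. I would handle this by opening the proof of Lemma~\ref{lem:dL_dt_fast_compute} and checking that each of the five terms $D_6, D_7, D_8, D_2, D_4$ depends on $G_i$ either linearly (through $z_6, z_2$ via $K = (G_i \odot s(X)){\bf 1}_d$) or as a row-wise Kronecker factor (through $z_7, z_4$); in both cases an $\ell_\infty$ perturbation of $G_i$ propagates multiplicatively by at most $n^{1+o(1)} \cdot \poly(n)$, which is absorbed by picking $\epsilon = 1/\poly(n)$ with a large enough polynomial exponent. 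A minor secondary issue is ensuring that $g_0$ satisfies the structural assumption $g_0(X) = \phi(X W_{g_0})$ that was only explicitly stated for $g_i$ with $i \ge 1$; this is a convention that can be taken without loss of generality since Definition~\ref{def:multi_layer_self_attn} only requires $g_0$ to have linear-time forward and backward, which the MLP-plus-activation form trivially satisfies.
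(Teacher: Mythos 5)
Your proposal is correct and follows essentially the same route as the paper's own proof: compute $\frac{\d L(X)}{\d T_1(X)}$, obtain $G_1$ via Lemma~\ref{lem:Gi_compute_time}, apply Lemma~\ref{lem:dL_dt_fast_compute} to get $\frac{\d L(X)}{\d T_0(X)}$, and back-propagate through $g_0$ with an $n\cdot d\cdot\poly(n)$ error blow-up absorbed by choosing $\epsilon = 1/\poly(n)$. If anything, your explicit treatment of feeding the approximate $\wt{G}_1$ into Lemma~\ref{lem:dL_dt_fast_compute} is more careful than the paper's write-up, which simply treats $G_1$ as exact at that stage.
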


\begin{proof}
By Definition~\ref{def:multi_layer_self_attn}, a single-layer transformer has following structure:
\begin{align*}
    g_1 \circ \mathsf{Attn}_1 \circ g_0(X)
\end{align*}

By the definition of $G_i$, we have
\begin{align} \label{eq:G1}
    G_1 = & ~ \frac{\d L(X)}{\d \mathsf{Attn}_1(T_0(X))} \notag \\
     = & ~ \frac{\d L(X)}{\d T_1(X)} \cdot \frac{\d T_1(X)}{\d \mathsf{Attn}_1(T_0(X))}
\end{align}

By Lemma~\ref{lem:Gi_compute_time}, we have $G_1$ can be computed in $n^{1+o(1)}$ time. 

{\bf Proof of Part 1: running time.}

For less confusion, in this part of the proof, we ignore the approximation error temporarily. 

Since we have got $G_1$, we use methods mentioned in Lemma~\ref{lem:dL_dt_fast_compute}, \ref{lem:dL_dw_fast_compute}, \ref{lem:dL_dv_fast_compute} to compute $\frac{\d L(X)}{\d T_0(X)}, \frac{\d L(X)}{\d W_1}, \frac{\d L(X)}{\d W_{V_1}}$, respectively, which takes $n^{1+o(1)}$ time for each. 

Then, since we have $\frac{\d L(X)}{\d T_0(X)}$, again by Lemma~\ref{lem:Gi_compute_time}, we have $\frac{\d L(X)}{\d X}$ can be computed in $n^{1+o(1)}$ time. 

Therefore, the overall running time is $n^{1+o(1)}$. 

{\bf Proof of Part 2: error bound.}

Then, we move on to the error bound.

By Lemma~\ref{lem:Gi_compute_time} and Eq.~\eqref{eq:G1}, there is no approximation error when computing $G_1$. 

By Lemma~\ref{lem:dL_dt_fast_compute}, \ref{lem:dL_dw_fast_compute}, \ref{lem:dL_dv_fast_compute}, we have there is $1 / \poly(n)$ approximation error on $\frac{\d L(X)}{\d T_0(X)}, \frac{\d L(X)}{\d W_1}, \frac{\d L(X)}{\d W_{V_1}}$, respectively. 

Let $\wt{g}_{t_0}, \wt{g}_{w_1}, \wt{g}_{v_1}$ denote the approximation results of $\frac{\d L(X)}{\d T_0(X)}, \frac{\d L(X)}{\d W_1}, \frac{\d L(X)}{\d W_{V_1}}$, respectively.

We have
\begin{align} \label{eq:wtg1_err_bound}
    \| \wt{g}_{t_0} - \frac{\d L(X)}{\d T_0(X)} \|_\infty \leq 1 / \poly(n)
\end{align}

and
\begin{align*}
    \| \wt{g}_{w_1} - \frac{\d L(X)}{\d W_1} \|_\infty \leq 1 / \poly(n)
\end{align*}

and
\begin{align*}
    \| \wt{g}_{v_1} - \frac{\d L(X)}{\d W_{V_1}} \|_\infty \leq 1 / \poly(n)
\end{align*}

Let $\wt{G}_0 = \wt{g}_{t_0} \cdot \frac{\d T_0(X)}{\d X}$ denote the approximated version of $G_0$. 

We have
\begin{align*}
    & ~ \| \wt{G}_0 - G_0 \|_\infty \\
    = & ~ \| (\wt{g}_{t_0} - \frac{\d L(X)}{\d T_0(X)}) \cdot \frac{\d T_0(X)}{\d X} \|_\infty \\
    \leq & ~ n \cdot d \| \wt{g}_{t_0} - \frac{\d L(X)}{\d T_0(X)} \|_\infty \| \frac{\d T_0(X)}{\d X} \|_\infty \\
    \leq  & ~ n \cdot d (1 / \poly(n)) \| \frac{\d T_0(X)}{\d X} \|_\infty \\
    \leq & ~ 1/ \poly(n)
\end{align*}
where the 1st step is from the definition of $\wt{G}_0$, the 2nd step comes from basic linear algebra, the 3rd step is because of Eq.~\eqref{eq:wtg1_err_bound}, the 4th step is due to each entry can be written by $O(\log n)$ bits. 

Let $\wt{g}_1 = \wt{G}_0$. 

Therefore, we have
\begin{align*}
    \| \wt{g}_1 - \frac{\d L(X)}{\d X} \|_\infty \leq 1 / \poly(n)
\end{align*}

\end{proof}

\subsection{Fast computation for multi-layer transformer}
\label{sec:app_entire_model_grad:multi_layer}

Since we have already demonstrated that almost linear time gradient computation can be applied to a single-layer transformer, with the help of math induction, we can easily generalize that result to the multi-layer transformer. 
In the following Lemma, we display that the gradient of the multi-layer transformer can be computed in almost linear time, and its approximation error can be bounded by $1 / \poly(n)$. 

\begin{lemma}[Multi-layer transformer gradient approximation, formal version of Lemma~\ref{lem:entrie_model_error_bound:informal}] \label{lem:entrie_model_error_bound}
If we have the below conditions,
\begin{itemize}
    \item Let $L(X)$ be defined as Definition~\ref{def:overall_loss_function}.
    \item Let $X$ be defined as Definition~\ref{def:single_layer_self_attn}. 
    \item Let $G_{i} \in \mathbb{R}^{n \times d}$ denote the gradient matrix resulting from the application of the chain rule up to the function $g_{i}$, i.e., $G_{i} = \frac{\d L(X)}{\d \mathsf{Attn}_i(T_{i-1}(X))}$.
    \item For $i_2 \in [n], j_2 \in [d]$, let $G_i(i_2, j_2)$ denote the $(i_2, j_2)$-th entry of $G_i$.
    \item Let gradient components for each layer be computed according to Lemma~\ref{lem:dL_dt_fast_compute}, \ref{lem:dL_dw_fast_compute}, \ref{lem:dL_dv_fast_compute}. 
    \item Assuming for any $Z \in \R^{n \times d}$, we have $g_i(Z) \in \R^{n \times d}$, and $g_i(Z) = \phi(Z \cdot W_{g})$, where $W_{g} \in \R^{d \times d}$ and $\phi: \R \rightarrow \R$ denotes any element-wise activation function. Let $\phi'$ denote the derivative of $\phi$.
    \item Suppose we have a $m$-layer transformer (see Definition~\ref{def:multi_layer_self_attn}).
\end{itemize}

Then, we can show that,
\begin{itemize}
    \item {\bf Part 1: running time.} Our algorithm can approximate $\frac{\d L(X)}{\d X}$ in $n^{1 + o(1)}$ time. 
    \item {\bf Part 2: error bound.} The approximation error of the multi-layer transformer can be bounded by $1 / \poly(n)$. 
    Namely, our algorithm output $\wt{g}$ satisfies
    \begin{align*}
        \| \wt{g} - \frac{\d L(X)}{\d X} \|_\infty \leq 1 / \poly(n)
    \end{align*}
\end{itemize}
\end{lemma}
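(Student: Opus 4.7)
The plan is to prove the lemma by backward induction on the layer index, starting from layer $m$ and descending to layer $1$, using the single-layer result Lemma~\ref{lem:err_for_single_layer_transformer} as the base case and the per-layer approximation guarantees Lemma~\ref{lem:dL_dt_fast_compute}, Lemma~\ref{lem:dL_dw_fast_compute}, Lemma~\ref{lem:dL_dv_fast_compute} together with Lemma~\ref{lem:Gi_compute_time} as the inductive step. Concretely, we maintain an inductive hypothesis of the form: at the start of processing layer $i$, our algorithm has produced $\wt{g}_{t_i}$ satisfying $\| \wt{g}_{t_i} - \frac{\d L(X)}{\d T_i(X)} \|_{\infty} \leq 1/\poly(n)$, and for all $j > i$ it has already produced $\wt{g}_{w_j}, \wt{g}_{v_j}$ with the analogous $1/\poly(n)$ bound on $\frac{\d L(X)}{\d W_j}, \frac{\d L(X)}{\d W_{V_j}}$.

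For the running-time part (\textbf{Part 1}), the main observation is that each inductive step consists of: (i) computing $\wt{G}_i$ from $\wt{g}_{t_i}$ via Lemma~\ref{lem:Gi_compute_time}, which takes $n^{1+o(1)}$ time; (ii) invoking the fast computations of Lemma~\ref{lem:dL_dt_fast_compute}, Lemma~\ref{lem:dL_dw_fast_compute}, Lemma~\ref{lem:dL_dv_fast_compute} on the current layer to obtain $\wt{g}_{t_{i-1}}, \wt{g}_{w_i}, \wt{g}_{v_i}$, each also in $n^{1+o(1)}$ time. Summing over the $m$ layers yields a total cost of $m \cdot n^{1+o(1)} = n^{o(1)} \cdot n^{1+o(1)} = n^{1+o(1)}$, so the claimed running time for approximating $\frac{\d L(X)}{\d X}$ follows by one final application of Lemma~\ref{lem:Gi_compute_time} (or equivalently the reduction from $T_0(X)$ to $X$ already handled in the single-layer proof).

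The main obstacle is the error-propagation part (\textbf{Part 2}). The difficulty is that the per-layer guarantees Lemma~\ref{lem:dL_dt_fast_compute}--\ref{lem:dL_dv_fast_compute} are stated assuming the \emph{exact} upstream gradient $G_i$, but in the inductive step we feed in an approximation $\wt{G}_i$ built from $\wt{g}_{t_i}$. To control this, I will show a Lipschitz-type estimate: if $\| \wt{g}_{t_i} - \frac{\d L(X)}{\d T_i(X)} \|_{\infty} \leq \epsilon_i$ and all intermediate matrices have entries bounded by $\poly(n)$, then $\| \wt{G}_i - G_i \|_{\infty} \leq n^{O(1)} \epsilon_i$ (by the closed form Eq.~\eqref{eq:Gi_close_form} and the same type of expansion used at the end of Lemma~\ref{lem:err_for_single_layer_transformer}), and subsequently $\| \wt{g}_{t_{i-1}} - \frac{\d L(X)}{\d T_{i-1}(X)} \|_{\infty} \leq n^{O(1)}(\epsilon_i + 1/\poly(n))$ by combining the triangle inequality with the bounds from Lemma~\ref{lem:dL_dt_fast_compute} and the $\poly(n)$ entry bounds established in Section~\ref{sec:app_preli:bounded_entries} (Lemma~\ref{lem:f(X)_bound}, Lemma~\ref{lem:h(X)_bound}, Lemma~\ref{lem:s(X)_bound}).

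This gives a recurrence $\epsilon_{i-1} \leq n^{c} \epsilon_i + n^{-c'}$ for some absolute constants $c, c' > 0$. Unrolling over $m = n^{o(1)}$ layers yields $\epsilon_0 \leq n^{c m} \epsilon_m + n^{c m} \cdot n^{-c'}$. Since $m = n^{o(1)}$ and we are free to choose the base accuracy parameter $\epsilon = 1/\poly(n)$ inside each call to Lemma~\ref{lem:low_rank_f} with an arbitrarily large polynomial degree, the factor $n^{cm} = n^{o(1) \cdot O(1)}$ is swallowed and the final bound remains $1/\poly(n)$. An identical argument handles $\wt{g}_{w_i}$ and $\wt{g}_{v_i}$ layer by layer. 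Combining both parts with one last application of the single-layer bound from Lemma~\ref{lem:err_for_single_layer_transformer} at layer $1$ (to pass from $T_0(X)$ to $X$) completes the proof of both the running-time and error claims.
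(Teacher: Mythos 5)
Your overall strategy is the paper's: induction over the layers with Lemma~\ref{lem:err_for_single_layer_transformer} as the base case, Lemma~\ref{lem:Gi_compute_time} plus Lemmas~\ref{lem:dL_dt_fast_compute}, \ref{lem:dL_dw_fast_compute}, \ref{lem:dL_dv_fast_compute} as the per-layer step, and a running-time count of $m \cdot n^{1+o(1)} = n^{1+o(1)}$. (The paper phrases the induction as appending one extra layer at the input side of a $k$-layer model rather than as backward induction on the layer index, but this is the same argument; your explicit remark that the per-layer lemmas are invoked with an \emph{approximate} upstream gradient $\wt{G}_i$, controlled by a Lipschitz-type bound via Eq.~\eqref{eq:Gi_close_form}, is a point the paper handles only implicitly, and it is a welcome clarification.)

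The place where your write-up goes wrong is the unrolled error recurrence. From $\epsilon_{i-1} \leq n^{c}\epsilon_i + n^{-c'}$ you get $\epsilon_0 \leq n^{cm}\epsilon_m + O(n^{cm}) \cdot n^{-c'}$, and you then assert that $n^{cm} = n^{o(1) \cdot O(1)}$ is ``swallowed'' by choosing the base accuracy $1/\poly(n)$. That identity is false: with $m = n^{o(1)}$ the exponent is $c\, m = c\, n^{o(1)}$, so $n^{cm} = n^{c\, n^{o(1)}}$ is super-polynomial, and no fixed-degree $1/\poly(n)$ base precision can absorb it. The paper avoids writing this compounded factor at all: its inductive hypothesis asserts a $1/\poly(n)$ bound after $k$ layers, and the inductive step only multiplies by a single $\poly(n)$ factor (Step 3.2 of the paper's proof), so the stated conclusion at each step is again $1/\poly(n)$ without ever exhibiting the product over layers. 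If you want your explicit unrolling to close, you need either to keep $m$ and the per-layer magnification inside the degree budget of the chosen base precision (e.g., $m = O(1)$, or a magnification factor of $1 + o(1/m)$ per layer), or to adopt the paper's per-step bookkeeping; as written, the ``swallowing'' step is a genuine gap in your argument rather than a harmless simplification.
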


\begin{proof}
We use math induction to prove this Lemma. 

{\bf Step 1: Proof of a single-layer transformer.}

Firstly, by Lemma~\ref{lem:err_for_single_layer_transformer}, we have that for one-layer transformer, our conclusion is established. 

{\bf Step 2: Assumption for $k$-layer transformer.}

Secondly, we assume for any $k$, for $k$-layer transformer model, we have 
\begin{itemize}
    \item Our algorithm can approximate $\frac{\d L(X)}{\d X}$ in $O(n^{1 + o(1)})$ time. 
    \item The approximation error of the $k$-layer transformer can be bounded by $1 / \poly(n)$. 
    Namely, our algorithm output $\wt{g}$ satisfies
    \begin{align*}
        \| \wt{g} - \frac{\d L(X)}{\d X} \|_\infty \leq 1 / \poly(n)
    \end{align*}
\end{itemize}

{\bf Step 3: Proof of $(k+1)$-layer transformer.}

Thirdly, we consider the $(k+1)$-layer transformer model.

Without loss of generality, we assume that the additional transformer layer is added at the beginning of the model. 

Namely, let $\mathsf{F}_k$ denote a $k$-layer transformer model. 
We have 
\begin{align*}
    \mathsf{F}_k (X) = g_k \circ \mathsf{Attn}_k \circ \cdots \circ g_1 \circ \mathsf{Attn}_1 \circ g_0(X)
\end{align*}

Let the $(k+1)$-layer transformer model have the following structure:
\begin{align} \label{eq:Fk_plus_1}
    \mathsf{F}_{k+1} (X) = \mathsf{F}_k \circ \mathsf{Attn} \circ g(X)
\end{align}

Let $T_0 := g(X)$. 

By assumption, we have
\begin{itemize}
    \item $\frac{\d L(X)}{\d \mathsf{Attn}(T_0)}$ can be approximated in $n^{1+o(1)}$ time. 
    \item Let $\wt{g}_k$ denote the approximated version of $\frac{\d L(X)}{\d \mathsf{Attn}(T_0)}$. We have
    \begin{align} \label{eq:gk_err_bound}
        \| \wt{g}_k - \frac{\d L(X)}{\d \mathsf{Attn}(T_0)}\|_\infty \leq 1 / \poly(n)
    \end{align}
\end{itemize}

{\bf Step 3.1: Proof of the running time for $(k+1)$-layer transformer}

For less confusion, in this part of the proof, we ignore the approximation error temporarily. 

By the assumption, we have $\frac{\d L(X)}{\d \mathsf{Attn}(T_0)}$ can be approximated in $n^{1+o(1)}$ time. 

We compute $\frac{\d L(X)}{\d X}$ in following order:
\begin{itemize}
    \item Since we already have $\frac{\d L(X)}{\d \mathsf{Attn}(T_0)}$, by Lemma~\ref{lem:dL_dt_fast_compute}, the computation time for $\frac{\d L(X)}{\d T_0}$ is $n^{1+o(1)}$. 
    \item Since we have $\frac{\d L(X)}{\d T_0}$, by Lemma~\ref{lem:Gi_compute_time}, the computation time for $\frac{\d L(X)}{\d X}$ is $n^{1+o(1)}$. 
\end{itemize}

Therefore, for $(k+1)$-layer transformer, the overall running time for $\frac{\d L(X)}{\d X}$ is $n^{1+o(1)}$.

{\bf Step 3.2: Proof of the error bound for $(k+1)$-layer transformer}

By Lemma~\ref{lem:dL_dt_fast_compute}, during the process of solving the approximated version of $\frac{\d L(X)}{\d g(X)}$, the approximation error will not be magnified by more than $\poly(n)$. 

Let $\wt{g}_{t_0}$ denote the approximated version of $\frac{\d L(X)}{\d g(X)}$, we have
\begin{align} \label{eq:gt0_err_bound}
    & ~ \| \wt{g}_{t_0} - \frac{\d L(X)}{\d g(X)} \|_\infty \notag \\
    \leq & ~ \poly(n) \| \wt{g}_k - \frac{\d L(X)}{\d T(X)}\|_\infty \notag \\
    \leq & ~ 1 / \poly(n)
\end{align}
where the 1st step is from the above statement, the 2nd step comes from Eq.~\eqref{eq:gk_err_bound}, the 3rd step is because of basic algebra.

Then, we consider
\begin{align} \label{eq:dL_dX}
    \frac{\d L(X)}{\d X} = \frac{\d L(X)}{\d g(X)} \cdot \frac{\d g(X)}{\d X}
\end{align}

Recall that we have $\wt{g} = \frac{\d L(X)}{\d X}$. 
Then, we have
\begin{align*}
    & ~ \| \wt{g} - \frac{\d L(X)}{\d X} \|_\infty \\
    = & ~ \| (\wt{g}_{t_0} - \frac{\d L(X)}{\d g(X)}) \cdot \frac{\d g(X)}{\d X} \|_\infty \\
    \leq & ~ n \cdot d \| \wt{g}_{t_0} - \frac{\d L(X)}{\d g(X)} \|_\infty \| \frac{\d g(X)}{\d X} \|_\infty \\
    \leq & ~ n \cdot d (1 / \poly(n)) \| \frac{\d g(X)}{\d X} \|_\infty \\
    \leq & ~ 1 / \poly(n)
\end{align*}
where the 1st step is from Eq.~\eqref{eq:dL_dX}, the 2nd step comes from basic linear algebra, the 3rd step is because of Eq.~\eqref{eq:gt0_err_bound}, the 4th step is due to each entry can be written by $O(\log n)$ bits. 

{\bf Step 4: Use math induction.}

So far, with the assumption that our statement holds under $k$-layer transformer, we have proved that our statement still holds under $(k+1)$-layer transformer. 

Therefore, by math induction, our statement holds for any $m$-layer transformer.

\end{proof}

\section{Causal Attention Mask} \label{sec:app_causal_attn_mask}

This section will discuss how to combine the causal attention mask with our framework. 
We argue that even with the causal attention mask, we can also achieve almost linear time gradient computing for the multi-layer transformer. 

In Section~\ref{sec:app_causal_attn_mask:tools}, we introduce essential tools from literature to deal with the causal mask added on the attention matrix.
In Section~\ref{sec:app_causal_attn_mask:fast_computation}, we show that with the addition of causal mask, our framework can still achieve almost linear time gradient computation. 

\subsection{Tools from previous work} \label{sec:app_causal_attn_mask:tools}
Firstly, we restate a classical low-rank approximation method in the literature. 

\begin{lemma} 
[Low-rank approximation, \citep{as23}] 
\label{lem:wt_A_small_rank}
Suppose $Q, K \in \R^{n \times d}$, with $\| Q \|_{\infty} \leq R$, and $\| K \|_{\infty} \leq R$. Let $A:=\exp(QK^\top /d) \in \R^{n \times n}$. For accuracy parameter $\epsilon \in (0,1)$, there is a positive integer $g$ bounded above by 
\begin{align*}
g = O \Big( \max \Big \{ \frac{\log(1/\epsilon)}{ \log(\log(1/\epsilon)/R) }, R^2 \Big\} \Big),
\end{align*}
and a positive integer $r$ bounded above by
\begin{align*}
r \leq { 2(g+ d) \choose 2g }
\end{align*}
such that: 
There is a matrix $\wt{A} \in \R^{n \times n}$ that is an $(\epsilon,r)$-approximation of $A \in \R^{n \times n}$.
Furthermore, the matrices $U_0$ and $V_0$ defining $\wt{A}$ can be computed in $O(n \cdot r)$ time.
\end{lemma}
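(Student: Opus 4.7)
The plan is to apply the polynomial method, which is the standard technique for this kind of low-rank approximation and the one used in the original reference. First, I would approximate the scalar exponential function by a truncated Taylor polynomial $P_g(x) := \sum_{i=0}^{g} x^i/i!$ of degree $g$. The key scalar estimate is that for $|x| \leq R^2$, the truncation error satisfies $|P_g(x) - \exp(x)| \leq \epsilon'$ whenever $g$ is at least on the order of $\max\{\log(1/\epsilon')/\log(\log(1/\epsilon')/R), R^2\}$. Choosing $\epsilon' = \epsilon$ and using $|\langle Q_{i,*}, K_{j,*} \rangle / d| \leq R^2$ (since $\|Q\|_\infty, \|K\|_\infty \leq R$ and we divide by $d$) will then control the entrywise error of the matrix-level approximation.

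Next, I would build the factorization. Define $\wt{A}_{i,j} := P_g(\langle Q_{i,*}, K_{j,*}\rangle / d)$. Each power $(\langle Q_{i,*}, K_{j,*}\rangle/d)^k$ expands into a sum of monomials of the form $\prod_{t=1}^{k} (Q_{i,*})_{l_t} (K_{j,*})_{l_t} / d^k$, and each such monomial \emph{separates} into a product of a function of $Q_{i,*}$ and a function of $K_{j,*}$. Collecting coefficients, this shows $\wt{A}_{i,j} = \langle \phi(Q_{i,*}), \psi(K_{j,*}) \rangle$ for explicit feature maps $\phi, \psi: \R^d \to \R^r$, which is exactly a rank-$r$ factorization $\wt{A} = U_0 V_0^\top$ with rows of $U_0, V_0$ given by $\phi(Q_{i,*}), \psi(K_{j,*})$.

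The combinatorial step is to bound $r$. The rank is at most the number of distinct monomials appearing in $P_g(\langle q, k \rangle/d)$, which are monomials of total degree at most $g$ in $q$ and at most $g$ in $k$, i.e.\ total degree at most $2g$ in the $2d$ variables $(q_1, \ldots, q_d, k_1, \ldots, k_d)$. A standard stars-and-bars count gives $r \leq \binom{2g + 2d}{2g}$, matching the stated bound. For running time, each of the $n$ rows of $U_0$ and $V_0$ consists of evaluating the $r$ feature-map entries (products of powers of the coordinates of $Q_{i,*}$ or $K_{j,*}$), which can be done in $O(r)$ time per row by building monomials incrementally, for a total of $O(nr)$.

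The main obstacle is the scalar Taylor truncation bound: getting the correct dependence on $R$ and $\epsilon$ in the degree $g$ requires a careful case split based on whether $R^2$ or $\log(1/\epsilon)$ dominates, because Stirling-type estimates on $k!$ in the tail $\sum_{k>g} x^k/k!$ behave differently in the two regimes. Everything else — the tensorization into an outer product factorization, the monomial counting, and the running time — is routine once the scalar bound is in hand. The entrywise error of $\|\wt{A} - A\|_\infty \leq \epsilon$ then follows immediately from the entrywise scalar bound, yielding the stated $(\epsilon, r)$-approximation.
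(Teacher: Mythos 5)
Your proposal is correct and follows essentially the same route as the source: the paper itself states this lemma without proof, citing \citet{as23}, whose argument is exactly the polynomial method you describe — a degree-$g$ truncated-Taylor-type approximation of $\exp$ on $[-R^2,R^2]$ (with the degree bound taken from \citet{aa22}), tensorized into feature maps whose monomial count gives $r \leq \binom{2(g+d)}{2g}$ and whose row-by-row evaluation gives the $O(n\cdot r)$ construction time. No gaps worth flagging; the only delicate point, as you note, is the scalar degree bound, which is precisely the part delegated to the prior work.
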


Then, we provide the formal definition for the causal attention mask. 

\begin{definition}[Causal attention mask, \citep{lls+24_conv}]\label{def:mask}
    We define the causal attention mask as $M \in \{0,1\}^{n\times n}$, where $M_{i,j} = 1$ if $i \ge j$ and $M_{i,j} = 0$ otherwise.
\end{definition}

\begin{algorithm}[!ht]
\caption{Causal attention mask algorithm, Algorithm 4 in \citet{lls+24_conv}}\label{alg:W_U1_U2T_v}
\begin{algorithmic}[1]
\Procedure{CausalMask}{$U_0 \in \R^{n \times k}, V_0 \in \R^{n \times k}, v \in \R^n$} \Comment{Lemma~\ref{lem:W_U1_U2T_v}}
    \State $c_0 \gets {\bf 0}_k$ 
    \For{$j=1 \to n$}
        \State $b_j \gets \underbrace{ (V_0^\top)_j }_{k \times 1} \underbrace{ v_j }_{ \mathrm{scalar} }$ \Comment{Let $(V_0^\top)_j$ denote the $j$-th row of $V_0 \in \R^{n \times k}$}
        \State $c_j \gets \underbrace{ c_{j-1} }_{k \times 1} + \underbrace{ b_j }_{ k \times 1 }$
    \EndFor
    \For{$j=1 \to n$}
        \State $Y_j \gets \langle \underbrace{ (U_0^\top)_j }_{ k \times 1 } , \underbrace{ c_j }_{ k \times 1 } \rangle$
    \EndFor \\
    \Return $Y$ \Comment{$Y \in \R^n$}
\EndProcedure
\end{algorithmic}
\end{algorithm}

In previous work \citep{lls+24_conv}, they point out there exists an algorithm (Algorithm~\ref{alg:W_U1_U2T_v}) that can calculate low-rank matrices (with the causal attention mask) multiplication with any vector $v$ in almost linear time. 
We restate their results in Lemma~\ref{lem:W_U1_U2T_v}. 

\begin{lemma} [Fast computation for causal attention mask on tensor, \citep{lls+24_conv}] \label{lem:W_U1_U2T_v}
Let $M \in \{0,1\}^{n \times n}$ be a causal attention mask defined in Definition~\ref{def:mask}.
Let $U_0, V_0 \in \R^{n \times k}$.
Let $v \in \R^n$.
Then, there exists an algorithm (see Algorithm~\ref{alg:W_U1_U2T_v}) whose output satisfies that
\begin{align*}
    Y = ( M \odot (U_0 V_0^\top) ) v,
\end{align*}
which takes $O(nk)$ time.
\end{lemma}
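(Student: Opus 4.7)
The plan is to verify correctness by identifying a simple invariant on the vector $c_j$, then account for the running time by inspecting each loop in Algorithm~\ref{alg:W_U1_U2T_v}. The key observation is that because $M$ is lower triangular, the desired entry $Y_i$ expands as
\begin{align*}
    Y_i = \sum_{j=1}^n M_{i,j} (U_0 V_0^\top)_{i,j} v_j = \sum_{j \le i} \langle (U_0^\top)_i, (V_0^\top)_j \rangle \, v_j = \Big\langle (U_0^\top)_i, \sum_{j \le i} v_j (V_0^\top)_j \Big\rangle,
\end{align*}
so the triangular structure lets us factor out the row $(U_0^\top)_i$ and reduce the double sum to a running prefix sum in $\R^k$.

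First I would establish by induction on $j$ that after the first for-loop, $c_j = \sum_{l \le j} v_l (V_0^\top)_l$. The base case $c_0 = \mathbf{0}_k$ is immediate, and the inductive step uses the update rule $c_j \gets c_{j-1} + v_j (V_0^\top)_j$. Next I would plug this invariant into the second for-loop: $Y_j = \langle (U_0^\top)_j, c_j \rangle = \sum_{l \le j} v_l \langle (U_0^\top)_j, (V_0^\top)_l \rangle$, and compare this with the expanded form of $(M \odot (U_0 V_0^\top)) v$ derived above to conclude exact equality.

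For the running time, I would note that the first loop performs $n$ iterations, each consisting of a scalar-vector multiplication $v_j (V_0^\top)_j$ in $\R^k$ and a vector addition in $\R^k$, both $O(k)$. The second loop performs $n$ iterations of an inner product in $\R^k$, again $O(k)$. Summing gives $O(nk)$ total, with no $n^2$ factor because we never materialize the full $n \times n$ matrix $M \odot (U_0 V_0^\top)$.

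The only subtle point is being careful with indexing conventions: the algorithm uses $(U_0^\top)_j$ and $(V_0^\top)_j$ to denote the $j$-th column of $U_0^\top$ and $V_0^\top$, which are the $j$-th rows of $U_0$ and $V_0$ respectively; this must be matched consistently with the expansion of $(U_0 V_0^\top)_{i,j} = \langle (U_0^\top)_i, (V_0^\top)_j \rangle$. Once this is fixed, the proof is essentially mechanical; no approximation is introduced because the prefix-sum reformulation is an exact algebraic identity, not a low-rank surrogate.
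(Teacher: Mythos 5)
Your proof is correct: the prefix-sum invariant $c_j = \sum_{l \le j} v_l (V_0^\top)_l$ together with the lower-triangular structure of $M$ is exactly the standard justification of Algorithm~\ref{alg:W_U1_U2T_v}, and your $O(nk)$ accounting is right. The paper itself gives no proof of this lemma (it is imported by citation from \citet{lls+24_conv}), so your argument simply fills in the intended reasoning, and your reading of $(U_0^\top)_j$, $(V_0^\top)_j$ as rows of $U_0$, $V_0$ matches the algorithm's own convention.
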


We extend their results to the multiplication of matrix with $n^{o(1)}$ columns. 

\begin{lemma} [Fast computation for causal attention mask on matrix] \label{lem:causal_mask_fast_compute}
If we have the below conditions,
\begin{itemize}
    \item Let $M \in \{0,1\}^{n \times n}$ be a causal attention mask defined in Definition~\ref{def:mask}.
    \item Let $U_0, V_0 \in \R^{n \times k}$ where $k = n^{o(1)}$.
    \item Let $H \in \R^{n \times k_H}$ where $k_H = n^{o(1)}$.
\end{itemize}

Then, there exists an algorithm, whose output satisfies that
\begin{align*}
    Z = ( M \odot (U_0 V_0^\top) ) H,
\end{align*}
which takes $n^{1+o(1)}$ time. 
\end{lemma}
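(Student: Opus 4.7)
The plan is to reduce the matrix-case statement to the vector-case statement already established in Lemma~\ref{lem:W_U1_U2T_v}, by processing the target matrix $H$ one column at a time. Concretely, write $H = [H_{*,1}, H_{*,2}, \dots, H_{*,k_H}]$ where each $H_{*,j} \in \R^n$. Since $(M \odot (U_0 V_0^\top)) H$ is a linear operator applied column-wise, the $j$-th column of the desired output $Z$ equals $(M \odot (U_0 V_0^\top)) H_{*,j}$.

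First, I would invoke Algorithm~\ref{alg:W_U1_U2T_v} (via Lemma~\ref{lem:W_U1_U2T_v}) on the input triple $(U_0, V_0, H_{*,j})$ to obtain the vector $Z_{*,j} \in \R^n$ in $O(n k)$ time. Repeating this for each $j \in [k_H]$ and concatenating the resulting column vectors yields the full matrix $Z$. Correctness follows immediately from the correctness guarantee of Lemma~\ref{lem:W_U1_U2T_v} applied to each column, together with the linearity of matrix-vector multiplication.

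For the running time, each of the $k_H$ column computations takes $O(nk)$ time, so the overall cost is $O(n \cdot k \cdot k_H)$. Using the assumptions $k = n^{o(1)}$ and $k_H = n^{o(1)}$, this product is $n \cdot n^{o(1)} \cdot n^{o(1)} = n^{1+o(1)}$, matching the claimed bound. No additional error is introduced, since Lemma~\ref{lem:W_U1_U2T_v} computes the quantity $(M \odot (U_0 V_0^\top)) v$ exactly (the only source of approximation would lie in how $U_0, V_0$ were obtained, which is external to this lemma).

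There is no substantial obstacle here: the main content of the argument is already packaged in Lemma~\ref{lem:W_U1_U2T_v}, and the only mild subtlety is verifying that the per-column cost $O(nk)$ multiplied by the $k_H = n^{o(1)}$ columns still fits within the $n^{1+o(1)}$ budget, which follows because the product of two $n^{o(1)}$ factors remains $n^{o(1)}$. The result is that $Z = (M \odot (U_0 V_0^\top)) H$ can be assembled in $n^{1+o(1)}$ time, as desired.
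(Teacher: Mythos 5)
Your proposal is correct and matches the paper's own proof essentially verbatim: both decompose $H$ into its $k_H$ columns, apply Lemma~\ref{lem:W_U1_U2T_v} to each column at cost $O(nk)$, and conclude with the bound $O(n k k_H) = n^{1+o(1)}$ since the product of two $n^{o(1)}$ factors is $n^{o(1)}$. No gaps to report.
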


\begin{proof}
For $j \in [k_H]$, let $H_{*,j} \in \R^n$ denote the $j$-th column of $H$. 

By Lemma~\ref{lem:W_U1_U2T_v}, we can compute $( M \odot (U_0 V_0^\top) ) H_{*,j}$ in $O(nk)$ time.

There are $k_H$ columns in total. 
Therefore, the overall running time is $O(nkk_H) = O(n \cdot n^{o(1)} \cdot n^{o(1)}) = n^{1+o(1)}$. 
\end{proof}

\subsection{Fast computation with causal mask} \label{sec:app_causal_attn_mask:fast_computation}

We can easily change all low-rank matrices multiplication to the algorithm mentioned in Lemma~\ref{lem:causal_mask_fast_compute}. 
Then, our framework can support the causal attention mask and still achieves almost linear time gradient computing for the multi-layer transformer. 

The causal mask directly affects the attention matrix, so it's necessary to define the attention matrix with the causal mask applied.
\begin{definition}\label{def:f_masked}
Let $M \in \{0,1\}^{n \times n}$ be a causal attention mask defined in Definition~\ref{def:mask}.
We define attention matrix with causal mask as:
\begin{align*}
    \wh{f}(X)  := D^{-1} (M \odot A) 
\end{align*}
where $A := \exp(X W X^\top /d)$ and $D:=\diag((M \odot A) \cdot {\bf 1}_n)$.
\end{definition}

After analyzing the components of gradients on $T_i(X), W_i, W_{V_i}$ in Section~\ref{sec:app_Ti_grad_fast_compute}, \ref{sec:app_W_grad_fast_compute} and \ref{sec:app_V_grad_fast_compute}, we categorize them into two groups: one involving the dot product and the other involving the Hadamard product of the attention matrix.
Then, we can show $\wh{f}(X) H$ and $(\wh{f}(X) \odot (U V^\top)) H$ for low rank matrices $U,V,H$ can be approximated in almost linear time. 
\begin{lemma}\label{lem:f_masked_H}
If we have the below conditions,
\begin{itemize}
    \item Let $\wh{f}(X)$ be defined in Definition~\ref{def:f_masked}.
    \item Let $U, V \in \R^{n \times k}$ where $k = n^{o(1)}$. 
    \item Let $H \in \R^{n \times k_H}$ where $k_H = n^{o(1)}$.
\end{itemize}

Then, approximating the following takes $n^{1+o(1)}$ time: 
\begin{itemize}
    \item Part 1. $\wh{f}(X) H$
    \item Part 2. $(\wh{f}(X) \odot (U V^\top)) H$
\end{itemize}
\end{lemma}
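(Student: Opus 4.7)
My plan is to reduce both parts to instances handled by Lemma~\ref{lem:causal_mask_fast_compute}, after first replacing the full-rank matrix $A = \exp(XWX^\top/d)$ by its polynomial-method low-rank approximation from Lemma~\ref{lem:wt_A_small_rank}. Concretely, I obtain $U_0, V_0 \in \R^{n \times r}$ with $r = n^{o(1)}$ satisfying $\|U_0 V_0^\top - A\|_\infty \leq \epsilon/\poly(n)$, constructible in $n^{1+o(1)}$ time. The diagonal normalizer $D = \diag((M \odot A) \cdot {\bf 1}_n)$ is approximated by $\widetilde{D} = \diag((M \odot (U_0 V_0^\top)) \cdot {\bf 1}_n)$, and the inner product $(M \odot (U_0 V_0^\top)) \cdot {\bf 1}_n$ is exactly of the form handled by Lemma~\ref{lem:W_U1_U2T_v}, hence computable in $O(nr) = n^{1+o(1)}$ time. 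Since the diagonal of $M$ is all ones and $A_{ii} = \exp((XWX^\top)_{ii}/d) \geq \exp(-R^2) \geq 1/\poly(n)$, each $D_{ii}$ is bounded below by $1/\poly(n)$, so multiplying by $\widetilde{D}^{-1}$ amplifies $\ell_\infty$ error by at most a $\poly(n)$ factor, which we absorb by shrinking $\epsilon$.

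For Part 1, I write $\wh{f}(X) H \approx \widetilde{D}^{-1} (M \odot (U_0 V_0^\top)) H$. The matrix $(M \odot (U_0 V_0^\top)) H$ is computed column-by-column via Lemma~\ref{lem:causal_mask_fast_compute} in $n^{1+o(1)}$ time, and left-multiplication by the diagonal $\widetilde{D}^{-1}$ costs only $O(n k_H) = n^{1+o(1)}$. For Part 2, I use that diagonal row-scaling commutes with Hadamard product and that $M \odot (A \odot (UV^\top)) = (M \odot A) \odot (UV^\top)$, yielding
\[
(\wh{f}(X) \odot (UV^\top)) H = \widetilde{D}^{-1} \bigl( M \odot (A \odot (UV^\top)) \bigr) H.
\]
Then I apply Fact~\ref{fac:olash_folklore} to the approximated product,
\[
(U_0 V_0^\top) \odot (U V^\top) = (U_0 \oslash U)(V_0 \oslash V)^\top,
\]
which is an $n \times n$ matrix of rank $rk = n^{o(1)}$ whose factor matrices are explicitly constructed in $O(nrk) = n^{1+o(1)}$ time. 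Feeding these factors into Lemma~\ref{lem:causal_mask_fast_compute} again gives $(M \odot ((U_0 \oslash U)(V_0 \oslash V)^\top)) H$ in $n^{1+o(1)}$ time, after which the diagonal scaling by $\widetilde{D}^{-1}$ finishes the computation.

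The main obstacle I anticipate is the error analysis, not the running time. I need to track how the single low-rank error $\|U_0 V_0^\top - A\|_\infty \leq \epsilon/\poly(n)$ propagates through (i) the Hadamard product with $U V^\top$, where I pick up a factor $\|UV^\top\|_\infty \leq n \|U\|_\infty \|V\|_\infty$; (ii) the mask-restricted sum against $H$, which introduces another $n \|H\|_\infty$ factor; and (iii) the left-scaling by $\widetilde{D}^{-1}$, which introduces a $\poly(n)$ factor via the lower bound on $D_{ii}$, and where I must also separately bound $\|\widetilde{D}^{-1} - D^{-1}\|_\infty$ by $\|\widetilde{D} - D\|_\infty / (D_{\min} \widetilde{D}_{\min})$. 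Since each blow-up is polynomial and $U, V, H$ have entries bounded by $\poly(n)$, the composite error remains $\epsilon/\poly(n)$ after choosing the accuracy parameter $\epsilon$ in Lemma~\ref{lem:wt_A_small_rank} small enough; this only costs a $\mathrm{polylog}(n)$ factor in $r$, preserving the $n^{1+o(1)}$ runtime.
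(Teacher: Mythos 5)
Your proposal is correct and follows essentially the same route as the paper: approximate $A$ by the low-rank factors $U_0,V_0$ from Lemma~\ref{lem:wt_A_small_rank}, obtain the diagonal normalizer via Lemma~\ref{lem:W_U1_U2T_v}, handle Part 1 column-wise through Lemma~\ref{lem:causal_mask_fast_compute}, and handle Part 2 by combining the Hadamard product with $UV^\top$ via the row-wise Kronecker identity of Fact~\ref{fac:olash_folklore} before invoking the masked-multiplication algorithm. The only differences are cosmetic (you apply the diagonal scaling after the masked product, whereas the paper folds $D^{-1}$ into $U_0$ as $U_M=(D^{-1}U_0)\oslash U$) plus your added error-propagation discussion, which the paper's proof of this lemma omits.
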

\begin{proof}
From Definition~\ref{def:f_masked}, we know 
\begin{align*}
    \wh{f}(X)  := D^{-1} (M \odot A)
\end{align*}
where $D:=\diag((M \odot A) \cdot {\bf 1}_n)$.

By Lemma~\ref{lem:wt_A_small_rank}, $U_0 V_0^\top$ is a good approximation for $A$.
Then, we can approximate $\wh{f}(X)$ by:
\begin{align*}
    D^{-1} (M \odot (U_0 V_0^\top))
\end{align*}
where $D := \diag((M \odot (U_0 V_0^\top)) \cdot {\bf 1}_n)$.

Using Lemma~\ref{lem:W_U1_U2T_v}, we know $(M \odot (U_0 V_0^\top)) \cdot v$ for any vector $v \in \mathbb{R}^{n}$ can be computed in almost linear time.

We begin by examining the normalization matrix $D^{-1}$. 
Calling Lemma~\ref{lem:W_U1_U2T_v}, we compute $(M \odot (U_0 V_0^\top)) \cdot {\bf 1}_n$ in almost linear time. 
Then, it takes $O(n)$ time to make $(M \odot (U_0 V_0^\top)) \cdot {\bf 1}_n$ diagonal.
Given that $D$ is diagonal, its inverse $D^{-1}$ can be determined in $O(n)$ time.
Thus, we can compute $D^{-1}$ in almost linear time.

{\bf Proof of Part 1.}
$H$ can be viewed as a combination of $k_H$ vectors, each of size $n$. 
Calling Lemma~\ref{lem:causal_mask_fast_compute}, we can compute $(M \odot (U_0 V_0^\top)) H$ in $n^{1+o(1)}$ time.

Finally, we compute $\underbrace{D^{-1}}_{n \times n} \underbrace{(M \odot (U_0 V_0^\top)) H}_{n \times k_H}$, which takes $n^{1+o(1)}$ time since $D^{-1}$ is diagonal.
The overall gradient computation remains $n^{1+o(1)}$ time. 

{\bf Proof of Part 2.}
The proof for this part involves Fact~\ref{fac:olash_folklore}. 
We can show
\begin{align*}
    & ~ ((D^{-1} (M \odot (U_0 V_0^\top))) \odot (U V^\top)) H \\
    = & ~ ((M \odot (D^{-1} U_0 V_0^\top)) \odot (U V^\top)) H \\
    = & ~ (M \odot ((D^{-1} U_0 V_0^\top) \odot (U V^\top))) H \\
    = & ~ (M \odot ((D^{-1} U_0) \oslash U) (V_0 \oslash V)^\top) H 
\end{align*}
where the 1st step is from $D(A \odot B) = (DA) \odot B = A \odot (DB)$ for diagonal matrix $D \in \R^{m \times m}$ and $A,B \in \R^{m \times n}$, the 2nd step comes from $(A \odot B) \odot C = A \odot (B \odot C)$ for $A,B,C \in \R^{m \times n}$, and the last step follows from Fact~\ref{fac:olash_folklore}.

Let $U_M := (D^{-1} U_0) \oslash U$ and $V_M := V_0 \oslash V$. 

For $U_M$, we compute $\underbrace{D^{-1}}_{n \times n} \underbrace{U_0}_{n \times k}$ which takes $n k$ time. 
We then compute $\underbrace{(D^{-1} U_0)}_{n \times k} \oslash \underbrace{U}_{n \times k}$ which takes $O(n k^2)$ time.

For $V_M$, we compute $\underbrace{V_0}_{n \times k} \oslash \underbrace{V}_{n \times k}$ which takes $O(n k^2)$ time.

We now have $(M \odot (U_M V_M^\top) H$. Calling Lemma~\ref{lem:causal_mask_fast_compute}, we finish the proof.
\end{proof}

We now prove for gradient components that have dot product.
\begin{lemma}[Components for dot product]\label{lem:causal_mask_dot}
If we have the below conditions,
\begin{itemize}
    \item Let $\wh{f}(X)$ be defined in Definition~\ref{def:f_masked}.
    \item Let $G_{i} \in \mathbb{R}^{n \times d}$ denote the gradient matrix resulting from the application of the chain rule up to the function $g_{i}$, i.e., $G_{i} = \frac{\d L(X)}{\d \mathsf{Attn}_i(T_{i-1}(X))}$.
    \item Let $D_6 = - f(X)\diag(K) X W^\top$ be defined in Lemma~\ref{lem:D_k_close_form}.
    \item Let $D_2 = - \diag(K) f(X) X W$ be defined in Lemma~\ref{lem:D_k_close_form}.
    \item Let $D_8 = f(X) G_i W_V^\top$ be defined in Lemma~\ref{lem:D_k_close_form}.
    \item Let $g_v := X^\top f(X) G_i$ be the gradient on $W_{V_i}$ and defined in Lemma~\ref{lem:L(X)_grad_on_V}.
\end{itemize}

Then, we can show the following can be approximated in almost linear time:
\begin{itemize}
    \item Part 1. $\wh{D}_6 = - \wh{f}(X)\diag(K) X W^\top$
    \item Part 2. $\wh{D}_2 = - \diag(K) \wh{f}(X) X W$
    \item Part 3. $\wh{D}_8 = \wh{f}(X) G_i W_V^\top$
    \item Part 4. $\wh{g}_v := X^\top \wh{f}(X) G_i$
\end{itemize}
\end{lemma}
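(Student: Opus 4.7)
The plan is to observe that every expression in Parts 1--4 has the shape $A \cdot \wh{f}(X) \cdot B$ or $A \cdot \wh{f}(X) \cdot B \cdot C$ where $A$ is either a diagonal matrix, an identity, or a thin matrix of dimension $d \times n$, and $B, C$ are small $n \times d$ or $d \times d$ matrices. Since $d = O(\log n) = n^{o(1)}$, all the ``peripheral'' factors are cheap to form and apply, so the entire computation reduces to a single invocation of Part 1 of Lemma~\ref{lem:f_masked_H}, with the dense middle factor $\wh{f}(X)$ never being materialized.

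Concretely, for Part 1, I would first compute $K$ in $O(nd)$ time via Lemma~\ref{lem:K_compute_time}, then form $H_6 := \diag(K) X W^\top \in \R^{n \times d}$ in $O(nd^2) = n^{1+o(1)}$ time by performing the diagonal scaling and the $d \times d$ multiplication column by column, and then invoke Lemma~\ref{lem:f_masked_H} Part 1 on $\wh{f}(X) H_6$. For Part 2, the strategy is symmetric: form $H_2 := XW \in \R^{n \times d}$ first, apply Lemma~\ref{lem:f_masked_H} Part 1 to obtain $\wh{f}(X) H_2$, and finally left-multiply by the diagonal $\diag(K)$ in $O(nd)$ time. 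Part 3 forms $H_8 := G_i W_V^\top \in \R^{n \times d}$ first, then calls Lemma~\ref{lem:f_masked_H} Part 1. For Part 4, apply Lemma~\ref{lem:f_masked_H} Part 1 to $\wh{f}(X) G_i$, then left-multiply by $X^\top \in \R^{d \times n}$, which costs $O(nd^2)$. In every case the dominant cost is the single call to Lemma~\ref{lem:f_masked_H} Part 1, giving total time $n^{1+o(1)}$.

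For the error analysis, let $\wt{f}(X)$ denote the implicit low-rank surrogate for $\wh{f}(X)$ used inside Lemma~\ref{lem:f_masked_H} (via the polynomial approximation of $A$ from Lemma~\ref{lem:wt_A_small_rank} together with the diagonal normalization), so that $\|\wt{f}(X) - \wh{f}(X)\|_\infty \leq \epsilon/\poly(n)$. For each part, the approximation output differs from the exact expression by a product of the form $(\wt{f}(X) - \wh{f}(X)) \cdot M$, where $M$ is a matrix of polynomially bounded entries (using $\|X\|_\infty, \|W\|_\infty, \|W_V\|_\infty, \|G_i\|_\infty \le \poly(n)$, together with $\|K\|_\infty \leq \poly(n)$ via Lemma~\ref{lem:s(X)_bound} and the $O(\log n)$-bit representation of $G_i$). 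A single application of the standard bound $\|AB\|_\infty \leq n \|A\|_\infty \|B\|_\infty$, iterated a constant number of times, then yields an $\epsilon / \poly(n)$ bound on the output, and choosing $\epsilon = 1/\poly(n)$ suffices.

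The main subtlety is not in any one of the four parts individually but in making sure that the order of operations truly keeps every intermediate matrix thin. The temptation would be to write, say, $(\wh{f}(X) \diag(K)) X W^\top$ and treat $\wh{f}(X)\diag(K)$ as a new approximated dense matrix, but this would either require materializing an $n \times n$ object or building a new low-rank factorization; instead, by associating as $\wh{f}(X) \cdot (\diag(K) X W^\top)$ the causal-masked fast multiplication of Lemma~\ref{lem:f_masked_H} applies directly with a right-hand side of width $d = n^{o(1)}$, and this is the only place where the causal-mask structure needs to be handled carefully.
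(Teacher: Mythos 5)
Your proposal is correct and follows essentially the same route as the paper: each of the four terms is reduced to one call of Part 1 of Lemma~\ref{lem:f_masked_H} applied to a thin ($n\times d$, $d=n^{o(1)}$) right-hand factor, with the remaining diagonal and $d\times d$ multiplications costing only $n^{1+o(1)}$. The only difference is the immaterial choice of associativity (you fold $W^\top$, $W$, or $W_V^\top$ into the thin factor before the masked multiplication, whereas the paper applies them afterwards), and your added error-propagation paragraph is consistent with, though not required by, the paper's running-time-only proof.
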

\begin{proof}
{\bf Proof of Part 1.}
For $\wh{D}_6$, we compute $\underbrace{\diag(K)}_{n \times n} \underbrace{X}_{n \times d}$ first, which takes $nd$ time. 

Then, we compute $\underbrace{\wh{f}(X)}_{n \times n} \underbrace{\diag(K) X}_{n \times d}$ using {\bf Part 1.} of Lemma~\ref{lem:f_masked_H}, which takes $n^{1+o(1)}$ time. 

Finally, we compute $\underbrace{\wh{f}(X)\diag(K) X}_{n \times d} \underbrace{W^\top}_{d \times d}$, which takes $n^{1+o(1)}$ time. 

{\bf Proof of Part 2.}
For $\wh{D}_2$, 
we compute $\underbrace{\wh{f}(X)}_{n \times n} \underbrace{X}_{n \times d}$ using {\bf Part 1.} of Lemma~\ref{lem:f_masked_H}, which takes $n^{1+o(1)}$ time.

Then, we compute $\underbrace{\diag(K)}_{n \times n} \underbrace{\wh{f}(X) X}_{n \times d}$, which takes $nd$ time. 

After that, we compute $\underbrace{\diag(K) \wh{f}(X) X}_{n \times d} \underbrace{W}_{d \times d}$, which takes $n^{1+o(1)}$ time. 

{\bf Proof of Part 3.}
For $\wh{D}_8$, we compute in the following steps:

We compute $\underbrace{\wh{f}(X)}_{n \times n} \underbrace{G_i}_{n \times d}$ using {\bf Part 1.} of Lemma~\ref{lem:f_masked_H}, which takes $n^{1+o(1)}$ time. 

Then, we compute $\underbrace{\wh{f}(X) G_i}_{n \times d} \underbrace{W_V^\top}_{d \times d}$, which takes $n \cdot d^2$ time.

{\bf Proof of Part 4.}
For $\wh{g}_v$, we compute in the following steps:

We compute $\underbrace{\wh{f}(X)}_{n \times n} \underbrace{G_i}_{n \times d}$ using {\bf Part 1.} of Lemma~\ref{lem:f_masked_H}, which takes $n^{1+o(1)}$ time. 

Then, we compute $\underbrace{X^\top}_{d \times n} \underbrace{\wh{f}(X) G_i}_{n \times d}$, which takes $n \cdot d^2$ time. 
\end{proof}

We then prove for gradient components that have Hadamard product. 
\begin{lemma}[Components for Hadamard product]\label{lem:causal_mask_hadamard}
If we have the below conditions,
\begin{itemize}
    \item Let $\wh{f}(X)$ be defined in Definition~\ref{def:f_masked}.
    \item Let $G_{i} \in \mathbb{R}^{n \times d}$ denote the gradient matrix resulting from the application of the chain rule up to the function $g_{i}$, i.e., $G_{i} = \frac{\d L(X)}{\d \mathsf{Attn}_i(T_{i-1}(X))}$.
    \item Let $D_7 = (f(X) \odot (h(X)G_i^\top)) X W^\top$ be defined in Lemma~\ref{lem:D_k_close_form}.
    \item Let $D_4 = (f(X) \odot (G_i h(X)^\top)) X W$ be defined in Lemma~\ref{lem:D_k_close_form}. 
    \item Let $g_w := X^\top p(X) X = X^\top (p_1(X)-p_2(X))X$ be the gradient on $W_i$ and defined in Definition~\ref{def:p} and Lemma~\ref{lem:dL_dw_fast_compute} where $p_1(X) = f(X) \odot q(X)$ and $p_2(X) = \diag(p_1(X) \cdot {\bf 1}_n) f(X)$.
\end{itemize}

Then, we can show the following can be approximated in almost linear time:
\begin{itemize}
    \item Part 1. $\wh{D}_7 = (\wh{f}(X) \odot (h(X)G_i^\top)) X W^\top$
    \item Part 2. $\wh{D}_4 = (\wh{f}(X) \odot (G_i h(X)^\top)) X W$
    \item Part 3. $\wh{g}_w := X^\top (\wh{p}_1(X) - \wh{p}_2(X)) X$ where $\wh{p}_1(X) = \wh{f}(X) \odot q(X)$ and $p_2(X) = \diag(\wh{p}_1(X) \cdot {\bf 1}_n) \wh{f}(X)$. 
\end{itemize}
\end{lemma}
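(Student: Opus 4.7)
The plan is to reduce each of the three expressions to the two primitive operations established in Lemma~\ref{lem:f_masked_H}: computing $\wh{f}(X) H$ and $(\wh{f}(X) \odot (UV^\top)) H$ in almost linear time, where $U,V$ have $n^{o(1)}$ columns and $H$ has $n^{o(1)}$ columns. The key observation is that each Hadamard factor appearing alongside $\wh{f}(X)$ already admits a low-rank factorization with rank $d = O(\log n)$, so Part 2 of Lemma~\ref{lem:f_masked_H} applies directly after identifying the right $U$ and $V$.

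For Part 1, I would write $h(X) G_i^\top = U V^\top$ with $U := h(X) \in \R^{n \times d}$ and $V := G_i \in \R^{n \times d}$, then invoke Part 2 of Lemma~\ref{lem:f_masked_H} with $H := X \in \R^{n \times d}$ to compute $(\wh{f}(X) \odot (UV^\top)) X$ in $n^{1+o(1)}$ time, and finally multiply on the right by $W^\top \in \R^{d \times d}$ in $O(n d^2) = n^{1+o(1)}$ time. Part 2 is symmetric: set $U := G_i$, $V := h(X)$, apply Lemma~\ref{lem:f_masked_H} Part 2 with $H := X$, and multiply by $W$ on the right.

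For Part 3, the $\wh{p}_1(X) X$ summand is handled exactly as in Part 2 above, giving an $n \times d$ matrix in $n^{1+o(1)}$ time. For the $\wh{p}_2(X) X$ summand, I first compute the diagonal entries $\wh{p}_1(X) \cdot {\bf 1}_n$ by invoking Lemma~\ref{lem:f_masked_H} Part 2 once more with $U := G_i$, $V := h(X)$, and $H := {\bf 1}_n$; this yields a length-$n$ vector in $n^{1+o(1)}$ time, from which $D' := \diag(\wh{p}_1(X) \cdot {\bf 1}_n)$ is read off in $O(n)$ time. Then I compute $\wh{f}(X) X$ in $n^{1+o(1)}$ time by Part 1 of Lemma~\ref{lem:f_masked_H}, left-multiply by the diagonal $D'$ in $O(nd)$ time, and subtract from the $\wh{p}_1(X) X$ result. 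Finally, left-multiplying by $X^\top \in \R^{d \times n}$ produces a $d \times d$ output in $O(n d^2) = n^{1+o(1)}$ time.

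The main conceptual step, and the only place where one has to be careful, is the $\wh{p}_2$ term in Part 3: one must avoid ever instantiating the $n \times n$ matrix $\wh{p}_1(X)$ explicitly, and instead recognize that only its row sums are needed to form the diagonal matrix $D'$. Once this is handled by applying Lemma~\ref{lem:f_masked_H} Part 2 with $H = {\bf 1}_n$, everything collapses to a fixed number of low-rank/diagonal multiplications, each costing $n^{1+o(1)}$ time, so the total running time for all three parts is $n^{1+o(1)}$.
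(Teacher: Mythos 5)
Your proposal is correct and follows essentially the same route as the paper's proof: both reduce each term to Parts 1 and 2 of Lemma~\ref{lem:f_masked_H}, using the low-rank factorizations $h(X)G_i^\top$ and $G_i h(X)^\top$ with $H=X$, and both handle the $\wh{p}_2$ term by computing the row sums $\wh{p}_1(X)\cdot {\bf 1}_n$ via the masked Hadamard product applied to ${\bf 1}_n$ rather than materializing the $n \times n$ matrix. The only (immaterial) difference is that you subtract the two $n\times d$ intermediate matrices before the final multiplication by $X^\top$, whereas the paper forms $X^\top \wh{p}_1(X) X$ and $X^\top \wh{p}_2(X) X$ separately and subtracts the $d\times d$ results.
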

\begin{proof}
{\bf Proof of Part 1.}
For $\wh{D}_7$, we can compute $\underbrace{(\wh{f}(X) \odot (h(X)G_i^\top))}_{n \times n} \underbrace{X}_{n \times d}$ using {\bf Part 2.} of Lemma~\ref{lem:f_masked_H}, which takes $n^{1+o(1)}$ time. 

We then compute $\underbrace{(\wh{f}(X) \odot (h(X)G_i^\top))X}_{n \times d} \underbrace{W^\top}_{d \times d}$, which takes $nd^2$ time. 

{\bf Proof of Part 2.}
For $\wh{D}_7$, we can compute $\underbrace{(\wh{f}(X) \odot (G_i h(X)^\top))}_{n \times n} \underbrace{X}_{n \times d}$ using {\bf Part 2.} of Lemma~\ref{lem:f_masked_H}, which takes $n^{1+o(1)}$ time. 

We then compute $\underbrace{(\wh{f}(X) \odot (G_i h(X)^\top))X}_{n \times d} \underbrace{W}_{d \times d}$, which takes $nd^2$ time.

{\bf Proof of Part 3.}
For $\wh{g}_w$, we consider $X^\top \wh{p}_1(X) X$ first. 
Based on Definition~\ref{def:q}, we have $\wh{p}_1(X) = \wh{f}(X) \odot q(X) = \wh{f}(X) \odot (G_i h(X)^\top)$.
We then compute $(\wh{f}(X) \odot (G_i h(X)^\top)) X$ using {\bf Part 2.} of Lemma~\ref{lem:f_masked_H}, which takes $n^{1+o(1)}$ time. 
After that, we compute $\underbrace{X^\top}_{d \times n} \underbrace{(\wh{f}(X) \odot (G_i h(X)^\top)) X}_{n \times d}$, which takes $nd^2$ time. 

Now we consider $X^\top \wh{p}_2(X) X$.
By definition, $\wh{p}_2(X) = \diag(\wh{p}_1(X) \cdot {\bf 1}_n) \wh{f}(X)$. 
We first compute $\wh{p}_1(X) \cdot {\bf 1}_n = (\wh{f}(X) \odot (G_i h(X)^\top)) \cdot {\bf 1}_n$ using {\bf Part 2.} of Lemma~\ref{lem:f_masked_H}, which takes $n^{1+o(1)}$ time. 
Meanwhile, we compute $\wh{f}(X) X$ using {\bf Part 1.} of Lemma~\ref{lem:f_masked_H}, which takes $n^{1+o(1)}$ time. 
We then have $\underbrace{\diag(\wh{p}_1(X) \cdot {\bf 1}_n)}_{n \times n} \underbrace{\wh{f}(X) X}_{n \times d}$, which takes $nd$ time. 
Finally, we compute $\underbrace{X^\top}_{d \times n} \underbrace{\diag(\wh{p}_1(X) \cdot {\bf 1}_n) \wh{f}(X) X}_{n \times d}$, which takes $nd^2$ time.

Together, $\underbrace{X^\top \wh{p}_1(X) X}_{d \times d} - \underbrace{X^\top \wh{p}_2(X) X}_{d \times d}$ takes $d^2$ time. 
\end{proof}

Thus, we show that our framework can support causal attention masks.
\section{Residual Connection} \label{sec:app_residual_connection}

In this section, we discuss how to adapt our framework to the attention mechanism with the residual connection. 

In Section~\ref{sec:app_residual_connection:key_concepts}, we provide a formalized definition of the two residual connections used in the attention mechanism.
In Section~\ref{sec:app_residual_connection:analysis}, we argue that with the addition of the residual connection, the gradient over the attention mechanism can be computed in almost linear time $n^{1+o(1)}$ and the approximation error can be bound by $1 / \poly(n)$. 
In Section~\ref{sec:app_residual_connection:entire_model}, we use math induction to show that the gradient over the entire transformer with the residual connection can also be computed in almost linear time $n^{1+o(1)}$. 

\subsection{Key concepts} \label{sec:app_residual_connection:key_concepts}

Recall that in Definition~\ref{def:Ti}, we have defined $T_i(X) \in \R^{n \times d}$ as the intermediate variable output by the $i$-th transformer layer. For simplicity, we use $T_i$ to represent $T_i(X)$ in the rest part of this section. 
Namely, we have
\begin{align*}
    T_i = (g_i \circ \mathsf{Attn}_i)(T_{i-1})
\end{align*}

Then, we consider adding the residual connection to our framework. 
Note that there are two residual connection operations in one transformer layer. 
We first define the residual connection over the $\mathsf{Attn}_i$ in Definition~\ref{def:Z}.

\begin{definition} [Residual connection over $\mathsf{Attn_i}$] \label{def:Z}
If we have the below conditions,
\begin{itemize}
    \item Let $T_i$ be defined as Definition~\ref{def:Ti}. 
    \item Let $\mathsf{Attn}_i$ be defined as Definition~\ref{def:single_layer_self_attn}. 
\end{itemize}

We define $Z_i \in \R^{n \times d}$ as the output with the residual connection of $\mathsf{Attn}_i$. Namely, we have
\begin{align*}
    Z_i = T_{i-1} + \mathsf{Attn}_i (T_{i-1}) 
\end{align*}
\end{definition}

Then, we consider the second residual connection over the MLP layer $g_i$, where we have the formal definition for this in Definition~\ref{def:residual_connection}. 

\begin{definition} [Residual connection over $g_i$] \label{def:residual_connection}
If we have the below conditions,
\begin{itemize}
    \item Let the multi-layer transformer be defined as Definition~\ref{def:multi_layer_self_attn}. 
    \item Let the intermediate variable $T_i$ be defined as Definition~\ref{def:Ti}. 
    \item Let $g_i$ denote the components other than self-attention in the $i$-th transformer layer.
    \item Let $Z_i \in \R^{n \times d}$ be defined as Definition~\ref{def:Z}. 
\end{itemize}

Then $T_i$, the output of $i$-th layer transformer with the residual connection, should have the following form:
\begin{align*}
    T_i = Z_i + g_i(Z_i)
\end{align*}

\end{definition}

\subsection{Analysis of the residual connection} \label{sec:app_residual_connection:analysis}

In the previous section, we have defined the two residual connection operations. 

In this section, we argue that if the gradient computation can be done in almost linear time without the residual connection, then with the addition of the residual connection, the gradient computation can also be completed in almost linear time. 

\begin{lemma} [Analysis of the residual connection] \label{lem:analysis_over_residual_connection}
If we have the below conditions,
\begin{itemize}
    \item Let $L(X)$ be defined as Definition~\ref{def:overall_loss_function}. 
    \item Let $Y_R \in \R^{n \times d}$ and $X_R \in \R^{n \times d}$ denote the output and input of the residual connection, respectively. 
    \item Let $\mathsf{H}: \R^{n \times d} \rightarrow \R^{n \times d}$ denote some layer in the transformer, such as MLP, $\mathsf{Attn}$, etc. 
    \item Suppose the residual connection can be written as 
    \begin{align*}
        Y_R = X_R + \mathsf{H} (X_R). 
    \end{align*}
    \item Assuming we have $\frac{\d L(X)}{\d Y_R} \in \R^{n \times d}$, then we can calculate $\frac{\d L(X)}{\d Y_R} \frac{\d \mathsf{H}(X_R)}{\d X_R}$ in almost linear time $n^{1 +o(1)}$. 
\end{itemize}

Then, we can show that, 
\begin{itemize}
    \item $\frac{\d L(X)}{\d X_R}$ can be calculated in almost linear time $n^{1+o(1)}$. 
    \item If $\frac{\d L(X)}{\d Y_R}$ has $1 / \poly(n)$ approximation error, then the approximation error on $\frac{\d L(X)}{\d X_R}$ is still $1 / \poly(n)$. 
\end{itemize}
    
\end{lemma}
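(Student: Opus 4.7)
The plan is to apply the chain rule to $Y_R = X_R + \mathsf{H}(X_R)$, which immediately gives a clean decomposition of $\frac{\d L(X)}{\d X_R}$ into a sum of two terms, one of which is free (the identity contribution) and one of which is exactly the quantity the hypothesis assures us can be computed in almost linear time.

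More precisely, I would first differentiate $Y_R = X_R + \mathsf{H}(X_R)$ with respect to $X_R$ to obtain $\frac{\d Y_R}{\d X_R} = I + \frac{\d \mathsf{H}(X_R)}{\d X_R}$, then apply the chain rule to write
\begin{align*}
    \frac{\d L(X)}{\d X_R} = \frac{\d L(X)}{\d Y_R} + \frac{\d L(X)}{\d Y_R} \cdot \frac{\d \mathsf{H}(X_R)}{\d X_R}.
\end{align*}
For the running time, the second summand is computed in $n^{1+o(1)}$ time by the hypothesis of the lemma, and the first summand is already given as input; the final addition of two $n \times d$ matrices takes only $O(nd) = n^{1+o(1)}$ time. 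Thus the total cost remains $n^{1+o(1)}$.

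For the error bound, let $\wt{g}_Y$ denote the approximation of $\frac{\d L(X)}{\d Y_R}$ with $\| \wt{g}_Y - \frac{\d L(X)}{\d Y_R} \|_\infty \leq 1/\poly(n)$, and let $\wt{h}$ denote the almost-linear-time approximation of $\wt{g}_Y \cdot \frac{\d \mathsf{H}(X_R)}{\d X_R}$. Define $\wt{g}_X := \wt{g}_Y + \wt{h}$. By the triangle inequality,
\begin{align*}
    \Big\| \wt{g}_X - \frac{\d L(X)}{\d X_R} \Big\|_\infty \leq \Big\| \wt{g}_Y - \frac{\d L(X)}{\d Y_R} \Big\|_\infty + \Big\| \wt{h} - \frac{\d L(X)}{\d Y_R} \cdot \frac{\d \mathsf{H}(X_R)}{\d X_R} \Big\|_\infty,
\end{align*}
and each term is at most $1/\poly(n)$ by hypothesis and by the error guarantee of the sub-routine that computes the product (following the same style of analysis as in Lemma~\ref{lem:dL_dt_fast_compute}, where propagation through a bounded-entry linear map amplifies error by at most a $\poly(n)$ factor). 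Choosing the internal accuracy parameter small enough yields a final error of $1/\poly(n)$.

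The argument is essentially a direct chain-rule calculation, so there is no serious obstacle. The only mildly delicate point is ensuring that the error-propagation factor from multiplying the upstream gradient by $\frac{\d \mathsf{H}(X_R)}{\d X_R}$ is bounded by $\poly(n)$; this is handled by invoking the standing assumption that all intermediate quantities have entries representable in $O(\log n)$ bits, so matrix-norm factors inflate the error by at most $\poly(n)$, which is absorbed by shrinking $\epsilon$ at the start.
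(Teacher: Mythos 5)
Your proposal is correct and follows essentially the same route as the paper's proof: the identical chain-rule decomposition $\frac{\d L(X)}{\d X_R} = \frac{\d L(X)}{\d Y_R} + \frac{\d L(X)}{\d Y_R}\frac{\d \mathsf{H}(X_R)}{\d X_R}$, the same running-time accounting (hypothesis handles the product term, $O(nd)$ for the addition), and the same style of error argument where the $\poly(n)$ amplification from the bounded-entry Jacobian is absorbed by the $1/\poly(n)$ accuracy. Your explicit triangle-inequality split is just a slightly more detailed phrasing of the paper's error-magnification bound.
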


\begin{proof}
By the chain rule, we have
\begin{align} \label{eq:dL_dXR}
    \frac{\d L(X)}{\d X_R} 
    = & ~ \frac{\d L(X)}{\d Y_R} \frac{\d Y_R}{\d X_R} \notag \\
    = & ~ \frac{\d L(X)}{\d Y_R} (I + \frac{\d \mathsf{H}(X_R)}{\d X_R}) \notag \\
    = & ~ \frac{\d L(X)}{\d Y_R} + \frac{\d L(X)}{\d Y_R} \frac{\d \mathsf{H}(X_R)}{\d X_R}
\end{align}
where the 1st step is from the chain rule, the 2nd step comes from basic calculus, the 3rd step is because of basic algebra. 

By the assumption, we already have $\frac{\d L(X)}{\d Y_R}$, and $\frac{\d L(X)}{\d Y_R} \frac{\d \mathsf{H}(X_R)}{\d X_R}$ can be computed in almost linear time $n^{1+o(1)}$. 

The addition operation between $\frac{\d L(X)}{\d Y_R}$ and $\frac{\d L(X)}{\d Y_R} \frac{\d \mathsf{H}(X_R)}{\d X_R}$ takes $n \cdot d$ time. 

Therefore, the overall running time for $\frac{\d L(X)}{\d X_R}$ is $n^{1+o(1)}$.

Then, we consider the approximation error. 

By Eq.~\eqref{eq:dL_dXR} and basic linear algebra, the approximation error will not be magnified by more than $(n \cdot d \poly(n) + 1)$. 
Since $(n \cdot d \poly(n) + 1) (1 / \poly(n)) = \poly(n)$, the approximation error on $\frac{\d L(X)}{\d X_R}$ can be bounded by $1 / \poly(n)$.

\end{proof}

\subsection{Analysis for the entire model with the residual connection} \label{sec:app_residual_connection:entire_model}

In the previous section, we have shown that, with the addition of the residual connection on a single component, the gradient computation time can still be done in almost linear time. 
We will apply this finding to the entire model. 

We begin by single layer proof.
\begin{lemma} [Fast gradient computation for single-layer transformer with residual connection] \label{lem:residual_single_layer_fast_compute}
If we have the below conditions,
\begin{itemize}
    \item Let $L(X)$ be defined as Definition~\ref{def:overall_loss_function}. 
    \item Let $X \in \R^{n \times d}$ be defined as Definition~\ref{def:single_layer_self_attn}. 
    \item Suppose we have a single-layer transformer (see Definition~\ref{def:multi_layer_self_attn}).
    \item Let the residual connection be defined as Definition~\ref{def:Z} and \ref{def:residual_connection}. 
\end{itemize}

Then, we can show that,
\begin{itemize}
    \item {\bf Part 1: running time.} Our algorithm can approximate $\frac{\d L(X)}{\d X}$ in $n^{1 + o(1)}$ time. 
    \item {\bf Part 2: error bound.} The approximation error of the single-layer transformer with the residual connection can be bounded by $1 / \poly(n)$. 
    Namely, our algorithm output $\wt{g}_{r_1}$ satisfies
    \begin{align*}
        \| \wt{g}_{r_1} - \frac{\d L(X)}{\d X} \|_\infty \leq 1 / \poly(n)
    \end{align*}
\end{itemize}

\end{lemma}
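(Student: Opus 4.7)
The plan is to apply Lemma~\ref{lem:analysis_over_residual_connection} successively to the two residual connections that appear in a single transformer layer, combined with the single-layer gradient approximation machinery already developed in Lemmas~\ref{lem:dL_dt_fast_compute}, \ref{lem:dL_dw_fast_compute}, and \ref{lem:dL_dv_fast_compute}. Unfolding Definitions~\ref{def:Z} and~\ref{def:residual_connection} for $m=1$, we write $T_0 = g_0(X)$, $Z_1 = T_0 + \mathsf{Attn}_1(T_0)$, and $T_1 = Z_1 + g_1(Z_1)$. For the single-layer case, the initial gradient $\frac{\d L(X)}{\d T_1}$ is obtained directly from the definition of $L$ in $O(nd)$ time.

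First I would back-propagate through the MLP residual $T_1 = Z_1 + g_1(Z_1)$. Given $\frac{\d L(X)}{\d T_1}$, Lemma~\ref{lem:Gi_compute_time} shows that the product $\frac{\d L(X)}{\d T_1}\cdot\frac{\d g_1(Z_1)}{\d Z_1}$ is computable in $n^{1+o(1)}$ time. Instantiating Lemma~\ref{lem:analysis_over_residual_connection} with $\mathsf{H}=g_1$, $X_R=Z_1$, $Y_R=T_1$ then yields $\frac{\d L(X)}{\d Z_1}$ in $n^{1+o(1)}$ time while preserving a $1/\poly(n)$ error bound. Next I would back-propagate through the attention residual $Z_1 = T_0 + \mathsf{Attn}_1(T_0)$. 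Treating $\frac{\d L(X)}{\d Z_1}$ as the gradient $G_1$ entering the attention block, Lemma~\ref{lem:dL_dt_fast_compute} lets us approximate $\frac{\d L(X)}{\d Z_1}\cdot \frac{\d \mathsf{Attn}_1(T_0)}{\d T_0}$ in $n^{1+o(1)}$ time with $1/\poly(n)$ error, and a second application of Lemma~\ref{lem:analysis_over_residual_connection} (with $\mathsf{H}=\mathsf{Attn}_1$) delivers $\frac{\d L(X)}{\d T_0}$ within the same budgets. In parallel, Lemmas~\ref{lem:dL_dw_fast_compute} and~\ref{lem:dL_dv_fast_compute} produce the weight gradients on $W_1$ and $W_{V_1}$ from $\frac{\d L(X)}{\d Z_1}$ in almost linear time, since the residual term $T_0$ in $Z_1$ does not depend on those weights. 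Finally, one more invocation of Lemma~\ref{lem:Gi_compute_time} propagates $\frac{\d L(X)}{\d T_0}$ through $T_0 = g_0(X)$ to produce $\wt g_{r_1} \approx \frac{\d L(X)}{\d X}$ in $n^{1+o(1)}$ time.

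For the error bound, each of the three back-propagation stages above passes the approximated gradient through a linear map whose entries are bounded by $\poly(n)$ (by the boundedness results in Section~\ref{sec:app_preli:bounded_entries} and the $O(\log n)$-bit assumption on $W_g$). Hence, by the second conclusion of Lemma~\ref{lem:analysis_over_residual_connection} and a telescoping $\ell_\infty$ inequality identical in spirit to the one used in Lemma~\ref{lem:err_for_single_layer_transformer}, the error at most multiplies by a $\poly(n)$ factor at each step. Choosing the internal accuracy parameter $\epsilon$ in the underlying low-rank approximations (Lemma~\ref{lem:low_rank_f}) to be $1/\poly(n)$ with a sufficiently large polynomial degree therefore guarantees $\|\wt g_{r_1}-\frac{\d L(X)}{\d X}\|_\infty \le 1/\poly(n)$.

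The main obstacle is not any single hard calculation but the bookkeeping required to confirm that the extra summand $\frac{\d L(X)}{\d Y_R}$ introduced by the residual connection does not interfere with any of the low-rank approximation reductions used in Section~\ref{sec:app_Ti_grad_fast_compute}. In particular, one must verify that passing the perturbed gradient $\wt{g}_{t}$ into the attention back-propagation (in place of the exact $G_1$) still permits the $z_6,z_7,z_2,z_4$ low-rank constructions, because those constructions use $G_i$ inside row-wise Kronecker products (cf.\ Lemmas~\ref{lem:z7(X)_fast_compute} and~\ref{lem:z4(X)_fast_compute}). This is handled by noting that those lemmas only require $\|G_1\|_\infty \le \poly(n)$, a property inherited by $\wt{g}_t$ from the inductive $1/\poly(n)$ error bound, so the estimates go through unchanged.
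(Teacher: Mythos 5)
Your proposal is correct and follows essentially the same route as the paper's proof: the same decomposition $T_0 = g_0(X)$, $Z_1 = T_0 + \mathsf{Attn}_1(T_0)$, $T_1 = Z_1 + g_1(Z_1)$, with Lemma~\ref{lem:Gi_compute_time} and Lemma~\ref{lem:dL_dt_fast_compute} verifying the hypothesis of Lemma~\ref{lem:analysis_over_residual_connection} at each of the two residual connections, followed by a final application of Lemma~\ref{lem:Gi_compute_time} to reach $\frac{\d L(X)}{\d X}$ and the same per-step $\poly(n)$ error-magnification argument. The extra remarks on the weight gradients and on feeding the perturbed gradient into the low-rank constructions go slightly beyond what the paper writes but do not change the argument.
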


\begin{proof}

We use $T_i$ to represent $T_i(X)$ for simplicity. 
By the definition of $T_i$ (see also Definition~\ref{def:Ti}), we have the following equations
\begin{align*}
    T_0 = g_0(X)
\end{align*}

Follow Definition~\ref{def:Z} and \ref{def:residual_connection}, we have
\begin{align*}
    Z_1 = T_0 + \mathsf{Attn}_1(T_0)
\end{align*}
and
\begin{align*}
    T_1 = Z_1 + g_1(Z_1)
\end{align*}

Then we calculate the gradient by the following steps:
\begin{itemize}
    \item {\bf Step 1: Calculate $\frac{\d L(X)}{\d T_1}$.} By the definition of $L(X)$ (see also Definition~\ref{def:overall_loss_function}), we have $\frac{\d L(X)}{\d T_1}$ can be computed in $n \cdot d$ time. 
    \item {\bf Step 2: Calculate $\frac{\d L(X)}{\d Z_1}$.} By Lemma~\ref{lem:Gi_compute_time}, the assumption in Lemma~\ref{lem:analysis_over_residual_connection} is satisfied. Therefore, we have $\frac{\d L(X)}{\d Z_1}$ can be computed in almost linear time $n^{1+o(1)}$. 
    \item {\bf Step 3: Calculate $\frac{\d L(X)}{\d T_0}$.} By Lemma~\ref{lem:dL_dt_fast_compute}, the assumption in Lemma~\ref{lem:analysis_over_residual_connection} is satisfied. Hence, $\frac{\d L(X)}{\d T_0}$ can be computed in almost linear time. By Lemma~\ref{lem:dL_dt_fast_compute}, the approximation error is $1 / \poly(n)$.
    \item {\bf Step 4: Calculate $\frac{\d L(X)}{\d X}$.} By Lemma~\ref{lem:Gi_compute_time}, $\frac{\d L(X)}{\d X}$ can be computed in $n^{1+o(1)}$. The approximation error is $(n \cdot d) (1 / \poly(n)) = (1 / \poly(n))$. 
\end{itemize}

To sum up, we can show that the overall running time for $\frac{\d L(X)}{\d X}$ is $n^{1+o(1)}$ and the approximation error is $1 / \poly(n)$. 

Let $\wt{g}_{r_1}$ be the output of {\bf Step 4}. Then we are done.

\end{proof}

We now prove for multi-layer.
\begin{lemma} [Fast gradient computation for multi-layer transformer with residual connection]
If we have the below conditions,
\begin{itemize}
    \item Let $L(X)$ be defined as Definition~\ref{def:overall_loss_function}. 
    \item Let $X \in \R^{n \times d}$ be defined as Definition~\ref{def:single_layer_self_attn}. 
    \item Let the residual connection be defined as Definition~\ref{def:Z} and \ref{def:residual_connection}. 
    \item Suppose we have a $m$-layer transformer (see Definition~\ref{def:multi_layer_self_attn}).
\end{itemize}

Then, we can show that,
\begin{itemize}
    \item {\bf Part 1: running time.} Our algorithm can approximate $\frac{\d L(X)}{\d X}$ in $n^{1 + o(1)}$ time. 
    \item {\bf Part 2: error bound.} The approximation error of the $m$-layer transformer with the residual connection can be bounded by $1 / \poly(n)$. 
    Namely, our algorithm output $\wt{g}_{r}$ satisfies
    \begin{align*}
        \| \wt{g}_{r} - \frac{\d L(X)}{\d X} \|_\infty \leq 1 / \poly(n)
    \end{align*}
\end{itemize}

\end{lemma}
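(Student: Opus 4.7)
The plan is to prove this by induction on the number of layers $m$, piggybacking on the two preceding lemmas. The base case $m=1$ is exactly Lemma~\ref{lem:residual_single_layer_fast_compute}, which establishes both the $n^{1+o(1)}$ running time and the $1/\poly(n)$ approximation error for a single-layer transformer with residual connections. For the inductive step, assume the statement holds for any $k$-layer transformer with residual connections; we want to establish it for a $(k+1)$-layer transformer.

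First, I would peel off the top layer. Write the $(k+1)$-layer transformer output $T_{k+1}$ in terms of $Z_{k+1} = T_k + \mathsf{Attn}_{k+1}(T_k)$ and $T_{k+1} = Z_{k+1} + g_{k+1}(Z_{k+1})$, following Definitions~\ref{def:Z} and \ref{def:residual_connection}. Given $\frac{\d L(X)}{\d T_{k+1}}$ (computable in $O(nd)$ time from the loss), I would apply Lemma~\ref{lem:analysis_over_residual_connection} to the outer residual connection $T_{k+1} = Z_{k+1} + g_{k+1}(Z_{k+1})$ (with $\mathsf{H} = g_{k+1}$), using Lemma~\ref{lem:Gi_compute_time} to verify the hypothesis that $\frac{\d L(X)}{\d T_{k+1}} \cdot \frac{\d g_{k+1}(Z_{k+1})}{\d Z_{k+1}}$ is computable in $n^{1+o(1)}$ time, to obtain $\frac{\d L(X)}{\d Z_{k+1}}$. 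I would then apply Lemma~\ref{lem:analysis_over_residual_connection} again to $Z_{k+1} = T_k + \mathsf{Attn}_{k+1}(T_k)$ (with $\mathsf{H} = \mathsf{Attn}_{k+1}$), where Lemma~\ref{lem:dL_dt_fast_compute} supplies the hypothesis, yielding $\frac{\d L(X)}{\d T_k}$ in $n^{1+o(1)}$ time. Simultaneously, Lemmas~\ref{lem:dL_dw_fast_compute} and \ref{lem:dL_dv_fast_compute} give the weight gradients $\frac{\d L(X)}{\d W_{k+1}}$ and $\frac{\d L(X)}{\d W_{V_{k+1}}}$ in the top layer within the same budget. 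At this point the remaining problem is precisely to back-propagate through the underlying $k$-layer transformer starting from $\frac{\d L(X)}{\d T_k}$, which is handled by the inductive hypothesis.

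For the running time, summing over all $m = n^{o(1)}$ layers gives total cost $m \cdot n^{1+o(1)} = n^{1+o(1)}$, since $m$ is subpolynomial in $n$. The main obstacle I anticipate is controlling error propagation across layers: each invocation of Lemma~\ref{lem:analysis_over_residual_connection} can amplify an input error of $1/\poly(n)$ by at most a factor of $(nd \cdot \poly(n) + 1) = \poly(n)$, and this happens a constant number of times per layer across $m = n^{o(1)}$ layers. The key observation is that by initially choosing the approximation parameter $\epsilon$ inside Lemmas~\ref{lem:dL_dt_fast_compute}, \ref{lem:dL_dw_fast_compute}, \ref{lem:dL_dv_fast_compute} to be $1/n^c$ for a sufficiently large constant $c$ (which does not affect the $n^{1+o(1)}$ time bound, since the constructions scale polylogarithmically in $1/\epsilon$), the cumulative error after $m = n^{o(1)}$ layers of multiplication by $\poly(n)$ factors remains bounded by $\poly(n)^{n^{o(1)}} / n^c = 1/\poly(n)$ for an appropriate absorbing choice of $c$.

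Finally, setting $\wt{g}_r$ to be the output of this inductive back-propagation process at the bottom layer, i.e., the approximation of $\frac{\d L(X)}{\d X}$ produced after the full pass through the $m$ layers plus one final application of Lemma~\ref{lem:Gi_compute_time} for $g_0$, gives the claimed bound $\|\wt{g}_r - \frac{\d L(X)}{\d X}\|_\infty \leq 1/\poly(n)$, completing the induction.
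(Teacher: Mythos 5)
Your overall strategy is the same as the paper's: induction on the number of layers, with Lemma~\ref{lem:residual_single_layer_fast_compute} as the base case and the two residual connections in each layer handled by Lemma~\ref{lem:analysis_over_residual_connection} together with Lemmas~\ref{lem:Gi_compute_time}, \ref{lem:dL_dt_fast_compute}, \ref{lem:dL_dw_fast_compute}, and \ref{lem:dL_dv_fast_compute}. The only structural difference is the direction of the induction: you peel off the top layer and then invoke the inductive hypothesis on the remaining $k$ layers, whereas the paper inserts the new layer at the bottom, so the inductive hypothesis is applied to the top $k$ layers and delivers $\frac{\d L(X)}{\d T_1}$, after which the new bottom layer is processed exactly as in the single-layer lemma. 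Your ordering is workable, but note that it requires the inductive hypothesis in a slightly strengthened form: the $k$-layer statement must accept an approximate upstream gradient $\wt{g}_{T_k}$ (with $1/\poly(n)$ error) in place of the exactly computed loss gradient and must propagate that error, whereas the paper's bottom-insertion ordering lets it quote the hypothesis verbatim and push the approximate gradient only through the single newly added layer.

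The concrete flaw is in your error-accumulation step. The identity $\poly(n)^{n^{o(1)}}/n^c = 1/\poly(n)$ is false: if each of the $m = n^{o(1)}$ layers can amplify the error by a factor $n^{a}$ for a fixed constant $a$, the compounded factor is $n^{a m}$, which is superpolynomial, and no constant $c$ in $\epsilon = n^{-c}$ can absorb it. Nor can you simply take $\epsilon = n^{-cm}$: then $\log(1/\epsilon) = \Theta(m \log n)$, and the rank parameter in Lemma~\ref{lem:wt_A_small_rank} (hence $k_1$ in Lemma~\ref{lem:low_rank_f}) is no longer $n^{o(1)}$, so the $n^{1+o(1)}$ running time is lost. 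The paper never forms a compounded global factor; in its inductive step here (and in Lemma~\ref{lem:entrie_model_error_bound}) it argues layer by layer that an incoming $1/\poly(n)$ error, after the $\poly(n)$ amplification of one layer plus that layer's fresh $\epsilon/\poly(n)$ low-rank approximation error with $\epsilon = 1/\poly(n)$, is again $1/\poly(n)$, and the induction carries this invariant down to $\frac{\d L(X)}{\d X}$. You should restate your Part 2 in this per-layer form, re-establishing the $1/\poly(n)$ bound at each inductive step, rather than trading a single initial choice of $\epsilon$ against a product of $\poly(n)$ factors over all $m$ layers.
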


\begin{proof}
We use math induction in this proof.

{\bf Step 1: Proof of a single-layer transformer.}

Firstly, by Lemma~\ref{lem:residual_single_layer_fast_compute}, we have the statement holds for a single-layer transformer. 

{\bf Step 2: Assumption for $k$-layer transformer.}

Secondly, we assume for any $k$, for $k$-layer transformer model, we have 
\begin{itemize}
    \item {\bf Part 1: running time.} Our algorithm can approximate $\frac{\d L(X)}{\d X}$ in $O(n^{1 + o(1)})$ time. 
    \item {\bf Part 2: error bound.} The approximation error of the $k$-layer transformer can be bounded by $1 / \poly(n)$. 
    Namely, our algorithm output $\wt{g}$ satisfies
    \begin{align*}
        \| \wt{g} - \frac{\d L(X)}{\d X} \|_\infty \leq 1 / \poly(n)
    \end{align*}
\end{itemize}

{\bf Step 3: Proof of $(k+1)$-layer transformer.}

Thirdly, we consider the $(k+1)$-layer transformer model.

Let $\mathsf{F}_k$ denote a $k$-layer transformer with the residual connection.

Then, the entire model can be written as 
\begin{align*}
    (\mathsf{F}_k \circ g_0)(X)
\end{align*}

By the definition of $T_i$, we have
\begin{align*}
    T_0 = g_0(X)
\end{align*}

Then, by definition of $Z_i$ (see also Definition~\ref{def:Z}), we have
\begin{align*}
    Z_1 = T_0 + \mathsf{Attn}_1 (T_0)
\end{align*}

By Definition~\ref{def:residual_connection}, we have
\begin{align*}
    T_1 = Z_1 + g_1 (Z_1)
\end{align*}

Without loss of generality, we assume that the additional transformer layer is added at the beginning of the model. 
Then, the $(k+1)$-layer transformer model has the following structure:
\begin{align*}
    \mathsf{F}_{k+1} (X) = \mathsf{F}_k (T_1)
\end{align*}

By the assumption for $k$-layer transformer, we have
$\frac{\d L(X)}{\d T_1}$ can be computed in almost linear time $n^{1+o(1)}$ and the approximation error can be bounded by $1 / \poly(n)$. 

We apply similar proof of Lemma~\ref{lem:residual_single_layer_fast_compute}, then we can show that, we can compute $\frac{\d L(X)}{\d X}$ in almost linear time $n^{1+o(1)}$ and the approximation error can be bounded by $1 / \poly(n)$.

\end{proof}

\section{Multi-head Attention} \label{sec:app_multi_head}

Following the notation used in Section~\ref{sec:app_discussion:multi_head}, we use $h$ to denote the number of heads, and $d_h = d / h$ to denote the dimension of each head. 

\begin{definition} [Multi-head attention] \label{def:multi_head_attention}
If we have the below conditions,
\begin{itemize}
    \item Let $h$ denote the number of heads.
    \item Let $d$ denote the hidden dimension. Let $d_h = d / h$ denote the dimension of each attention head. 
    \item Let $Q, K, V \in \R^{n \times d}$ be defined as Definition~\ref{def:single_layer_self_attn}. 
    \item Let $f(X)$ be defined as Definition~\ref{def:f}. 
    \item Let $s(X)$ be defined as Definition~\ref{def:s}. 
\end{itemize}

The multi-head attention can be formalized as follows:
\begin{itemize}
    \item {\bf Step 1.} Split the hidden dimension $d$ of $Q, K, V \in \R^{n \times d}$ into $h$ parts. Then, for each $l \in [h]$, we have $Q_l, K_l, V_l \in \R^{n \times d_h}$. 
    \item {\bf Step 2.} For each $l \in [h]$, calculate the attention matrix $f_l :=  \mathsf{Softmax}(  Q_l K_l^\top / d_h ) \in \R^{n \times n}$, and calculate the corresponding attention result $s_l := f_l V_l \in \R^{n \times d_h}$. 
    \item {\bf Step 3.} Concatenate $s_l \in \R^{n \times d_h}$ together, then we have the final multi-head attention output $s \in \R^{n \times d}$. 
\end{itemize}
\end{definition}

Then, we dive into the analysis of the gradient computation process over the attention mechanism with multi-head attention. 

\begin{lemma} [Analysis of the multi-head attention] \label{lem:analysis_of_multi_head_attention}
If we have the below conditions,
\begin{itemize}
    \item Let $\mathsf{Attn}(X)$ be defined as Definition~\ref{def:single_layer_self_attn}. 
    \item Let multi-head attention mechanism be defined as Definition~\ref{def:multi_head_attention}. 
    \item Let $Y_m, X_m \in \R^{n \times d}$ denote the output and input of the multi-head attention, respectively. 
\end{itemize}

Then, we can show that, 
\begin{itemize}
    \item $\frac{\d L(X)}{\d X_m}$ can be calculated in almost linear time $n^{1+o(1)}$. 
    \item If $\frac{\d L(X)}{\d Y_m}$ has $1 / \poly(n)$ approximation error, then the approximation error on $\frac{\d L(X)}{\d X_m}$ is still $1 / \poly(n)$. 
\end{itemize}
\end{lemma}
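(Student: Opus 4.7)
The plan is to exploit the fact that multi-head attention decomposes into $h$ independent single-head attention computations, each acting on matrices of reduced hidden dimension $d_h = d/h \leq d = O(\log n)$, followed by a concatenation step that is trivially linear. Since all of the fast gradient machinery developed earlier (Lemmas~\ref{lem:dL_dt_fast_compute}, \ref{lem:dL_dw_fast_compute}, \ref{lem:dL_dv_fast_compute}) applies to any single-head attention whose hidden dimension is $O(\log n)$, it applies verbatim to each head.

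First I would formalize the split/concatenate maps. Let $S_l : \R^{n \times d} \to \R^{n \times d_h}$ be the linear operator extracting the $l$-th chunk of columns, and let its adjoint $S_l^\top : \R^{n \times d_h} \to \R^{n \times d}$ be the corresponding zero-padded embedding. Then by Definition~\ref{def:multi_head_attention}, the output $Y_m \in \R^{n \times d}$ can be written as $Y_m = \sum_{l=1}^{h} S_l^\top \cdot s_l(X_m)$, where $s_l$ is the single-head attention result computed on $Q_l = S_l(X_m W_Q), K_l = S_l(X_m W_K), V_l = S_l(X_m W_V)$. Splitting the upstream gradient is immediate: $\frac{\d L(X)}{\d s_l} = S_l(\frac{\d L(X)}{\d Y_m}) \in \R^{n \times d_h}$, and this split step costs only $O(nd)$ time and preserves the entrywise error exactly.

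Next I would invoke Lemma~\ref{lem:dL_dt_fast_compute} on each head, treating $s_l$ as the single-head attention output with input dimension $d_h$. This yields $\frac{\d L(X)}{\d X_m^{(l)}} \in \R^{n \times d}$ (the contribution of head $l$ to the input gradient) in $n^{1+o(1)}$ time per head, with per-head error $1/\poly(n)$ as long as the upstream gradient has error $1/\poly(n)$ (which is our hypothesis on $\frac{\d L(X)}{\d Y_m}$). Summing via the chain rule gives $\frac{\d L(X)}{\d X_m} = \sum_{l=1}^{h} \frac{\d L(X)}{\d X_m^{(l)}}$, and this aggregation takes $O(h \cdot n d) = n^{1+o(1)}$ time since $h$ is treated as a constant (or at most $n^{o(1)}$).

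For the running time, the total cost is $h \cdot n^{1+o(1)} + O(h n d) = n^{1+o(1)}$. For the error bound, each head contributes $1/\poly(n)$ entrywise error, and the triangle inequality over $h = n^{o(1)}$ summands gives $\|\wt{g} - \frac{\d L(X)}{\d X_m}\|_\infty \leq h \cdot (1/\poly(n)) = 1/\poly(n)$ after adjusting the polynomial. The main subtlety, and essentially the only non-routine step, is ensuring that the per-head invocation of the earlier low-rank approximation lemmas (which implicitly require $\|X W\|_\infty \leq R$ with $R = o(\sqrt{\log n})$ in Lemma~\ref{lem:low_rank_f}) still holds when applied to the restricted matrices $Q_l, K_l, V_l$; this follows because the infinity norm does not increase under the projection $S_l$, so the hypotheses transfer directly. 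Everything else is bookkeeping.
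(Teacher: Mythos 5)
Your proposal is correct and follows essentially the same route as the paper: decompose the multi-head output into its $h$ single-head contributions, invoke the earlier per-head fast-gradient machinery (Lemma~\ref{lem:dL_dt_fast_compute}) on each head, and sum over the $h = O(1)$ (or $n^{o(1)}$) heads, with the approximation error controlled by the triangle inequality. The only cosmetic difference is that you make the split/concatenate maps $S_l$ explicit, while the paper routes the same bookkeeping through the chain-rule factor $\frac{\d Y_m}{\d s}$ via Lemma~\ref{lem:Gi_compute_time}; your remark that the $\ell_\infty$-boundedness hypotheses of Lemma~\ref{lem:low_rank_f} survive the per-head restriction is a point the paper leaves implicit.
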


\begin{proof}

Following the notations used in Definition~\ref{def:multi_head_attention}, for $l \in [h]$, we use $s_l \in \R^{n \times d_h}$ to denote the output by each attention head. 
And we use $s \in \R^{n \times d}$ to denote the concatenated version of the output of the multi-head attention.

By the chain rule and the definition of $L(X)$ (see also Definition~\ref{def:overall_loss_function}), we have
\begin{align*}
    \frac{\d L(X)}{\d X_m}
    = & ~ \frac{\d L(X)}{\d Y_m} \cdot \frac{\d Y_m}{\d s} \frac{\d s}{\d X_m}\\ 
    = &~ \frac{\d L(X)}{\d Y_m} \cdot \frac{\d Y_m}{\d s} \sum_{l=1}^h \frac{\d s_l}{\d X_m}
\end{align*}
where the 1st step is from the chain rule, the 2nd step comes from $s \in \R^{n \times d}$ is the concatenated version of $s_l \in \R^{n \times d_h}$. 

We calculate the gradient in the following steps:
\begin{itemize}
    \item {\bf Step 1: Calculate $\frac{\d L(X)}{\d Y_m}$.} By the definition of $L(X)$ (Definition~\ref{def:overall_loss_function}), we have that $\frac{\d L(X)}{\d Y_m}$ can be calculated in $n \cdot d$ time.
    \item {\bf Step 2: Calculate $\frac{\d L(X)}{\d Y_m} \cdot \frac{\d Y_m}{\d s}$.} Since we already have $\frac{\d L(X)}{\d Y_m}$, by Lemma~\ref{lem:Gi_compute_time}, we have $\frac{\d L(X)}{\d Y_m} \cdot \frac{\d Y_m}{\d s}$ can be computed in almost linear time $n^{1+o(1)}$. 
    \item {\bf Step 3: Calculate $\frac{\d L(X)}{\d Y_m} \cdot \frac{\d Y_m}{\d s} \sum_{l=1}^h \frac{\d s_l}{\d X_m}$.} For each $l \in [h]$, by Lemma~\ref{lem:dL_dt_fast_compute}, $\frac{\d L(X)}{\d Y_m} \cdot \frac{\d Y_m}{\d s} \cdot \frac{\d s_l}{\d X_m}$ can be computed in $n^{1+o(1)}$. Since the number of heads $h$ can be viewed as a constant here, it takes $n^{1+o(1)}$ time to compute the gradients on $h$ heads.
\end{itemize}

Therefore, the overall running time for $\frac{\d L(X)}{\d X_m}$ is $n^{1+o(1)}$. 

Then, we consider the error bound.

By assumption, there is $1 / \poly(n)$ approximation error on $\frac{\d L(X)}{\d Y_m}$. 
For each $l \in [h]$, the approximation error will not be magnified by more than $n^2 \cdot d \cdot d_h \cdot \poly(n)$ on $\frac{\d L(X)}{\d Y_m} \cdot \frac{\d Y_m}{\d s} \cdot \frac{\d s_l}{\d X_m}$. 

Then, since there is total $h$ heads, the approximation error on $\frac{\d L(X)}{\d X_m}$ can be bound by 
\begin{align*}
    h \cdot n^2 \cdot d \cdot d_h \cdot \poly(n) \cdot (1 / \poly(n)) = 1 / \poly(n)
\end{align*}

\end{proof}

Similar to the proof of Lemma~\ref{lem:err_for_single_layer_transformer} and \ref{lem:entrie_model_error_bound}, we apply Lemma~\ref{lem:analysis_of_multi_head_attention} to deal with the multi-head attention in each transformer layer. 
Then, we can show that $\frac{\d L(X)}{\d X}$ can be computed in almost linear time $n^{1+o(1)}$ and the approximation error can be bounded by $1 / \poly(n)$.



\ifdefined\isarxiv

\fi



\end{document}